\newcommand{\blind}{0}
\definecolor{mutedred}{RGB}{160, 50, 50}
\definecolor{mutedblue}{RGB}{50, 50, 140}
\newtheorem{assumption}{Assumption}
\newtheorem{theorem}{Theorem}
\newtheorem{remark}{Remark}
\newtheorem{lemma}{Lemma}
\newtheorem{definition}{Definition}
\newtheorem{corollary}{Corollary}
\newtheorem{proposition}{Proposition}
\newtheorem{STheorem}{Theorem}
\newtheorem{Scorollary}{Corollary}
\newtheorem{SAssumption}{Assumption}
\begin{document}


\def\spacingset#1{\renewcommand{\baselinestretch}%
{#1}\small\normalsize} \spacingset{1}


\if0\blind
{
  \title{\bf POLAR: A Pessimistic Model-based Policy Learning Algorithm for Dynamic Treatment Regimes}
   \author{\small
    Ruijia Zhang$^{1}$, \quad Xiangyu Zhang$^{1}$, \quad Zhengling Qi$^{2}$, \quad Yue Wu$^{1}$,  \quad Yanxun Xu$^{1, *}$\\[1ex]
   \small $^1$ Department of Applied Mathematics and Statistics, Johns Hopkins University \\
   \small  $^2$ School of Business, The George Washington University \\
   \small $^*$ Correspondence should be addressed to yanxun.xu@jhu.edu}
    \date{}
  \maketitle
} \fi

\begin{abstract}
\spacingset{1.8}

Dynamic treatment regimes (DTRs) provide a principled framework for optimizing sequential decision-making in domains where decisions must adapt over time in response to individual trajectories, such as healthcare, education, and digital interventions. However, existing statistical methods often rely on strong positivity assumptions and lack robustness under partial data coverage, while offline reinforcement learning approaches typically focus on average training performance, lack statistical guarantees, and require solving complex optimization problems. To address these challenges, we propose POLAR, a novel pessimistic model-based policy learning algorithm for offline DTR optimization. POLAR estimates the transition dynamics from offline data and quantifies uncertainty for each history-action pair. A pessimistic penalty is then incorporated into the reward function to discourage actions with high uncertainty. Unlike many existing methods that focus on average training performance or provide guarantees only for an oracle policy, 
POLAR directly targets the suboptimality of the final learned policy and offers theoretical guarantees, without relying on computationally intensive minimax or constrained optimization procedures. To the best of our knowledge, POLAR is the first model-based DTR method to provide both statistical and computational guarantees, including finite-sample bounds on policy suboptimality. Empirical results on both synthetic data and the MIMIC-III dataset demonstrate that POLAR outperforms state-of-the-art methods and yields near-optimal, history-aware treatment strategies.

\noindent{\bf Keywords:} Dynamic treatment regimes, Model-based reinforcement learning, Partial coverage, Pessimism, Uncertainty quantification

\end{abstract}


\spacingset{1.8} 

\section{Introduction}

Effective decision-making that dynamically responds to evolving circumstances has become increasingly important across many domains, particularly in healthcare. For example, in chronic conditions like diabetes, HIV, and kidney disease, treatment strategies must adapt over time to account for an individual’s disease progression and responses to prior interventions, with the goal of improving long-term outcomes \citep{Tang2012DevelopingAP, Xu2016}.  These scenarios are often formalized as dynamic treatment regimes (DTRs) \citep{murphy2003optimal}, which provide sequential, adaptive frameworks for recommending treatments at multiple decision points.

A fundamental challenge of optimizing sequential decisions from observational data (e.g., electronic medical records) is handling distribution shift, where the learned policy may encounter states and/or actions different from those in the training data \citep{levine2020offline}. This issue is exacerbated in healthcare settings, where clinical decisions are often constrained by physician preferences, institutional guidelines, or resource limitations, leading to limited coverage of the history-action space. Within the statistical literature, substantial work has been developed through the sequential potential outcomes framework, using methods such as the Q-learning, inverse probability weighting, and doubly robust estimators \citep{murphy05a,robins2004optimal,Qian_2011,hua2022personalized,chakraborty2014dynamic,zhao2015new,zhang2013robust}. A related problem is offline reinforcement learning (RL), which typically addresses the distribution shift by adopting conservative strategies. These include model-based methods that penalize reward or transition functions for poorly represented state-action pairs in the offline dataset \citep{uehara2022pessimistic,kidambi2020morel,yu2020mopo,yin2021near}, and model-free algorithms that establish conservative lower bounds for the optimal Q-function \citep{kumar2019stabilizing,kumar2020conservative,fujimoto2019off,buckman2020importance,jin2021pessimism,fu2022offline,bai2022pessimistic,zanette2021provable,wu2019behavior}. 

Despite these advances, both statistical DTR and offline RL approaches face limitations. Statistical DTR methods commonly rely on the positivity assumption, which requires that each action has a non-zero probability of being observed for any given patient history. However, this assumption is often violated in real-world observational data due to clinical guidelines and physician preferences, resulting in limited coverage of possible patient history-action combinations \citep{luckett2020estimating}. Offline RL methods, on the other hand, have been developed to address distribution shift in sequential decision-making but often lack statistical guarantees measured by sample size and the number of decision points. Additionally, offline RL algorithms typically rely on the Markov assumption, which is often violated in clinical settings where treatment decisions depend on the patient’s full history rather than just the current state, limiting the effectiveness of these methods.  More importantly, many existing offline RL algorithms focus on metrics such as average policy performance during training or $\max_{0 \le t \le T}$ bounds \citep{hong2024moma,zanette2021provable}, or on the performance of an oracle policy within a pessimistic model \citep{chang2021mitigating,uehara2022pessimistic},  rather than directly evaluating the suboptimality of the final learned policy.
\cbend
To address these limitations, we develop POLAR, a novel pessimistic model-based policy learning algorithm for optimizing DTRs in offline settings. Unlike existing statistical and RL methods, POLAR explicitly accounts for uncertainty in poorly represented regions of the data, effectively addressing challenges related to partial data coverage and violations of the Markov assumption. POLAR first estimates transition dynamics from offline data   and quantifies uncertainty for all history-action pairs.  By incorporating a pessimistic penalty into the estimated reward function, POLAR discourages selecting actions associated with high uncertainty.  Using this modified reward function and the estimated transition dynamics, POLAR iteratively updates the policy within an actor-critic framework, thus enabling efficient policy learning without solving complex optimization subproblems. 

POLAR contributes to the literature in several ways. First, to the best of our knowledge, it is the first model-based DTR method to provide both statistical and computational guarantees, offering rigorous convergence bounds on the suboptimality of the learned policy as a function of sample size and decision stages.    Second, POLAR addresses the challenge of partial coverage in DTR optimization, where the offline data distribution does not fully span the history-action space. By incorporating principled uncertainty penalization, POLAR establishes a robust framework to mitigate the effects of partial data coverage. Third, POLAR supports general function classes, including flexible nonparametric estimators for transition dynamics, enhancing its accuracy and adaptability to complex, high-dimensional healthcare data. Finally, unlike many existing methods that optimize average performance during training, POLAR explicitly minimizes the suboptimality of the final learned policy, contributing new theoretical insights and advancing the practical effectiveness of DTR methods in real-world applications.

The rest of the paper is organized as follows. Section~\ref{sec: preliminary} introduces the preliminary notations and definitions. Section~\ref{sec: method} describes the proposed POLAR algorithm, providing details on its key components and implementation. In Section~\ref{sec: theory}, we establish theoretical guarantees for POLAR. Section~\ref{sec: simulation} evaluates POLAR through simulation studies, comparing its performance against alternative methods. In Section~\ref{sec: real data}, we apply POLAR to the real-world MIMIC-III dataset, showcasing its practical utility and effectiveness in a healthcare setting. Finally, Section~\ref{sec: discussion} concludes the paper with a discussion.

\section{Preliminaries}
\label{sec: preliminary}
In this section, we introduce the notation and formally define the problem within the DTR framework, which aims to determine the optimal policy for a decision-making process with $K$ stages. At each stage $k$, the decision-maker observes the current state $S_k$ and selects an action $A_k$. Denote $H_k$ as the  sequence of state-action pairs up to stage $k$, referred to as the history. It can be recursively defined as: 
$H_k = (H_{k-1}, A_{k-1}, S_k)$ with $H_1 = S_1.$ 
The entire decision-making process over $K$ stages can then be represented as:
$(S_1, A_1, S_2, A_2, \dots, S_k, A_k, S_{k+1}, \dots, S_K, A_K),$
where $S_k \in \mathcal{S}_k$, $A_k \in \mathcal{A}_k$, and $H_k \in \mathcal{H}_k$. For simplicity, we assume (i) The action spaces $\mathcal{A}_k$ are finite and known; (ii) The state spaces $\mathcal{S}_k$ are compact subsets of Euclidean spaces and known.

A policy $\pi = (\pi_1, \dots, \pi_K)$ is defined as a sequence of decision rules, where $\pi_k$ specifies a history-dependent rule for selecting the action $A_k$ at stage $k$ based on the history $H_k$.  
Formally,  \( \pi_k \) maps \( H_k \) to a probability distribution over the action space \( \mathcal{A}_k \), such that $A_k \sim \pi_k(\cdot \mid H_k).$
Given the current history $H_k$ and action $A_k$, the next state $S_{k+1}$ is determined by the transition dynamics $P_k$: $S_{k+1} \sim P_k(\cdot \mid H_k, A_k).$
We denote the sequence of transition dynamics across all $K$ stages as $P = (P_1, \dots, P_K)$.
At each stage $k$, an intermediate reward $R_k$ is observed after taking the action $A_k$. Without loss of generality, $R_k$ is defined as $R_k = \bar{r}_k(H_k, A_k, S_{k+1}),$ where $\bar{r}_k : \mathcal{H}_k \times \mathcal{A}_k \times \mathcal{S}_{k+1} \to \mathbb{R}$ is a deterministic function. The randomness in $R_k$ comes entirely from $S_{k+1}$, conditional on the history $(H_k, A_k)$. The corresponding expected reward function is given by: $r_k(h_k, a_k) = \mathbb{E}[R_k \mid H_k = h_k, A_k = a_k] = \mathbb{E}_{S_{k+1} \sim P_k(\cdot \mid h_k, a_k)}[\bar{r}_k(h_k, a_k, S_{k+1})].$ We assume that $r_k$ is uniformly bounded for all $k$.
With these components, a DTR model can be formally represented as:
\[
M = \Big(\{\mathcal{S}_k\}_{k=1}^K, \{\mathcal{A}_k\}_{k=1}^K, \{P_k\}_{k=1}^K, \{r_k\}_{k=1}^K, \mu_1 \Big),
\]
where $\{\mathcal{S}_k\}_{k=1}^K$ are the state spaces, $\{\mathcal{A}_k\}_{k=1}^K$ are the action spaces, $\{P_k\}_{k=1}^K$ are the transition dynamics, $\{r_k\}_{k=1}^K$ are the expected reward functions, and $\mu_1$ is the known initial state distribution. For brevity, we write $M = (P = \{P_k\}_{k=1}^K, r = \{r_k\}_{k=1}^K).$ 

Given the DTR model $M$ and a policy $\pi$, we evaluate the performance of $\pi$ using its cumulative rewards. Specifically, the value function at stage $k$ is defined as: $V_{k, M}^\pi(h_k) := \mathbb{E}^\pi_{M} \left[ \sum_{t=k}^K R_t \mid H_k = h_k \right],$
which measures the expected cumulative reward of policy $\pi$ starting from stage $k$ with the history $h_k$.
Additionally, we define the action-value function (Q-function) as: $Q_{k, M}^\pi(h_k, a_k) := \mathbb{E}^\pi_{M} \left[ \sum_{t=k}^K R_t \mid H_k = h_k, A_k = a_k \right].$ Both expectations are taken with respect to the probability distributions induced by the policy $\pi$ and the transition dynamics in $M$.
To evaluate the overall performance of a policy $\pi$, we consider the expected value function over the initial state distribution $\mu_1$. Since $H_1 = S_1$ by definition, the overall value of $\pi$ is defined as:
$V_M^\pi = \mathbb{E}_{H_1 \sim \mu_1} \left[ V_{1, M}^\pi(H_1) \right].$

Our goal is to find an optimal policy $\pi^*$ that maximizes the cumulative reward in a DTR problem using a static offline dataset $\mathcal{D}$. Specifically, we find $\pi^* \in \arg\max_{\pi \in \Pi} V_M^\pi,$
where $\Pi$ is a pre-specified policy class. The offline dataset $\mathcal{D}$ consists of $n$ independent and identically distributed (i.i.d.) trajectories generated by a behavior policy $\pi_b$ under the true model $M^*$:  $\mathcal{D} = \{( s_1^{(i)} , a_1^{(i)}, r_1^{(i)}, ..., s_K^{(i)}, a_K^{(i)}, r_K^{(i)})\}_{i=1}^n$.  Here, $M^* = \left( \{\mathcal{S}_k\}_{k=1}^K, \{\mathcal{A}_k\}_{k=1}^K, \{P_k^*\}_{k=1}^K, \{r_k\}_{k=1}^K, \mu_1 \right)$, where $P^* = \{P_k^*\}_{k=1}^K$ represents the ground truth transition dynamics.


\section{Method}
\label{sec: method}
We propose POLAR, the first pessimistic model-based algorithm for DTR with both statistical and computational convergence guarantees. We begin with an overview of POLAR in Section~\ref{subsec: method overview}, followed by a detailed construction of the modified DTR model in Section~\ref{subsec: method modified DTR}. We then describe the policy optimization procedure in Section \ref{sec:ACframe}.

\subsection{Overview of POLAR}
\label{subsec: method overview}
POLAR aims to learn an optimal policy $\pi^*$ within a pre-specified policy class by constructing a modified DTR model that penalizes state-action regions with high uncertainty in the reward function.  The algorithm begins by estimating the transition dynamics \( \widehat{P} = \{ \widehat{P}_k \}_{k=1}^K \) to approximate the true transition dynamics \( P^* \) using the offline dataset \( \mathcal{D} \) through maximum likelihood estimation (MLE). Specifically, \( \widehat{P}_k(\cdot \mid h_k, a_k) \) represents the estimated transition kernel conditional on the history \( h_k \) and the current action \( a_k \).
Next, POLAR computes the estimated reward functions \( \widehat{r} = \{ \widehat{r}_k \}_{k=1}^K \), where
$\widehat{r}_k(h_k, a_k) = \mathbb{E}_{S_{k+1} \sim \widehat{P}_k(\cdot \mid h_k, a_k)} \left[ \overline{r}_k(h_k, a_k, S_{k+1}) \right]$ and $\overline{r}_k$ denotes the known deterministic reward function at stage \( k \).

Directly optimizing a policy using the estimated model \( \widehat{M} = (\widehat{P}, \widehat{r}) \) can be suboptimal due to inaccuracies in \( \widehat{P} \), especially in poorly represented state-action regions of the offline dataset. This arises from the distributional shift between the offline data and the distributions induced by policies other than the behavior policy. To mitigate this issue, we quantify the estimation uncertainty of $\widehat{P}_k$ by $\Gamma_k (h_k,a_k)$,  and incorporate it into a penalized reward function.  Specifically, we define a modified reward function  $\widetilde{r} = \{\widetilde{r}_k \}_{k=1}^K$ as:
$\widetilde{r}_k (h_k,a_k) = \widehat{r}_k (h_k,a_k) - \widetilde{c}_k \Gamma_k (h_k,a_k)$, where $\widetilde{c}_k \geq 0$ is a tuning multiplier that controls the degree of pessimism. This penalized reward function discourages the policy from visiting state-action pairs with high uncertainty.
The modified DTR model, denoted as  $\widetilde{M} = \left( \widehat{P}, \widetilde{r} \right)$, combines the estimated transition dynamics $\widehat{P}$ with the pessimistically adjusted reward functions $\widetilde{r}$. POLAR optimizes the policy by maximizing the value function under this modified model.  We will discuss the detailed construction of $\{ \widehat{P}_k \}_{k=1}^K$, $\{ \widehat{r}_k \}_{k=1}^K$ and $\{ \Gamma_k \}_{k=1}^K$ in Section \ref{subsec: method modified DTR}.

To facilitate policy optimization in general state spaces, we consider a parameterized policy class using soft-max policies:
\begin{equation*}
\Pi = \{\pi = (\pi_1,...,\pi_K): \pi_k \in \Pi_k, \forall k \}, \quad \text{where}
\end{equation*}
\begin{equation}
\label{eq: policy class}
\Pi_k= \left\{ \pi_k \, \mid \, \pi_k(a_k|h_k, \theta_k) = \frac{\exp(f_k(\theta_k, h_k, a_k))}{\sum_{a'\in \mathcal{A}_k} \exp(f_k(\theta_k, h_k, a'))} : \theta_k \in \Theta_k \right\}.
\end{equation}
Here $f_k(\theta_k,h_k,a_k)$ is an importance-weighting function defined on $\Theta_k\times \mathcal{H}_k \times \mathcal{A}_k$, capturing the relative importance of action $a_k$ at stage $k$. The soft-max policy $\pi_k(\cdot|h_k, \theta_k)$, commonly used in the literature, defines a probability distribution over $\mathcal{A}_k$  and is parameterized by $\theta_k$. For clarity, we use $\pi(\theta)=(\pi_1(\theta_1),...,\pi_K(\theta_K))$ with $\theta = (\theta_1,...,\theta_K)$. 
The POLAR algorithm, summarized in Algorithm~\ref{algo: general}, follows a model-based actor-critic \citep{SuttonBarto2018} framework to optimize the policy under the modified DTR model within the policy class $\Pi$. 
The actor is the parameterized policy $\pi(\theta)$, and the critic estimates the action-value function under the current policy. At each iteration $t$, the critic first computes the stage-wise $Q$-functions $Q^{\pi^{(t)}}_{k,\widetilde M}(h_k,a_k)$ under the modified model $\widetilde M$. The actor then updates $\theta$ so that actions with larger estimated $Q$ values are
selected with higher probability under the soft-max policy. Full details of our actor-critic framework are given in Section~\ref{sec:ACframe}.
\cbend


\SetAlgoLined
\LinesNumbered
\SetAlgoLongEnd

\begin{algorithm}[ht]
\caption{An overview of the proposed algorithm}
\label{algo: general}
\KwIn{Offline dataset $\mathcal{D}$; stepsizes $\{\eta_k^{(t)}\}$ at $k$-th stage and $t$-th iteration; hyperparameters $\{\widetilde{c}_k\}$}

Construct estimated transition dynamics $\widehat{P}$, estimated reward functions $\widehat{r}$ and uncertainty quantifiers $\Gamma$ from the offline data $\mathcal{D}$ \\
Construct the modified DTR model $\widetilde{M} = (\widehat{P}, \widetilde{r})$, with $\widetilde{r}_k = \widehat{r}_k - \widetilde{c}_k \Gamma_k$ \\
Initialize \( \theta^{(0)} = (\theta^{(0)}_1, \dots, \theta^{(0)}_K) \) such that \( \pi_k(\cdot \mid h_k, \theta^{(0)}_k) \) is uniformly distributed over \( \mathcal{A}_k \) for all \( k \) and any \( h_k \in \mathcal{H}_k \).
\\
$\pi^{(0)}\gets \pi(\theta^{(0)}), t=0 $\\
\textbf{For} $t = 0, 1, 2, \cdots, T-1$
\quad Compute the modified Q-function for each $k$: $\widehat{Q}_k^{(t)}\gets Q_{k, \widetilde{M}}^{\pi^{(t)}}$\\
\quad Update $\theta$: for each $k$, $\theta_k^{(t+1)}\gets \arg\min_{\theta\in \Theta_k} \mathbb{E}_{h_k\sim unif(\mathcal{H}_k),a_k\sim unif(\mathcal{A}_k)}\left( f_k(\theta, h_k,a_k) - f_k(\theta_k^{(t)},h_k,a_k) - \eta_k^{(t)} \widehat{Q}_k^{(t)}(h_k,a_k) \right)^2.$ \\
\KwOut{$\pi^{(T)}$}
\end{algorithm}

\subsection{The modified DTR}
\label{subsec: method modified DTR}


We first estimate the transition model $\widehat{P}$ within a pre-specified transition model class $\mathcal{P}$ via MLE: $\widehat{P}_k = \arg \max_{P\in \mathcal{P}} \sum_{i=1}^n  \log P( s_{k+1}^{(i)} \mid h_{k}^{(i)}, a_{k}^{(i)})$, where ${P}{(\cdot \mid h_k, a_k)} \in \mathcal{P}$ is the parameterized transition dynamics given the history  $h_k$ and the current action $a_k$. We then introduce an uncertainty quantifier to measure the estimation uncertainty in \( \widehat{P} \), which arises from the randomness in the dataset \( \mathcal{D} \sim \mathbb{P}_\mathcal{D} \), where \( \mathbb{P}_\mathcal{D} \) represents the data collection process.   To characterize this uncertainty, we define a family of uncertainty quantifiers that provide bounds on the deviation between the estimated model \( \widehat{P} \) and the true model \( P^* \).

\begin{assumption}[Uncertainty quantifier]
\label{assumption: uncertainty quantifier} For each of the following events,
$$\mathcal{E}_k = \left\{ \left\|\widehat{P}_k(\cdot|h_k,a_k) - P^*_k(\cdot |h_k,a_k)\right\|_1\le \Gamma_k(h_k,a_k),\ \forall (h_k,a_k)\in \mathcal{H}_k \times \mathcal{A} _k \right\},$$
it holds with probability at least $1-\delta$ (with respect to $\mathbb{P}_\mathcal{D}$). That is, $\mathbb{P}_ \mathcal{D} (\mathcal{E}_k) \ge 1-\delta$ for each $k=1,...,K$. 
\end{assumption}
Assumption \ref{assumption: uncertainty quantifier} directly implies that $\mathbb{P}_\mathcal{D}(\mathcal{E}_1 \cap ... \cap \mathcal{E}_K) \ge 1-K\delta$. It guarantees that with high probability, the $L_1$ distance between $\widehat{P}_k$ and $P^*_k$ can be upper bounded by the uncertainty quantifier $\Gamma_k$ uniformly over the state-action space. We present two examples of uncertainty quantifiers that satisfy Assumption  \ref{assumption: uncertainty quantifier}, with proof and more details deferred to Supplementary Section \ref{sec: discussions on UQ}. 


\noindent \textbf{Example 1:} (Linear Transition Model)
Suppose the ground truth transition model follows a linear structure: 
\begin{equation}
\label{eq: linear transition model}
s_{k+1} = W_k \phi_k(h_k,a_k) + \varepsilon_k, \quad \varepsilon_k \sim F_k, 
\end{equation}
where $\phi_k: \mathcal{H}_k \times \mathcal{A}_k \to \mathbb{R}^{\dim(\phi_k)}$ is a known bounded feature mapping, $F_k$ is a known distribution on $\mathbb{R}^{d_{k+1}^{(s)}}$, and $d_{k+1}^{(s)}$ is the dimension of the $(k+1)$-th state space $\mathcal{S}_{k+1}$. $W_k\in \mathbb{R}^{d_{k+1}^{(s)} \times \dim(\phi_k)}$ is a weight matrix.
\cbend
The uncertainty quantifiers $\Gamma_k$ take the following form:
\begin{equation}
\label{eq: KNR, Gamma}
\Gamma_k(h_k, a_k) = \min \left\{2, 2C_2 \sqrt{\phi_k^T (h_k,a_k) \Lambda_{k,n}^{-1} \phi_k(h_k,a_k)} \right\},
\end{equation}
where $C_2$ is a constant depending on the noise $\varepsilon_k$, the dimension of the state space $\mathcal{S}_{k+1}$,  and the conditional distribution $P_k(\cdot | h_k,a_k).$ The exact form of $C_2$  is provided in Supplement \ref{subsec: kernelized}. The term $\Lambda_{k,n}$ arises from the ridge regularization in estimating $W_k$ and is defined as:
$\Lambda_{k,n} = \sum_{i=1}^{n} \phi_k( h_k^{(i)}, a_k^{(i)}) \phi_k( h_k^{(i)}, a_k^{(i)})^T + \lambda I$,
where $\lambda \ge 0$ is the  regularization parameter.

\noindent \textbf{Example 2: }(Gaussian Process)
Suppose the ground truth transition model follows a Gaussian Process (GP) structure: $s_{k+1} = g_k(h_k, a_k) + \varepsilon_k$, where $\varepsilon_k \sim \mathcal{N}(0, \sigma^2 I)$.  Define $x:= (h_k, a_k)$ when $k$ is fixed. The function $g: \mathcal{X}\to \mathbb{R}^{d_{k+1}^{(s)}}$ is the mean transition dynamics, with ${d_{k+1}^{(s)}}$ denoting the dimension of the next state. We assume $g$ follows a GP prior: $g \sim GP(0, h(\cdot,\cdot))$, with zero mean and covariance function $h:  \mathcal{X}\times \mathcal{X} \to \mathbb{R}$, where $h$ is a known symmetric positive semidefinite kernel.

Given observations $\{ (x_i,y_i) \}_{i=1}^n$, the posterior distribution of $g$ remains a GP with mean function $\widehat{g}(\cdot)$ and covariance function $\widehat{h}(\cdot,\cdot).$ The uncertainty quantifier $\Gamma$ takes the form:

\begin{equation}
\label{eq: GP, eq 3}
\Gamma(x) = \frac{\beta_n}{\sigma} \sqrt{\widehat{h}(x,x)},
\end{equation}
where \( \beta_n \) is a constant depending on the state space dimension \( d_{k+1}^{(s)} \) and the sample size \( n \). The form of \( \beta_n \) is specified in  Supplement \ref{subsec: GP}.



After obtaining the uncertainty quantifier, we define the modified expected reward function at stage $k$  as 
$ \widetilde{r}_k(h_k,a_k)= \widehat{r}_k(h_k,a_k)- \widetilde{c}_k \Gamma_k(h_k,a_k)$, 
 for some positive constant $\widetilde{c}_k$ and all $(h_k,a_k) \in \mathcal{H}_k \times \mathcal{A}_k$.
This uncertainty penalization approach follows the pessimism principle in RL, where similar techniques have been used to modify the expected reward function \citep{yu2020mopo, chang2021mitigating}. However, these works do not establish convergence guarantees, whereas we provide a rigorous theoretical analysis in Section~\ref{sec: theory}.

\subsection{The actor-critic framework}
\label{sec:ACframe}
We now describe how to estimate the optimal policy after constructing the modified DTR model. POLAR algorithm adopts an actor-critic framework, where the critic evaluates the Q-function under the modified model, and the actor updates the policy based on this evaluation.

The initialization step (line 3) in Algorithm \ref{algo: general} assumes that the function class \( \{f_k(\theta_k, \cdot, \cdot): \theta_k \in \Theta_k\} \) contains at least one function that is constant with respect to \( a_k \) for any given $h_h$.  This is a mild assumption that is typically satisfied by common function classes, such as linear functions or neural networks. Following initialization, the algorithm iteratively updates the policy parameters. At each iteration, the Q-function of the current policy  \( \pi(\theta^{(t)}) \) is evaluated under the modified DTR model \( \widetilde{M} \), then the policy parameters are updated by solving the optimization problem:  
\begin{equation}
\label{eq: optimization step 1}
\arg \min_{\theta\in \Theta_k} \mathbb{E}_{h_k\sim unif(\mathcal{H}_k),a_k\sim unif(\mathcal{A}_k)}\left( f_k(\theta, h_k,a_k) - f_k(\theta_k^{(t)},h_k,a_k) - \eta_k^{(t)} \widehat{Q}_k^{(t)}(h_k,a_k) \right)^2.
\end{equation}
For soft-max policies as defined in \eqref{eq: policy class}, this minimization is equivalent to finding a policy update such that 
\begin{equation*}
    \begin{aligned}
        \pi^{(t+1)} (a_k|h_k) &\propto \exp ( f_k( \theta_k^{(t+1)}, h_k, a_k )) \\&\approx \exp ( f_k(\theta_k^{(t)}, h_k,a_k) + \eta_k^{(t)} \widehat{Q}_k^{(t)}(h_k,a_k) )\\
        & = \pi^{(t)} (a_k|h_k) \exp (\eta_k^{(t)} \widehat{Q}_k^{(t)}(h_k,a_k)).
    \end{aligned}
\end{equation*}
This update step is analogous to the one-step natural policy gradient (NPG) update under the soft-max parameterization in tabular MDPs \citep{Kakade2001NPG, agarwal2021theory}.


Solving the optimization problem in \eqref{eq: optimization step 1} can be computationally challenging for general function classes of $\mathcal{F}_k:=\{ f_k(\theta_k, \cdot,\cdot): \theta_k\in \Theta_k\}$. Moreover, computing $\widehat{Q}_k ^{(t)}$ over the entire space $\mathcal{H}_k\times \mathcal{A}_k$ is difficult due to the lack of an analytical expression for $Q_{k, \widetilde{M}} ^{\pi^{(t)}} (h_k,a_k)$ (line 5). To address this, we employ a Monte Carlo approximation, estimating $\widehat{Q}_k ^{(t)} (h_k,a_k)$   at a finite set of state-action pairs.  This leads to an empirical approximation of \eqref{eq: optimization step 1}: 
\begin{equation}
\label{eq: optimization step 2}
\min_{\theta\in \Theta_k} \frac{1}{m} \sum_{i=1}^{m} \left( f_k(\theta, h_k^{(i)}, a_k^{(i)}) - f_k(\theta_k^{(t)}, h_k^{(i)}, a_k^{(i)}) - \eta_k^{(t)} \widehat{Q}_k^{(t)}(h_k^{(i)}, a_k^{(i)}) \right)^2,
\end{equation}
where $\{(h_k^{(i)}, a_k^{(i)})\} _{i=1}^m$ are sampled uniformly from $\mathcal{H}_k \times \mathcal{A}_k$. 


To make the least-squares regression problem in \eqref{eq: optimization step 2}  computationally tractable, we adopt a linear sieve approach~\citep{chen2007large} to parameterize the function classes $\{\mathcal{F}_k\}_{k=1}^K$. At each stage $k$, we denote the state and action histories as \(\bar{s}_k := (s_1, s_2, \dots, s_k)\) and \(\bar{a}_k := (a_1, a_2, \dots, a_k)\), respectively. Since actions are discrete, the total number of possible action histories is finite and bounded by $N_k^{(A)} = \prod_{j=1}^k |\mathcal{A}_j| < \infty$. Each function class $\mathcal{F}_k$ consists of linear functions  whose complexity increases with the sample size. Specifically, we represent each function $f_k\in \mathcal{F}_k$ as 
\begin{equation}
\label{eq: sieve function class}
f_k(\theta_k, h_k, a_k) = \theta_k[\bar{a}_k]^\top \Upsilon_{L_k}(\bar{s}_k),
\end{equation}
where $\theta_k\in \Theta_k:= \mathbb{R}^{L_k \times N_k^{(A)}}$ is the parameter matrix, and $\theta_k[\bar{a}_k]\in \mathbb{R}^{L_k}$ denotes the column of $\theta_k$ corresponding to action history $\bar{a}_k$. The feature vector $\Upsilon_{L_k}(\bar{s}_k)\in \mathbb{R}^{L_k}$ consists of $L_k$ basis functions evaluated on the state history $\bar{s}_k$.  We define  $\Upsilon_{L_k}(\cdot) : = \left( \upsilon_{L_k,1}(\cdot), \upsilon_{L_k,2}(\cdot), \dots, \upsilon_{L_k,L_k}(\cdot) \right)^\top$, where each $\upsilon_{L_k,\ell}(\cdot)$ is a basis function defined on the joint state space $\mathcal{S}_1 \times \mathcal{S}_2 \times \cdots \times \mathcal{S}_k$. We construct these basis functions using B-splines~\citep{Chen_2015}, a widely used and computationally efficient class for approximating smooth functions over compact subsets of Euclidean space. 


After incorporating the approximation technique, a practical implementation of the general procedure described in Algorithm~\ref{algo: general} is provided in Algorithm~\ref{algo: sieve regression} (see Supplement~\ref{subsec: summarized algorithm}).

\section{Theory}
\label{sec: theory}
In this section, we establish both statistical and algorithmic convergence guarantees for POLAR by analyzing the suboptimality of the final policy $\pi^{(T)}$  returned by Algorithm~\ref{algo: general}.  Specifically, the suboptimality of a policy $\pi$ is defined as the difference between the value of the optimal policy $\pi^\dagger$ and that of $\pi$, i.e., $\text{Subopt}(\pi; M^*) = V_{M^*} ^{\pi^\dagger} - V_{M^*} ^{\pi}$, 
where $M^*$ denotes the true underlying DTR model introduced in Section~\ref{sec: preliminary}. Our goal is to derive an upper bound on $\text{Subopt}(\pi^{(T)}; M^*)$, ensuring that the learned policy is near-optimal as the sample size and number of training iterations increase.

The suboptimality of any policy $\pi$ can be decomposed into three terms:
\begin{equation}
\label{eq: split into 3 terms}
\begin{split}
\text{Subopt}(\pi; M^*) &= V_{M^*}^{\pi^\dagger} - V_{M^*}^{\pi}\\
&= \underbrace{V_{M^*} ^{\pi^\dagger} -V_{\widetilde{M}} ^{\pi^\dagger}}_{(a)} + \underbrace{V_{\widetilde{M}}^{\pi^\dagger} - V_{\widetilde{M}}^{\pi}}_{(b)} + \underbrace{ V_{\widetilde{M}}^{\pi} -V_{M^*}^{\pi}}_{(c)}.
\end{split}
\end{equation}

Terms $(a)$ and $(c)$ together capture the model shift error, i.e., the discrepancy between the true and estimated 
transition models. Since the modified DTR model is defined as $\widetilde{M} = \left( \widehat{P}, \widetilde{r} \right)$ and the true model as $M^*=\left( P^*, r \right)$, bounding $(a)$ and $(c)$ reduces to quantifying the difference between the estimated transition dynamics $\widehat{P}$ and the ground truth dynamics $P^*$, leveraging the construction of $\widetilde{r}$ in Section \ref{subsec: method modified DTR}.
Term $(b)$ captures the policy optimization error under the modified model \( \widetilde{M} \).  
  We present our main theoretical result in Section \ref{subsec: theory overview} and establish theoretical guarantees for two specific transition model cases in Section \ref{sec:heoretical Results of Certain Policy Classes}. All the main proofs and required technical assumptions are deferred to Supplement \ref{proof:proof}. Auxiliary lemmas with their proofs are provided in Supplement \ref{sec:lemmasproof}. 

\subsection{Main theoretical results}
\label{subsec: theory overview}

Our analysis on the suboptimality \( \text{Subopt}(\pi^{(T)}) \) accounts for two sources of error: model shift error due to imperfect estimation of transition dynamics, and policy optimization error caused by function approximation and finite training iterations.

We begin by characterizing the function approximation error that arises at each iteration. At stage $k$, the policy update step (line 6 of Algorithm \ref{algo: general}) can be viewed as approximating the target function $f_k(\theta_k^{(t)},\cdot,\cdot) + \eta_k^{(t)} Q^{\pi^{(t)}}_{k,\widetilde{M}}(\cdot,\cdot)$ using a new function $f_k(\theta_k^{(t+1)},\cdot,\cdot) \in \mathcal{F}_k = \{ f_k(\theta_k, \cdot,\cdot): \theta_k \in \Theta_k \}$. We define the corresponding approximation error at iteration $t$ as:
\begin{equation}
\label{eq: e_t}
e^{(t)}_k(h_k,a_k) := f_k(\theta_k^{(t+1)}, h_k, a_k) - f_k(\theta_k^{(t)}, h_k, a_k) - \eta_k^{(t)} Q^{\pi^{(t)}}_{k,\widetilde{M}}(h_k,a_k).
\end{equation}

To control this error on convergence, we impose the following assumption that bounds it uniformly over all history-action pairs:

\begin{assumption}
\label{assumption: sup norm of e}
For any \( 1\le k\le K \), there exists a quantity \( G_k \) such that for any \( 0\le t\le T-1 \),
\begin{equation}
\label{eq: sup norm of e}
\sup_{(h_k, a_k)} |e^{(t)}_k(h_k,a_k)| = \mathcal{O}_P (G_k),
\end{equation}
where \( \mathcal{O}_P(G_k) \) denotes a bound in probability by some multiple of \( G_k \) as the sample size grows. This means that as \( G_k \to 0 \), the approximation error converges to zero in probability.
\end{assumption}

We are now ready to present our main result.

\begin{theorem}
\label{thm: combined main thm}
Suppose Assumptions \ref{assumption: uncertainty quantifier} and \ref{assumption: sup norm of e} hold. Let the hyperparameters in Algorithm~\ref{algo: general} be set as $\widetilde{c}_k = \sum_{j=k}^K \|\overline{r}_j\|_\infty$, and choose the stepsizes as $\eta_k^{(t)} = \frac{c_\eta}{\sqrt{T}}$. Then, with probability at least $1-K\delta$, Algorithm \ref{algo: general} returns a policy $\pi^{(T)}$ satisfying:

\begin{equation*}
\mathrm{Subopt}(\pi^{(T)}; M^*) \le  \underbrace{V^{\pi^\dagger} _{P^*, \underline{r}}}_{\text{Model Shift Error}} + \underbrace{\mathcal{O}\left( \frac{K}{\sqrt{T}} \right) + K \cdot \mathcal{O}_P \left( T^{1/2} G_k + T^{3/2} G_k^2 \right)}_{\text{Policy Optimization Error}}.
\end{equation*}
Here $V^{\pi^\dagger} _{P^*, \underline{r}}$ denotes the value function of $\pi^\dagger$ under the DTR model with transition dynamics $\{P^*_k\}_{k=1}^K$ and expected reward functions $\{\underline{r}_k\}_{k=1}^K$, where  $\underline{r}_k = b_k \Gamma_k$ and $b_k = 2\widetilde{c}_k + \sum_{j=k+1}^K \widetilde{c}_j \|\Gamma_j \|_\infty $. 
\end{theorem}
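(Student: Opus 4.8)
The plan is to build on the three-term decomposition \eqref{eq: split into 3 terms}, $\mathrm{Subopt}(\pi^{(T)};M^*) = (a)+(b)+(c)$, bounding the two model-shift terms $(a),(c)$ and the policy-optimization term $(b)$ separately. All bounds are established on the event $\mathcal{E}_1\cap\cdots\cap\mathcal{E}_K$, which by Assumption~\ref{assumption: uncertainty quantifier} (and a union bound) has probability at least $1-K\delta$ and provides $\|\widehat P_k(\cdot\mid h_k,a_k)-P^*_k(\cdot\mid h_k,a_k)\|_1\le\Gamma_k(h_k,a_k)$ uniformly in $(h_k,a_k)$.

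For the model-shift terms the key tool is a stagewise performance-difference (``simulation'') identity: for any policy $\pi$ and any two DTR models $(P^1,r^1),(P^2,r^2)$,
\[
V^{\pi}_{P^1,r^1}-V^{\pi}_{P^2,r^2}=\sum_{k=1}^K \mathbb{E}^{\pi}_{P^1}\!\left[(r^1_k-r^2_k)(H_k,A_k)+\big\langle P^1_k(\cdot\mid H_k,A_k)-P^2_k(\cdot\mid H_k,A_k),\,V^{\pi}_{k+1,(P^2,r^2)}\big\rangle\right],
\]
which I would obtain by unrolling the Bellman recursion for the difference of value functions. Applying this with $(P^1,r^1)=M^*$, $(P^2,r^2)=\widetilde M$ and $\pi=\pi^{(T)}$ controls term $(c)$: on the good event $|r_k-\widehat r_k|\le\|\overline r_k\|_\infty\Gamma_k$, so since $\widetilde r_k=\widehat r_k-\widetilde c_k\Gamma_k$ the stage-$k$ reward gap is at least $(\widetilde c_k-\|\overline r_k\|_\infty)\Gamma_k$, whereas the inner-product (transition) term is at least $-\Gamma_k$ times the range of $V^{\pi^{(T)}}_{k+1,\widetilde M}$; the prescribed $\widetilde c_k=\sum_{j=k}^K\|\overline r_j\|_\infty$ makes the former dominate the latter stage by stage, so $(c)\le 0$. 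Applying the same identity with $\pi=\pi^\dagger$ and taking the opposite-sided bounds gives $(a)$: the effective per-stage reward gap is at most $b_k\Gamma_k$ with $b_k=2\widetilde c_k+\sum_{j=k+1}^K\widetilde c_j\|\Gamma_j\|_\infty$, and the resulting sum $\sum_{k}\mathbb{E}^{\pi^\dagger}_{P^*}[b_k\Gamma_k(H_k,A_k)]$ is exactly the value $V^{\pi^\dagger}_{P^*,\underline r}$ of $\pi^\dagger$ under transitions $P^*$ and rewards $\underline r_k=b_k\Gamma_k$.

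For term $(b)$ I would first bound it by the in-class optimization regret, $V^{\pi^\dagger}_{\widetilde M}\le\sup_{\pi\in\Pi}V^{\pi}_{\widetilde M}$, so $(b)\le\sup_{\pi\in\Pi}V^{\pi}_{\widetilde M}-V^{\pi^{(T)}}_{\widetilde M}$. Up to the approximation error $e^{(t)}_k$ of \eqref{eq: e_t}, the update in line~6 of Algorithm~\ref{algo: general} is the multiplicative natural-policy-gradient step $\pi^{(t+1)}_k(\cdot\mid h_k)\propto\pi^{(t)}_k(\cdot\mid h_k)\exp(\eta^{(t)}_k\widehat Q^{(t)}_k(h_k,\cdot))$. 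Proceeding by backward induction over stages $k=K,\dots,1$, at each stage a standard NPG potential argument — telescoping the KL divergence from the comparator policy to the iterates and combining it with the performance-difference lemma — yields a regret of order $\mathcal{O}(1/(\eta T)+\eta)$ for the idealized update, i.e.\ $\mathcal{O}(1/\sqrt T)$ once $\eta^{(t)}_k=c_\eta/\sqrt T$. The error $G_k$ of Assumption~\ref{assumption: sup norm of e} enters in two ways: the mismatch between the realized and idealized updates perturbs the potential telescoping, contributing a first-order term of order $T^{1/2}G_k$; and linearizing the soft-max exponential (the step marked ``$\approx$'' after \eqref{eq: optimization step 1}) together with the $G_k$-perturbed $\widehat Q^{(t)}_k$ contributes a second-order term of order $T^{3/2}G_k^2$. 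Summing the idealized $\mathcal{O}(1/\sqrt T)$ regret and these error terms over the $K$ stages gives the stated policy-optimization bound, and adding $(a)$ and $(c)$ on the good event yields the theorem.

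\textbf{Main obstacle.} The delicate part is term $(b)$. Existing NPG convergence guarantees are for tabular, Markovian MDPs with exact policy evaluation and exact multiplicative updates, whereas here the dynamics are history-dependent, the update is only approximately multiplicative (the sieve least-squares fit introduces $e^{(t)}_k$), and $\widehat Q^{(t)}_k$ is estimated by Monte Carlo. Carrying out the stagewise (backward-in-$k$) NPG analysis while carefully tracking how the per-iteration errors $G_k$ compound across the $T$ iterations — so as to produce precisely the first-order $T^{1/2}G_k$ and second-order $T^{3/2}G_k^2$ terms — and then merging the resulting $\mathcal{O}_P$ statements with the high-probability event of Assumption~\ref{assumption: uncertainty quantifier} is the technical core. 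A secondary subtlety, in the bound for $(c)$, is showing that the penalty $\widetilde c_k\Gamma_k$ dominates the propagated transition error uniformly in $h_k$ even though $V^{\pi^{(T)}}_{k+1,\widetilde M}$ itself depends on the penalties $\{\widetilde c_j\Gamma_j\}_{j>k}$.
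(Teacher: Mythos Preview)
Your proposal follows the same route as the paper: the decomposition \eqref{eq: split into 3 terms}, the simulation lemma for $(a)$ and $(c)$ yielding $(c)\le 0$ and $(a)\le V^{\pi^\dagger}_{P^*,\underline r}$, and an NPG/KL-telescoping analysis for $(b)$. Two points, however, deserve correction.

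First, the analysis of $(b)$ is not by backward induction over $k$. The performance-difference lemma already decouples the stages,
\[
V^{\pi^\dagger}_{\widetilde M}-V^{\pi^{(t)}}_{\widetilde M}=\sum_{k=1}^K\mathbb{E}_{h_k\sim d^{\pi^\dagger}_k}\big\langle Q^{\pi^{(t)}}_{k,\widetilde M}(h_k,\cdot),\,\pi^\dagger_k(\cdot\mid h_k)-\pi^{(t)}_k(\cdot\mid h_k)\big\rangle,
\]
and the KL argument is applied to each inner product independently, with no induction linking the stages. (You also do not need to pass to $\sup_{\pi\in\Pi}$; the paper works with $\pi^\dagger$ directly as the comparator in the KL potential.)

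Second, and more substantively, your attribution of the $T^{3/2}G_k^2$ term is off, and as written would leave a gap. There is no linearization of the softmax: the three-point KL identity used in the telescoping is exact, and the error $e^{(t)}_k$ enters that telescoping only at \emph{first} order, producing the $T^{1/2}G_k$ term. The second-order term arises because the KL telescoping yields only an \emph{average-iterate} bound on $\frac{1}{T}\sum_{t}(V^{\pi^\dagger}_{\widetilde M}-V^{\pi^{(t+1)}}_{\widetilde M})$, whereas the theorem concerns the \emph{last} iterate $\pi^{(T)}$. To bridge this the paper proves an approximate monotone-improvement lemma: applying the same KL identity with comparator $p=\pi^{(t)}_k$ and Pinsker's inequality gives
\[
V^{\pi^{(t)}}_{\widetilde M}-V^{\pi^{(t+1)}}_{\widetilde M}\le\sum_{k=1}^K\frac{1}{4\eta_k}\,\|e^{(t)}_k(h_k,\cdot)\|_\infty^2.
\]
Summing this per-step backtracking over at most $T$ steps and then averaging produces the extra $\frac{T}{\eta_k}\mathcal{O}_P(G_k^2)=\mathcal{O}_P(T^{3/2}G_k^2)$. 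Without this step you would control only the average of the $V^{\pi^{(t)}}_{\widetilde M}$, not $V^{\pi^{(T)}}_{\widetilde M}$, so this is the piece you need to add to your plan.
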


The first term captures the model shift error arising from the discrepancy between the estimated and true transition dynamics. It is defined as
\begin{equation}
    \label{def:Value func for a+c}
    V^{\pi^\dagger} _{P^*, \underline{r}} = \sum_{k=1}^K \mathbb{E}_{ (h_k, a_k)\sim d^{\pi^\dagger} _{P^*, k} } \left[ \underline{r}_k (h_k, a_k) \right] = \sum_{k=1}^K b_k \mathbb{E}_{ (h_k, a_k)\sim d^{\pi^\dagger} _{P^*, k} } \left[ \Gamma_k (h_k, a_k) \right].
\end{equation}
 Since the uncertainty quantifier \( \Gamma_k \) upper bounds the $L_1$ error between the estimated and true transition models,  \( \| \widehat{P}_k - P^*_k \|_1 \) (see Assumption \ref{assumption: uncertainty quantifier}), this term vanishes as the offline sample size increases, provided the transition model estimator is consistent.
 The second term quantifies the policy optimization error due to finite iterations and function approximation. The first component, \( \mathcal{O}(K/\sqrt{T}) \),  arises from the convergence rate of gradient-based updates under stochastic approximation, which is standard when using diminishing step sizes such as \( \eta_k^{(t)} = c_\eta/\sqrt{T} \). The second component depends on the function approximation error bound \( G_k \), which characterizes the
expressiveness of the function class used for policy learning.

Theorem \ref{thm: combined main thm} provides a general upper bound on the suboptimality of the final policy learned by Algorithm \ref{algo: general}. In our practical implementation (Algorithm~\ref{algo: sieve regression}), we apply this result using linear sieve functions as the approximating class. The next proposition quantifies the resulting policy optimization error.

\begin{proposition}
\label{prop: G_k}
Suppose \(\widetilde{M}=(\widehat{P},\widetilde{r})\) is the modified DTR model constructed in Algorithm~\ref{algo: sieve regression}. Under regularity conditions and specific hyperparameters detailed in Supplement \ref{subsub:pf prop}, Algorithm~\ref{algo: sieve regression} returns a policy \(\pi^{(T)}\) satisfying that:

\begin{equation}
\label{main:b bound}
V_{\widetilde{M}}^{\pi^\dagger} - V_{\widetilde{M}}^{\pi^{(T)}} =  \mathcal{O} \left( \frac{K}{\sqrt{T}} \right) + K \sqrt{T} \cdot \mathcal{O}_p\left( \left( \frac{\log m_k}{m_k} \right)^ {\alpha_1} \right),
\end{equation}
where \( m_k \) is the Monte Carlo sample size for approximating \( \widehat{Q}_k^{(t)}(h_k,a_k) \). \( \alpha_1>0 \) is a constant that depends on the smoothness properties of the function class, which will be specified in Supplement \ref{subsub:pf prop}. 
\end{proposition}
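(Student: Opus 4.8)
The quantity $V_{\widetilde{M}}^{\pi^\dagger}-V_{\widetilde{M}}^{\pi^{(T)}}$ is exactly term $(b)$ in the decomposition~\eqref{eq: split into 3 terms}, and the argument underlying Theorem~\ref{thm: combined main thm} already shows that this term is bounded by the policy optimization error $\mathcal O(K/\sqrt T)+K\cdot\mathcal O_p\big(T^{1/2}G_k+T^{3/2}G_k^2\big)$ whenever Assumption~\ref{assumption: sup norm of e} holds. The plan is therefore to (i) verify Assumption~\ref{assumption: sup norm of e} for the linear-sieve/B-spline policy class of Algorithm~\ref{algo: sieve regression} with $G_k=\mathcal O_p\big((\log m_k/m_k)^{\alpha_1}\big)$, and (ii) substitute this into the bound above. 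Step (ii) is immediate: under the regularity conditions of Supplement~\ref{subsub:pf prop} one has $G_k\to0$ and, after the $L_k$-balancing below, also $TG_k\to0$, so the quadratic term $T^{3/2}G_k^2=(TG_k)\cdot(T^{1/2}G_k)$ is of strictly smaller order than $T^{1/2}G_k$ and is absorbed, giving $\mathcal O(K/\sqrt T)+K\sqrt T\cdot\mathcal O_p\big((\log m_k/m_k)^{\alpha_1}\big)$.

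The heart of the argument is step (i). Fix a stage $k$ and iteration $t$, write $Q^\star:=Q^{\pi^{(t)}}_{k,\widetilde{M}}$ and $\widehat{f}_k^{(t+1)}:=f_k(\theta_k^{(t+1)},\cdot,\cdot)$, and note that since $f_k(\theta_k^{(t)},\cdot,\cdot)\in\mathcal F_k$, the update~\eqref{eq: optimization step 2} is an ordinary linear-sieve least-squares regression of the pseudo-response $\eta_k^{(t)}\widehat{Q}_k^{(t)}$ onto the B-spline features $\Upsilon_{L_k}$, evaluated at $m_k$ points drawn uniformly from $\mathcal H_k\times\mathcal A_k$. Letting $g_k^{(t)}$ be the $L^2\big(\mathrm{unif}(\mathcal H_k\times\mathcal A_k)\big)$-projection of $\eta_k^{(t)}Q^\star$ onto $\mathcal F_k$,
\begin{equation*}
e_k^{(t)}\;=\;\underbrace{\big(\widehat{f}_k^{(t+1)}-f_k(\theta_k^{(t)},\cdot,\cdot)-g_k^{(t)}\big)}_{\text{estimation error}}\;+\;\underbrace{\big(g_k^{(t)}-\eta_k^{(t)}Q^\star\big)}_{\text{sieve bias}}.
\end{equation*}
For the sieve bias I would invoke standard B-spline approximation bounds \citep{Chen_2015}: if $Q^\star$ is $\beta_k$-Hölder on the compact joint state space of dimension $d_k$, then $\|g_k^{(t)}-\eta_k^{(t)}Q^\star\|_\infty\lesssim \eta_k^{(t)}L_k^{-\beta_k/d_k}$, uniformly over the finitely many action histories $\bar{a}_k$. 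For the estimation error, the gap between $\widehat{f}_k^{(t+1)}$ and its population analogue $f_k(\theta_k^{(t)},\cdot,\cdot)+g_k^{(t)}$ is controlled by combining (a) a uniform Monte-Carlo bound $\sup_{(h_k,a_k)}|\widehat{Q}_k^{(t)}(h_k,a_k)-Q^\star(h_k,a_k)|=\mathcal O_p(\sqrt{\log m_k/m_k})$, which follows from Bernstein/Hoeffding plus a union bound since $\widetilde{r}$ (hence $Q^\star$) is bounded, with (b) the sup-norm fluctuation $\mathcal O_p\big(\sqrt{L_k\log m_k/m_k}\big)$ of the B-spline least-squares estimator, obtained under a design-regularity condition (eigenvalues of $\mathbb E[\Upsilon_{L_k}\Upsilon_{L_k}^\top]$ bounded away from $0$) together with the boundedness and local support of the B-spline basis. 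Balancing $\eta_k^{(t)}L_k^{-\beta_k/d_k}$ against $\eta_k^{(t)}\sqrt{L_k\log m_k/m_k}$ via the prescribed $L_k\asymp(m_k/\log m_k)^{d_k/(2\beta_k+d_k)}$ yields, after recombining the common factor $\eta_k^{(t)}=c_\eta/\sqrt T$ with the $T^{1/2}$ appearing in Theorem~\ref{thm: combined main thm}, $G_k=\mathcal O_p\big((\log m_k/m_k)^{\alpha_1}\big)$ with $\alpha_1=\beta_k/(2\beta_k+d_k)$ (taking the least favourable stage if $\alpha_1$ is to be stated stage-free).

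Two auxiliary ingredients are required. First, Hölder-smoothness of $Q^{\pi^{(t)}}_{k,\widetilde{M}}$ with a constant uniform in $t$: I would prove this by backward induction over $k$, using that $Q^{\pi^{(t)}}_{K,\widetilde{M}}=\widehat{r}_K-\widetilde{c}_K\Gamma_K$ is smooth because $\widehat{r}_K$ inherits smoothness from $\widehat{P}_K$ and the known $\overline{r}_K$ while $\Gamma_K$ is a smooth function of its arguments in the examples of Section~\ref{subsec: method modified DTR}, and that the Bellman recursion $Q^{\pi^{(t)}}_{k,\widetilde{M}}(h_k,a_k)=\widetilde{r}_k(h_k,a_k)+\mathbb E_{S_{k+1}\sim\widehat{P}_k(\cdot\mid h_k,a_k)}\mathbb E_{A_{k+1}\sim\pi^{(t)}_{k+1}}[Q^{\pi^{(t)}}_{k+1,\widetilde{M}}]$ preserves smoothness because $\widehat{P}_k(\cdot\mid h_k,a_k)$ varies smoothly in $(h_k,a_k)$ and the soft-max map in~\eqref{eq: policy class} is smooth in the B-spline features; the induced Hölder constant depends polynomially on $K$ and the model constants but not on $t$. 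Second, the $\mathcal O_p$ bound on $\sup_{(h_k,a_k)}|e_k^{(t)}(h_k,a_k)|$ must hold \emph{uniformly in} $t\in\{0,\dots,T-1\}$, as Assumption~\ref{assumption: sup norm of e} demands; this follows by using fresh Monte-Carlo draws at each iteration and a union bound over the $T$ iterations, costing only an extra $\sqrt{\log T}$ absorbed into the constants.

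The main obstacle is item (b) above — the \emph{uniform-in-$(h_k,a_k)$} (sup-norm) rate for the B-spline least-squares estimator, together with the uniform Monte-Carlo control of $\widehat{Q}_k^{(t)}$. This needs a chaining/covering argument exploiting the boundedness and local support of the B-spline basis, matrix-concentration control of the empirical Gram matrix, and — when $\widehat{Q}_k^{(t)}$ is itself generated by a backward recursion on $\widetilde{M}$ — a careful accounting of how per-stage Monte-Carlo errors propagate through the $K$ Bellman updates. The remaining work (tracking the $\eta_k^{(t)}$ factors, checking that $T^{3/2}G_k^2$ is dominated, and combining the $K$ stage-wise bounds) is routine once the uniform sieve rate is in hand.
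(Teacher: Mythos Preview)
Your overall strategy coincides with the paper's: reduce to the policy-optimization bound, establish the sup-norm rate for $e_k^{(t)}$ via sieve approximation of the Q-function, and verify Q-smoothness by backward induction through the Bellman recursion (this is exactly the paper's Lemma~S9, proved under Assumptions~S3--S5 on $p$-smoothness of $\widetilde{r}_k$, $\widehat{p}_k$, and $\pi$). The main difference is that the paper does not decompose $e_k^{(t)}$ into bias and variance and control each piece separately; instead it recognises that the least-squares step in Algorithm~\ref{algo: sieve regression} is precisely a nonparametric series regression with design points $\bar{s}_k^{(i)}$, target $Q^{\pi^{(t)}}_{k,\widetilde{M}}(\cdot,\bar{a}_k)/(\eta_k\|Q\|_\infty)$, and noise given by the Monte-Carlo evaluation error $\epsilon_k^{(i)}$ (Assumption~S6), and then invokes Theorem~2.1 of \citet{Chen_2015} as a black box to obtain $\sup_{(h_k,a_k)}|e_k^{(t)}|\le\eta_k\widetilde{q}_k\cdot\mathcal O_p\big((\log m_k/m_k)^{p/(2p+\bar d_k)}\big)$. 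Your route reproves essentially that theorem; theirs is shorter and avoids the chaining you flag as the ``main obstacle''. Note also that your separate Hoeffding-plus-union-bound treatment of the Monte-Carlo error over the continuous $(h_k,a_k)$ space is unnecessary in the paper's framing: there $\widehat{Q}_k^{(t)}$ is only ever evaluated at the $m_k$ design points, and the per-point error is simply the regression noise.

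There is one genuine slip. Your claim that $TG_k\to 0$ and hence the quadratic term $T^{3/2}G_k^2$ is dominated by $T^{1/2}G_k$ is not justified by any of the stated hypotheses: nothing in the proposition ties $m_k$ to $T$. The paper does the opposite absorption. Writing $z=(\log m_k/m_k)^{p/(2p+\bar d_k)}$ and using $G_k\asymp \eta_k z = z/\sqrt{T}$, the two error terms are $K\cdot T^{1/2}G_k\asymp Kz$ and $K\cdot T^{3/2}G_k^2\asymp K\sqrt{T}\,z^2$; since either $z\le 1/\sqrt{T}$ (so $Kz\le K/\sqrt{T}$) or $z\ge 1/\sqrt{T}$ (so $Kz\le K\sqrt{T}\,z^2$), the linear term is always absorbed into $\mathcal O(K/\sqrt{T})+K\sqrt{T}\cdot\mathcal O_p(z^2)$. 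This gives the paper's value $\alpha_1=2p/(2p+\bar d_k)$, which is sharper than your $\alpha_1=\beta_k/(2\beta_k+d_k)$. Your stated bound $K\sqrt{T}\cdot\mathcal O_p(z)$ happens to be a valid (looser) upper bound for both terms, so the proposition as written still follows, but the reasoning you give for it is incorrect.
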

By choosing  \( m_k = T^{\alpha_2} \) for any \( \alpha_2 > \frac{1}{2\alpha_1} \), this bound also vanishes in probability as \( T \to \infty \).  Therefore, under the stated assumptions, both the model shift error and policy optimization error diminish with increasing sample size and training iterations, and the learned policy \( \pi^{(T)} \) by Algorithm~\ref{algo: sieve regression} achieves near-optimal performance as the offline sample size \( n \to \infty \) and  \( T \to \infty \).

\subsection{Theoretical results of certain transition dynamics}
\label{sec:heoretical Results of Certain Policy Classes}
We now analyze the convergence behavior of the practical implementation (Algorithm~\ref{algo: sieve regression}) under specific transition models, including the linear transition model and GP model. Our theoretical analysis employs a partial coverage condition, which relaxes the restrictive full support assumption commonly required in offline RL. 

\subsubsection{Linear transition model}
\label{sec:linear model}

While classical RL and DTR literature often assumes strong positivity, i.e., full support over the entire state-action space \citep{uehara2022pessimistic, chang2021mitigating, jin2021pessimism}, we instead adopt the more realistic partial coverage assumption that allows for limited exploration in the offline data.

\begin{assumption}[Partial Coverage for Linear Transition Model]
\label{assumption: partial coverage linear}
For any \(1\le k\le K\), we define the partial coverage coefficient as
\begin{equation}
\label{eq: partial coverage linear}
C_k := \sup_{u\in \mathbb{R}^{ d_k^{(s)}} \setminus \{0\} } \frac{u^T \mathbb{E}_{(h_k,a_k) \sim d^{\pi^\dagger}_{P^*,k} } [\phi_k(h_k,a_k) \phi_k(h_k,a_k)^T ] u }{u^T \mathbb{E}_{(h_k,a_k) \sim \rho_k} [\phi_k(h_k,a_k) \phi_k(h_k,a_k)^T ] u},
\end{equation}
where \( \phi_k \) is the feature mapping defined in \eqref{eq: linear transition model}, \(\rho_k\) denotes the offline marginal distribution over history-action pairs \((h_k, a_k) \in \mathcal{H}_k \times \mathcal{A}_k\), 
induced by the behavior policy \(\pi_b\) and the ground truth dynamics \(P^*\), and \( d_{P^*,k}^{\pi^\dagger} \) is the state-action marginal distribution under the optimal policy $\pi^\dagger$ and the ground truth dynamics $P^*$. We assume that  $C_k < \infty$ for all $k$. 
\end{assumption}

This assumption ensures that the offline dataset provides sufficient coverage of the state-action space to estimate the transition model reliably. Under this coverage condition,  we analyze how model shift influences policy suboptimality by leveraging the explicit form of the uncertainty quantifier \( \Gamma_k \) from \eqref{eq: KNR, Gamma}.  In particular, we establish an explicit bound on the model shift error \( V^{\pi^\dagger} _{P^*, \underline{r}} \)  through \eqref{def:Value func for a+c}. By combining this bound with our policy optimization guarantee from \eqref{main:b bound}, we obtain the following convergence result.

\begin{theorem}[Suboptimality of the Linear Transition Model]
\label{thm: suboptimality for linear}
Suppose Assumptions \ref{assumption: uncertainty quantifier} and \ref{assumption: partial coverage linear} hold, along with the technical conditions in Supplement~\ref{proof:linear}. Then, with probability at least 
\( 1-K\delta \), the policy \( \pi^{(T)} \) returned by Algorithm \ref{algo: sieve regression} satisfies:
\[
\mathrm{Subopt}(\pi^{(T)}; M^*) \le \mathcal{O}\!\left( \frac{ K^2\sqrt{\log n} }{ \sqrt{n}}+ \frac{K^{7/2}}{n} \right)
+ \mathcal{O}\!\left( \frac{K}{\sqrt{T}} \right)
+ K \sqrt{T} \cdot \mathcal{O}_p\!\left( \left( \frac{\log m_k}{m_k} \right)^{\alpha_1} \right),
\]
where $\alpha_1>0$ is a
constant that depends on the smoothness properties of the function class, as specified in \eqref{main:b bound}. 
\end{theorem}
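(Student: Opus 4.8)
\textbf{Proof proposal for Theorem~\ref{thm: suboptimality for linear}.}

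The plan is to instantiate the general bound of Theorem~\ref{thm: combined main thm} with the linear transition model and then bound the model shift term $V^{\pi^\dagger}_{P^*,\underline{r}}$ explicitly using the closed form of $\Gamma_k$ from \eqref{eq: KNR, Gamma}. The policy optimization error is handled by Proposition~\ref{prop: G_k} directly — that already gives the two trailing terms $\mathcal{O}(K/\sqrt{T})$ and $K\sqrt{T}\cdot\mathcal{O}_p((\log m_k/m_k)^{\alpha_1})$ — so essentially all the work is in showing $V^{\pi^\dagger}_{P^*,\underline{r}} = \mathcal{O}\!\left(\frac{K^3\sqrt{\log n}}{\sqrt n} + \frac{K^5}{n}\right)$. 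By \eqref{def:Value func for a+c}, $V^{\pi^\dagger}_{P^*,\underline{r}} = \sum_{k=1}^K b_k\, \mathbb{E}_{(h_k,a_k)\sim d^{\pi^\dagger}_{P^*,k}}[\Gamma_k(h_k,a_k)]$, so I need (i) a bound on the per-stage weight $b_k = 2\widetilde{c}_k + \sum_{j=k+1}^K \widetilde{c}_j\|\Gamma_j\|_\infty$, and (ii) a bound on $\mathbb{E}_{d^{\pi^\dagger}_{P^*,k}}[\Gamma_k]$.

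For (ii): since $\Gamma_k(h_k,a_k) \le 2C_2\sqrt{\phi_k^T\Lambda_{k,n}^{-1}\phi_k}$, Jensen's inequality gives $\mathbb{E}_{d^{\pi^\dagger}_{P^*,k}}[\Gamma_k] \le 2C_2\sqrt{\mathbb{E}_{d^{\pi^\dagger}_{P^*,k}}[\phi_k^T\Lambda_{k,n}^{-1}\phi_k]} = 2C_2\sqrt{\Tr\!\big(\Lambda_{k,n}^{-1}\,\mathbb{E}_{d^{\pi^\dagger}_{P^*,k}}[\phi_k\phi_k^T]\big)}$. Now I invoke the partial coverage Assumption~\ref{assumption: partial coverage linear}: the coefficient $C_k$ says $\mathbb{E}_{d^{\pi^\dagger}_{P^*,k}}[\phi_k\phi_k^T] \preceq C_k\, \mathbb{E}_{\rho_k}[\phi_k\phi_k^T]$, so the trace is bounded by $C_k\,\Tr(\Lambda_{k,n}^{-1}\,\mathbb{E}_{\rho_k}[\phi_k\phi_k^T])$. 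The remaining step is the standard empirical-to-population comparison of the Gram matrices: $\Lambda_{k,n} = \sum_{i=1}^n \phi_k(h_k^{(i)},a_k^{(i)})\phi_k(h_k^{(i)},a_k^{(i)})^T + \lambda I$ concentrates around $n\,\mathbb{E}_{\rho_k}[\phi_k\phi_k^T] + \lambda I$ by a matrix Bernstein / matrix Chernoff inequality (using boundedness of $\phi_k$), so with probability $\ge 1-\delta$ one has $\Lambda_{k,n} \succeq \tfrac{1}{2}n\,\mathbb{E}_{\rho_k}[\phi_k\phi_k^T]$ once $n$ is large enough relative to $d_k^{(s)}\log(d_k^{(s)}/\delta)$ and the minimum eigenvalue of $\mathbb{E}_{\rho_k}[\phi_k\phi_k^T]$ (this is where the $\sqrt{\log n}$ and the regularity conditions in Supplement~\ref{proof:linear} enter, possibly via a union bound over $k$ and a slowly-growing confidence parameter). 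Hence $\Tr(\Lambda_{k,n}^{-1}\mathbb{E}_{\rho_k}[\phi_k\phi_k^T]) \le \tfrac{2d_k^{(s)}}{n}$, giving $\mathbb{E}_{d^{\pi^\dagger}_{P^*,k}}[\Gamma_k] = \mathcal{O}\big(\sqrt{C_k d_k^{(s)}/n}\big)$ up to constants and log factors, plus a lower-order $\mathcal{O}(1/n)$-type term coming from the $\lambda I$ ridge shift and the deviation in the concentration event.

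For (i): with $\widetilde{c}_k = \sum_{j=k}^K\|\overline{r}_j\|_\infty$ we have $\widetilde{c}_k = \mathcal{O}(K)$ (uniform reward bound), and $\|\Gamma_j\|_\infty \le 2$ by the truncation in \eqref{eq: KNR, Gamma}, so $b_k = 2\widetilde{c}_k + \sum_{j=k+1}^K \widetilde{c}_j\|\Gamma_j\|_\infty = \mathcal{O}(K) + \mathcal{O}(K)\cdot\mathcal{O}(K) = \mathcal{O}(K^2)$. Summing over $k$: $\sum_{k=1}^K b_k\,\mathbb{E}_{d^{\pi^\dagger}_{P^*,k}}[\Gamma_k] = K\cdot\mathcal{O}(K^2)\cdot\mathcal{O}(\sqrt{\log n/n}) + K\cdot\mathcal{O}(K^2)\cdot\mathcal{O}(1/n) = \mathcal{O}(K^3\sqrt{\log n}/\sqrt n) + \mathcal{O}(K^5/n)$ — wait, I need the $K^5$ to come out, so the $1/n$ term must actually carry an extra $K^2$; this will come from the fact that on the concentration event the trace bound picks up an additive $\lambda/n$ times a dimension factor, or more naturally from keeping a second-order $(\cdot)^2$ term when I do not simply truncate $\Gamma$ but expand $V^{\pi^\dagger}_{P^*,\underline{r}}$ — I will track which contribution scales like $K^2/n$ per stage so that the $b_k$ weighting and the outer sum produce $K^5/n$. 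Combining the model-shift bound with Proposition~\ref{prop: G_k} and taking the intersection of the $1-\delta$ events (Assumption~\ref{assumption: uncertainty quantifier} over the $K$ stages, plus the Gram-matrix concentration events, all folded into probability $\ge 1-K\delta$ by rescaling $\delta$) yields the stated bound.

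The main obstacle I anticipate is the Gram-matrix concentration step: getting the clean $\Lambda_{k,n}^{-1} \preceq \tfrac{2}{n}(\mathbb{E}_{\rho_k}[\phi_k\phi_k^T])^{-1}$-type inequality requires $\mathbb{E}_{\rho_k}[\phi_k\phi_k^T]$ to be well-conditioned (a regularity condition on $\pi_b$ and $\phi_k$ that must be stated in Supplement~\ref{proof:linear}), and the precise bookkeeping of how the dimensions $d_k^{(s)}$, the coverage coefficients $C_k$, the reward bounds, and the log factors aggregate across $K$ stages to produce exactly the $K^3\sqrt{\log n}/\sqrt n + K^5/n$ form — in particular pinning down where the extra powers of $K$ in the $1/n$ term originate (ridge bias versus second-order Taylor remainder in the value decomposition). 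Everything else is either a direct citation of the two preceding results or a routine application of Jensen and matrix concentration.
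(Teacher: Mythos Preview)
Your overall architecture is right: instantiate Theorem~\ref{thm: combined main thm}, control $V^{\pi^\dagger}_{P^*,\underline{r}} = \sum_k b_k\,\mathbb{E}_{d^{\pi^\dagger}_{P^*,k}}[\Gamma_k]$, and read off the policy-optimization terms from Proposition~\ref{prop: G_k}. The Jensen step and the partial-coverage change of measure to $\rho_k$ are exactly what the paper does. But the accounting for powers of $K$ is off in two places, and your guess about the origin of the $K^5/n$ term is wrong.

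First, $C_2$ in \eqref{eq: KNR, Gamma} is \emph{not} a universal constant: it contains $\beta_{k,n}$ (see Supplement~\ref{subsec: kernelized}), and under the technical conditions of Supplement~\ref{proof:linear} one has $d_k^{(s)} = \mathcal{O}(k)$ and $\det\Lambda_{k,n}=\mathcal{O}(n^{d_k^{(s)}})$, so $\beta_{k,n}\lesssim\sqrt{k\log n}$. Combined with $\Tr(\Lambda_{k,n}^{-1}\mathbb{E}_{\rho_k}[\phi_k\phi_k^\top])=\mathcal{O}(d_k^{(s)}/n)=\mathcal{O}(k/n)$, this gives $\mathbb{E}_{d^{\pi^\dagger}_{P^*,k}}[\Gamma_k]\lesssim k\sqrt{\log n/n}$, not $\mathcal{O}(\sqrt{\log n/n})$ as you wrote. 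Second, using the truncation $\|\Gamma_j\|_\infty\le 2$ is too crude to recover the stated bound: it gives $b_k=\mathcal{O}(K^2)$, and then $\sum_k b_k\cdot k\sqrt{\log n/n}=\mathcal{O}(K^4\sqrt{\log n}/\sqrt{n})$, which is the wrong power. The paper instead bounds $\|\Gamma_j\|_\infty$ sharply: using bounded features and a matrix Chernoff lower bound $\lambda_{\min}(\Lambda_{k,n})=\Omega(n)$, one gets $\|\Gamma_j\|_\infty\lesssim j\sqrt{\log n/n}$, hence $b_k=2\widetilde{c}_k+\sum_{j>k}\widetilde{c}_j\|\Gamma_j\|_\infty=\mathcal{O}\big(K+K^3\sqrt{\log n}/\sqrt{n}\big)$. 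Multiplying this two-part $b_k$ against $\mathbb{E}[\Gamma_k]\lesssim k\sqrt{\log n/n}$ and summing over $k$ produces exactly $\mathcal{O}(K^3\sqrt{\log n}/\sqrt{n})$ from the first part and $\mathcal{O}(K^5\log n/n)$ from the second. That cross-product is the true source of the $K^5/n$ term; it has nothing to do with ridge bias or a second-order Taylor remainder.
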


The first term controls the estimation error under finite offline sample size $n$, and vanishes as \( n \to \infty \). The second term represents the policy optimization error, which vanishes as \( T \to \infty \). The last term accounts for function approximation error in estimating the $Q$-function, which is controlled by the Monte Carlo sample size \( m_k \). In this bound, we focus on the asymptotic dependence on the sample size $n$, horizon $K$, and iteration count $T$. Problem-dependent constants including the coverage coefficients \(\{C_k\}_{k=1}^K\), the uncertainty quantifier norms \(\|\Gamma_k\|_\infty\), and the reward bounds \(\|\overline{r}_k\|_\infty\) are absorbed into the Big-$O$ notation, as is standard in finite-sample analyses where such quantities are treated as fixed.
Next, we establish a minimax lower bound, showing the rate is near-optimal.

\begin{theorem}[Minimax Lower Bound of the Linear Transition Model]
\label{thm:lower-bound-hp}
There exist a universal constant $C_{\text{lower}}>0$
and a class $\mathcal M$ of DTR models with linear transition kernels such that the following holds.
Given $n$ i.i.d.\ offline trajectories of length $K$, $\mathcal D
  =
  \big\{(s_1^{(i)}, a_1^{(i)}, r_1^{(i)}, \dots, s_K^{(i)}, a_K^{(i)}, r_K^{(i)})
  \big\}_{i=1}^n,$
for the final policy $\widehat\pi$ of any learning algorithm $\mathcal A$,
\[
  \inf_{\mathcal A}
  \sup_{M\in\mathcal M}
  \mathbb P_M\!\left(
    V_M^{\pi^*} - V_M^{\widehat\pi} \;\ge\; C_{\text{lower}} \frac{K^2}{\sqrt{n}}
  \right)
  \;\ge\; \frac{1}{4}.
\]
\end{theorem}

Theorem~\ref{thm:lower-bound-hp} establishes a minimax lower bound for the same class of linear-transition DTR models, proving that any offline algorithm based on $n$ trajectories must incur a worst-case value error of at least $\Omega(K^{2}/\sqrt{n})$. When the
number of policy optimization iterations $T \to \infty$,  the dependence of our upper bound on $K$ and $n$ matches this lower bound up to a $\sqrt{\log n}$ factor. Therefore, Algorithm~\ref{algo: sieve regression} is near-minimax optimal for this model class. Based on the above theorems, we derive the following corollary, which quantifies the required offline sample size to guarantee an $\epsilon$-optimal policy in the limit as $T\to \infty$.

\begin{corollary}[Sample Complexity for Linear Transition Model] 
\label{Cor:Sample Complexity of Linear Model}
Under the conditions of Theorem \ref{thm: suboptimality for linear}, with probability at least \( 1-K\delta \), the suboptimality gap satisfies \( \mathrm{Subopt}(\pi^{(T)}; M^*) \leq \epsilon \) as \( T, m_k \to \infty \), provided that the offline dataset size satisfies  
$n = \widetilde{\Theta} \left(\frac{K^4}{\epsilon^2} \right).$
\end{corollary}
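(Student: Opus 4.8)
The plan is to derive Corollary~\ref{Cor:Sample Complexity of Linear Model} as a direct consequence of the suboptimality bound in Theorem~\ref{thm: suboptimality for linear}. Since the last two terms of that bound, $\mathcal{O}(K/\sqrt{T})$ and $K\sqrt{T}\cdot\mathcal{O}_p((\log m_k/m_k)^{\alpha_1})$, both vanish in probability as $T\to\infty$ and $m_k\to\infty$ (choosing $m_k=T^{\alpha_2}$ with $\alpha_2>1/(2\alpha_1)$ as discussed after Proposition~\ref{prop: G_k}), the asymptotically binding constraint is the offline estimation term $\mathcal{O}\!\left(\frac{K^3\sqrt{\log n}}{\sqrt{n}}+\frac{K^5}{n}\right)$. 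The goal is to choose $n$ so that this term is at most $\epsilon$.

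First I would handle the two summands separately. For the dominant term, I require $\frac{K^3\sqrt{\log n}}{\sqrt{n}}\lesssim \epsilon$, i.e. $n\gtrsim \frac{K^6\log n}{\epsilon^2}$. This is an implicit inequality in $n$; the standard resolution is to absorb the $\log n$ factor into the $\widetilde{\Theta}(\cdot)$ notation, which by definition suppresses polylogarithmic factors. Concretely, setting $n = c\,\frac{K^6}{\epsilon^2}\log\!\left(\frac{K^6}{\epsilon^2}\right)$ for a suitable constant $c$ makes $\frac{K^3\sqrt{\log n}}{\sqrt n}\le \epsilon/2$ for all small enough $\epsilon$, since $\log n = \Theta(\log(K/\epsilon))$ grows only logarithmically in the chosen $n$. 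For the second summand I need $\frac{K^5}{n}\lesssim \epsilon$, i.e. $n\gtrsim K^5/\epsilon$; since we already take $n=\widetilde\Theta(K^6/\epsilon^2)$ and $K^6/\epsilon^2 \ge K^5/\epsilon$ whenever $K/\epsilon\ge 1$, this is automatically dominated. Combining the two, $n=\widetilde\Theta(K^6/\epsilon^2)$ suffices to drive the estimation term below $\epsilon$.

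Then I would finish by taking the limit $T,m_k\to\infty$ in the bound of Theorem~\ref{thm: suboptimality for linear}, which kills the policy-optimization and function-approximation terms, leaving only the estimation term; since that term is $\le\epsilon$ under the chosen $n$, we conclude $\mathrm{Subopt}(\pi^{(T)};M^*)\le\epsilon$ with probability at least $1-K\delta$, as claimed. The high-probability statement carries over verbatim because the event of probability $1-K\delta$ in Theorem~\ref{thm: suboptimality for linear} is exactly the intersection $\mathcal{E}_1\cap\cdots\cap\mathcal{E}_K$ from Assumption~\ref{assumption: uncertainty quantifier}, which does not depend on $T$ or $m_k$.

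The only real subtlety — and the step I would flag as requiring care rather than being purely mechanical — is the self-referential $\log n$ in the inequality $n\gtrsim K^6\log n/\epsilon^2$: one must verify that the $\widetilde\Theta$ notation is being used consistently, i.e. that plugging the candidate $n$ back in does not blow up the logarithmic factor (it does not, since $\log(\widetilde\Theta(K^6/\epsilon^2)) = \mathcal{O}(\log(K/\epsilon))$, which is exactly what $\widetilde\Theta$ is entitled to suppress). Everything else is bookkeeping: matching powers of $K$, checking that $K^5/\epsilon$ is dominated by $K^6/\epsilon^2$, and noting that the probability guarantee is inherited unchanged from Theorem~\ref{thm: suboptimality for linear}.
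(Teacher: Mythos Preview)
Your proposal is correct and follows essentially the same approach as the paper: invoke the bound from Theorem~\ref{thm: suboptimality for linear}, let the $T$- and $m_k$-dependent terms vanish in the limit, and then solve $\widetilde{\mathcal{O}}(K^3/\sqrt{n}+K^5/n)\le\epsilon$ for $n$, absorbing the $\log n$ factor into $\widetilde{\Theta}$. Your treatment is in fact slightly more explicit than the paper's about the self-referential logarithm and the domination of the $K^5/n$ term, but the argument is the same.
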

\cbend

\begin{remark}
\citet{li2024settling} and \citet{xiong2023nearlyminimaxoptimaloffline} analyze offline RL in finite-horizon MDPs with a \emph{single} transition kernel, under tabular and linear function approximation settings, respectively. They establish a sample complexity of $n = \widetilde{\Theta}\!\left(K^{3}/\varepsilon^{2}\right)$ for achieving an $\varepsilon$-optimal policy. In contrast, our sample complexity bound in Corollary~\ref{Cor:Sample Complexity of Linear Model} is larger by a factor of \( K \), i.e., \( n = \widetilde{\Theta} \left( \frac{K^4}{\epsilon^2} \right) \) and is minimax-optimal for the considered DTR linear transition model class. The difference in horizon dependence arises from the structural differences between the models.  Our DTR model involves $K$ distinct stage-specific linear transition kernels, whose dimensions grow linearly with the horizon $K$. Therefore, the accumulation of stage-wise model shift errors inherently yields a larger dependence on the horizon.
\end{remark}

\subsubsection{Gaussian Process Transition Model}
\label{sec:GP-model}
We now analyze the suboptimality of Algorithm \ref{algo: sieve regression} under a nonparametric GP transition model. The GP framework offers a flexible, nonparametric approach for capturing complex transition dynamics through kernel-based function approximation. For our theoretical analysis, we extend the partial coverage condition from the linear setting to the more general framework of Reproducing Kernel Hilbert Spaces (RKHS). This extension enables us to handle the richer function class while maintaining our coverage guarantees.

\begin{assumption}[Partial Coverage for GP Transition Model]
\label{assumption:partial-coverage-GP}
For each decision stage \(1 \leq k \leq K\), we define the partial coverage coefficient as
\begin{equation}
\label{eq:partial-coverage-GP}
C_k := \sup_{f \in \mathcal{H} \setminus \{0\}} \frac{\mathbb{E}_{(h_k, a_k) \sim d^{\pi^\dagger}_{P^*, k}}\left[f(h_k,a_k)\right]^2}{\mathbb{E}_{(h_k, a_k) \sim \rho_k}\left[f(h_k,a_k)\right]^2},
\end{equation}
where \(\mathcal{H}\) denotes the RKHS induced by the GP kernel, and \(f \in \mathcal{H}\) represents a real-valued function over history-action pairs. We assume that \(C_k < \infty\) for all \(1 \leq k \leq K\).
\end{assumption}


This assumption ensures that the offline dataset adequately spans the state-action regions required for accurate estimation of GP-based transition dynamics.  With this in place, we derive the suboptimality bound for the GP setting by leveraging the closed-form uncertainty quantifier 
\( \Gamma_k \) from  \eqref{eq: GP, eq 3} and substituting it into the general bound given in  \eqref{main:b bound}.

\begin{theorem}[Suboptimality of GP Model]
\label{thm:suboptimality-GP}
Suppose Assumptions \ref{assumption: uncertainty quantifier}, \ref{assumption: sup norm of e}, and \ref{assumption:partial-coverage-GP} hold, along with the technical conditions in Supplement \ref{proof sec: GP}. For the policy \(\pi^{(T)}\) returned by Algorithm \ref{algo: sieve regression}, the suboptimality satisfies, with probability at least \(1-K\delta\),
$$\mathrm{Subopt}(\pi^{(T)}; M^*) \leq \mathcal{O}\left(K^3 (\log n)^2 n^{\frac{5}{2\omega}-\frac{1}{2}}\right) + \mathcal{O}\left(\frac{K}{\sqrt{T}}\right) + K \sqrt{T} \cdot \mathcal{O}_p\left(\left(\frac{\log m_k}{m_k}\right)^{\alpha_1}\right), $$
where \(\omega\) is the eigenvalue decay rate of the GP kernel, and \(\alpha_1\) is the same constant as defined in Theorem \ref{thm: suboptimality for linear}. 
\end{theorem}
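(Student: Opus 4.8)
\textbf{Proof plan for Theorem \ref{thm:suboptimality-GP}.}

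The plan is to follow the same three-term decomposition \eqref{eq: split into 3 terms} that underlies Theorem \ref{thm: combined main thm}, and to control each piece under the GP structure. Terms $(a)$ and $(c)$ are bundled into the model shift error $V^{\pi^\dagger}_{P^*,\underline{r}} = \sum_{k=1}^K b_k\,\mathbb{E}_{(h_k,a_k)\sim d^{\pi^\dagger}_{P^*,k}}[\Gamma_k(h_k,a_k)]$ from \eqref{def:Value func for a+c}, with $\Gamma_k(x) = (\beta_n/\sigma)\sqrt{\widehat h(x,x)}$ from \eqref{eq: GP, eq 3}. Term $(b)$ is already handled in full generality by Proposition \ref{prop: G_k}, which supplies the second and third summands $\mathcal{O}(K/\sqrt T) + K\sqrt T\cdot\mathcal{O}_p((\log m_k/m_k)^{\alpha_1})$ verbatim; nothing GP-specific is needed there. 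So the entire job reduces to bounding $\sum_{k=1}^K b_k\,\mathbb{E}_{d^{\pi^\dagger}_{P^*,k}}[\Gamma_k]$ by $\mathcal{O}(K^3(\log n)^2 n^{5/(2\omega)-1/2})$.

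The key steps, in order: (i) Use the partial coverage Assumption \ref{assumption:partial-coverage-GP} to transfer the expectation under $d^{\pi^\dagger}_{P^*,k}$ to an expectation under the offline distribution $\rho_k$. Since $\Gamma_k$ is (a multiple of) the posterior-standard-deviation functional $\sqrt{\widehat h(x,x)}$ of a function in the RKHS $\mathcal H$, I would apply a Cauchy--Schwarz step, $\mathbb{E}_{d^{\pi^\dagger}_{P^*,k}}[\Gamma_k] \le \sqrt{C_k}\,\sqrt{\mathbb{E}_{\rho_k}[\Gamma_k^2]}$ (exactly the RKHS analogue of the linear-model argument in Section \ref{sec:linear model}, where $\phi_k^\top\Lambda_{k,n}^{-1}\phi_k$ plays the role of $\widehat h(x,x)$). (ii) Bound the average posterior variance $\mathbb{E}_{\rho_k}[\widehat h(x,x)]$ over the offline data by the standard ``information gain'' / effective-dimension estimate: $\sum_{i=1}^n \widehat h_{i-1}(x_i,x_i) = \mathcal{O}(\gamma_n)$ where $\gamma_n$ is the maximum information gain, and under a polynomial eigenvalue decay rate $\omega$ one has $\gamma_n = \widetilde{\mathcal{O}}(n^{1/\omega})$; dividing by $n$ gives $\mathbb{E}_{\rho_k}[\widehat h(x,x)] = \widetilde{\mathcal O}(n^{1/\omega - 1})$, hence $\sqrt{\mathbb{E}_{\rho_k}[\Gamma_k^2]} = (\beta_n/\sigma)\,\widetilde{\mathcal O}(n^{1/(2\omega)-1/2})$. (iii) Substitute the explicit form of $\beta_n$ from Supplement \ref{subsec: GP} (which grows polylogarithmically in $n$ times a factor tied to the $d_{k+1}^{(s)}$-dimensional output and, through $\gamma_n$, another $n^{1/\omega}$ or $\sqrt{\gamma_n}$ factor) — tracking this carefully is what produces the exponent $5/(2\omega) - 1/2$ and the $(\log n)^2$ prefactor. (iv) Collect the stage-wise factors: each $b_k = 2\widetilde c_k + \sum_{j>k}\widetilde c_j\|\Gamma_j\|_\infty$ is $\mathcal O(K)$ (since $\widetilde c_k = \sum_{j\ge k}\|\overline r_j\|_\infty = \mathcal O(K)$ and $\|\Gamma_j\|_\infty$ is bounded), summing over $k$ gives another $\mathcal O(K^2)$, and combined with $\sqrt{C_k}$ absorbed into the constant this yields the overall $K^3$ prefactor. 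Finally, add the three terms and invoke Assumption \ref{assumption: uncertainty quantifier} to get the $1-K\delta$ probability statement.

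The main obstacle is step (iii): getting the exponent of $n$ exactly right. The clean rate $n^{5/(2\omega)-1/2}$ only emerges if one is careful about how many powers of the information gain $\gamma_n \asymp n^{1/\omega}$ enter — one factor from averaging the posterior variance, and additional factors hidden inside $\beta_n$ (the RKHS-confidence-ball radius for vector-valued GP regression typically scales like $\sqrt{\gamma_n}$ or $\gamma_n$ depending on the concentration bound used, and there may be a further $\gamma_n$ from converting an $L^1$/TV bound on $\widehat P_k$ to the posterior-variance scale). Pinning down that $\beta_n \asymp (\log n)\cdot n^{2/\omega}$-type dependence, and confirming it is consistent with the $C_2$-style constants used in the linear case, is the delicate bookkeeping; everything else is either a direct quotation of Proposition \ref{prop: G_k} or a routine Cauchy--Schwarz-plus-coverage argument that mirrors the linear-model proof in Supplement \ref{proof:linear}. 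A secondary subtlety is verifying that Assumption \ref{assumption: sup norm of e} (needed here, unlike in Theorem \ref{thm: suboptimality for linear} where it is replaced by the sieve analysis) is compatible with the GP transition estimate feeding into $\widehat Q_k^{(t)}$ — but this is a regularity condition deferred to Supplement \ref{proof sec: GP} rather than something proved here.
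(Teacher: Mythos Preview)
Your overall plan matches the paper's: decompose via Theorem~\ref{thm: combined main thm}, quote Proposition~\ref{prop: G_k} for the optimization error, and bound the model-shift sum $\sum_k b_k\,\mathbb E_{d^{\pi^\dagger}_{P^*,k}}[\Gamma_k]$ by a change-of-measure (partial coverage) plus a posterior-variance estimate under polynomial eigendecay. But step (iv) contains a concrete error that is the real source of the bookkeeping puzzle you flag in step (iii).

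You assert that $\|\Gamma_j\|_\infty$ is bounded and hence $b_k=\mathcal O(K)$. In the GP case this is false. From \eqref{eq: GP, eq 3}, $\Gamma_k(x)=(\beta_n/\sigma)\sqrt{\widehat h(x,x)}$; the posterior variance is indeed uniformly bounded by $M=\sup_x h(x,x)$ (one of the technical lemmas in Supplement~\ref{proof sec: GP}), but $\beta_n$ itself grows with $n$. Using the log-determinant expression in \eqref{eq: GP, eq 4} together with the effective-dimension estimate $d^*(n)\lesssim n^{1/\omega}$, the paper obtains $\beta_n=\mathcal O\bigl((\log n)^2 n^{1/\omega}\bigr)$ --- a single power of $n^{1/\omega}$, not two as you conjecture. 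Consequently $\|\Gamma_j\|_\infty=\widetilde{\mathcal O}(n^{1/\omega})$, and the second piece of $b_k = 2\widetilde c_k + \sum_{j>k}\widetilde c_j\|\Gamma_j\|_\infty$ is of order $K^2 n^{1/\omega}$, not $K$. This is precisely where both the $K^3$ prefactor and the exponent $5/(2\omega)$ originate: your steps (i)--(ii) correctly give $\mathbb E_{d^{\pi^\dagger}_{P^*,k}}[\Gamma_k]\lesssim\beta_n\sqrt{n^{1/\omega-1}}=\widetilde{\mathcal O}(n^{3/(2\omega)-1/2})$; multiplying by $b_k=\widetilde{\mathcal O}(K^2 n^{1/\omega})$ and summing over $k$ yields $K^3\, n^{5/(2\omega)-1/2}$. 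The extra $n^{1/\omega}$ you were hunting for inside $\beta_n$ is not there --- it enters through $\|\Gamma_j\|_\infty$ in the definition of $b_k$. Once (iv) is corrected this way, the rest of your plan goes through essentially as the paper has it.
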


This result parallels Theorem \ref{thm: suboptimality for linear} for the linear transition model. As each term vanishes in the appropriate limit: $n\to \infty, T\to \infty, m_k \to \infty$, the overall bound converges to zero. Using this bound, we establish a corresponding sample complexity result for achieving $\epsilon$-optimality under the GP setting.

\begin{corollary}[Sample Complexity of GP Model]
\label{cor:sample-complexity-GP}
Under the conditions of Theorem \ref{thm:suboptimality-GP}, with probability at least \(1-K\delta\), the suboptimality gap satisfies \( \mathrm{Subopt}(\pi^{(T)}; M^*) \leq \epsilon \) as \( T \to \infty \), provided that the offline dataset size satisfies: $n = \widetilde{\Theta}\left(\left(\frac{K^3}{\epsilon}\right)^{\frac{2\omega}{\omega-5}}\right).$
\end{corollary}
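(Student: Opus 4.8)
The plan is to start from the suboptimality bound in Theorem~\ref{thm:suboptimality-GP} and treat each of the three terms separately, then choose $n$ to drive the $n$-dependent term below $\epsilon$ (up to constants and logarithmic factors). First I would observe that since $m_k, T \to \infty$, the policy optimization error $\mathcal{O}(K/\sqrt{T})$ and the function approximation error $K\sqrt{T}\cdot \mathcal{O}_p\big((\log m_k/m_k)^{\alpha_1}\big)$ both vanish: as in the discussion after Proposition~\ref{prop: G_k}, it suffices to let $m_k$ grow polynomially in $T$ with exponent exceeding $1/(2\alpha_1)$ so that $K\sqrt{T}(\log m_k / m_k)^{\alpha_1} \to 0$, and then send $T \to \infty$. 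Hence in the limit the suboptimality is controlled entirely by the model shift term $\mathcal{O}\big(K^3 (\log n)^2\, n^{\frac{5}{2\omega}-\frac12}\big)$.

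Next I would solve the inequality $K^3 (\log n)^2\, n^{\frac{5}{2\omega}-\frac12} \le \epsilon$ for $n$, suppressing the $(\log n)^2$ factor into the $\widetilde{\Theta}$ notation. Writing the exponent as $\frac{5}{2\omega} - \frac12 = -\frac{\omega - 5}{2\omega}$ (which is negative precisely when $\omega > 5$, the regime in which the bound is informative), the condition becomes $n^{\frac{\omega-5}{2\omega}} \gtrsim K^3/\epsilon$, i.e. $n \gtrsim (K^3/\epsilon)^{\frac{2\omega}{\omega-5}}$, up to poly-log factors. This yields exactly $n = \widetilde{\Theta}\big((K^3/\epsilon)^{\frac{2\omega}{\omega-5}}\big)$, and the high-probability statement is inherited verbatim from Theorem~\ref{thm:suboptimality-GP} since we have not touched the event $\mathcal{E}_1 \cap \cdots \cap \mathcal{E}_K$ on which that bound holds.

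The main subtlety — really the only place requiring care rather than algebra — is the interchange of limits: the model shift term is deterministic in $n$, but the optimization and approximation terms are $\mathcal{O}_p$ statements in $T$ and $m_k$, so I need to argue that for the chosen scaling $m_k = m_k(T)$ one genuinely has $\mathrm{Subopt}(\pi^{(T)}; M^*) \le \epsilon$ with probability at least $1-K\delta$ in the limit. The clean way is to fix $n$ first at the claimed order so the model shift term is at most $\epsilon/2$ on the event $\bigcap_k \mathcal{E}_k$, and then note that the remaining $T$- and $m_k$-dependent terms are below $\epsilon/2$ for all $T$ large enough (with $m_k$ scaled as above), so that $\limsup_{T\to\infty}$ of the suboptimality is at most $\epsilon$ on that same event. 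I expect no genuine obstacle here; the argument is the direct GP-kernel analogue of Corollary~\ref{Cor:Sample Complexity of Linear Model}, with the polynomial rate $n^{-(\omega-5)/(2\omega)}$ from the RKHS eigenvalue-decay analysis replacing the parametric $n^{-1/2}$ rate of the linear model.
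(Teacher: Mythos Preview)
Your proposal is correct and follows essentially the same approach as the paper: invoke the bound from Theorem~\ref{thm:suboptimality-GP}, note that the $T$- and $m_k$-dependent terms vanish in the limit, and algebraically invert the remaining model shift term $K^3(\log n)^2 n^{5/(2\omega)-1/2}\le\epsilon$ to obtain the claimed sample complexity. Your treatment of the limit interchange via the $\epsilon/2$ splitting is actually more careful than the paper's terse statement that the optimization terms ``vanish under appropriate choices of hyperparameters.''
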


This corollary illustrates the trade-off between sample complexity and the smoothness of the function class defined by the GP kernel. The parameter \( \omega \) reflects the eigenvalue decay rate: a larger \( \omega \) implies faster decay and smoother function spaces, thereby reducing the number of samples needed to achieve a given level of suboptimality. In contrast, smaller values of \( \omega \) indicate slower decay, reflecting more complex function classes and resulting in higher sample complexity. To our knowledge, this is the first suboptimality guarantee for model-based offline learning in DTRs under a nonparametric GP transition model. Although GP models typically require more data than parametric alternatives such as linear transition models, this cost reflects their greater flexibility in capturing complex, nonlinear dynamics.

\section{Simulation Study}
\label{sec: simulation}


\subsection{Simulation setup}
We evaluate the performance of POLAR by simulation studies, and investigate the impact of pessimism on optimal policy estimation. 
We consider a 3-stage DTR setting, where each state-action trajectory takes the form $(s_1,a_1,s_2,a_2,s_3,a_3,s_4)$. Assume that the state space is a two-dimensional continuous space, denoted as  $s_k = (s_k^1, s_k^2)$, $k=1,\dots, 4$. The action space is binary, with $a_k \in \{0,1\}$ for each stage. The initial state $s_1$ is generated uniformly from the unit square $[0,1]^2$. At each stage $k$, we define the true transition dynamics $P_k^*(\cdot \mid h_k,a_k)$ using a linear transition model:  $s_{k+1}= W_k^a \phi(s_k)+\varepsilon_k$, 
where $\phi(s_k) = (1,s_{k}^1, s_k^2)$, $W_k^{a}$ is a $2\times 3$ matrix for each $k$ and $a$, and $\varepsilon_k = (\varepsilon_k^{1}, \varepsilon_k^{2})$ is the noise term. The values of $W_k^a$ and the specification of $\varepsilon_k$ are provided in Supplement \ref{sec:simulation_setup}. We assume that the reward is generated only at the final stage ($r_1 = r_2 = 0$), and it is defined as $
r=3.8\left[\left(\cos \left(-s_3^1 \pi\right)+2 \cos \left(s_3^2 \pi\right)+s_4^1+2 s_4^2\right)\left(1+a_3\right)-1.37\right]$.

Based on the ground truth transition models and reward function, we first approximate the optimal policy $\pi^*$ using Dynamic Programming \citep{bellman1957dynamic}.  The implementation details are provided in Supplement \ref{sec: Dynamic Programming}.  We then construct the behavior policy $\pi_b$, which takes the optimal action $a^*$ given by $\pi^*$ with probability $p$, and takes the non-optimal action $1-a^*$ with probability $1-p$. Offline data are generated using 
$\pi_b$, consisting of $n$ trajectories. We explore various behavior policies by varying the probability parameter $p \in \{0.95, 0.75, 0.55\}$ and the offline sample size $n \in \{50, 200, 1000, 5000, 20000\}$. 
For each combination of $(p, n)$, we conduct $N=100$ repeated  simulations. 

For each simulated dataset, we apply the proposed POLAR to estimate the optimal policy, resulting in the learned policy $\widehat{\pi}$. The value function of $\widehat{\pi}$ is then evaluated under the simulated true transition $P^*$ and reward function $r$ using Monte Carlo sampling.  As the hyperparameter  $c$ controls the degree of uncertainty penalization, we implement POLAR using a range of values of $c\in \{0,5,10,50,100\}$, where $c=0$ represents  no pessimism.  In our simulation study, $T=20$ steps proves sufficient for convergence. For comparison, we apply five alternative methods to each simulated dataset: (1) Q-learning for DTRs (DTR-\(Q\)), which estimates the state-action value function recursively \citep{murphy05a}, with implementation details provided in Supplement \ref{sec:standard_q_learning}; (2) a regression-based method for estimating optimal DTRs using the R package DTRreg \citep{DTRreg}, which fits a sequence of outcome regression models to identify optimal actions at each stage; (3) deep reinforcement learning with Double Q-learning (DDQN) \citep{van2016deep}, implemented in Python using deep neural networks to approximate the Q-function; (4) model-based offline policy optimization (MOPO) \citep{yu2020mopo}, which performs policy optimization using a learned dynamics model with an uncertainty-based penalty to discourage exploitation of out-of-distribution regions; and
(5) model-based imitation learning from offline data (MILO) \citep{chang2021mitigating}, which frames offline policy learning as a distribution-regularized imitation problem, constraining the policy to the behavior policy. \cbend

\normalcolor


\subsection{Simulation results}





We first examine the performance of POLAR by examining  the policy value derived from the estimated optimal policy. When the behavior policy $\pi_b$ is defined with $p=0.75$ and the offline data size is $n=200$, 
Figure~\ref{figure: simu-1} presents the policy value versus the number of iterations for POLAR under different values of the hyperparameter $c$ and all comparison methods, averaged over 100 repeated simulations.  As the iteration increases, the policy values of POLAR with different choices of $c$ all converge. When no uncertainty penalization is applied ($c=0$), POLAR underperforms, likely due to overfitting in state-action regions poorly covered by the offline data.   On the other extreme, with a large penalization parameter  ($c=100$), the policy becomes overly conservative, prioritizing uncertainty minimization at the cost of reward. The best performance is achieved when \(c=50\), where POLAR effectively balances between reward maximization and uncertainty penalization. This moderate level of pessimism helps the algorithm mitigate the effects of distributional shift while still learning a near-optimal policy.
Compared to the baseline methods, POLAR consistently achieves higher policy values than DTR-\(Q\), DDQN, DTRreg, MOPO, and MILO, \cbend  highlighting the benefits of incorporating a principled pessimism strategy and leveraging full historical information in offline RL.




\begin{figure}[ht]
    \centering
    \includegraphics[width=0.8\linewidth]{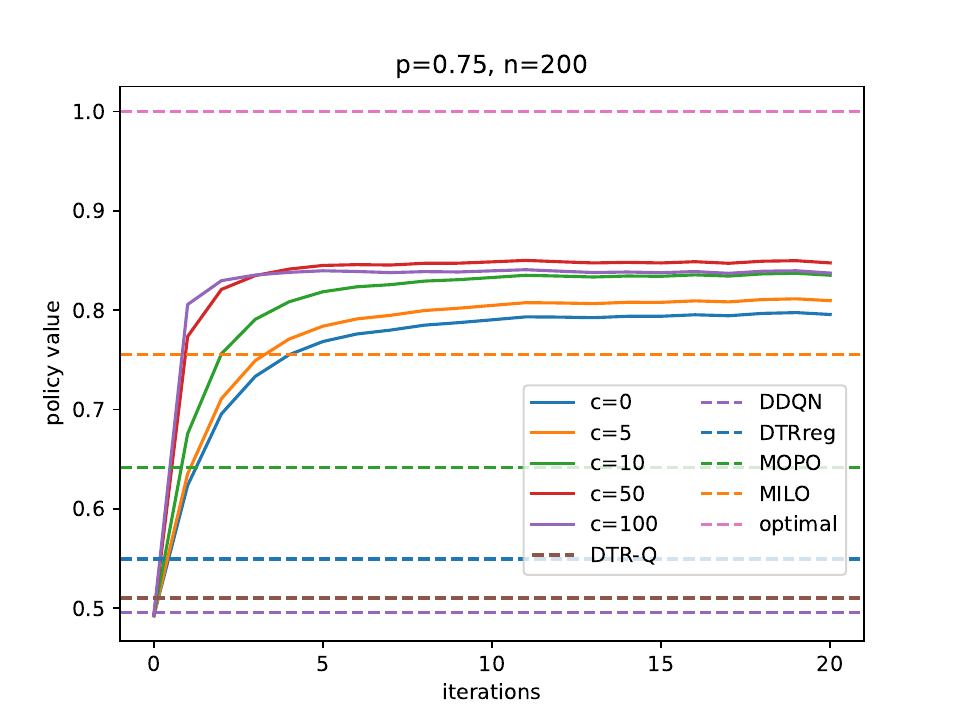}
    \caption{Policy value vs. iterations for different values of \(c\) ($p$=0.75, $n$=200).}. 
    \label{figure: simu-1}
\end{figure}


\begin{figure}
    \centering
    \includegraphics[width=1.0\linewidth]{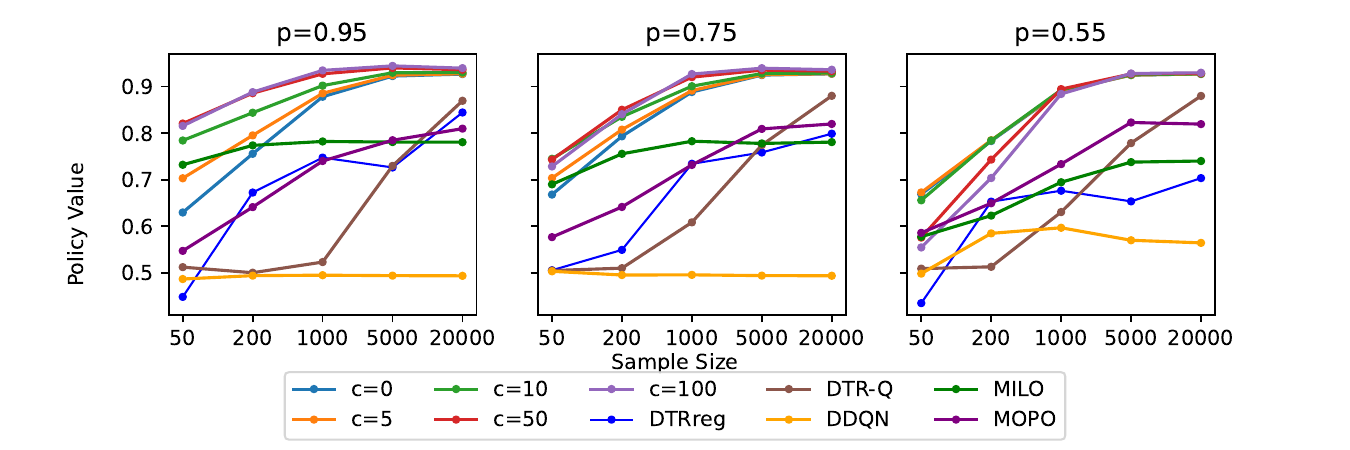}
    \caption{Policy value averaged over 100 repeated simulations, for different $n,p,c$. }
    \label{figure: simu-2}
\end{figure}

We then systematically analyze the performance of POLAR with respect to three factors: the parameter $p$, which determines the behavior policy $\pi_b$; pessimism hyperparameter $c$, which controls the degree of pessimism in the reward function; and the offline sample size $n$. For each combination of $(p,n,c)$, the average policy value across 100 repeated simulations is reported in Figure~\ref{figure: simu-2}.  When \(p = 0.95\) (Figure~\ref{figure: simu-2}a), the behavior policy selects the optimal action with high probability, leading to poor state-action coverage and substantial distributional shift. In this setting, incorporating pessimism is important: 
POLAR with a non-zero penalty (\(c > 0\)) outperforms the unpenalized case (\(c = 0\)) across all sample sizes \(n\). In contrast, when \(p = 0.55\), the behavior policy explores the actions more evenly, providing better state-action coverage.  In this setting, uncertainty penalization becomes less critical. Small penalties (e.g., \(c \in \{0, 5, 10\}\)) achieve the best performance, while larger penalties (e.g., \(c  \in \{50, 100\}\)) lead to overly conservative policies and reduced performance, especially at small sample sizes. \cbend

Next, we observe that across all values of $p$ and $c$, POLAR’s performance improves as the offline sample size $n$ increases. When $n$ is sufficiently large, the policy values for different choices of $c$ begin to converge, indicating that the influence of the uncertainty penalty diminishes. This is because the uncertainty quantifier \(\Gamma\) converges to zero at a rate  of \(\mathcal{O}_p(\sqrt{\log n / n})\), thereby reducing the impact of uncertainty penalization in large datasets. In contrast, when $n$ is small, POLAR significantly outperforms the baseline methods (DTR-\(Q\), DTRreg, DDQN, MOPO and MILO), demonstrating its robustness in low-data regimes.

We further compare POLAR to the baseline methods in more detail. DDQN, which relies on the Markov assumption and does not incorporate historical information, shows relatively flat performance across all settings. This highlights its limitations in the DTR setup, where past trajectories contain critical information about patient history and treatment response that is essential for optimal decision-making.
 DTRreg approximates the Q-function using linear models, which limits its flexibility and results in consistently inferior performance. DTR-\(Q\), on the other hand, improves steadily as $n$ increases and achieves performance comparable to POLAR when $n=100,000$ (not shown in the figure). This highlights a key distinction between model-based and model-free approaches: POLAR, by leveraging an explicit transition model and pessimism regularization, achieves near-optimal performance with far fewer samples, demonstrating the sample efficiency advantage of model-based approaches. 
While the policy values of MOPO and MILO increase with sample size $n$, they remain consistently lower than those estimated by POLAR. 
This result is expected. MILO focuses on imitating the behavior policy, which can constrain performance improvement. MOPO, designed for stationary Markov decision processes, relies on a single time-homogeneous transition model, introducing model mismatch in the multi-stage DTR setting.  In contrast, POLAR is explicitly designed for finite-horizon, history-dependent DTRs, modeling stage-specific transition kernels ${P_k(h_{k+1}\mid h_k,a_k)}_{k=1}^K$.

These results demonstrate the importance of incorporating pessimism in DTR, particularly under limited trajectory coverage. When the behavior policy is skewed toward optimal actions (i.e., large $p$), offline data tend to underrepresent suboptimal regions, increasing the risk of overfitting. In such cases, a principled pessimism strategy helps mitigate distributional shift by regularizing uncertain estimates in poorly supported areas of the state-action space. This setting is especially relevant in medical applications, where observational data often reflect clinician-driven decisions that are near-optimal but selectively explored. 

Finally, we evaluate POLAR's robustness through a sensitivity analysis. While the simulated data follow a linear transition model, we intentionally apply POLAR using a GP to model transitions, creating a model mismatch. Under this misspecification, POLAR continues to outperform all alternative methods, confirming its effectiveness. More details are provided in Supplement \ref{sec:sensitivity_analysis}. \cbend

\section{Real Data Analysis}
\label{sec: real data}
To demonstrate the practical utility of POLAR, we applied it to the MIMIC-III (Medical Information Mart for Intensive Care) database \citep{johnson2016mimic}, which contains comprehensive clinical data from patients admitted to intensive care units (ICUs). This dataset includes longitudinally collected medical records, capturing both physiological states and treatment decisions. Its temporal structure makes it well-suited for evaluating sequential decision-making methods. For this analysis, we focused on a cohort of sepsis patients as defined in \citet{Komorowski2018}. Sepsis, a life-threatening condition caused by a dysregulated immune response to infection, is a leading cause of mortality in ICUs. Managing sepsis involves complex and dynamic treatment decisions, such as adjusting intravenous (IV) fluid volumes and vasopressor (VP) dosages to stabilize hemodynamics and support organ function. Traditional RL methods based on the MDP assumption have been used to estimate optimal treatment strategies for sepsis patients in MIMIC-III \citep{Komorowski2018,raghu2017deep}. However, in clinical practice, treatment decisions often depend on a patient's historical information such as  prior treatments, physiological responses, and trends in laboratory results. Failing to account for this historical context can result in suboptimal policies. In contrast, the DTR framework explicitly incorporates the full history of a patient's trajectory into the decision-making process. 

In this analysis, we modeled a simplified 3-stage DTR problem by focusing on the first three decision points for each patient, resulting in \(n = 16,265\) patient trajectories. The action space was defined as a \(5 \times 5\) grid, created by discretizing both IV fluid dosages and maximum VP dosages into five levels, as described in \citet{raghu2017deep}. For the reward function, we used the Sequential Organ Failure Assessment (SOFA) score, a widely recognized metric for evaluating organ dysfunction in sepsis patients \citep{Lambden2019}. Higher SOFA scores indicate greater organ dysfunction; thus, we defined the reward as the negative value of the SOFA score. The state space at each decision point was represented by a 43-dimensional feature vector, constructed from patient-specific information including demographics (e.g., age, gender), lab values (e.g., lactate, creatinine), vital signs (e.g., heart rate, blood pressure), and intake/output events \citep{raghu2017deep,zhou2023optimizing}. The goal is to learn a policy that maximizes the expected cumulative reward, effectively minimizing organ dysfunction over time.

We applied the proposed POLAR method, using GP regression to estimate the transition dynamics \(\widehat{P}_k(\cdot\mid h_k, a_k)\) and the corresponding uncertainty quantifiers \(\Gamma_k(h_k, a_k)\) \citep{chang2021mitigating}, across various values of the pessimism parameter  \(c \in \{0, 5, 10, 20, 50\}\). 
For comparison, we considered five baseline methods used in the simulation studies: DDQN, DTR-\(Q\), DTRreg, MOPO and MILO. However, since DTRreg package cannot handle multiple action dimensions, we report results only for DDQN, DTR-\(Q\), MOPO, and MILO. To assess model performance, we randomly split the dataset into 50\% training and 50\% testing sets.  The training set was used to train the algorithms, while the testing set was employed to evaluate the performance of the learned policies through off-policy evaluation (OPE) with importance sampling \citep{uehara2022review}. This procedure was repeated \(N=200\) times with different random splits, and the final results are reported as averages across these repetitions.



\begin{figure}[ht]
    \centering
    \includegraphics[width=0.6\linewidth]{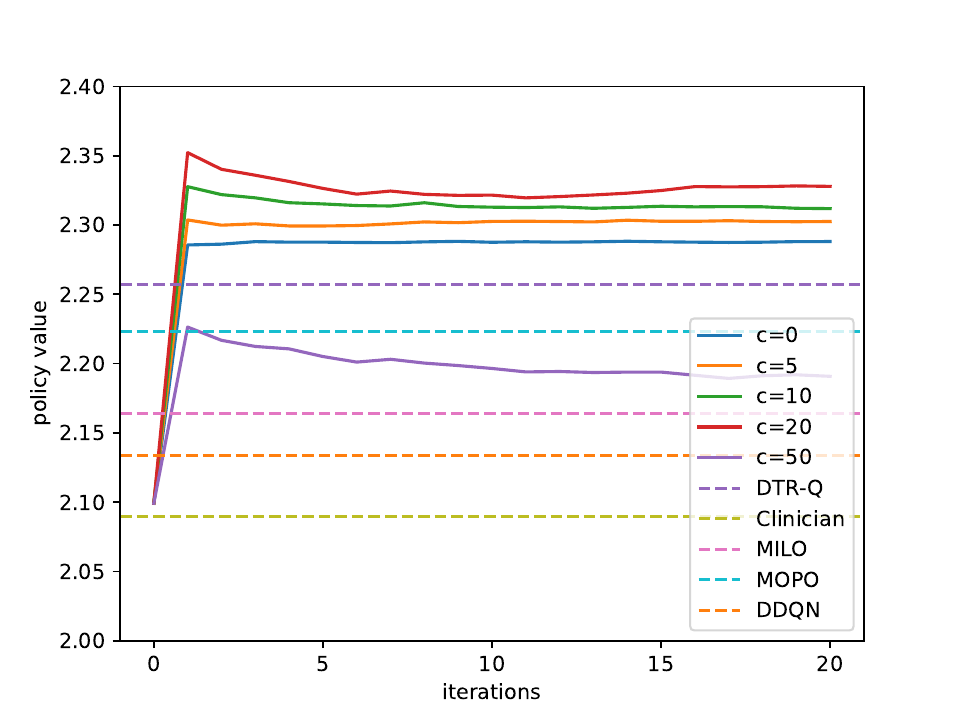}
    \caption{
    Policy value versus number of iterations for POLAR with different values of $c$,
    compared to DDQN, DTR-\(Q\), MOPO, and MILO.
    All policy values are evaluated using OPE.
    }

    \label{figure: realdata-1}
\end{figure}


\begin{figure}[ht]
    \centering
    \includegraphics[width=0.8\linewidth]{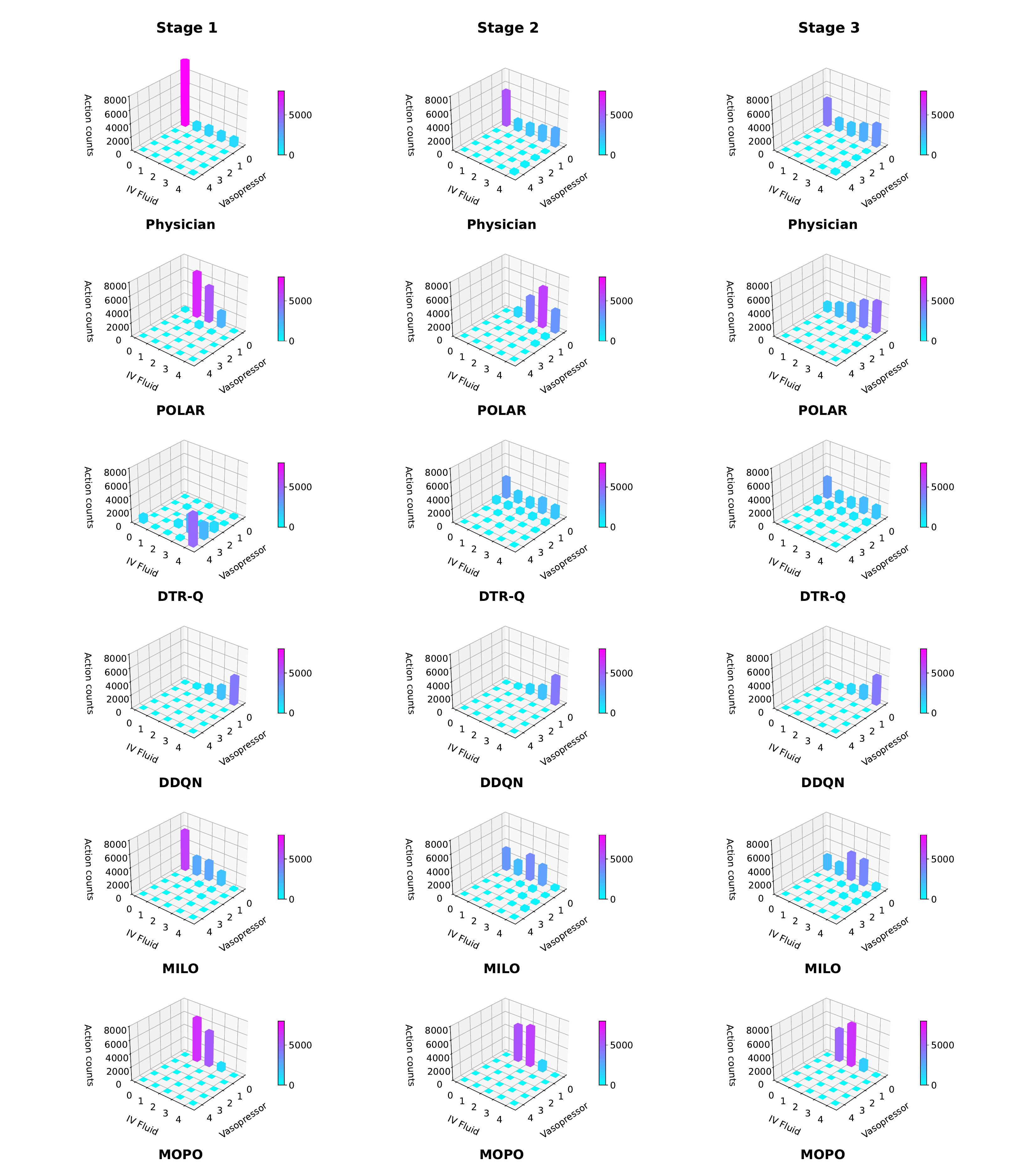}
    \caption{The action distributions generated by the physician policy, POLAR (\(c=20\)), DTR-\(Q\), DDQN, MILO and MOPO across three decision stages.}
        \label{figure: realdata-2}
\end{figure}

Figure~\ref{figure: realdata-1} presents the policy value versus iteration \(t\) for the proposed POLAR with different values of \(c\),
alongside four methods, DDQN, DTR-\(Q\), MOPO, and MILO. Overall, POLAR achieves higher policy values than all baselines, demonstrating its effectiveness in real-world offline settings. Introducing a moderate pessimistic penalty generally improves performance: configurations with $c=5, 10$, and 20 outperform the no-penalty case ($c=0$) throughout the iterations.

Among these, $c=20$ attains the highest policy value, suggesting that incorporating a conservative bias can be beneficial in managing uncertainty. However, excessively large penalties (e.g., $c=50$) lead to degraded performance, demonstrating the importance of balancing uncertainty penalization with reward maximization in offline policy learning.
  In practice, we recommend running POLAR with a candidate set of $c$ values and selecting the optimal one via cross-validated grid search.

To further compare the methods, we analyzed the action distributions generated by the physician policy (i.e., the behavior policy),  POLAR (\(c=20\)), DDQN, DTR-\(Q\), MOPO, and MILO \cbend across three decision stages (Figure~\ref{figure: realdata-2}). The physician policy demonstrated distinct treatment patterns: lower vasopressor dosages with varying IV fluid levels in Stage 1, followed by a shift toward higher vasopressor dosages in later stages. These changes reflect clinicians’ dynamic responses to evolving patient conditions and the importance of history-dependent decision-making in sepsis management.

POLAR captured key aspects of these temporal treatment shifts, adapting its action distributions across stages to reflect increasingly aggressive vasopressor use while maintaining diversity in fluid levels. By incorporating uncertainty-aware regularization and leveraging historical patient information, POLAR avoided overcommitting to uncertain actions and effectively balanced exploitation and exploration. Its distributions demonstrated clinically interpretable structure and were aligned with domain expectations for stage-specific intervention patterns.  
In contrast, DTR‐Q, DDQN, MILO, and MOPO exhibit limited adaptability across decision stages. DTR‐Q fails to learn effective physician actions in Stage 1, producing a highly dispersed action distribution and recommending excessively large vasopressor doses that would be dangerously aggressive in practice. DDQN displays nearly identical action distributions across all three stages and fails to capture the gradual vasopressor escalation pattern, highlighting its weakness in short-horizon sequential decision problems. MILO produces action distributions closely resembling the clinician's, but this imitation-driven design yields lower OPE values than POLAR. MOPO similarly generates nearly invariant action distributions across decision stages, indicating limited adaptability to the stage-specific dynamics of clinical DTRs. These findings demonstrate POLAR's ability to model the dynamic, context‐aware nature of clinical decision‐making. By leveraging historical information, sampling unexplored state–action combinations, and penalizing uncertain actions, POLAR learns interpretable, adaptive treatment strategies that more closely align with real‐world clinical expertise.

\section{Conclusion and Discussion}
\label{sec: discussion}

We developed POLAR, a novel model-based policy optimization method for learning DTRs from offline data. Our approach first estimates the transition dynamics using historical trajectories and incorporates the pessimism principle through uncertainty penalization to account for distributional shift and limited state-action coverage. The algorithm is practically implemented using an actor-critic-style iterative optimization procedure and yields a final policy with provable suboptimality guarantees.

We evaluated POLAR through   numerical experiments on both simulated environments and real-world clinical data from the MIMIC-III database. The results demonstrate that POLAR outperforms state-of-the-art baseline methods, particularly in settings with limited data or suboptimal behavior policies. In the real-data analysis, POLAR not only achieved higher policy values via OPE but also produced treatment policies that closely mirrored physician decision patterns, particularly in capturing dynamic, stage-specific adaptations. By explicitly incorporating uncertainty and leveraging patient history, POLAR offers a robust framework to support and improve clinical decision-making in sepsis care and other complex, sequential treatment settings where patient trajectories evolve over time.

The proposed method has several limitations and possible extensions. First, the computational cost of POLAR increases with the number of decision stages, as the algorithm requires evaluating Q-functions over sampled state-action pairs at each stage. This may pose challenges for high-frequency decision-making tasks or long-horizon treatment regimes. Second, the current framework assumes that decision points are aligned across patients, which often does not hold in real-world clinical settings where treatment decisions occur at irregular or patient-specific intervals.  One potential direction is to extend POLAR to handle asynchronous decision points by modeling decisions as events in continuous time, using tools such as point process models or irregular time-series methods.  We leave both challenges for future research.

\section*{Acknowledgment}
This work is supported in part by National Institute of Health grant R01MH128085 and R01AI197147. 

\bibliographystyle{apalike}
\bibliography{Bibliography-MM-MC}

\newpage
\appendix
\renewcommand{\thealgocf}{S\arabic{algocf}}
\setcounter{algocf}{0}
\renewcommand{\theassumption}{S\arabic{assumption}} 
\setcounter{assumption}{0} 
\renewcommand{\thelemma}{S\arabic{lemma}} 
\setcounter{lemma}{0} 
\renewcommand{\thetheorem}{S\arabic{theorem}} 
\setcounter{theorem}{0} 
\renewcommand{\theproposition}{S\arabic{proposition}} 
\setcounter{proposition}{0} 
\renewcommand{\thecorollary}{S\arabic{corollary}} 
\setcounter{corollary}{0} 

\setcounter{figure}{0}
\renewcommand{\thefigure}{S\arabic{figure}}

\renewcommand{\thedefinition}{S\arabic{definition}} 
\setcounter{definition}{0} 

\renewcommand{\theequation}{S\arabic{equation}}
\setcounter{equation}{0} 

\setcounter{page}{1} 

\begin{center}
    {\LARGE \textbf{Supplementary Materials}}
\end{center}

\section{Detailed Analysis of Uncertainty Quantifiers}
\label{sec: discussions on UQ}

In this section, we rigorously demonstrate how the constructed uncertainty quantifiers for both transition models satisfy Assumption \ref{assumption: uncertainty quantifier}. 

\subsection{Linear Transition Model}
\label{subsec: kernelized}

\subsubsection{Model Specification and Estimation}
\label{subsec: KNR model}

We model the transition dynamics at each stage $k$ with the following linear structure:
\[
s_{k+1} = W_k \phi_k(h_k,a_k) + \varepsilon_k, \quad \varepsilon_k \sim F_k,
\]
where $\phi_k: \mathcal{H}_k \times \mathcal{A}_k \to \mathbb{R}^{d_k^{(s)}}$ is a known bounded feature mapping defined by $\phi_k(h_k,a_k)\; :=\;\Upsilon_{L_k}(\bar{s}_k)\otimes e_{\bar{a}_k},$ with \(e_{\bar{a}_k}\in\mathbb{R}^{N_k^{(A)}}\) denoting the one‐hot encoding of the action‐history \(\bar{a}_k\), and \(N_k^{(A)}=\prod_{j=1}^k|\mathcal A_j|\).   We define  $\Upsilon_{L_k}(\cdot) : = \left( \upsilon_{L_k,1}(\cdot), \upsilon_{L_k,2}(\cdot), \dots, \upsilon_{L_k,L_k}(\cdot) \right)^\top$, where each $\upsilon_{L_k,\ell}(\cdot)$ is a basis function defined on the joint state space $\mathcal{S}_1 \times \mathcal{S}_2 \times \cdots \times \mathcal{S}_k$.
$F_k$ is a known distribution on $\mathbb{R}^{d_{k+1}^{(s)}}$, $d_{k}^{(s)}:=L_kN_k^{(A)}$ is the dimension of the $k$-th state space $\mathcal{S}_k$, and $W_k\in \mathbb{R}^{d_{k+1}^{(s)} \times d_k^{(s)}}$ is a stage-specific weight matrix. 
In this model, stochasticity in the state transition is captured via additive noise. Therefore, we impose the following mild conditions to ensure that the generated state always remains in the support of the next-stage state space, i.e., \( W_k \phi_k(h_k,a_k) + \varepsilon_k \in \mathcal{S}_{k+1} \) almost surely for all \( (h_k,a_k) \). 

\begin{SAssumption}
\label{assumption: noise distribution in KNR}
(i) The additive noise terms \( \{\varepsilon_k\}_{k=1}^{K} \) are independent across both stages and trajectories.\\
(ii) Each noise distribution \( F_k \) is zero-mean and bounded in the \( \ell_2 \)-ball of radius \( \sigma \):
\begin{equation*}
   \mathbb{E}[\varepsilon_k] = 0, \quad \mathbb{P}(\| \varepsilon_k \|_2 \leq \sigma) = 1. 
\end{equation*}
(iii) For all \( h_k \in \mathcal{H}_k \), \( a_k \in \mathcal{A}_k \), the conditional density \( P_k(\cdot|h_k,a_k) \) is Lipschitz continuous on \( \mathcal{S}_{k+1} \), with a universal Lipschitz constant \( C_L \) that does not depend on \( k \) or the history state-action pair.
\end{SAssumption}

This model class is closely related to the kernelized nonlinear regulator (KNR), which has been extensively studied in the context of dynamic systems and reinforcement learning~\citep{kakade2020information, chang2021mitigating, uehara2022pessimistic}. However, most prior works assume Gaussian noise with unbounded support, which leads to unbounded state spaces. In contrast, we adopt a variant of KNR with compact state spaces. Our model class is more closely aligned with the settings studied by~\cite{curi2020efficient,mania2020active}, where the noise terms are not necessarily Gaussian. 

To implement this model in the offline setting, we next describe how to estimate the transition dynamics from data.
For any $1 \leq k \leq K$, recall that the offline dataset $\mathcal{D}$ contains $n$ observations at stage $k$, given by $\{(h_k^{(i)}, a_k^{(i)}, s_{k+1}^{(i)} )\}_{i=1}^{n}$. According to the model specification, each transition is formulated as $s_{k+1}^{(i)} = W_k \phi_k(h_k^{(i)}, a_k^{(i)}) + \varepsilon_k^{(i)},$
where $\varepsilon_k^{(i)}$ are i.i.d. noise terms. Since the feature mapping $\phi_k(\cdot, \cdot)$ is known, we can estimate the weight matrix $W_k$ through ridge regression:
\begin{equation}
\label{eq: KNR, W hat}
\widehat{W}_k = \left( \sum_{i=1}^{n} s_{k+1}^{(i)} \phi_k( h_k^{(i)}, a_k^{(i)})^\top \right) \left( \sum_{i=1}^{n} \phi_k( h_k^{(i)}, a_k^{(i)}) \phi_k( h_k^{(i)}, a_k^{(i)})^\top + \lambda I \right)^{-1},
\end{equation}
for some regularization parameter $\lambda \geq 0$. Given the estimate $\widehat{W}_k$, we define the estimated transition model \( \widehat{P}_k \) as $s_{k+1} = \widehat{W}_k \phi_k(h_k, a_k) + \varepsilon_k.$

\subsubsection{Theoretical Properties}

In this subsection, we establish the theoretical properties of uncertainty quantifiers $\Gamma_k$. 

The uncertainty quantifiers $\Gamma_k$ in the linear transition model is \citep{chang2021mitigating}:
\begin{equation*}
\Gamma_k(h_k, a_k) = \min \left\{2, 2 C_2\sqrt{\phi_k^T (h_k,a_k) \Lambda_{k,n}^{-1} \phi_k(h_k,a_k)} \right\},
\end{equation*}
where $C_2=B_{d_{k+1}^{(s)}}\sigma C_L \beta_{k,n}$. $\sigma$ and $C_L$ are from Assumption \ref{assumption: noise distribution in KNR}, $B_{d_{k+1}^{(s)}}$ denotes the Lebesgue measure of a unit ball in $d_{k+1}^{(s)}$-dimension Euclidean space, and
\begin{equation}
\label{eq: KNR, eq 1}
\Lambda_{k,n} = \sum_{i=1}^{n} \phi_k( h_k^{(i)}, a_k^{(i)}) \phi_k( h_k^{(i)}, a_k^{(i)})^T + \lambda I,
\end{equation}
\begin{equation}
\label{eq: KNR, eq 2}
\beta_{k,n} = \sqrt{\lambda} \|W^*_k\|_2 + \sqrt{8\sigma^2 d_{k+1}^{(s)} \log(5) + 8\sigma^2 \log \left( \frac{\det (\Lambda_{k,n}) ^{1/2} / \det (\lambda I) ^{1/2}}{\delta} \right)},
\end{equation}
where \( \delta \in (0,1) \) is the confidence level.

The following proposition guarantees the desired property of our uncertainty quantifier.

\begin{proposition}
\label{thm: uncertainty quantifier KNR}
Suppose Assumption 
\ref{assumption: noise distribution in KNR} holds. Let $\widehat{W}_k$ and $\Gamma_k$ follow the form in (\ref{eq: KNR, W hat}) and (\ref{eq: KNR, Gamma}) respectively, then with probability at least $1-\delta$, 
\begin{equation*}
\left\|\widehat{P}_k(\cdot|h_k,a_k) - P^*_k(\cdot |h_k,a_k)\right\|_1\le \Gamma_k(h_k,a_k),\ \forall (h_k,a_k)\in \mathcal{H}_k \times \mathcal{A} _k .
\end{equation*}
\end{proposition}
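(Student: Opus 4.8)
The plan is to bound the $L_1$ distance $\|\widehat{P}_k(\cdot\mid h_k,a_k) - P^*_k(\cdot\mid h_k,a_k)\|_1$ by first controlling the deviation of the fitted mean $\widehat{W}_k\phi_k(h_k,a_k)$ from the true mean $W^*_k\phi_k(h_k,a_k)$, and then translating that mean shift into a total-variation (equivalently, $\frac12 L_1$) bound on the induced conditional densities. The key structural fact is that both $\widehat{P}_k(\cdot\mid h_k,a_k)$ and $P^*_k(\cdot\mid h_k,a_k)$ are translates of the \emph{same} noise law $F_k$: the former is $F_k$ shifted by $\widehat{W}_k\phi_k(h_k,a_k)$, the latter is $F_k$ shifted by $W^*_k\phi_k(h_k,a_k)$. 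Hence, writing $\Delta := (\widehat{W}_k - W^*_k)\phi_k(h_k,a_k)$, the densities differ only by a translation of size $\|\Delta\|_2$, and the Lipschitz continuity of $P_k(\cdot\mid h_k,a_k)$ from Assumption~\ref{assumption: noise distribution in KNR}(iii), together with the boundedness of the noise (so the relevant integral is over a set of Lebesgue measure at most $B_{d_{k+1}^{(s)}}\sigma^{d_{k+1}^{(s)}}$-type constant near the support), gives $\|\widehat{P}_k - P^*_k\|_1 \le 2 C_L B_{d_{k+1}^{(s)}}\sigma \,\|\Delta\|_2$ up to the constant bookkeeping that produces $C_2$. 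The trivial bound $\|\widehat{P}_k - P^*_k\|_1 \le 2$ handles the $\min\{2,\cdot\}$ truncation.

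Next I would bound $\|\Delta\|_2 = \|(\widehat{W}_k - W^*_k)\phi_k(h_k,a_k)\|_2$. Using the closed form of the ridge estimator in~\eqref{eq: KNR, W hat}, substitute $s_{k+1}^{(i)} = W^*_k\phi_k(h_k^{(i)},a_k^{(i)}) + \varepsilon_k^{(i)}$ to get the standard decomposition $\widehat{W}_k - W^*_k = \big(\sum_i \varepsilon_k^{(i)}\phi_k^{(i)\top} - \lambda W^*_k\big)\Lambda_{k,n}^{-1}$. Then by Cauchy--Schwarz in the $\Lambda_{k,n}^{-1}$-norm,
\begin{equation*}
\|\Delta\|_2 \le \Big\| \sum_{i=1}^n \varepsilon_k^{(i)}\phi_k(h_k^{(i)},a_k^{(i)})^\top - \lambda W^*_k \Big\|_{\Lambda_{k,n}^{-1}} \cdot \sqrt{\phi_k(h_k,a_k)^\top \Lambda_{k,n}^{-1}\phi_k(h_k,a_k)}.
\end{equation*}
The second factor is exactly the elliptical-potential term appearing in $\Gamma_k$. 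For the first factor, a self-normalized concentration inequality for vector-valued martingales (the matrix/vector Azuma or the self-normalized bound of Abbasi-Yadkori--Pál--Szepesvári, adapted to bounded rather than sub-Gaussian noise via Assumption~\ref{assumption: noise distribution in KNR}(ii)) yields that with probability at least $1-\delta$, this norm is at most $\beta_{k,n}$ as defined in~\eqref{eq: KNR, eq 2}; the $\sqrt\lambda\|W^*_k\|_2$ piece absorbs the $\lambda W^*_k$ bias term, and the log-determinant piece is the self-normalized deviation bound, with the factor $d_{k+1}^{(s)}\log 5$ coming from a covering-number argument over the unit sphere in the output space $\mathbb{R}^{d_{k+1}^{(s)}}$. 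Combining the two factors gives $\|\Delta\|_2 \le \beta_{k,n}\sqrt{\phi_k^\top\Lambda_{k,n}^{-1}\phi_k}$ uniformly over all $(h_k,a_k)$ on the $1-\delta$ event, and multiplying by $2C_L B_{d_{k+1}^{(s)}}\sigma$ produces precisely $\Gamma_k$.

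I expect the main obstacle to be the density-translation step: going from a bound on the mean displacement $\|\Delta\|_2$ to a bound on $\|\widehat{P}_k(\cdot\mid h_k,a_k) - P^*_k(\cdot\mid h_k,a_k)\|_1$ rigorously requires care because the two densities may have slightly different (translated) supports inside $\mathcal{S}_{k+1}$, so one cannot simply write the $L_1$ difference as $\int |p(x) - p(x - \Delta)|\,dx$ over a fixed domain without boundary corrections. The clean way is to bound $\int_{\mathbb{R}^{d_{k+1}^{(s)}}}|p(x) - p(x-\Delta)|\,dx$ (extending both densities by zero outside their supports), split the integral over the region where both are supported --- there the Lipschitz bound gives an integrand $\le C_L\|\Delta\|_2$ on a set of measure $O(\sigma^{d_{k+1}^{(s)}})$ --- and over the symmetric-difference boundary layer, whose measure is $O(\|\Delta\|_2 \cdot \sigma^{d_{k+1}^{(s)}-1})$ and on which the density is itself $O(1)$ by Lipschitzness plus compact support. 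All of these contributions are $O(\|\Delta\|_2)$ with constants that can be folded into $B_{d_{k+1}^{(s)}}$ and $C_L$, and together with the self-normalized concentration step they assemble into the stated bound; the rest is routine algebra, and the uniformity over $(h_k,a_k)$ is free because the concentration event for the martingale term does not depend on the evaluation point.
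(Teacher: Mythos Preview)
Your proposal is correct and follows essentially the same route as the paper: translate the $L_1$ distance into a bound on the mean shift $\|\Delta\|_2$ via the Lipschitz density assumption and the bounded-support noise, then factor $\|(\widehat W_k-W^*_k)\phi_k\|_2\le \|(\widehat W_k-W^*_k)\Lambda_{k,n}^{1/2}\|_2\cdot\|\Lambda_{k,n}^{-1/2}\phi_k\|_2$ and control the first factor by the self-normalized concentration bound (bounded noise $\Rightarrow$ $\sigma$-sub-Gaussian, then Abbasi-Yadkori--P\'al--Szepesv\'ari adapted to vector outputs, with the $d_{k+1}^{(s)}\log 5$ term coming from a covering of the output sphere). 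The only cosmetic difference is the density-translation step: the paper does not split into an interior region plus a boundary layer as you propose, but simply applies the pointwise Lipschitz bound $|\widehat P_k(s)-P^*_k(s)|\le C_L\|\Delta\|_2$ on all of $\mathcal S_{k+1}$ (both densities are translates of the same $F_k$, hence share the Lipschitz constant even where one vanishes) and integrates over the union of the two radius-$\sigma$ balls, giving the factor $2B_{d_{k+1}^{(s)}}\sigma C_L$ directly. Your boundary-layer decomposition is a valid alternative but unnecessary under Assumption~\ref{assumption: noise distribution in KNR}(iii) as stated.
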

By this proposition, $\{\Gamma_k\}_{k= 1}^K$ satisfy the uncertainty quantifiers condition (see Assumption \ref{assumption: uncertainty quantifier}). The proof of Proposition~\ref{thm: uncertainty quantifier KNR} relies on the following two technical lemmas.

\begin{lemma}
\label{lemma: bounded rv is subgaussian}
Let \( Y \in \mathbb{R}^d \) be a random vector satisfying:
\begin{itemize}
    \item[(i)] \( \mathbb{E}[Y] = 0 \),
    \item[(ii)] \( \|Y\|_2 \le \sigma \) almost surely, for some \( \sigma > 0 \).
\end{itemize}
Then \( Y \) is \(\sigma\)-sub-Gaussian.
\end{lemma}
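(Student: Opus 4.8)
The plan is to reduce the vector statement to a one-dimensional, bounded-random-variable statement and then invoke Hoeffding's lemma. Recall that a centered random vector $Y\in\mathbb{R}^d$ is $\sigma$-sub-Gaussian if for every unit vector $v\in\mathbb{R}^d$ the scalar $\langle v,Y\rangle$ obeys $\mathbb{E}[\exp(\lambda\langle v,Y\rangle)]\le \exp(\lambda^2\sigma^2/2)$ for all $\lambda\in\mathbb{R}$ (equivalently, $\mathbb{E}[\exp(\langle v,Y\rangle)]\le \exp(\sigma^2\|v\|_2^2/2)$ for all $v$, by rescaling). It therefore suffices to verify this moment-generating-function bound for an arbitrary fixed $v$ with $\|v\|_2=1$.

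First I would fix such a $v$ and set $Z:=\langle v,Y\rangle$. Assumption (i) gives $\mathbb{E}[Z]=\langle v,\mathbb{E}[Y]\rangle=0$, and Cauchy--Schwarz together with assumption (ii) gives $|Z|\le \|v\|_2\,\|Y\|_2\le\sigma$ almost surely. Hence $Z$ is a mean-zero random variable supported almost surely in the interval $[-\sigma,\sigma]$.

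Next I would apply Hoeffding's lemma: any random variable taking values in $[a,b]$ with mean zero satisfies $\mathbb{E}[\exp(\lambda Z)]\le \exp(\lambda^2(b-a)^2/8)$ for all $\lambda\in\mathbb{R}$. With $a=-\sigma$ and $b=\sigma$ this yields $\mathbb{E}[\exp(\lambda Z)]\le \exp(\lambda^2\sigma^2/2)$. Since $v$ was an arbitrary unit vector, this establishes the defining inequality for every one-dimensional projection of $Y$, so $Y$ is $\sigma$-sub-Gaussian.

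I do not expect a genuine obstacle here; the only point requiring care is bookkeeping of constants and conventions, namely ensuring that the normalization of ``$\sigma$-sub-Gaussian'' used elsewhere in the paper matches the $\exp(\lambda^2\sigma^2/2)$ factor produced by Hoeffding's lemma on $[-\sigma,\sigma]$ (as opposed to, say, a $2\sigma^2$ convention, which would only change the constant). If full self-containedness is desired, I would also include the short standard proof of Hoeffding's lemma via a second-order Taylor expansion of the cumulant generating function $\psi(\lambda)=\log\mathbb{E}[\exp(\lambda Z)]$, using $\psi(0)=\psi'(0)=0$ and the bound $\psi''(\lambda)\le (b-a)^2/4$.
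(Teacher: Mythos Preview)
Your proposal is correct and follows essentially the same route as the paper: reduce to a one-dimensional projection via Cauchy--Schwarz and then bound the MGF of a centered variable supported in $[-\sigma,\sigma]$. The only cosmetic difference is that you invoke Hoeffding's lemma as a black box, whereas the paper writes out the symmetric-interval case directly using convexity of the exponential and the inequality $\cosh(t)\le e^{t^2/2}$.
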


\begin{lemma}
\label{lemma: W-W* in KNR}
Suppose $n$ tuples $\{ (h_i,a_i,s_i) \}_{i=1}^n$ are observed, and $(h_i, a_i, s_i)$ satisfy the transition $s_i= W^* \phi(h_i,a_i)+ \varepsilon_i$, $\varepsilon_1, ..., \varepsilon_n \stackrel{iid}{\sim} F$. Assume $F$ is $\sigma$-sub-Gaussian. Let $\widehat{W} = (\sum_{i=1}^n s_i \phi_i^T) (\sum_{i=1}^n \phi_i \phi_i^T + \lambda I)^{-1}$ for some $\lambda>0$, where $\phi_i$ denotes $\phi(h_i,a_i)$. Let $\Lambda_n = \sum_{i=1}^n \phi_i \phi_i^T + \lambda I$, then with probability at least $1-\delta$:
\begin{equation*}
\left\| (\widehat{W}-W^*) \Lambda_n^{1/2} \right\|_2 \le \sqrt{\lambda} \|W^*\|_2 + \sqrt{8\sigma^2 d_s \log(5) + 8\sigma^2 \log \left( \frac{\det (\Lambda_n) ^{1/2} / \det (\lambda I) ^{1/2}}{\delta} \right)}.
\end{equation*}
\end{lemma}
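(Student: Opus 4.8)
The plan is to decompose $\widehat W - W^*$ into a deterministic ridge-regularization (bias) term and a stochastic noise term, bound each separately in the operator norm after right-multiplying by $\Lambda_n^{1/2}$, and recombine by the triangle inequality. Throughout I write $\phi_i := \phi(h_i,a_i)\in\mathbb{R}^{d}$ for the feature vectors, $\varepsilon_i, s_i\in\mathbb{R}^{d_s}$ for the noise and responses, $W^*\in\mathbb{R}^{d_s\times d}$, and set $S_n := \sum_{i=1}^n \varepsilon_i\phi_i^\top \in \mathbb{R}^{d_s\times d}$.

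First I would substitute $s_i = W^*\phi_i + \varepsilon_i$ into the ridge estimator. Since $\sum_i s_i\phi_i^\top = W^*(\Lambda_n - \lambda I) + S_n$, the estimator simplifies to $\widehat W = W^* - \lambda W^*\Lambda_n^{-1} + S_n\Lambda_n^{-1}$, so that $(\widehat W - W^*)\Lambda_n^{1/2} = -\lambda W^*\Lambda_n^{-1/2} + S_n\Lambda_n^{-1/2}$. The triangle inequality for the spectral norm then gives $\|(\widehat W - W^*)\Lambda_n^{1/2}\|_2 \le \lambda\|W^*\Lambda_n^{-1/2}\|_2 + \|S_n\Lambda_n^{-1/2}\|_2$. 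The bias term is deterministic: because $\Lambda_n = \sum_i\phi_i\phi_i^\top + \lambda I \succeq \lambda I$, we have $\|\Lambda_n^{-1/2}\|_2 \le \lambda^{-1/2}$, so $\lambda\|W^*\Lambda_n^{-1/2}\|_2 \le \lambda\|W^*\|_2\,\lambda^{-1/2} = \sqrt{\lambda}\,\|W^*\|_2$, which is exactly the first summand.

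The crux is the stochastic term, which equals $\|S_n\Lambda_n^{-1/2}\|_2 = \sup_{v\in\mathbb{S}^{d_s-1}}\|\Lambda_n^{-1/2}S_n^\top v\|_2$. For a \emph{fixed} direction $v$, the projected noise $\eta_i := v^\top\varepsilon_i$ is a scalar, conditionally $\sigma$-sub-Gaussian sequence (using $\|v\|_2=1$ and the $\sigma$-sub-Gaussianity of $F$), and $S_n^\top v = \sum_i \eta_i\phi_i$ is precisely the vector martingale sum to which the self-normalized tail inequality (of Abbasi-Yadkori, Pál, and Szepesvári) applies. Taking the filtration $\mathcal{F}_i = \sigma(\phi_1,\dots,\phi_n,\varepsilon_1,\dots,\varepsilon_i)$ makes each $\phi_i$ predictable and each $\eta_i$ conditionally sub-Gaussian, so with probability at least $1-\delta'$, $\|\Lambda_n^{-1/2}S_n^\top v\|_2^2 = (S_n^\top v)^\top\Lambda_n^{-1}(S_n^\top v) \le 2\sigma^2\log\!\bigl(\det(\Lambda_n)^{1/2}\det(\lambda I)^{-1/2}/\delta'\bigr)$. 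To upgrade this pointwise estimate to a uniform bound over the sphere I would take a $\tfrac12$-net $\mathcal{N}$ of $\mathbb{S}^{d_s-1}$ with $|\mathcal{N}|\le 5^{d_s}$, apply the covering inequality $\|M\|_2 \le 2\max_{v\in\mathcal{N}}\|Mv\|_2$ to $M=\Lambda_n^{-1/2}S_n^\top$, and union-bound over $\mathcal{N}$ with $\delta' = \delta/5^{d_s}$. This keeps the total failure probability at $\delta$, turns $\log(1/\delta')$ into $d_s\log 5 + \log(1/\delta)$, and the squared covering factor $2^2=4$ multiplies $2\sigma^2$ to yield $8\sigma^2$, so $\|S_n\Lambda_n^{-1/2}\|_2 \le \sqrt{8\sigma^2 d_s\log 5 + 8\sigma^2\log(\det(\Lambda_n)^{1/2}\det(\lambda I)^{-1/2}/\delta)}$, the second summand. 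Adding the two bounds finishes the argument.

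I expect the main obstacle to be this passage from the single-direction self-normalized bound to a uniform operator-norm bound over the output sphere: the net resolution, the covering constant, and the per-direction confidence budget must be coordinated so that the union-bound cost is absorbed \emph{exactly} into the $d_s\log 5$ term and the constant inflates only to $8\sigma^2$. A secondary point to verify carefully is the martingale/filtration construction legitimizing the self-normalized inequality in this offline i.i.d.\ design, together with the elementary fact that unit-vector projections of a $\sigma$-sub-Gaussian vector remain $\sigma$-sub-Gaussian (this is also what connects the hypothesis to Lemma~\ref{lemma: bounded rv is subgaussian} when the noise is only assumed bounded).
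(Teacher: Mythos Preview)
Your proposal is correct and follows essentially the same route as the paper: the same bias-plus-noise decomposition of $(\widehat W-W^*)\Lambda_n^{1/2}$, the same $\sqrt{\lambda}\|W^*\|_2$ bound on the bias via $\Lambda_n\succeq\lambda I$, and the same self-normalized control of the noise term. The only difference is cosmetic: the paper invokes a matrix-valued self-normalized inequality (their Lemma~\ref{lemma: self normalized}, cited from Kakade et al.) as a black box, whereas you unpack that lemma by reducing to the scalar Abbasi-Yadkori bound and a $\tfrac12$-net over $\mathbb{S}^{d_s-1}$---which is exactly how that lemma is proved, and explains the origin of the $8\sigma^2$ and $d_s\log 5$ constants.
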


The proofs of Lemma \ref{lemma: bounded rv is subgaussian} and Lemma \ref{lemma: W-W* in KNR} can be found in {Section} \ref{pf:lemma prop 1}.

\begin{proof}[Proof of Proposition~\ref{thm: uncertainty quantifier KNR}]

By Assumption~\ref{assumption: noise distribution in KNR} (ii), the following bound holds for any \( s_{k+1} \in \mathcal{S}_{k+1} \):
\[
\left| \widehat{P}_k(s_{k+1} \mid h_k, a_k) - P^*_k(s_{k+1} \mid h_k, a_k) \right|
\le C_L \cdot \| \widehat{W} \phi_k(h_k, a_k) - W^* \phi_k(h_k, a_k) \|_2,
\]
where \( C_L \) is the Lipschitz constant of the conditional density.

Moreover, by Assumption~\ref{assumption: noise distribution in KNR} (i), both distributions have compact support and vanish outside the union of two \( \ell_2 \)-balls: $B(\widehat{W} \phi_k(h_k, a_k), \sigma) \cup B(W^* \phi_k(h_k, a_k), \sigma),$
each of radius \( \sigma \) in \( \mathbb{R}^{d_{k+1}^{(s)}} \). Let \( B_d \) denote the Lebesgue volume of the unit ball in \( \mathbb{R}^d \). Then the \( \ell_1 \) distance between the two distributions can be bounded as:
\[
\| \widehat{P}_k(\cdot \mid h_k, a_k) - P^*_k(\cdot \mid h_k, a_k) \|_1 
\le 2 B_{d_{k+1}^{(s)}} \sigma C_L \cdot \| \widehat{W} \phi_k(h_k, a_k) - W^* \phi_k(h_k, a_k) \|_2.
\]

We now aim to bound the error between the means of the estimated and true conditional distributions, namely, $\| \widehat{W} \phi_k(h_k, a_k) - W^* \phi_k(h_k, a_k) \|_2.$ To do so, we insert the identity matrix \( I_n=\Lambda_n^{1/2} \Lambda_n^{-1/2} \) and apply the sub-multiplicative property of norms:
\begin{equation*}
\begin{split}
\| \widehat{W} \phi_k(h_k, a_k) - W^* \phi_k(h_k, a_k) \|_2 
&= \| (\widehat{W} - W^*) \Lambda_n^{1/2} \Lambda_n^{-1/2} \phi_k(h_k, a_k) \|_2 \\
&\le \| (\widehat{W} - W^*) \Lambda_n^{1/2} \|_2 \cdot \| \Lambda_n^{-1/2} \phi_k(h_k, a_k) \|_2.
\end{split}
\end{equation*}

Therefore, we have:
\begin{equation}
\label{ineq: transition auxiliary}
\widehat{P}_k(\cdot \mid h_k, a_k) - P^*_k(\cdot \mid h_k, a_k) \|_1 
\le 2 B_{d_{k+1}^{(s)}} \sigma C_L \| (\widehat{W} - W^*) \Lambda_n^{1/2} \|_2 \cdot \| \Lambda_n^{-1/2} \phi_k(h_k, a_k) \|_2.
\end{equation}

Next, we turn to bound the term $\| (\widehat{W} - W^*) \Lambda_n^{1/2} \|_2$. We begin by invoking Lemma \ref{lemma: bounded rv is subgaussian}. Under Assumption~\ref{assumption: noise distribution in KNR} (i), the additive noise terms are bounded, and thus each \( \varepsilon_k^{(i)} \) is \( \sigma \)-sub-Gaussian. Consequently, the conditions of Lemma~\ref{lemma: W-W* in KNR} are satisfied, yielding the following high-probability bound:
\begin{equation}
\label{ineq: w sub-gaussian bound}
\left\| (\widehat{W} - W^*) \Lambda_n^{1/2} \right\|_2 
\le \sqrt{\lambda} \|W^*\|_2 + \sqrt{8\sigma^2 d_{k+1}^{(s)} \log(5) + 8\sigma^2 \log \left( \frac{\det (\Lambda_n)^{1/2}}{\det (\lambda I)^{1/2} \delta} \right)} 
= \beta_{k,n}.
\end{equation}
Applying inequality~\eqref{ineq: w sub-gaussian bound} to \eqref{ineq: transition auxiliary}, we obtain with probability at least \( 1 - \delta \) that:
\begin{equation*}
\| \widehat{P}_k(\cdot \mid h_k, a_k) - P^*_k(\cdot \mid h_k, a_k) \|_1 \le 2 B_{d_{k+1}^{(s)}} \sigma C_L \cdot \beta_{k,n} 
\cdot \sqrt{ \phi_k(h_k, a_k)^\top \Lambda_n^{-1} \phi_k(h_k, a_k) },
\end{equation*}
where $\beta_{k,n}=\sqrt{\lambda} \|W^*\|_2 + \sqrt{8\sigma^2 d_{k+1}^{(s)} \log(5) + 8\sigma^2 \log \left( \frac{\det (\Lambda_n)^{1/2}}{\det (\lambda I)^{1/2} \delta} \right)}.$
\end{proof}

\subsection{Gaussian Process}
\label{subsec: GP}
\subsubsection{Model Specification and Estimation}
Unlike the example in Section \ref{subsec: kernelized}, we no longer assume a parametric form for $P_k$. Instead, we consider a nonparametric setting. We assume the transition dynamic $P_{k}(s_{k+1}\mid h_k,a_k)$ is structured as: $s_{k+1} = g_k(h_k, a_k) + \varepsilon_k$, where $\varepsilon_k \sim \mathcal{N}(0, \sigma^2 I)$. 

To estimate this model, we have access to an offline dataset \( \mathcal{D} \) consisting of \( n \) i.i.d. observations at each stage \( k \), given by \( \{(h_k^{(i)}, a_k^{(i)}, s_{k+1}^{(i)}) \}_{i=1}^n \). Our goal is to construct both an estimator \( \widehat{P}_k \) for the transition model and an uncertainty quantifier \( \Gamma_k \). For notational simplicity, we omit the subscript \( k \) when it is clear from the context. Moreover, we simplify notation by letting \( x := (h_k, a_k) \) to represent the input pair when \( k \) is fixed. Under these conventions, we can reformulate the problem using Gaussian process (GP) regression.

Under the GP prior, the posterior distribution over the unknown function \( g \) remains a Gaussian process with posterior mean function \( \widehat{g}(\cdot) \) and covariance function \( \widehat{h}(\cdot, \cdot) \), given by:
\begin{equation}
\label{eq: GP, g hat}
\widehat{g}(x) = (y_1, \dots, y_n)(H_{n,n} + \sigma^2 I)^{-1} H_n(x),
\end{equation}
\begin{equation}
\label{eq: GP, eq 1}
\widehat{h}(x, x') = h(x, x') - H_n(x)^\top (H_{n,n} + \sigma^2 I)^{-1} H_n(x'),
\end{equation}
where \( (y_1, \dots, y_n) \) is a \( d_{k+1}^{(s)} \times n \) matrix of observed outputs, \( H_{n,n} \in \mathbb{R}^{n \times n} \) is the kernel matrix with entries \( h(x_i, x_j) \), and \( H_n: \mathcal{X} \rightarrow \mathbb{R}^n \) is defined as \( H_n(x) = (h(x, x_1), \dots, h(x, x_n))^\top \).

\subsubsection{Theoretical Properties}

To quantify the uncertainty in the GP-based transition model, we define the uncertainty quantifier \( \Gamma \) in the following form: $\Gamma_k(x) = \frac{\beta_{k,n}}{\sigma} \sqrt{\widehat{h}(x,x)},$
where \( \widehat{h} \) is the posterior covariance function defined in equation~(\ref{eq: GP, eq 1}), and \( \beta_{k,n} \) is a constant given by:
\begin{equation}
\label{eq: GP, eq 4}
\beta_{k,n} = \sqrt{d_{k+1}^{(s)} \left(2 + 150 \log^3\left( \frac{d_{k+1}^{(s)} n}{\delta} \right) \log\left( \det\left( I_n + \frac{1}{\sigma^2} H_{n,n} \right) \right) \right)},
\end{equation}
where \( H_{n,n} \) is the kernel matrix defined earlier and \( \delta \in (0,1) \) is the confidence level.

The next proposition guarantees that the quantifier \( \Gamma(x) \) satisfies our Assumption \ref{assumption: uncertainty quantifier}.

\begin{proposition}
\label{thm: uncertainty quantifier GP}
Let \( \widehat{g}(x) \) and \( \Gamma(x) \) be defined as in equations~(\ref{eq: GP, g hat}) and~(\ref{eq: GP, eq 3}), respectively. Then, with probability at least \( 1 - \delta \),
\begin{equation*}
\left\| \widehat{P}(\cdot \mid x) - P^*(\cdot \mid x) \right\|_1 \le \Gamma(x), \quad \forall x \in \mathcal{X}.
\end{equation*}
\end{proposition}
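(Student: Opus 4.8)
The plan is to mirror the structure of the proof of Proposition~\ref{thm: uncertainty quantifier KNR} from the linear case, replacing the ridge-regression concentration bound with the corresponding concentration bound for Gaussian process regression. First I would reduce the $L_1$ distance between $\widehat{P}(\cdot\mid x)$ and $P^*(\cdot\mid x)$ to a bound on the $\ell_2$ distance between the means $\widehat{g}(x)$ and $g^*(x)$. Since both conditional densities are Gaussian $N(\cdot,\sigma^2 I)$ with the \emph{same} covariance, differing only in their mean, Pinsker's inequality (or a direct Gaussian computation) gives $\|\widehat{P}(\cdot\mid x)-P^*(\cdot\mid x)\|_1 \le \sqrt{2\,\mathrm{KL}} \le \frac{1}{\sigma}\|\widehat{g}(x)-g^*(x)\|_2$, so it suffices to control $\|\widehat{g}(x)-g^*(x)\|_2$ uniformly in $x$.

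Next I would invoke the standard self-normalized confidence bound for kernel ridge regression / GP posteriors (the multi-output generalization of the Chowdhury--Gopalan or Abbasi-Yadkori-type inequality, as used in \cite{kakade2020information,chang2021mitigating}): with probability at least $1-\delta$, uniformly over $x\in\mathcal{X}$,
\[
\|\widehat{g}(x)-g^*(x)\|_2 \le \beta_{k,n}\,\sqrt{\widehat{h}(x,x)},
\]
where $\beta_{k,n}$ has exactly the form stated in~\eqref{eq: GP, eq 4} — the $\sqrt{d_{k+1}^{(s)}}$ factor handling the vector-valued output, the $\log^3(d_{k+1}^{(s)} n/\delta)$ term coming from a covering/union-bound argument over the output dimension and a Borell-TIS-type tail bound for the Gaussian noise, and the $\log\det(I_n + \sigma^{-2}H_{n,n})$ term being the maximal information gain. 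This requires checking that the assumed GP-model conditions (the true mean $g^*$ lies in the RKHS with bounded norm, the noise is $\sigma$-sub-Gaussian, etc., from the technical conditions in Supplement~\ref{proof sec: GP}) place us in the regime where that concentration inequality applies. Combining this with the reduction from the first step and recalling $\Gamma_k(x) = \frac{\beta_{k,n}}{\sigma}\sqrt{\widehat{h}(x,x)}$ immediately yields $\|\widehat{P}(\cdot\mid x)-P^*(\cdot\mid x)\|_1 \le \Gamma_k(x)$ for all $x$, with probability at least $1-\delta$.

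I expect the main obstacle to be the uniform-in-$x$ control of $\|\widehat{g}(x)-g^*(x)\|_2$ in the vector-valued setting: the classical one-dimensional GP confidence band must be lifted to $d_{k+1}^{(s)}$ output coordinates while keeping a single $\Gamma_k$ that is a scalar multiple of $\sqrt{\widehat{h}(x,x)}$, and this is precisely where the $\sqrt{d_{k+1}^{(s)}}$ and the cubic-log factor in $\beta_{k,n}$ enter. Handling the maximal information gain term $\log\det(I_n+\sigma^{-2}H_{n,n})$ cleanly — and, later in the suboptimality theorem, bounding it in terms of the kernel eigenvalue decay rate $\omega$ — is the other delicate piece, but for the present proposition it can be left as an abstract quantity appearing inside $\beta_{k,n}$. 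The remaining steps (Pinsker, the RKHS-norm bookkeeping, the union bound across the $d_{k+1}^{(s)}$ coordinates) are routine.
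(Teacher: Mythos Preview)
Your proposal is correct and matches the underlying argument: the paper itself does not prove this proposition but simply defers to Lemma~14 of \cite{chang2021mitigating}, whose proof proceeds exactly along the lines you sketch (Pinsker/Gaussian-KL reduction to the mean error, followed by the self-normalized GP concentration bound yielding $\|\widehat{g}(x)-g^*(x)\|_2\le\beta_{k,n}\sqrt{\widehat{h}(x,x)}$). Your identification of the multi-output lift and the information-gain term as the nontrivial ingredients is accurate, and you are right that for this proposition the $\log\det$ term can be left abstract inside $\beta_{k,n}$.
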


\begin{proof}[Proof of Proposition~\ref{thm: uncertainty quantifier GP}]
See Lemma 14 of \cite{chang2021mitigating} for a detailed proof of this result.
\end{proof}

\section{Practical Algorithm for Section \ref{subsec: method overview}}
\label{subsec: summarized algorithm}

\begin{algorithm}[ht]
\caption{Practical algorithm}
\label{algo: sieve regression}
\KwIn{Offline dataset $\mathcal{D}$; stepsizes $\{\eta_k^{(t)}\}$ and Monte Carlo sample sizes $\{m_k^{(t)}\}$ at $k$-th stage and $t$-th iteration; hyperparameters $\{\widetilde{c}_k\}$}

Construct estimated transition dynamics $\widehat{P}$, estimated reward functions $\{\widehat{r}_k\}$ and uncertainty quantifiers $\{\Gamma\}_k$ from the offline data $\mathcal{D}$ \\
Construct the modified DTR model $\widetilde{M} = (\widehat{P}, \widetilde{r})$, with $\widetilde{r}_k = \widehat{r}_k - \widetilde{c}_k \Gamma_k$ \\
Initialize $\theta^{(0)}=(\theta^{(0)}_1,... \theta^{(0)}_K) = (\bf{0}, \bf{0}, ..., \bf{0})$ (i.e. $\bf{0}$ is a zero matrix $\in \mathbb{R}^{|L_k| \times N_k ^{(A)}} $)\\
$\pi^{(0)}\gets \pi(\theta^{(0)}), t=0 $\\
\For{$t=0,...,T-1$}{
\For{$k=1,...,K$}{
\For{$\overline{a}_k ~ \mathrm{in} ~ \mathcal{A}_1 \times \mathcal{A}_2 \times \cdots \times \mathcal{A}_k$}{
Sample $m_k$ i.i.d. samples $\{\overline{s}_k^{(1)}, ..., \overline{s}_k^{(m_k)} \}$ from $\mathrm{Unif}(\mathcal{S}_1 \times \mathcal{S}_2 \times \cdots \times \mathcal{S}_k)$ \\
Evaluate $Q_{k, \widetilde{M}} ^{\pi^{(t)}}$ on $m_k$ points: $\widehat{Q}_k^{(t)} (\overline{s} _k^{(i)}, \overline{a}_k) \gets Q_{k, \widetilde{M}} ^{\pi^{(t)}} (\overline{s} _k^{(i)}, \overline{a}_k ) $ for all $i$\\
Solve the linear regression problem that projects $\widehat{Q}_k^{(t)}(\cdot)$ onto $\Upsilon_{L_k}(\cdot)$: $\widehat{\theta} _k^{(t)}( \overline{a}_k ) \gets \arg \min_{\theta\in \mathbb{R}^{L_k}} \sum_{i=1}^{m_k} \left( \theta ^T \Upsilon_{L_k}(\overline{s}_k) - \widehat{Q}_k^{(t)} (\overline{s} _k^{(i)}, \overline{a}_k) \right)^2$ \\
}
Concatenate the $N_k^{(A)}$ columns: $\widehat{\theta} _k^{(t)} \gets \left( \widehat{\theta} _k^{(t)}\Bigl( \overline{a} _k^{(1)} \Bigr),~...~,\widehat{\theta} _k^{(t)}\Bigl( \overline{a} _k^{(N_k^{(A)})} \Bigr)  \right)$ \\
Update $\theta_k$: $\theta_k^{(t+1)} \gets \theta_k^{(t)} + \eta_k^{(t)} \widehat{\theta}_k^{(t)}$
}
$\pi^{(t+1)}\gets \pi({\theta^{(t+1)}})$ where $\theta^{(t+1)} = (\theta_1^{(t+1)}, ..., \theta_K^{(t+1)})$
}
\KwOut{$\pi^{(T)}$}
\end{algorithm}

Under the setting of linear sieves \eqref{eq: sieve function class}, $f_k(\theta_k, \cdot, \cdot)$ belongs to the linear space spanned by $L_k$ basis functions.  Our theoretical analysis allows $L_k$ to increase with $m_k$, but we suppose $m_k$ and $L_k$ are both fixed when implementing Algorithm \ref{algo: sieve regression}.

\section{Proofs}
\label{proof:proof}
This section provides complete proofs of the theoretical results in Section \ref{sec: theory}. Section~\ref{proof: main thm} covers the regret decomposition under uncertainty quantification. Section~\ref{proof:linear} establishes suboptimality bounds for the linear model using kernel-based estimation. Section~\ref{proof sec: GP} presents the analysis for the Gaussian Process-based model, leveraging posterior contraction for regret control. 

\subsection{Proofs of Section \ref{subsec: theory overview}}
\label{proof: main thm}

The suboptimality of our learned policy can be decomposed into model shift error and policy optimization error. We begin with Theorem \ref{thm: final regret real M}
that upper bounds the model shift error through the uncertainty quantifiers. Next, we bound the policy optimization error in $\widetilde{M}$ in Theorem \ref{thm: M^tilde + general e}, which depends on the function class. Combining Theorem \ref{thm: final regret real M} and Theorem \ref{thm: M^tilde + general e}, we can prove our main Theorem \ref{thm: combined main thm} by carefully setting the stepsize.
In particular, we adopt linear sieves as our function class and establish the corresponding Theorem \ref{thm: M^tilde + sieve reg} to bound the policy optimization error.

\subsubsection{Proofs of Theorem \ref{thm: combined main thm}}

\begin{STheorem}
\label{thm: final regret real M}
Suppose Assumption~\ref{assumption: uncertainty quantifier} holds. Define $\widetilde{c}_k := \sum_{j=k}^K \|\overline{r}_j\|_\infty$, and let $\widetilde{r}_k := \widehat{r}_k - \widetilde{c}_k \Gamma_k$. Consider the modified model $\widetilde{M} = (\widehat{P}, \widetilde{r})$. Then, for any policy $\pi$, the following inequality holds with probability at least $1-K\delta$:
\begin{equation}
\label{eq:regret bound in thm S1}
   \left( V_{M^*}^{\pi^\dagger} - V_{\widetilde{M}}^{\pi^\dagger} \right) + \left( V_{\widetilde{M}}^{\pi} - V_{M^*}^{\pi} \right) \le V_{P^*, \underline{r}}^{\pi^\dagger}, 
\end{equation}
where $\underline{r}_k = b_k \Gamma_k$ and $b_k = 2 \widetilde{c}_k + \sum_{j=k+1}^K \widetilde{c}_j \|\Gamma_j\|_\infty.$ Here, $V_{P^*, \underline{r}}^{\pi^\dagger}$ denotes the value function of $\pi^\dagger$ under the DTR model with transition dynamics $\{P^*_k\}_{k=1}^K$ and stage-wise expected rewards $\{\underline{r}_k\}_{k=1}^K$.
\end{STheorem}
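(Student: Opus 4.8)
The plan is to prove the bound in \eqref{eq:regret bound in thm S1} by handling the two bracketed terms separately via a telescoping/simulation-lemma argument, and then combining the resulting stagewise errors through a change of measure onto the occupancy distribution $d^{\pi^\dagger}_{P^*,k}$. First I would recall the standard value-difference (simulation) lemma: for two DTR models that share a policy but differ in transition dynamics and reward functions, the difference in values can be written as a sum over stages $k=1,\dots,K$ of the expected one-step discrepancy, where the expectation is taken under the occupancy measure of one of the two models. Applying this to the pair $(M^*,\widetilde{M})$ with policy $\pi^\dagger$ for term (a), and to the pair $(\widetilde{M},M^*)$ with policy $\pi$ for term (c), I would express each as $\sum_k \mathbb{E}_{d_k}[\,\text{reward gap}_k + \text{transition-induced future-value gap}_k\,]$.

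The key step is to bound, at each stage $k$, (i) the reward discrepancy $|\widehat r_k - r_k|$ and (ii) the term coming from $\|\widehat P_k(\cdot\mid h_k,a_k) - P^*_k(\cdot\mid h_k,a_k)\|_1$ multiplied by the sup-norm of the continuation value $\|V^\pi_{k+1,\cdot}\|_\infty$, which is bounded by $\sum_{j=k+1}^K\|\overline r_j\|_\infty$. On the event $\mathcal{E}_1\cap\cdots\cap\mathcal{E}_K$ (which holds with probability at least $1-K\delta$ by Assumption~\ref{assumption: uncertainty quantifier}), both the reward gap and the transition-induced gap are controlled by multiples of $\Gamma_k(h_k,a_k)$: the reward gap because $\widehat r_k(h_k,a_k)-r_k(h_k,a_k) = \mathbb{E}_{\widehat P_k}[\overline r_k] - \mathbb{E}_{P^*_k}[\overline r_k]$ is bounded by $\|\overline r_k\|_\infty\|\widehat P_k - P^*_k\|_1 \le \|\overline r_k\|_\infty\Gamma_k$, and the continuation-value gap similarly. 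The crucial algebraic point is that the pessimism penalty $\widetilde c_k\Gamma_k$ subtracted in $\widetilde r_k$ is chosen precisely to dominate these accumulated per-stage errors, so that after collecting terms the net contribution at stage $k$ is nonpositive \emph{for term (c)} (the value under $\widetilde M$ is pessimistic) while \emph{for term (a)} the leftover is at most $b_k\Gamma_k$ integrated against $d^{\pi^\dagger}_{P^*,k}$. I would carefully track the recursion: bounding $V^\pi_{\widetilde M} - V^\pi_{M^*}$ requires propagating the stage-$k$ transition error through the future value function, which itself differs between the two models, so I expect to set up an induction backward in $k$ and verify that $b_k = 2\widetilde c_k + \sum_{j=k+1}^K \widetilde c_j\|\Gamma_j\|_\infty$ absorbs both the direct penalty and the propagated penalties from later stages.

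The main obstacle will be managing the interaction between the two terms and getting the bookkeeping exactly right so that term (c) cancels against the pessimism penalty while term (a) survives as $V^{\pi^\dagger}_{P^*,\underline r}$ with the stated $b_k$. Specifically, in term (a) the occupancy measure is $d^{\pi^\dagger}_{\widetilde M,k}$ (under the modified dynamics $\widehat P$), not $d^{\pi^\dagger}_{P^*,k}$, so I would need a second change-of-measure or a careful choice of which model's occupancy to expand against in the simulation lemma — the cleanest route is to expand term (a) against $P^*$-occupancy directly by writing $V^{\pi^\dagger}_{M^*} - V^{\pi^\dagger}_{\widetilde M}$ with the "other" model's rollout, which introduces the $\|\Gamma_j\|_\infty$ sup-norm factors that appear in $b_k$. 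I would also need to confirm that the continuation values being bounded are those of $\pi^\dagger$ (resp.\ $\pi$) under $\widetilde M$, whose rewards are themselves modified; since $\widetilde r_k \le \widehat r_k \le r_k + \|\overline r_k\|_\infty\Gamma_k$ and rewards are uniformly bounded, these continuation values stay bounded by $\sum_j\|\overline r_j\|_\infty$ up to $\Gamma$-corrections, which feed back into the definition of $b_k$. Once the per-stage inequality $(\text{stage-}k\text{ contribution to }(a)+(c)) \le b_k\,\mathbb{E}_{d^{\pi^\dagger}_{P^*,k}}[\Gamma_k]$ is established on the good event, summing over $k$ and recognizing the right-hand side as $V^{\pi^\dagger}_{P^*,\underline r}$ via \eqref{def:Value func for a+c} completes the proof.
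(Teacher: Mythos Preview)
Your proposal is correct and follows essentially the same route as the paper: show term~(c) is nonpositive on the good event by rolling out the simulation lemma so that the pessimism penalty $\widetilde c_k\Gamma_k$ dominates the stage-$k$ reward and transition errors, and bound term~(a) by expanding against the $P^*$-occupancy so that the continuation values under the modified rewards $\widetilde r$ (whose sup-norms pick up the extra $\widetilde c_j\|\Gamma_j\|_\infty$ contributions) yield exactly the coefficient $b_k$. The only minor execution difference is that the paper avoids a backward induction by applying the simulation lemma once to each term and, for term~(a), first splits $V_{M^*}^{\pi^\dagger}-V_{\widetilde M}^{\pi^\dagger} = V_{P^*,\,r-\widetilde r}^{\pi^\dagger} + \bigl(V_{P^*,\widetilde r}^{\pi^\dagger}-V_{\widehat P,\widetilde r}^{\pi^\dagger}\bigr)$ into a pure reward-difference piece and a pure dynamics-difference piece, which keeps the bookkeeping for $b_k$ slightly cleaner than handling both discrepancies simultaneously.
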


The proof of Theorem \ref{thm: final regret real M} relies on the following supporting lemmas. The proofs of these lemmas are deferred to {Section} \ref{pf:lemma them S1}.

\begin{lemma}[Simulation Lemma]
\label{lemma: simulation}
Consider two DTR models $M = (P, r)$ and $\widehat{M} = (\widehat{P}, r)$ that share the same reward function $r$. For any transition dynamics $P$, $\widehat{P}$, any policy $\pi$, any index $k$, and any pair $(h_k, a_k)$, the following identities hold:

\begin{itemize}
    \item[(i)] The difference in action-value functions satisfies:
    \begin{align*}
    &\quad Q_k^\pi(h_k, a_k) - \widehat{Q}_k^\pi(h_k, a_k) \\
    &= \sum_{t=k}^{K-1} \mathbb{E}_{(h_t, a_t) \sim d_{t,P}^\pi(\cdot,\cdot \mid H_k = h_k, A_k = a_k)} \left[
    \mathbb{E}_{h' \sim P_t(\cdot \mid h_t, a_t)} \left[ \widehat{V}_{t+1}^\pi(h') \right]
    - \mathbb{E}_{h' \sim \widehat{P}_t(\cdot \mid h_t, a_t)} \left[ \widehat{V}_{t+1}^\pi(h') \right]
    \right].
    \end{align*}
    
    \item[(ii)] The difference in state-value functions satisfies:

\[
\begin{aligned}
V_{k}^{\pi}(h_k) - \widehat{V}_{k}^{\pi}(h_k)
&= \sum_{t=k}^{K-1} \mathbb{E}_{(h_t, a_t) \sim d_{t, P}^{\pi}(\cdot, \cdot \mid H_k = h_k)} \Biggl[
\mathbb{E}_{h' \sim P_t(\cdot \mid h_t, a_t)} \bigl[ \widehat{V}_{t+1}^{\pi}(h') \bigr] \\
&\qquad -
\mathbb{E}_{h' \sim \widehat{P}_t(\cdot \mid h_t, a_t)} \bigl[ \widehat{V}_{t+1}^{\pi}(h') \bigr]
\Biggr].
\end{aligned}
\]
\cbend
\end{itemize}
\end{lemma}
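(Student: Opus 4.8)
The statement to prove is the Simulation Lemma (Lemma~\ref{lemma: simulation}), which expresses the difference between action-value (resp. state-value) functions of a policy $\pi$ under two models $M=(P,r)$ and $\widehat{M}=(\widehat{P},r)$ that share the reward, as a telescoping sum of one-step transition-kernel discrepancies, each evaluated against the value function $\widehat{V}^\pi$ of the second model.

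The plan is to proceed by a backward (or equivalently, unrolling) telescoping argument on the stages $k, k+1, \dots, K$, using the Bellman recursions for both models. First I would fix $\pi$, the two transition sequences, and start from the Bellman equations: under $M$, $Q_k^\pi(h_k,a_k) = r_k(h_k,a_k) + \mathbb{E}_{h'\sim P_k(\cdot\mid h_k,a_k)}[V_{k+1}^\pi(h')]$, and identically for $\widehat{M}$ with $\widehat{P}$ and $\widehat{V}$; since the rewards agree, subtracting gives
\[
Q_k^\pi(h_k,a_k) - \widehat{Q}_k^\pi(h_k,a_k) = \mathbb{E}_{h'\sim P_k}[V_{k+1}^\pi(h')] - \mathbb{E}_{h'\sim \widehat{P}_k}[\widehat{V}_{k+1}^\pi(h')].
\]
The key algebraic move is to add and subtract $\mathbb{E}_{h'\sim P_k}[\widehat{V}_{k+1}^\pi(h')]$, splitting the right side into a ``model-difference at stage $k$'' term, $\mathbb{E}_{h'\sim P_k}[\widehat{V}_{k+1}^\pi(h')] - \mathbb{E}_{h'\sim \widehat{P}_k}[\widehat{V}_{k+1}^\pi(h')]$, plus a ``value-difference propagated forward'' term $\mathbb{E}_{h'\sim P_k}[V_{k+1}^\pi(h') - \widehat{V}_{k+1}^\pi(h')]$. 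The first term is exactly the $t=k$ summand of the claimed formula (since $d_{k,P}^\pi(\cdot,\cdot\mid H_k=h_k,A_k=a_k)$ puts all its mass on $(h_k,a_k)$). The second term is handled by induction/recursion: I would establish part (ii) at stage $k+1$, namely $V_{k+1}^\pi(h') - \widehat{V}_{k+1}^\pi(h') = \sum_{t=k+1}^{K-1}\mathbb{E}_{(h_t,a_t)\sim d_{t,P}^\pi(\cdot,\cdot\mid H_{k+1}=h')}[\cdots]$, substitute it in, and then use the tower property together with the Markov-chain composition identity $\mathbb{E}_{h'\sim P_k(\cdot\mid h_k,a_k)}\big[\,d_{t,P}^\pi(\cdot,\cdot\mid H_{k+1}=h')\,\big] = d_{t,P}^\pi(\cdot,\cdot\mid H_k=h_k,A_k=a_k)$ for $t\ge k+1$ — i.e., averaging the stage-$t$ visitation distribution started from a random next-state $h'$ over $P_k$ reproduces the stage-$t$ visitation distribution started from $(h_k,a_k)$. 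This collapses the double expectation into the single expectation $\mathbb{E}_{(h_t,a_t)\sim d_{t,P}^\pi(\cdot,\cdot\mid H_k=h_k,A_k=a_k)}$, yielding the remaining summands $t=k+1,\dots,K-1$ and completing part (i). Part (ii) follows either by the same telescoping argument started from $V_k^\pi(h_k) = \mathbb{E}_{a_k\sim\pi_k(\cdot\mid h_k)}[Q_k^\pi(h_k,a_k)]$ and averaging part (i) over $a_k\sim\pi_k(\cdot\mid h_k)$ (noting $d_{t,P}^\pi(\cdot,\cdot\mid H_k=h_k) = \mathbb{E}_{a_k\sim\pi_k}[d_{t,P}^\pi(\cdot,\cdot\mid H_k=h_k,A_k=a_k)]$ and that the $t=k$ term of (i) does not depend on $a_k$ once we integrate $h'$ over $P_k(\cdot\mid h_k,a_k)$ — actually it does, so care is needed to keep the $a_k$ average — but the structure is parallel), or directly via the backward induction on $k$ using the state-value Bellman recursion. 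I would present (ii) first as the cleaner induction and derive (i) as the companion, or vice versa; both directions are routine once the composition identity is in hand.

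Formally I would set this up as a downward induction on $k$, with base case $k=K$: at the last stage $Q_K^\pi(h_K,a_K) = r_K(h_K,a_K) = \widehat{Q}_K^\pi(h_K,a_K)$ (no transition occurs, or the $K$-th transition leads to a terminal state not affecting the reward sum), so both sides equal the empty sum $\sum_{t=K}^{K-1}(\cdots) = 0$. Then the inductive step is exactly the add-and-subtract computation above. The only genuine subtlety — and the step I expect to be the main (though still modest) obstacle — is making the history-dependent visitation-distribution bookkeeping precise: unlike the Markovian MDP case, here $d_{t,P}^\pi(\cdot,\cdot\mid H_k=h_k,A_k=a_k)$ is the marginal distribution over $(H_t,A_t)$ of the non-stationary process, and one must verify the consistency/composition identity $\int P_k(dh'\mid h_k,a_k)\, d_{t,P}^\pi(h_t,a_t\mid H_{k+1}=h') = d_{t,P}^\pi(h_t,a_t\mid H_k=h_k,A_k=a_k)$, which is just the Chapman–Kolmogorov relation for the history process together with the fact that $H_{t}$ for $t>k$ is a deterministic function of $(H_{k+1}, A_{k+1}, S_{k+2}, \dots)$ and hence of $(h_k, a_k, h', \ldots)$. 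Once this identity is stated, the rest is mechanical telescoping. I would also remark that because $\widehat{V}^\pi$ (not $V^\pi$) appears on the right-hand side of both identities, the asymmetry between the two models is fully absorbed into the choice of which model's value function weights the kernel gap, which is the precise form needed downstream to bound terms $(a)$ and $(c)$ in the suboptimality decomposition.
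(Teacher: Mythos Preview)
Your proposal is correct and complete: the one–step Bellman subtraction, the add–and–subtract of $\mathbb{E}_{h'\sim P_k}[\widehat V_{k+1}^\pi(h')]$, and the backward induction together with the Chapman--Kolmogorov/composition identity for the history-visitation distribution give exactly the claimed identity, with the base case $k=K$ trivially an empty sum.

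The paper takes a somewhat different route. Rather than inducting stage by stage, it first unrolls $Q_k^\pi$ under $P$ as the full sum $\sum_{t=k}^K \mathbb E_{(h_t,a_t)\sim d_{t,P}^\pi}[r_t(h_t,a_t)]$, then substitutes each $r_t$ using the Bellman equation of the \emph{other} model, $r_t=\widehat Q_t^\pi-\mathbb E_{\widehat P_t}[\widehat V_{t+1}^\pi]$, and finally re-indexes the resulting $\widehat Q_t^\pi$ terms (shifting $t\mapsto t+1$ and using $\mathbb E_{a'\sim\pi_{t+1}}[\widehat Q_{t+1}^\pi(h',a')]=\widehat V_{t+1}^\pi(h')$) to produce the pairing $\mathbb E_{P_t}[\widehat V_{t+1}^\pi]-\mathbb E_{\widehat P_t}[\widehat V_{t+1}^\pi]$ for each $t$. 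Your inductive argument and the paper's global substitution rely on the same composition identity for $d_{t,P}^\pi$, so neither is deeper than the other; your version is the more textbook presentation and makes the telescoping structure explicit, while the paper's avoids an explicit induction hypothesis at the cost of a slightly less transparent re-indexing step. Either would be acceptable here.
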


\begin{lemma}[Reward Difference Lemma]
\label{lemma: r - rhat}
Under Assumption~\ref{assumption: uncertainty quantifier} and the reward setup in Section~\ref{sec: preliminary}, the following inequality holds with probability at least $1 - \delta$ (under $\mathbb{P}_{\mathcal{D}}$):
\[
\left| r_k(h_k, a_k) - \widehat{r}_k(h_k, a_k) \right| \le \| \overline{r}_k \|_\infty \cdot \Gamma_k(h_k, a_k), \quad \forall k,\ \forall (h_k, a_k) \in \mathcal{H}_k \times \mathcal{A}_k.
\]
\end{lemma}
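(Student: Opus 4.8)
The plan is to deduce the bound directly from the defining property of the uncertainty quantifier in Assumption~\ref{assumption: uncertainty quantifier}. First I would write both reward functions as integrals of the \emph{same} bounded kernel $\overline{r}_k(h_k,a_k,\cdot)$ against two transition laws: by the reward setup of Section~\ref{sec: preliminary}, $r_k(h_k,a_k)=\mathbb{E}_{S_{k+1}\sim P^*_k(\cdot\mid h_k,a_k)}\bigl[\overline{r}_k(h_k,a_k,S_{k+1})\bigr]$, while by the construction of $\widehat{r}_k$ in Section~\ref{subsec: method overview}, $\widehat{r}_k(h_k,a_k)=\mathbb{E}_{S_{k+1}\sim \widehat{P}_k(\cdot\mid h_k,a_k)}\bigl[\overline{r}_k(h_k,a_k,S_{k+1})\bigr]$. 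Subtracting, the difference $r_k(h_k,a_k)-\widehat{r}_k(h_k,a_k)$ becomes a single integral of the bounded function $\overline{r}_k(h_k,a_k,\cdot)$ against the signed measure $P^*_k(\cdot\mid h_k,a_k)-\widehat{P}_k(\cdot\mid h_k,a_k)$.

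Next I would apply a H\"older-type estimate: for a bounded measurable $f$ and two probability laws with densities, $\bigl|\int f\,\mathrm{d}(\mu-\nu)\bigr|\le\|f\|_\infty\,\|\mu-\nu\|_1$, taken in the same $L_1$ normalization used in Assumption~\ref{assumption: uncertainty quantifier} so that the leading constant is exactly $\|f\|_\infty$ (and no stray factor of $2$ appears). With $f=\overline{r}_k(h_k,a_k,\cdot)$ and $\|\overline{r}_k\|_\infty<\infty$ from the bounded-reward assumption, this yields, for every $(h_k,a_k)\in\mathcal{H}_k\times\mathcal{A}_k$,
\[
\bigl|r_k(h_k,a_k)-\widehat{r}_k(h_k,a_k)\bigr|\;\le\;\|\overline{r}_k\|_\infty\,\bigl\|\widehat{P}_k(\cdot\mid h_k,a_k)-P^*_k(\cdot\mid h_k,a_k)\bigr\|_1 .
\]
Finally, on the event $\mathcal{E}_k$ of Assumption~\ref{assumption: uncertainty quantifier}, which holds with probability at least $1-\delta$ under $\mathbb{P}_\mathcal{D}$, the $L_1$ term is at most $\Gamma_k(h_k,a_k)$ \emph{uniformly} over $\mathcal{H}_k\times\mathcal{A}_k$; substituting gives the stated inequality at stage $k$, and a union bound over $k=1,\dots,K$ upgrades it to probability $1-K\delta$, the form used in Theorem~\ref{thm: final regret real M}.

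I do not expect a genuine obstacle; the lemma is essentially bookkeeping. The only two points requiring care are (i) stating the integral-versus-signed-measure inequality in exactly the $L_1$ convention of Assumption~\ref{assumption: uncertainty quantifier}, so the constant stays $\|\overline{r}_k\|_\infty$ rather than inflating by a factor of $2$ (which would then propagate into the constants $b_k$ in Theorem~\ref{thm: final regret real M}); and (ii) the probability accounting, namely that the stated $1-\delta$ is per-stage on $\mathcal{E}_k$, while results that range over all $k$ pay the expected $K\delta$ through the union bound already noted after Assumption~\ref{assumption: uncertainty quantifier}.
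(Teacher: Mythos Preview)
Your approach is essentially identical to the paper's: write the reward difference as an integral of $\overline{r}_k(h_k,a_k,\cdot)$ against the signed measure $P^*_k-\widehat{P}_k$, bound by $\|\overline{r}_k\|_\infty\cdot\|P^*_k-\widehat{P}_k\|_1$, and invoke Assumption~\ref{assumption: uncertainty quantifier}. Your additional remarks on the $L_1$ normalization and the per-stage versus union-bound probability accounting are accurate and match how the lemma is later used in Theorem~\ref{thm: final regret real M}.
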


\begin{proof}[Proof of Theorem \ref{thm: final regret real M}]

We begin by analyzing the difference between the values of policy $\pi$ under the modified model $\widetilde{M}$ and the true model $M^*$. Specifically, we decompose:
\[
\begin{aligned}
V_{\widetilde{M}}^{\pi} - V_{M^*}^{\pi}
&= V_{\widehat{P}, \widetilde{r}}^{\pi} - V_{P^*, r}^{\pi} \\
&= V_{\widehat{P}, r}^{\pi} - V_{P^*, r}^{\pi} - V_{\widehat{P}, \{\widetilde{c}_k \Gamma_k\}}^{\pi} + V_{\widehat{P}, \widehat{r} - r}^{\pi}.
\end{aligned}
\]
This expression follows from the definition $\widetilde{r}_k = \widehat{r}_k - \widetilde{c}_k \Gamma_k$ and the linearity of the value function in the reward.

To bound the term $V_{\widehat{P}, r}^{\pi} - V_{P^*, r}^{\pi}$, we invoke the \textit{simulation lemma} (Lemma~\ref{lemma: simulation}), which quantifies the impact of discrepancies in transition dynamics. Applying the lemma yields:
\begin{equation}
\label{eq: proof thm final regret M^*, eq 3}
    V_{\widehat{P},r}^{\pi} - V_{P^*,r}^{\pi}= \sum_{k=1}^{K-1} \mathbb{E}_{(h_k,a_k)\sim d_{k,\widehat{P}}^{\pi} (\cdot,\cdot)} \left[ \mathbb{E}_{h'\sim \widehat{P}_k(\cdot|h_k,a_k)}\left[ V_{k+1, P^*, r}^{\pi} (h') \right] - \mathbb{E}_{h'\sim P^*_k(\cdot|h_k,a_k)}\left[ V_{k+1, P^*, r}^{\pi} (h') \right] \right].
\end{equation}

This difference can be bounded via total variation and the value function norm:
$$
\begin{aligned}
   &\quad \mathbb{E}_{h' \sim \widehat{P}_k(\cdot \mid h_k, a_k)} \left[V_{k+1, P^*, r}^{\pi}(h')\right] -
\mathbb{E}_{h' \sim P_k^*(\cdot \mid h_k, a_k)} \left[V_{k+1, P^*, r}^{\pi}(h') \right]\\
&\le \|V_{k+1, P^*, r}^{\pi}(\cdot)\|_\infty \|\widehat{P}_k(\cdot \mid h_k, a_k) - P_k^*(\cdot \mid h_k, a_k)\|_1. 
\end{aligned}
$$
To control this, we further bound the supremum norm of the value function as:
\[
\|V_{k, r}(\cdot)\|_\infty \le \sum_{j=k}^{K} \|\overline{r}_j\|_\infty = \widetilde{c}_k.
\]

Substituting back, we obtain:
\begin{equation}
    \label{eq: proof thm final regret M^*, eq 4}
    \mathbb{E}_{h'\sim \widehat{P}_k(\cdot|h_k,a_k)}\left[ V_{k+1, P^*, r}^{\pi} (h') \right] - \mathbb{E}_{h'\sim P^*_k(\cdot|h_k,a_k)}\left[ V_{k+1, P^*, r}^{\pi} (h') \right] \leq \widetilde{c}_{k+1} \Gamma_k(h_k,a_k).
\end{equation}

Combining (\ref{eq: proof thm final regret M^*, eq 3}) and (\ref{eq: proof thm final regret M^*, eq 4}), and noting that $d_{k,\widehat{P}}^{\pi}$ is a non-negative distribution over histories and actions, we conclude:
\begin{equation*}
\begin{split}
V_{\widehat{P},r}^{\pi} - V_{P^*,r}^{\pi} &\le \sum_{k=1}^{K-1} \mathbb{E}_{(h_k,a_k)\sim d_{k,\widehat{P}}^{\pi} (\cdot,\cdot)} \left[ \widetilde{c}_{k+1} \Gamma_k(h_k,a_k) \right]\\
&\stackrel{(i)}{=} \sum_{k=1}^{K} \mathbb{E}_{(h_k,a_k)\sim d_{k,\widehat{P}}^{\pi} (\cdot,\cdot)} \left[ \widetilde{c}_{k+1} \Gamma_k(h_k,a_k) \right]\\
&\stackrel{(ii)}{=} V_{\widehat{P},\{\widetilde{c}_{k+1} \Gamma_k\}}^{\pi},
\end{split}
\end{equation*}
where $(i)$ follows by setting $\widetilde{c}_{K+1} = 0$ for completeness, and $(ii)$ uses the definition of the value function with structured reward $\{\widetilde{c}_{k+1} \Gamma_k\}$.

Substituting this into the earlier decomposition and regrouping the reward terms via linearity, we obtain:
\[
V_{\widetilde{M}}^{\pi} - V_{M^*}^{\pi} \le V_{\widehat{P}, \{\widetilde{c}_{k+1} \Gamma_k - \widetilde{c}_k \Gamma_k + \widehat{r}_k - r_k\}}^{\pi}.
\]

To further simplify this, observe that $\widetilde{c}_k - \widetilde{c}_{k+1} = \|\overline{r}_k\|_\infty$, and Lemma~\ref{lemma: r - rhat} implies that $\widehat{r}_k - r_k \le \|\overline{r}_k\|_\infty \Gamma_k$. Thus, $\widetilde{c}_{k+1} \Gamma_k - \widetilde{c}_k \Gamma_k + \widehat{r}_k - r_k \le 0,$
which leads to the inequality:
\begin{equation}
\label{eq:value negative}
V_{\widetilde{M}}^{\pi} - V_{M^*}^{\pi} \le 0.
\end{equation}
We now proceed to analyze the {first term} in \eqref{eq:regret bound in thm S1}, namely the gap between the optimal policy's performance under the true model $M^*$ and under the modified model $\widetilde{M}$: $V_{M^*}^{\pi^\dagger} - V_{\widetilde{M}}^{\pi^\dagger} = V_{P^*, r}^{\pi^\dagger} - V_{\widehat{P}, \widetilde{r}}^{\pi^\dagger}.$ To handle this difference, we split it into two components:

\begin{equation}
\label{eq: proof thm final regret M^*, eq 7}
V_{M^*} ^{\pi^\dagger} -V_{\widetilde{M}} ^{\pi^\dagger} = V_{P^*, r} ^{\pi^\dagger} - V_{\widehat{P}, \widetilde{r}} ^{\pi^\dagger} = V_{P^*, r-\widetilde{r}} ^{\pi^\dagger} + \left(V_{P^*, \widetilde{r}} ^{\pi^\dagger} - V_{\widehat{P}, \widetilde{r}} ^{\pi^\dagger}\right).
\end{equation}

To control the latter one, we once again invoke the \emph{simulation lemma} (Lemma~\ref{lemma: simulation}), this time with fixed reward functions $\widetilde{r}$.
\begin{equation}
\label{eq: proof thm final regret M^*, eq 8}
 V_{P^*,\widetilde{r}} ^{\pi^\dagger} - V_{\widehat{P},\widetilde{r}} ^{\pi^\dagger} = \sum_{k=1}^{K-1} \mathbb{E}_{(h_k,a_k)\sim d_{k,P^*}^{\pi^\dagger} (\cdot,\cdot)} \left[ \mathbb{E}_{h'\sim P^*_k(\cdot|h_k,a_k)}\left[ V_{k+1, \widehat{P}, \widetilde{r}}^{\pi^\dagger} (h') \right] - \mathbb{E}_{h'\sim \widehat{P}_k(\cdot|h_k,a_k)}\left[ V_{k+1, \widehat{P}, \widetilde{r}}^{\pi^\dagger} (h') \right] \right].
\end{equation}

A key step is bounding the norm of the value function under $\widetilde{r}$. Recall that for any $k$, $\|V_{k, \widetilde{r}}(\cdot)\|_\infty \le \sum_{j=k}^K \|\widetilde{r}_j\|_\infty,$
then by Assumption \ref{assumption: uncertainty quantifier}, we get
\begin{equation*}
\begin{split}
&\quad \mathbb{E}_{h'\sim P^*_k(\cdot|h_k,a_k)}\left[ V_{k+1, \widehat{P}, \widetilde{r}}^{\pi^\dagger} (h') \right] - \mathbb{E}_{h'\sim \widehat{P}_k(\cdot|h_k,a_k)}\left[ V_{k+1, \widehat{P}, \widetilde{r}}^{\pi^\dagger} (h') \right]\\
&\le \| V_{k+1, \widehat{P}, \widetilde{r}}^{\pi^\dagger} (\cdot) \|_\infty \left\| \widehat{P}_k (\cdot|h_k,a_k) - P^*_k(\cdot |h_k,a_k) \right\|_1 \\
&\le \Bigl( \sum_{j=k+1}^K \|\widetilde{r}_j\|_\infty \Bigr) \Gamma_k(h_k,a_k)\\
&= \widetilde{q}_{k+1} \Gamma_k(h_k,a_k),
\end{split}
\end{equation*}
where $\widetilde{q}_{k} = \sum_{j=k}^K \|\widetilde{r}_j\|_\infty$. Then from (\ref{eq: proof thm final regret M^*, eq 8}), we further get:
\begin{equation*}
\begin{split}
V_{P^*,\widetilde{r}} ^{\pi^\dagger} - V_{\widehat{P},\widetilde{r}} ^{\pi^\dagger} &\le \sum_{k=1}^{K-1} \mathbb{E}_{(h_k,a_k)\sim d_{k,P^*}^{\pi^\dagger} (\cdot,\cdot)} \left[ \widetilde{q}_{k+1} \Gamma_k(h_k,a_k) \right]\\
&= \sum_{k=1}^{K} \mathbb{E}_{(h_k,a_k)\sim d_{k,P^*}^{\pi^\dagger} (\cdot,\cdot)} \left[ \widetilde{q}_{k+1} \Gamma_k(h_k,a_k) \right],
\end{split}
\end{equation*}
where the last step is because we define $\widetilde{q}_{K+1}=0$ for completeness.
Hence, the following important relation holds:
\begin{equation}
\label{eq: proof thm final regret M^*, eq 9}
V_{P^*,\widetilde{r}} ^{\pi^\dagger} - V_{\widehat{P},\widetilde{r}} ^{\pi^\dagger} \leq \sum_{k=1}^{K} \mathbb{E}_{(h_k,a_k)\sim d_{k,P^*}^{\pi^\dagger} (\cdot,\cdot)} \left[ \widetilde{q}_{k+1} \Gamma_k(h_k,a_k) \right]
\end{equation}

Note that the right hand side of (\ref{eq: proof thm final regret M^*, eq 9}) can be rewritten as a value function under $\pi^\dagger$ and $P^*$, with expected value function $\{ \widetilde{q}_{k+1} \Gamma_k\}_{k=1}^K$. Then we can plug (\ref{eq: proof thm final regret M^*, eq 9}) into (\ref{eq: proof thm final regret M^*, eq 7}), and combine them into one value function using the linearity of expected reward function:
\begin{equation}
\label{eq: proof thm final regret M^*, eq 10}
V_{M^*} ^{\pi^\dagger} -V_{\widetilde{M}} ^{\pi^\dagger} \le V^{\pi^\dagger} _{P^*, \{ r_k - \widetilde{r}_k + \widetilde{q}_{k+1} \Gamma_k \} }.
\end{equation}
To further simplify the reward difference term, we apply Lemma~\ref{lemma: r - rhat}, which bounds the deviation between the true reward and its estimated version:
\begin{equation*}
r_k - \widetilde{r}_k = r_k - \widehat{r}_k + \widetilde{c}_k \Gamma_k \le (\|\overline{r}_k\|_\infty + \widetilde{c}_k) \Gamma_k.
\end{equation*}
Adding the term $\widetilde{q}_{k+1} \Gamma_k$ to both sides yields:
\begin{equation*}
\begin{split}
r_k - \widetilde{r}_k + \widetilde{q}_{k+1} \Gamma_k &\le \Bigl( \|\overline{r}_k\|_\infty + \widetilde{c}_k + \widetilde{q}_{k+1} \Bigr) \Gamma_k \\
&\stackrel{(i)}{=} \Bigl( \|\overline{r}_k\|_\infty + \widetilde{c}_k + \sum_{j=k}^K \|\widetilde{r}_j\|_\infty \Bigr) \Gamma_k \\
&\stackrel{(ii)}{=} \Bigl( \|\overline{r}_k\|_\infty + \widetilde{c}_k + \sum_{j=k+1}^K \|\widehat{r}_j - \widetilde{c}_j \Gamma_j \|_\infty \Bigr) \Gamma_k \\
&\stackrel{(iii)}{\leq} \Bigl( \|\overline{r}_k\|_\infty + \widetilde{c}_k + \sum_{j=k+1}^K \|\overline{r}_j \|_\infty + \sum_{j=k+1}^K \widetilde{c}_j \| \Gamma_j \|_\infty \Bigr) \Gamma_k. 
\end{split}
\end{equation*}
$(i), (ii)$ come from the definition $\widetilde{q}_{k} = \sum_{j=k}^K \|\widetilde{r}_j\|_\infty 
 \text{ and }\|\widetilde{r}_j\|_\infty= \|\widehat{r}_j - \widetilde{c}_j \Gamma_j \|_\infty$. $(iii)$ is due to triangle inequality. 
 
Since $\widetilde{c}_k = \sum_{j=k}^K \|\overline{r}_j\|_\infty$, we can simplify the bound as:
\begin{equation}
\label{eq: proof thm final regret M^*, eq 11}
 r_k - \widetilde{r}_k + \widetilde{q}_{k+1} \Gamma_k \leq   \Bigl( 2 \widetilde{c}_k + \sum_{j=k+1}^K \widetilde{c}_j \| \Gamma_j \|_\infty \Bigr) \Gamma_k
\end{equation}

Combining \eqref{eq: proof thm final regret M^*, eq 10} and \eqref{eq: proof thm final regret M^*, eq 11}, we have:
\begin{equation}
\label{eq: value DTR bound}
    V_{M^*} ^{\pi^\dagger} -V_{\widetilde{M}} ^{\pi^\dagger} \le V^{\pi^\dagger} _{P^*, \underline{r}},
\end{equation}
where we define $\underline{r}_k := \left( 2 \widetilde{c}_k + \sum_{j=k+1}^K \widetilde{c}_j \| \Gamma_j \|_\infty \right) \Gamma_k$.

Finally, we complete the proof by combining \eqref{eq:value negative} and \eqref{eq: value DTR bound}.

\end{proof}

\begin{STheorem}
\label{thm: M^tilde + general e}
Suppose Assumption~\ref{assumption: sup norm of e} holds, and let $\widetilde{M} = (\widehat{P}, \widetilde{r})$ be the modified DTR model constructed in Algorithm~\ref{algo: general}. Assume that the step sizes are constant over iterations, i.e., $\eta_k^{(t)} = \eta_k$ for all $t$. Then, Algorithm~\ref{algo: general} returns a policy $\pi^{(T)}$ that satisfies the following performance guarantee:
\begin{equation*}
V_{\widetilde{M}}^{\pi^\dagger} - V_{\widetilde{M}}^{\pi^{(T)}} \le \sum_{k=1}^K \left( \frac{\log |\mathcal{A}_k|}{\eta_k T} + \frac{\eta_k}{2} \widetilde{q}_k^2 + \frac{2}{\eta_k} \mathcal{O}_P(G_k) + \frac{T}{8\eta_k} \mathcal{O}_P(G_k^2) \right),
\end{equation*}
where $\widetilde{q}_k := \sum_{j=k}^K \|\widetilde{r}_j\|_\infty$ denotes the cumulative reward supremum norm from stage $k$ onward.
\end{STheorem}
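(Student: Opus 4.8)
The plan is to analyze the policy update as an approximate natural policy gradient / mirror-descent step on the simplex and carry out a per-stage performance-difference decomposition. First I would fix the modified model $\widetilde{M}$ (so that from here on $Q^{(t)}_k := Q^{\pi^{(t)}}_{k,\widetilde{M}}$ is the true critic, not a random quantity, while $e^{(t)}_k$ carries the only randomness), and apply the performance difference lemma for the finite-horizon DTR to write $V_{\widetilde{M}}^{\pi^\dagger} - V_{\widetilde{M}}^{\pi^{(T)}}$ as $\sum_{k=1}^K \mathbb{E}_{h_k \sim d^{\pi^\dagger}_{k,\widehat{P}}}\big[\langle Q^{(T-? )}_k(h_k,\cdot),\, \pi^\dagger_k(\cdot\mid h_k) - \pi^{(t)}_k(\cdot\mid h_k)\rangle\big]$ summed appropriately over iterations after averaging; the clean way is to bound, for each fixed stage $k$ and each fixed $h_k$, the running average over $t=0,\dots,T-1$ of $\langle Q^{(t)}_k(h_k,\cdot),\,\pi^\dagger_k(\cdot\mid h_k)-\pi^{(t)}_k(\cdot\mid h_k)\rangle$, and then combine the $K$ stages.

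The core is a one-step mirror-descent (KL) inequality. From \eqref{eq: e_t}, the exact update is $\pi^{(t+1)}_k(a\mid h_k) \propto \pi^{(t)}_k(a\mid h_k)\exp\!\big(\eta_k Q^{(t)}_k(h_k,a) + e^{(t)}_k(h_k,a)\big)$. Treating this as an approximate exponentiated-gradient step with gradient $g^{(t)} := Q^{(t)}_k(h_k,\cdot) + e^{(t)}_k(h_k,\cdot)/\eta_k$, I would invoke the standard three-point / KL-difference identity
\[
\eta_k\langle g^{(t)},\, \pi^{(t)}_k - \pi^\dagger_k\rangle \le \mathrm{KL}(\pi^\dagger_k\Vert \pi^{(t)}_k) - \mathrm{KL}(\pi^\dagger_k\Vert \pi^{(t+1)}_k) + \tfrac{\eta_k^2}{2}\|g^{(t)}\|_\infty^2,
\]
then replace $g^{(t)}$ by $Q^{(t)}_k$ at the cost of an error term $\langle e^{(t)}_k(h_k,\cdot)/\eta_k,\, \pi^\dagger_k-\pi^{(t)}_k\rangle$, which is bounded in absolute value by $\tfrac{2}{\eta_k}\sup_{(h_k,a_k)}|e^{(t)}_k(h_k,a_k)|$ since the total variation between two probability vectors is at most $1$ (hence $\|\pi^\dagger_k - \pi^{(t)}_k\|_1 \le 2$). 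The term $\|g^{(t)}\|_\infty^2$ splits (by $(x+y)^2 \le 2x^2 + 2y^2$ or a direct expansion) into $\widetilde q_k^2 + \mathcal{O}(\|e^{(t)}_k\|_\infty^2/\eta_k^2)$, using $\|Q^{(t)}_k(h_k,\cdot)\|_\infty \le \widetilde q_k := \sum_{j=k}^K\|\widetilde r_j\|_\infty$.

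Summing the KL inequality over $t = 0,\dots,T-1$ telescopes the KL terms, leaving $\mathrm{KL}(\pi^\dagger_k\Vert\pi^{(0)}_k) \le \log|\mathcal{A}_k|$ (by the uniform initialization in line~3) minus a nonnegative final KL. Dividing by $\eta_k T$ and using the performance-difference identity to pass from the running average of inner products to $V_{\widetilde M}^{\pi^\dagger} - V_{\widetilde M}^{\pi^{(T)}}$ — here one should either output $\pi^{(T)}$ as a mixture/last-iterate with a standard concavity-of-value or monotonicity argument, or absorb the averaging into the bound — yields the per-stage terms $\tfrac{\log|\mathcal{A}_k|}{\eta_k T} + \tfrac{\eta_k}{2}\widetilde q_k^2 + \tfrac{2}{\eta_k}\mathcal{O}_P(G_k) + \tfrac{T}{8\eta_k}\mathcal{O}_P(G_k^2)$; the factor $T/8$ on the squared-error term comes from summing $T$ copies of $\tfrac{\eta_k}{2}\cdot\tfrac{1}{\eta_k^2}\cdot\tfrac{1}{4}\|e^{(t)}_k\|_\infty^2$ after dividing by $\eta_k T$, and Assumption~\ref{assumption: sup norm of e} lets us replace each $\sup|e^{(t)}_k|$ (uniformly in $t$) by $\mathcal{O}_P(G_k)$. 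Summing over $k$ gives the claim.

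The main obstacle is bookkeeping the $e^{(t)}_k$ error cleanly: it enters both linearly (the $\langle e^{(t)}_k/\eta_k, \pi^\dagger_k - \pi^{(t)}_k\rangle$ term, contributing the $\tfrac{2}{\eta_k}\mathcal{O}_P(G_k)$) and quadratically (through $\|g^{(t)}\|_\infty^2$, contributing the $\tfrac{T}{8\eta_k}\mathcal{O}_P(G_k^2)$), and one must also confirm that the approximate-update-induced drift propagates correctly through the $K$-stage performance-difference expansion without the visitation measures $d^{\pi^\dagger}_{k,\widehat P}$ introducing extra factors — which they do not, since each is a probability distribution and the $Q$-bound $\widetilde q_k$ is uniform in $h_k$. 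A secondary subtlety is the last-iterate-versus-average gap: with constant stepsize the standard mirror-descent bound controls the averaged iterate, so one either states the guarantee for a uniformly-sampled iterate or uses the fact that, along the NPG trajectory on the modified model, values are non-decreasing up to the approximation error to transfer the bound to $\pi^{(T)}$; I would take the latter route and fold the resulting lower-order error into the existing $G_k$ terms.
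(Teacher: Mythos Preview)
Your approach is essentially the same as the paper's: performance-difference lemma, a per-step KL three-point identity for the softmax update, telescoping the KL with the uniform initialization $\mathrm{KL}(\pi^\dagger_k\Vert\pi^{(0)}_k)\le\log|\mathcal A_k|$, and then an average-to-last-iterate transfer via near-monotonicity. One bookkeeping point is worth tightening: the $\tfrac{T}{8\eta_k}\mathcal{O}_P(G_k^2)$ term does \emph{not} arise from expanding $\|g^{(t)}\|_\infty^2$ in the mirror-descent step --- that route yields only $\tfrac{1}{\eta_k}G_k^2$ with no factor $T$. In the paper the one-step inequality (Lemma~\ref{lemma: <Q,pi* - pi_t>}) separates $e^{(t)}_k$ \emph{before} completing the square, so the squared term there is $\tfrac{\eta_k}{2}\|Q^{(t)}_k\|_\infty^2$ with no $e$ contribution; the quadratic $G_k^2$ term with its factor $T$ comes entirely from the last-iterate transfer, via the near-monotonicity statement $V^{\pi^{(t)}}_{\widetilde M}-V^{\pi^{(t+1)}}_{\widetilde M}\le\sum_k\tfrac{1}{4\eta_k}\|e^{(t)}_k\|_\infty^2$ (Lemma~\ref{lemma: <Q,pi_t+1 - pi_t>}), summed over the $T-1-t$ steps from $t+1$ to $T$ and then averaged over $t$. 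You already name this mechanism at the end of your sketch, so the proof goes through --- just relocate the origin of the $T$-factor from the $\|g\|_\infty^2$ expansion to the last-iterate step.
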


The proof of Theorem \ref{thm: M^tilde + general e} relies on the following supporting lemmas. The proofs of these lemmas can be found in Section \ref{pf:lemma them S2}.

\begin{lemma}[Performance Difference Lemma]
\label{lemma: perf diff}
Let $P$ be the transition dynamics of a fixed DTR model. For any two policies $\pi$ and $\widehat{\pi}$, the following holds for any stage $k$ and any history $h_k$:
\begin{equation*}
\begin{aligned}
V_k^\pi(h_k) - V_k^{\widehat{\pi}}(h_k)
&= \sum_{t=k}^K \mathbb{E}_{(h_t, a_t) \sim d_t^\pi(\cdot,\cdot \mid H_k = h_k)} \left[ A_t^{\widehat{\pi}}(h_t, a_t) \right] \\
&= \sum_{t=k}^K \mathbb{E}_{h_t \sim d_t^\pi(\cdot \mid H_k = h_k)} \left\langle Q_t^{\widehat{\pi}}(h_t, \cdot),\ \pi_t(\cdot \mid h_t) - \widehat{\pi}_t(\cdot \mid h_t) \right\rangle,
\end{aligned}
\end{equation*}
where $A_t^{\widehat{\pi}}(h_t, a_t) := Q_t^{\widehat{\pi}}(h_t, a_t) - V_t^{\widehat{\pi}}(h_t)$ is the advantage function of policy $\widehat{\pi}$ at step $t$.
\end{lemma}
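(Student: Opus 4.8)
\textbf{Proof plan for the Performance Difference Lemma (Lemma~\ref{lemma: perf diff}).} The plan is to establish both identities by the classical telescoping (performance‑difference) argument, adapted to the finite‑horizon, history‑dependent DTR setting. The starting point is the one‑step Bellman decomposition of the advantage function: for every stage $t$ and every history–action pair $(h_t,a_t)$,
\[
A_t^{\widehat{\pi}}(h_t,a_t) = Q_t^{\widehat{\pi}}(h_t,a_t) - V_t^{\widehat{\pi}}(h_t) = r_t(h_t,a_t) + \mathbb{E}_{h'\sim P_t(\cdot\mid h_t,a_t)}\bigl[V_{t+1}^{\widehat{\pi}}(h')\bigr] - V_t^{\widehat{\pi}}(h_t),
\]
with the convention $V_{K+1}^{\widehat{\pi}}\equiv 0$. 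I would substitute this into $\sum_{t=k}^{K}\mathbb{E}_{(h_t,a_t)\sim d_t^\pi(\cdot,\cdot\mid H_k=h_k)}\bigl[A_t^{\widehat{\pi}}(h_t,a_t)\bigr]$ and split the result into three running sums: one over the stage rewards $r_t$, one over the look‑ahead terms $\mathbb{E}_{h'\sim P_t}[V_{t+1}^{\widehat{\pi}}(h')]$, and one over the current terms $V_t^{\widehat{\pi}}(h_t)$ (note each $V_t^{\widehat{\pi}}$ depends only on the history coordinate of $d_t^\pi$).

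The key structural fact I would invoke is the flow/consistency identity for the $\pi$‑occupancy measures: since $d_{t+1}^\pi$ is obtained from $d_t^\pi$ by drawing $a_t\sim\pi_t(\cdot\mid h_t)$ and then $h_{t+1}\sim P_t(\cdot\mid h_t,a_t)$ — exactly the recursion $H_{t+1}=(H_t,A_t,S_{t+1})$ with $S_{t+1}\sim P_t$ — we have
\[
\mathbb{E}_{(h_t,a_t)\sim d_t^\pi(\cdot,\cdot\mid H_k=h_k)}\ \mathbb{E}_{h'\sim P_t(\cdot\mid h_t,a_t)}\bigl[V_{t+1}^{\widehat{\pi}}(h')\bigr] = \mathbb{E}_{h_{t+1}\sim d_{t+1}^\pi(\cdot\mid H_k=h_k)}\bigl[V_{t+1}^{\widehat{\pi}}(h_{t+1})\bigr].
\]
With this, setting $u_t := \mathbb{E}_{h_t\sim d_t^\pi(\cdot\mid H_k=h_k)}[V_t^{\widehat{\pi}}(h_t)]$, the look‑ahead sum and the current sum collapse to the telescope $\sum_{t=k}^K(u_{t+1}-u_t) = u_{K+1} - u_k$; here $u_{K+1}=0$ because $V_{K+1}^{\widehat{\pi}}\equiv 0$, and $u_k = V_k^{\widehat{\pi}}(h_k)$ because, conditional on $H_k=h_k$, the stage‑$k$ history is degenerate at $h_k$. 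The remaining reward sum is $\sum_{t=k}^K\mathbb{E}_{(h_t,a_t)\sim d_t^\pi(\cdot,\cdot\mid H_k=h_k)}[r_t(h_t,a_t)] = \mathbb{E}^\pi\bigl[\sum_{t=k}^K R_t\mid H_k=h_k\bigr] = V_k^\pi(h_k)$, which yields the first identity $V_k^\pi(h_k) - V_k^{\widehat{\pi}}(h_k) = \sum_{t=k}^K \mathbb{E}_{d_t^\pi}[A_t^{\widehat{\pi}}]$.

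For the second identity I would only rewrite the stage‑$t$ summand. Conditioning on $h_t$ and using $a_t\sim\pi_t(\cdot\mid h_t)$ under $d_t^\pi$,
\[
\mathbb{E}_{a_t\sim\pi_t(\cdot\mid h_t)}\bigl[A_t^{\widehat{\pi}}(h_t,a_t)\bigr] = \bigl\langle Q_t^{\widehat{\pi}}(h_t,\cdot),\, \pi_t(\cdot\mid h_t)\bigr\rangle - V_t^{\widehat{\pi}}(h_t) = \bigl\langle Q_t^{\widehat{\pi}}(h_t,\cdot),\, \pi_t(\cdot\mid h_t)-\widehat{\pi}_t(\cdot\mid h_t)\bigr\rangle,
\]
where the last step uses $V_t^{\widehat{\pi}}(h_t) = \langle Q_t^{\widehat{\pi}}(h_t,\cdot),\widehat{\pi}_t(\cdot\mid h_t)\rangle$, the definition of the value function under $\widehat{\pi}$. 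Taking the outer expectation over $h_t\sim d_t^\pi(\cdot\mid H_k=h_k)$ and summing over $t$ gives the second form. I do not anticipate a genuine obstacle; the only point requiring care is the bookkeeping for the conditional history‑occupancy measures $d_t^\pi(\cdot,\cdot\mid H_k=h_k)$ — namely verifying the flow identity above and the degeneracy of $d_k^\pi$ at $h_k$ — because in the DTR setting the ``state'' is the growing history $H_t$ rather than a Markov state. Once these are pinned down, the telescoping is mechanical.
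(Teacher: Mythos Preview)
Your proposal is correct and follows essentially the same approach as the paper: both arguments rest on the Bellman identity $Q_t^{\widehat{\pi}} = r_t + \mathbb{E}_{P_t}[V_{t+1}^{\widehat{\pi}}]$, the flow consistency of the $\pi$-occupancy measures, and a telescoping sum in $t$. The only cosmetic difference is that the paper starts from $V_k^\pi(h_k)=\sum_t\mathbb{E}_{d_t^\pi}[r_t]$ and adds/subtracts $V_t^{\widehat{\pi}}(H_t)$ to build the telescope, whereas you start from the advantage sum and collapse it to $V_k^\pi - V_k^{\widehat{\pi}}$; the second identity is handled identically in both.
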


\begin{lemma}
\label{lemma: 3 KL terms}
For any $1\le k\le K$, suppose $\theta_k, \theta'_k \in \Theta_k$, $\pi_k(\theta_k)$ and $\pi_k(\theta'_k)$ are two policies in the policy class $\Pi_k$ defined in (\ref{eq: policy class}). Let $p(\cdot)$ be any probability distribution on $\mathcal{A}_k$. Then for any $h_k\in \mathcal{H}_k$, we have
\begin{equation*}
\begin{split}
& \quad\operatorname{KL}(p,\pi_k(h_k, \theta_k)) - \operatorname{KL}(p,\pi_k(h_k, \theta'_k)) - \operatorname{KL}(\pi_k(h_k, \theta'_k), \pi_k(h_k, \theta_k)) \\
&= \Bigl\langle f_k(\theta'_k, h_k,\cdot)-f_k(\theta_k, h_k,\cdot),\ p(\cdot)-\pi_k(\cdot|h_k, \theta'_k) \Bigr\rangle.
\end{split}
\end{equation*}
\end{lemma}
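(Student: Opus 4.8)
The plan is to prove this identity by a direct expansion that exploits the cancellation of log-normalizing constants under the soft-max parameterization. I would fix $k$ and $h_k$ throughout and abbreviate $f(\cdot) := f_k(\theta_k, h_k, \cdot)$, $f'(\cdot) := f_k(\theta'_k, h_k, \cdot)$, $\pi(\cdot) := \pi_k(\cdot \mid h_k, \theta_k)$, and $\pi'(\cdot) := \pi_k(\cdot \mid h_k, \theta'_k)$. From the definition of the policy class in \eqref{eq: policy class}, for every $a \in \mathcal{A}_k$ we have $\log \pi(a) = f(a) - A$ and $\log \pi'(a) = f'(a) - A'$, where $A := \log \sum_{a'} \exp(f(a'))$ and $A' := \log \sum_{a'} \exp(f'(a'))$ are the (history-dependent) log-partition functions.

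Next I would expand the two divergences that involve $p$. Using $\operatorname{KL}(p,q) = \sum_a p(a)\log p(a) - \sum_a p(a)\log q(a)$ and substituting the displays above, the common entropy term $\sum_a p(a)\log p(a)$ cancels in the difference $\operatorname{KL}(p,\pi) - \operatorname{KL}(p,\pi')$, and because $p$ sums to one the constants $A$ and $A'$ survive, yielding
\[
\operatorname{KL}(p, \pi) - \operatorname{KL}(p, \pi') = \langle f' - f,\ p \rangle + (A - A').
\]
A parallel computation for $\operatorname{KL}(\pi',\pi) = \sum_a \pi'(a)\bigl(\log\pi'(a) - \log\pi(a)\bigr)$ — again using that $\pi'$ is a probability distribution, so the additive constant is preserved — gives $\operatorname{KL}(\pi',\pi) = \langle f' - f,\ \pi' \rangle + (A - A')$.

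Finally I would subtract: the $A - A'$ terms cancel and bilinearity of the inner product leaves $\operatorname{KL}(p,\pi) - \operatorname{KL}(p,\pi') - \operatorname{KL}(\pi',\pi) = \langle f' - f,\ p - \pi' \rangle$, which after restoring the abbreviations is precisely the claimed identity. I do not anticipate any genuine obstacle here: the only subtlety is bookkeeping — one must use that each of $p$, $\pi$, $\pi'$ is a probability measure on $\mathcal{A}_k$, which is exactly what forces each log-normalizer into its KL term as a pure additive constant so that everything cancels in the three-term combination. This is the mirror-descent ``three-point identity'' specialized to the soft-max geometry, and it is the key ingredient for converting the policy-update step \eqref{eq: optimization step 1} into a per-iteration progress bound in the proof of Theorem~\ref{thm: M^tilde + general e}.
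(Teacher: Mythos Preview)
Your proposal is correct and follows essentially the same approach as the paper: both arguments expand the KL divergences using the soft-max form $\log \pi_k(a\mid h_k,\theta)=f_k(\theta,h_k,a)-\log Z_k(h_k,\theta)$ and then use that $p$ and $\pi_k(\cdot\mid h_k,\theta_k')$ are probability distributions to eliminate the log-normalizers. The only cosmetic difference is ordering: the paper first collapses the three KL terms into $\langle \log(\pi'/\pi),\, p-\pi'\rangle$ and then drops the constants, whereas you compute each KL term separately and cancel $A-A'$ at the end.
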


\begin{lemma}
\label{lemma: <Q,pi_t+1 - pi_t>}
For any $0\le t\le T-1$, $1\le k\le K$, and any $h_k\in \mathcal{H}_k$, we have
\begin{equation*}
\Bigl\langle Q^{\pi^{(t)}}_{k,\widetilde{M}}(h_k,\cdot), \pi_k(\cdot|h_k, \theta_k^{(t+1)}) - \pi_k(\cdot|h_k, \theta_k^{(t)}) \Bigr\rangle + \frac{1}{4 \eta_k^{(t)}} \|e^{(t)}_k(h_k,\cdot)\|_\infty^2 \ge 0.
\end{equation*}
\end{lemma}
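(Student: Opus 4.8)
The plan is to recast the policy update in mirror-descent form, identify the inner product on the left-hand side with a symmetrized Kullback--Leibler divergence, and then absorb the approximation error using Pinsker's inequality together with a completion of the square. Throughout, abbreviate $\pi_k^{(t)}(\cdot):=\pi_k(\cdot\mid h_k,\theta_k^{(t)})$ and note that everything below is pointwise in the fixed history $h_k$, with all norms and divergences taken over distributions on $\mathcal{A}_k$. First I would invoke Lemma~\ref{lemma: 3 KL terms} with $\theta_k\leftarrow\theta_k^{(t)}$, $\theta'_k\leftarrow\theta_k^{(t+1)}$, and the reference distribution taken to be $\pi_k^{(t)}$ itself; since $\operatorname{KL}(\pi_k^{(t)},\pi_k^{(t)})=0$, the identity collapses to
\[
\bigl\langle f_k(\theta_k^{(t+1)},h_k,\cdot)-f_k(\theta_k^{(t)},h_k,\cdot),\ \pi_k^{(t+1)}(\cdot)-\pi_k^{(t)}(\cdot)\bigr\rangle
=\operatorname{KL}\!\bigl(\pi_k^{(t)},\pi_k^{(t+1)}\bigr)+\operatorname{KL}\!\bigl(\pi_k^{(t+1)},\pi_k^{(t)}\bigr).
\]

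Next I would substitute the definition of the approximation error from \eqref{eq: e_t}, namely $f_k(\theta_k^{(t+1)},h_k,\cdot)-f_k(\theta_k^{(t)},h_k,\cdot)=\eta_k^{(t)}\,Q^{\pi^{(t)}}_{k,\widetilde{M}}(h_k,\cdot)+e^{(t)}_k(h_k,\cdot)$, into the inner product above and rearrange, obtaining
\[
\eta_k^{(t)}\bigl\langle Q^{\pi^{(t)}}_{k,\widetilde{M}}(h_k,\cdot),\ \pi_k^{(t+1)}-\pi_k^{(t)}\bigr\rangle
=\operatorname{KL}\!\bigl(\pi_k^{(t)},\pi_k^{(t+1)}\bigr)+\operatorname{KL}\!\bigl(\pi_k^{(t+1)},\pi_k^{(t)}\bigr)-\bigl\langle e^{(t)}_k(h_k,\cdot),\ \pi_k^{(t+1)}-\pi_k^{(t)}\bigr\rangle.
\]
It then remains to lower-bound the right-hand side. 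Write $u:=\|\pi_k^{(t+1)}-\pi_k^{(t)}\|_1$ and $\varepsilon:=\|e^{(t)}_k(h_k,\cdot)\|_\infty$. Applying Pinsker's inequality to \emph{each} KL term gives $\operatorname{KL}(\pi_k^{(t)},\pi_k^{(t+1)})+\operatorname{KL}(\pi_k^{(t+1)},\pi_k^{(t)})\ge\tfrac12u^2+\tfrac12u^2=u^2$, while Hölder's inequality gives $-\langle e^{(t)}_k(h_k,\cdot),\pi_k^{(t+1)}-\pi_k^{(t)}\rangle\ge-\varepsilon u$. Hence $\eta_k^{(t)}\langle Q^{\pi^{(t)}}_{k,\widetilde{M}}(h_k,\cdot),\pi_k^{(t+1)}-\pi_k^{(t)}\rangle\ge u^2-\varepsilon u$. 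Adding $\tfrac14\varepsilon^2$ and completing the square, $u^2-\varepsilon u+\tfrac14\varepsilon^2=(u-\tfrac\varepsilon2)^2\ge0$, so $\eta_k^{(t)}\langle Q^{\pi^{(t)}}_{k,\widetilde{M}}(h_k,\cdot),\pi_k^{(t+1)}-\pi_k^{(t)}\rangle+\tfrac14\varepsilon^2\ge0$; dividing by $\eta_k^{(t)}>0$ yields exactly the stated inequality.

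The one delicate point — and the only place a naive argument breaks — is the constant. One must retain \emph{both} KL divergences from Lemma~\ref{lemma: 3 KL terms} so that Pinsker delivers coefficient $1$ (rather than $\tfrac12$) in front of $u^2$; this is precisely what makes the discriminant of $u^2-\varepsilon u+\tfrac14\varepsilon^2$ nonpositive and lets the $\tfrac{1}{4\eta_k^{(t)}}$ penalty exactly soak up the error term. That is why the lemma should be applied with the reference distribution chosen as $\pi_k^{(t)}$ itself, which annihilates the third KL term and symmetrizes the other two. Note also that the argument is entirely deterministic and pointwise in $h_k$: no probabilistic input or property of the transition-model estimator is used, only the soft-max structure of $\Pi_k$ and the algebraic definition of $e^{(t)}_k$.
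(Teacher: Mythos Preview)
Your proof is correct and follows essentially the same route as the paper: apply Lemma~\ref{lemma: 3 KL terms} with $p=\pi_k^{(t)}$ to obtain the symmetrized KL identity, substitute the definition of $e_k^{(t)}$, lower-bound the two KL terms via Pinsker to get coefficient $1$ in front of $\|\pi_k^{(t+1)}-\pi_k^{(t)}\|_1^2$, control the error cross-term by H\"older, and complete the square. Your explicit remark about why \emph{both} KL terms must be retained is exactly the point that makes the $\tfrac{1}{4\eta_k^{(t)}}$ constant come out right.
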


\begin{lemma}
\label{lemma: <Q,pi* - pi_t>}
For any $0 \le t \le T-1$, $1 \le k \le K$, and any $h_k \in \mathcal{H}_k$, we have:
\begin{equation*}
\begin{split}
\left\langle Q^{\pi^{(t)}}_{k,\widetilde{M}}(h_k,\cdot),\ \pi_k^\dagger(\cdot|h_k) - \pi_k^{(t)}(\cdot|h_k) \right\rangle 
\le\ & \frac{1}{\eta_k^{(t)}} \left[ \operatorname{KL}(\pi_k^\dagger(h_k),\ \pi_k^{(t)}(h_k)) - \operatorname{KL}(\pi_k^\dagger(h_k),\ \pi_k^{(t+1)}(h_k)) \right] \\
& + \frac{\eta_k^{(t)}}{2} \left\| Q^{\pi^{(t)}}_{k,\widetilde{M}}(h_k,\cdot) \right\|_\infty^2 + \frac{2}{\eta_k^{(t)}} \left\| e_k^{(t)}(h_k,\cdot) \right\|_\infty.
\end{split}
\end{equation*}
\end{lemma}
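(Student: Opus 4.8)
The plan is to treat the policy update in line~6 of Algorithm~\ref{algo: general} as a single step of an \emph{inexact} mirror-descent (natural policy gradient) scheme on the probability simplex over $\mathcal{A}_k$, with the negative entropy as the mirror map, and to prove the displayed inequality as the corresponding one-step regret bound. Throughout, fix $h_k$ and the iteration index $t$, and abbreviate $Q:=Q^{\pi^{(t)}}_{k,\widetilde{M}}(h_k,\cdot)$, $\eta:=\eta_k^{(t)}$, $e:=e^{(t)}_k(h_k,\cdot)$, $p:=\pi_k^\dagger(\cdot\mid h_k)$, $\pi:=\pi^{(t)}_k(\cdot\mid h_k)$, $\pi':=\pi^{(t+1)}_k(\cdot\mid h_k)$, and $f:=f_k(\theta_k^{(t)},h_k,\cdot)$, $f':=f_k(\theta_k^{(t+1)},h_k,\cdot)$. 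By the definition of the approximation error in~\eqref{eq: e_t} we have $f'=f+\eta Q+e$, and by the soft-max form~\eqref{eq: policy class} the policies $\pi$ and $\pi'$ are the Gibbs distributions generated by $f$ and $f'$.

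The first step is an exact algebraic identity for the change in KL divergence. Writing $\log\pi'(a)-\log\pi(a) = (f'(a)-f(a)) - \log Z_t$ with $Z_t:=\mathbb{E}_{a\sim\pi}[\exp(\eta Q(a)+e(a))]$ the ratio of the two normalizing constants, multiplying by $p(a)$, and summing over $a\in\mathcal{A}_k$ yields the identity
$$\operatorname{KL}(p,\pi)-\operatorname{KL}(p,\pi') = \eta\langle Q,p\rangle + \langle e,p\rangle - \log Z_t,$$
which is essentially Lemma~\ref{lemma: 3 KL terms} instantiated with $(\theta_k,\theta'_k)=(\theta_k^{(t)},\theta_k^{(t+1)})$ and the probability vector $p$. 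The second step bounds the log-partition term: since $Z_t\le e^{\|e\|_\infty}\,\mathbb{E}_{a\sim\pi}[e^{\eta Q(a)}]$ and $\eta Q(\cdot)$ takes values in an interval of width at most $2\eta\|Q\|_\infty$, Hoeffding's lemma gives $\log\mathbb{E}_{a\sim\pi}[e^{\eta Q(a)}]\le \eta\langle Q,\pi\rangle + \tfrac{\eta^2}{2}\|Q\|_\infty^2$, hence $\log Z_t\le \|e\|_\infty + \eta\langle Q,\pi\rangle + \tfrac{\eta^2}{2}\|Q\|_\infty^2$.

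Substituting this estimate into the identity, isolating $\eta\langle Q,p-\pi\rangle$, and using that $p$ is a probability vector so $|\langle e,p\rangle|\le\|e\|_\infty$, one obtains $\eta\langle Q,p-\pi\rangle \le \operatorname{KL}(p,\pi)-\operatorname{KL}(p,\pi') + \tfrac{\eta^2}{2}\|Q\|_\infty^2 + 2\|e\|_\infty$; dividing by $\eta=\eta_k^{(t)}$ is exactly the claim. The only place demanding care is the log-partition bound in the second step: invoking Hoeffding's lemma, rather than a cruder second-order Taylor expansion of $\log\mathbb{E}[e^{\eta Q}]$ (which would drag along an $e^{\eta\|Q\|_\infty}$ factor), is what produces the sharp coefficient $\tfrac{\eta}{2}$ on $\|Q\|_\infty^2$ and also ensures that the approximation error enters only through $\|e\|_\infty$ and not through a cross term with $\|Q\|_\infty$. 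Lemma~\ref{lemma: <Q,pi_t+1 - pi_t>} is not needed for this inequality; it can, however, be combined with the decomposition $\langle Q,p-\pi\rangle=\langle Q,p-\pi'\rangle+\langle Q,\pi'-\pi\rangle$ and the three-point identity to give an alternative, somewhat longer derivation.
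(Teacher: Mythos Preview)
Your proof is correct and takes a genuinely different route from the paper. The paper applies the three-point KL identity of Lemma~\ref{lemma: 3 KL terms} with $p=\pi_k^\dagger$, then decomposes $\langle Q,\pi_k^\dagger-\pi_k^{(t)}\rangle=\langle Q,\pi_k^\dagger-\pi_k^{(t+1)}\rangle+\langle Q,\pi_k^{(t+1)}-\pi_k^{(t)}\rangle$, bounds the leftover term $-\operatorname{KL}(\pi_k^{(t+1)},\pi_k^{(t)})$ via Pinsker's inequality, and then completes the square on $-\tfrac{1}{2\eta}\|\pi_k^{(t+1)}-\pi_k^{(t)}\|_1^2+\|Q\|_\infty\|\pi_k^{(t+1)}-\pi_k^{(t)}\|_1$ to produce $\tfrac{\eta}{2}\|Q\|_\infty^2$. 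You instead keep the KL difference as a log-partition identity $\operatorname{KL}(p,\pi)-\operatorname{KL}(p,\pi')=\eta\langle Q,p\rangle+\langle e,p\rangle-\log Z_t$ and bound $\log Z_t$ directly with Hoeffding's lemma. Your argument is shorter and avoids Pinsker, the inner-product decomposition, and the completing-the-square step; the paper's route is more in the spirit of standard mirror-descent analyses and makes the role of $\pi_k^{(t+1)}$ explicit, which dovetails with their separate use of Lemma~\ref{lemma: <Q,pi_t+1 - pi_t>} later. Both derivations land on exactly the same constants.
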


\begin{proof}[Proof of Theorem \ref{thm: M^tilde + general e}]
By the performance difference lemma (Lemma~\ref{lemma: perf diff}), we have
\begin{equation*}
V_{1, \widetilde{M}}^{\pi^\dagger}(h_1) - V_{1, \widetilde{M}}^{\pi^{(t)}}(h_1) = \sum_{k=1}^K \mathbb{E}_{h_k \sim d_k^{\pi^\dagger}(\cdot | H_1 = h_1)} \left\langle Q_{k, \widetilde{M}}^{\pi^{(t)}}(h_k,\cdot),\ \pi_k^\dagger(\cdot | h_k) - \pi_k^{(t)}(\cdot | h_k) \right\rangle.
\end{equation*}
Taking expectation over \( h_1 \sim \mu_1 \), we obtain:
\begin{equation*}
V_{1, \widetilde{M}}^{\pi^\dagger}(\mu_1) - V_{1, \widetilde{M}}^{\pi^{(t+1)}}(\mu_1) = \sum_{k=1}^K \mathbb{E}_{h_k \sim d_k^{\pi^\dagger}} \left\langle Q_{k, \widetilde{M}}^{\pi^{(t+1)}}(h_k,\cdot),\ \pi_k^\dagger(\cdot | h_k) - \pi_k^{(t+1)}(\cdot | h_k) \right\rangle.
\end{equation*}
Summing over \( t = 0, 1, \dots, T-1 \), we have:
\begin{equation}
\label{eq:sum over}
\sum_{t=0}^{T-1} \left(V_{\widetilde{M}}^{\pi^\dagger} - V_{\widetilde{M}}^{\pi^{(t+1)}}\right) = \sum_{k=1}^K \mathbb{E}_{h_k \sim d_k^{\pi^\dagger}} \sum_{t=0}^{T-1} \left\langle Q_{k, \widetilde{M}}^{\pi^{(t+1)}}(h_k,\cdot),\ \pi_k^\dagger(\cdot | h_k) - \pi_k^{(t+1)}(\cdot | h_k) \right\rangle.
\end{equation}

Using Lemma~\ref{lemma: <Q,pi* - pi_t>}, we upper bound each inner product by:
\begin{equation}
\label{lemma 6 <,>}
\begin{split}
\left\langle Q_{k, \widetilde{M}}^{\pi^{(t+1)}}(h_k,\cdot), \pi_k^\dagger(\cdot | h_k) - \pi_k^{(t+1)}(\cdot | h_k) \right\rangle
\le & \frac{1}{\eta_k} \left[\mathrm{KL}(\pi_k^\dagger(h_k), \pi_k^{(t+1)}) - \mathrm{KL}(\pi_k^\dagger(h_k), \pi_k^{(t+2)}) \right] \\
&+ \frac{\eta_k}{2} \|Q_{k, \widetilde{M}}^{\pi^{(t+1)}}(h_k, \cdot)\|_\infty^2 + \frac{2}{\eta_k} \|e_k^{(t+1)}(h_k, \cdot)\|_\infty.
\end{split}
\end{equation}

Substituting \eqref{lemma 6 <,>} into \eqref{eq:sum over}, we obtain three additive terms that we now upper bound individually. First, the KL divergence term forms a telescoping sum over iterations, which simplifies to
\[
\sum_{t=0}^{T-1} \frac{1}{\eta_k} \left[ \mathrm{KL}(\pi_k^\dagger(h_k), \pi_k^{(t+1)}) - \mathrm{KL}(\pi_k^\dagger(h_k), \pi_k^{(t+2)}) \right] \le \frac{\log |\mathcal{A}_k|}{\eta_k},
\]
since the KL-divergence is non-negative.

Second, the sum of squared Q-values across iterations is bounded as
\[
\sum_{t=0}^{T-1} \frac{\eta_k}{2} \left\| Q_{k,\widetilde{M}}^{\pi^{(t+1)}}(h_k, \cdot) \right\|_\infty^2 \le \frac{\eta_k}{2} T \widetilde{q}_k^2,
\quad \text{where} \quad \widetilde{q}_k := \sum_{j=k}^K \| \widetilde{r}_j \|_\infty.
\]
Finally, the model error accumulates linearly over time and is controlled via the bound
\[
\sum_{t=0}^{T-1} \frac{2}{\eta_k} \left\| e_k^{(t+1)}(h_k, \cdot) \right\|_\infty \le \frac{2T}{\eta_k} \mathcal{O}_P(G_k).
\]
Combining these three bounds yields a total regret bound:
\begin{equation}
\label{eq: proof thm M^tilde + general e, eq 4}
\sum_{t=0}^{T-1} \left(V_{\widetilde{M}}^{\pi^\dagger} - V_{\widetilde{M}}^{\pi^{(t+1)}}\right) \le \sum_{k=1}^K \left( \frac{\log |\mathcal{A}_k|}{\eta_k} + \frac{\eta_k}{2} T \widetilde{q}_k^2 + \frac{2T}{\eta_k} \mathcal{O}_P(G_k) \right).
\end{equation}

We now use Lemma~\ref{lemma: <Q,pi_t+1 - pi_t>}, which implies:
\begin{equation}
\label{eq: proof thm M^tilde + general e, eq 5}
V_{\widetilde{M}}^{\pi^{(t)}} - V_{\widetilde{M}}^{\pi^{(t+1)}} \le \sum_{k=1}^K \frac{1}{4\eta_k} \mathcal{O}_P(G_k^2).
\end{equation}
For any \( t \in \{0, \dots, T-1\} \), it follows that:
\begin{equation}
\label{eq: proof thm M^tilde + general e, eq 6}
V_{\widetilde{M}}^{\pi^\dagger} - V_{\widetilde{M}}^{\pi^{(T)}} \le V_{\widetilde{M}}^{\pi^\dagger} - V_{\widetilde{M}}^{\pi^{(t+1)}} + (T-1-t) \sum_{k=1}^K \frac{1}{4\eta_k} \mathcal{O}_P(G_k^2).
\end{equation}

Summing \eqref{eq: proof thm M^tilde + general e, eq 6} over all \( t = 0, \dots, T-1 \), and using \eqref{eq: proof thm M^tilde + general e, eq 4}, we get:
\begin{equation*}
T \left(V_{\widetilde{M}}^{\pi^\dagger} - V_{\widetilde{M}}^{\pi^{(T)}}\right) \le \sum_{k=1}^K \left( \frac{\log |\mathcal{A}_k|}{\eta_k} + \frac{\eta_k}{2} T \widetilde{q}_k^2 + \frac{2T}{\eta_k} \mathcal{O}_P(G_k) + \frac{T^2}{8\eta_k} \mathcal{O}_P(G_k^2) \right).
\end{equation*}

Dividing both sides by \( T \) completes the proof.
\end{proof}

\begin{proof}[Proof of Theorem \ref{thm: combined main thm}]
From Theorem \ref{thm: M^tilde + general e}, we have
\begin{equation*}
V_{\widetilde{M}}^{\pi^\dagger} - V_{\widetilde{M}}^{\pi^{(T)}} \le \sum_{k=1}^K \left( \frac{\log |\mathcal{A}_k|}{\eta_k T} + \frac{\eta_k}{2} (\widetilde{q}_k)^2 + \frac{2}{\eta_k} \mathcal{O}_P(G_k) + \frac{T}{4\eta_k} \mathcal{O}_P(G_k^2) \right).
\end{equation*}

We next invoke Theorem \ref{thm: final regret real M}, which ensures that, with probability at least $1 - K\delta$,
\begin{equation*}
\left( V_{M^*}^{\pi^\dagger} - V_{\widetilde{M}}^{\pi^\dagger} \right) + \left( V_{\widetilde{M}}^{\pi^{(T)}} - V_{M^*}^{\pi^{(T)}} \right) \le V^{\pi^\dagger}_{P^*, \underline{r}}.
\end{equation*}

Combining the two inequalities yields:
\begin{equation*}
V_{M^*}^{\pi^\dagger} - V_{M^*}^{\pi^{(T)}} 
\le V^{\pi^\dagger}_{P^*, \underline{r}} + \sum_{k=1}^K \left( \frac{\log |\mathcal{A}_k|}{\eta_k T} + \frac{\eta_k}{2} (\widetilde{q}_k)^2 + \frac{2}{\eta_k} \mathcal{O}_P(G_k) + \frac{T}{4\eta_k} \mathcal{O}_P(G_k^2) \right),
\end{equation*}
which holds with probability at least $1 - K\delta$.

Finally, by setting $\eta_k = \frac{c_{\eta}}{\sqrt{T}}$ in Theorem \ref{thm: M^tilde + general e}, the desired bound follows directly.
\end{proof}

\subsubsection{Proof of Proposition \ref{prop: G_k}}
\label{subsub:pf prop}
To ensure effective approximation of the Q-function,
we adopt the B-splines as the function basis $\Upsilon_{L_k}(\cdot) = \{\upsilon_{L_k,1}(\cdot), \upsilon_{L_k,2}(\cdot), ..., \upsilon_{L_k,L_k}(\cdot)\} ^T.$ The well-definedness of the B-spline functions requires the following assumption on state spaces.

\begin{SAssumption}
\label{assumption: state space is rectangular}
For any $1\le k\le K$, the $k$-th state space $\mathcal{S}_k$ is the Cartesian product of $d_k^{(s)}$ compact and non-degenerate intervals, i.e. $\mathcal{S}_k$ is a rectangle in $\mathbb{R}^{d_k^{(s)}}$ with a nonempty interior.
\end{SAssumption}

We also need to impose assumptions on the target function $Q_{k, \widetilde{M}} ^{\pi^{(t)}}$ so that our selected function class has sufficient representation power. In this case, we present a series of technical assumptions regarding the modified DTR model $\widetilde{M}$.
\begin{SAssumption}[$p$-smoothness of the modified reward]
\label{assumption: reward is p-smooth} 
For any $1\le k\le K$, any fixed $(a_1,...,a_k) \in \mathcal{A}_1 \times \cdots \times \mathcal{A}_k$, the modified reward $\widetilde{r}_k (h_k, a_k) = \widetilde{r}_k (s_1, a_1,..., s_k, a_k)$ as a function of $(s_1,s_2,...,s_k)$ is $p$-smooth in $(s_1,s_2,...,s_k)$.
\end{SAssumption}

The smoothness assumption can also be found in \cite{Huang1998, chen2007large} . See Supplement \ref{subsec:p-smooth} for a detailed description of $p-$smoothness. 
We assume the modified expected reward function, the transition dynamics in the model class, and the policies in the policy class are all $p$-smooth in some certain sense.

\begin{SAssumption}[$p$-smoothness of transition density]
\label{assumption: transition is p-smooth} 
For any $1\le k\le K$, any $P_k\in \mathcal{P}_k$, any $h_k\in \mathcal{H}_k$, there exists transition density $p_k(\cdot|h_k,a_k)$ such that $P_k(\mathrm{d} s |h_k,a_k) = p_k(s|h_k,a_k) \mathrm{d} s$.
  Furthermore, for any fixed $(a_1,...,a_k) \in \mathcal{A}_1 \times \cdots \times \mathcal{A}_k$, the transition density $p_k(s_{k+1}| h_k,a_k) = p_k(s_{k+1}| s_1,a_1,..., s_k,a_k)$ as a function of $(s_1,s_2,..., s_k, s_{k+1})$ is $p$-smooth in $(s_1,s_2,..., s_{k+1})$.
\end{SAssumption}
This assumption can be satisfied if the policy class $\mathcal{P}_k$ belongs to the class of absolutely continuous distributions.

\begin{SAssumption} [$p$-smoothness of policy]
\label{assumption: policy is p-smooth}
For any $\pi \in \Pi$, any $1\le k\le K$, any fixed $(a_1,...,a_k) \in \mathcal{A}_1 \times \cdots \times \mathcal{A}_k$, the probability $\pi_k (a_k| h_k) = \pi_k (a_k| s_1, a_1,..., s_k)$ as a function of $(s_1,s_2,...,s_k)$ is $p$-smooth in $(s_1,s_2,...,s_k)$.
\end{SAssumption}


The Monte Carlo approximation error from line 9-10 of Algorithm \ref{algo: sieve regression} is defined as $\epsilon_k^{(i)}(t, \overline{a}_k) = \widehat{Q}_k^{(t)} (\overline{s}_k^{(i)}, \overline{a}_k) - Q_{k, \widetilde{M}} ^{\pi^{(t)}} (\overline{s}_k^{(i)}, \overline{a}_k)$. We need to restrict these errors by the following assumption. For notational simplicity, we fix $t$ and $\overline{a}_k$, and denote $\epsilon_k^{(i)}$ as the shorthand for $\epsilon_k^{(i)}(t, \overline{a}_k)$.
\begin{SAssumption}
\label{assumption: MC error}
For any fixed $t$ and fixed $\overline{a}_k$, the $m_k$ error terms $\epsilon_k^{(1)}, ..., \epsilon_k^{(m_k)}$ defined above satisfy 
\begin{enumerate}
    \item $\mathbb{E}[\epsilon_k^{(i)} \mid \overline{s}_k^{(i)}] = 0$, and $(\overline{s}_k^{(1)}, \epsilon_k^{(1)}), ..., (\overline{s}_k^{(m_k)}, \epsilon_k^{(m_k)})$ are i.i.d.
    \item $\mathbb{E}[ (\epsilon_k^{(i)} )^2 \mid \overline{s}_k^{(i)}]$ are uniformly bounded for all $i$ almost surely.
    \item $\mathbb{E}[ (\epsilon_k^{(i)} )^{2+\delta_k} ] < \infty$ for some $\delta_k>0.$
\end{enumerate}
\end{SAssumption}

\begin{STheorem}
\label{thm: M^tilde + sieve reg} \textbf{[Proposition \ref{prop: G_k} Restated] }Suppose \(\widetilde{M}=(\widehat{P},\widetilde{r})\) is the modified DTR model constructed in Algorithm~\ref{algo: sieve regression}, and that Assumptions~\ref{assumption: state space is rectangular},~\ref{assumption: reward is p-smooth},~\ref{assumption: transition is p-smooth},~\ref{assumption: policy is p-smooth}, and~\ref{assumption: MC error} hold. Let \(\eta_k^{(t)} = \frac{\sqrt{ \log |\mathcal{A}_k|}}{ \widetilde{q}_k \sqrt{T} }\), constant in \(t\). If \(L_k \asymp \left(\frac{m_k}{\log m_k}\right)^{\overline{d}_k /(2p+\overline{d}_k)}\) and \(p \delta_k \ge \overline{d}_k\), then Algorithm~\ref{algo: sieve regression} returns a policy \(\pi^{(T)}\) satisfying:
\begin{equation*}
V_{ \widetilde{M}}^{\pi^\dagger} - V_{ \widetilde{M}}^{\pi^{(T)}} =  \mathcal{O} \left( \frac{K}{\sqrt{T}} \right) + K \sqrt{T} \cdot \mathcal{O}_p\left( \left( \frac{\log m_k}{m_k} \right)^ {2p/ (2p+\overline{d}_k)} \right)
\end{equation*}
as \(m_k, L_k \to \infty\), where \(\widetilde{q}_k := \sum_{j=k}^K \|\widetilde{r}_j\|_\infty\), and \(\overline{d}_k = \dim(\mathcal{S}_1 \times \cdots \times \mathcal{S}_k)\).
\end{STheorem}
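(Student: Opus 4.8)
The plan is to deduce the bound from the general policy-optimization guarantee of Theorem~\ref{thm: M^tilde + general e}, so the entire task reduces to controlling the quantity $G_k$ of Assumption~\ref{assumption: sup norm of e} for the linear-sieve function class. The starting point is an algebraic identity: under the parameterization \eqref{eq: sieve function class} and the update $\theta_k^{(t+1)} = \theta_k^{(t)} + \eta_k^{(t)}\widehat{\theta}_k^{(t)}$ of Algorithm~\ref{algo: sieve regression}, in which $\widehat{\theta}_k^{(t)}[\bar{a}_k]$ is the ordinary-least-squares coefficient of a regression of $\widehat{Q}_k^{(t)}(\cdot,\bar{a}_k)$ onto the B-spline vector $\Upsilon_{L_k}$, the per-iteration approximation error collapses to
\[
e^{(t)}_k(h_k,a_k) \;=\; \eta_k^{(t)}\Bigl(\widehat{\theta}_k^{(t)}[\bar{a}_k]^\top\Upsilon_{L_k}(\bar{s}_k) \;-\; Q^{\pi^{(t)}}_{k,\widetilde{M}}(h_k,a_k)\Bigr).
\]
Thus $\sup_{(h_k,a_k)}|e^{(t)}_k|$ equals $\eta_k^{(t)}\asymp T^{-1/2}$ times the sup-norm error of a B-spline series regression of the target $Q^{\pi^{(t)}}_{k,\widetilde{M}}$ based on $m_k$ Monte-Carlo draws, and it remains to bound that regression error.

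First I would show that each target $Q^{\pi^{(t)}}_{k,\widetilde{M}}(\cdot,\bar{a}_k)$, viewed as a function of $\bar{s}_k\in\mathcal{S}_1\times\cdots\times\mathcal{S}_k$, is $p$-smooth with H\"older constants bounded uniformly over $t$ and over the finitely many action histories $\bar{a}_k$. This follows by backward induction on the stage index through the Bellman recursion $Q^{\pi^{(t)}}_{k,\widetilde{M}} = \widetilde{r}_k + \mathbb{E}_{s_{k+1}\sim\widehat{P}_k}\bigl[\sum_{a}\pi^{(t)}_{k+1}(a\mid h_{k+1})\,Q^{\pi^{(t)}}_{k+1,\widetilde{M}}(h_{k+1},a)\bigr]$: Assumption~\ref{assumption: reward is p-smooth} supplies the base smoothness of $\widetilde{r}_k$, Assumption~\ref{assumption: transition is p-smooth} guarantees that integrating a $p$-smooth function against the estimated transition density preserves $p$-smoothness, and, since each iterate $\pi^{(t)}$ lies in the $p$-smooth policy class of Assumption~\ref{assumption: policy is p-smooth}, a finite sum of products of $p$-smooth functions remains $p$-smooth. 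Uniform boundedness of $\widetilde{q}_k=\sum_{j\ge k}\|\widetilde{r}_j\|_\infty$ keeps the constants independent of $t$.

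With a $p$-smooth target on the rectangular domain of Assumption~\ref{assumption: state space is rectangular}, I would invoke the standard theory of tensor-product B-spline series estimators (as in \citet{Huang1998, Chen_2015, chen2007large}): the deterministic approximation bias of the best $L_k$-term fit is $\mathcal{O}(L_k^{-p/\overline{d}_k})$, while under the Monte-Carlo error conditions of Assumption~\ref{assumption: MC error} (mean zero, bounded conditional variance, finite $(2+\delta_k)$-moment with $p\delta_k\ge\overline{d}_k$, which is precisely what is needed to upgrade the $L_2$ rate to a uniform one at the chosen complexity) the stochastic part of the estimation error is $\mathcal{O}_p\bigl(\sqrt{L_k\log m_k/m_k}\bigr)$. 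Choosing $L_k\asymp(m_k/\log m_k)^{\overline{d}_k/(2p+\overline{d}_k)}$ balances the two contributions and gives
\[
\bigl\|\widehat{\theta}_k^{(t)}[\bar{a}_k]^\top\Upsilon_{L_k} - Q^{\pi^{(t)}}_{k,\widetilde{M}}(\cdot,\bar{a}_k)\bigr\|_\infty = \mathcal{O}_p\Bigl((\log m_k/m_k)^{p/(2p+\overline{d}_k)}\Bigr);
\]
a union bound over the finitely many iterations $t$ and action histories $\bar{a}_k$ makes this uniform, so $G_k\asymp T^{-1/2}(\log m_k/m_k)^{p/(2p+\overline{d}_k)}$ in Assumption~\ref{assumption: sup norm of e}.

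Finally I would substitute this $G_k$ and the stepsize $\eta_k^{(t)}=\sqrt{\log|\mathcal{A}_k|}/(\widetilde{q}_k\sqrt{T})$ into Theorem~\ref{thm: M^tilde + general e}. The first two terms $\log|\mathcal{A}_k|/(\eta_k T)+\tfrac{\eta_k}{2}\widetilde{q}_k^2$ are balanced by this choice of $\eta_k$ and sum to $\mathcal{O}(K/\sqrt{T})$; the term $\tfrac{2}{\eta_k}\mathcal{O}_P(G_k)$ contributes $\mathcal{O}_p\bigl((\log m_k/m_k)^{p/(2p+\overline{d}_k)}\bigr)$, which via $2ab\le a^2+b^2$ is absorbed into $\mathcal{O}(1/\sqrt{T})$ together with the remaining term $\tfrac{T}{8\eta_k}\mathcal{O}_P(G_k^2)=\sqrt{T}\,\mathcal{O}_p\bigl((\log m_k/m_k)^{2p/(2p+\overline{d}_k)}\bigr)$; summing over $k$ yields the stated bound. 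The step I expect to be the main obstacle is the smoothness propagation: carefully showing that $p$-smoothness survives the Bellman recursion while the history dimension $\overline{d}_k$ grows with $k$, and that the resulting H\"older constants — hence the hidden constants in the $\mathcal{O}_p$ terms — stay bounded uniformly in $t$ and do not degrade too fast in $k$. The series-estimation bound of the third step is essentially a citation, though verifying its hypotheses under only a $(2+\delta_k)$-moment (the origin of the condition $p\delta_k\ge\overline{d}_k$) also needs care.
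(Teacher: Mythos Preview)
Your proposal is correct and follows essentially the same route as the paper: the paper also reduces to bounding $\sup_{(h_k,a_k)}|e^{(t)}_k|$ via the same algebraic identity, establishes $p$-smoothness of $Q^{\pi^{(t)}}_{k,\widetilde{M}}$ by backward induction through the Bellman recursion (their Lemma~\ref{lemma: Q function is p-smooth}, using the closure properties you outline), and then invokes Theorem~2.1 of \citet{Chen_2015} for the uniform B-spline rate (their Lemma~\ref{lemma: upperbound e}) before substituting into the analogue of Theorem~\ref{thm: M^tilde + general e}. The only cosmetic difference is that the paper re-derives the telescoping inequalities of Theorem~\ref{thm: M^tilde + general e} keeping each $e_k^{(t)}$ explicit rather than packaging them into a single $G_k$, but the arithmetic is identical to yours.
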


The proof of Theorem \ref{thm: M^tilde + sieve reg} relies on the following auxiliary lemma.
\begin{lemma}
\label{lemma: upperbound e}
Suppose Assumptions \ref{assumption: state space is rectangular}, \ref{assumption: reward is p-smooth}, \ref{assumption: transition is p-smooth}, \ref{assumption: policy is p-smooth}, and \ref{assumption: MC error} hold. If $L_k \asymp \left(\frac{m_k}{\log m_k}\right)^{\overline{d}_k /(2p+\overline{d}_k)}$ and $p \delta_k \ge \overline{d}_k$, then
\[
\frac{1}{ \eta_k^{(t)} \| Q_{k,\widetilde{M}}^{\pi^{(t)}} \|_\infty } \sup_{(h_k,a_k)} \left| e^{(t)}_k(h_k,a_k) \right| = \mathcal{O}_P\left( \left( \frac{\log m_k}{m_k} \right)^ {p/ (2p+\overline{d}_k)} \right),
\]
as $m_k, L_k \to \infty$, where $\overline{d}_k = \dim(\mathcal{S}_1 \times \mathcal{S}_2 \times \cdots \times \mathcal{S}_k)$.
\end{lemma}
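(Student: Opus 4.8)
The plan is to reduce the statement to a standard sup-norm convergence rate for B-spline series regression. By the sieve parameterization \eqref{eq: sieve function class} and the update rule $\theta_k^{(t+1)} = \theta_k^{(t)} + \eta_k^{(t)} \widehat{\theta}_k^{(t)}$ in Algorithm~\ref{algo: sieve regression}, the two $f_k$ terms in \eqref{eq: e_t} telescope, since $f_k(\theta_k, h_k, a_k) = \theta_k[\bar{a}_k]^\top \Upsilon_{L_k}(\bar{s}_k)$ is linear in $\theta_k$. This yields the identity
\[
\frac{1}{\eta_k^{(t)}} e_k^{(t)}(h_k, a_k) = \widehat{\theta}_k^{(t)}(\bar{a}_k)^\top \Upsilon_{L_k}(\bar{s}_k) - Q_{k,\widetilde{M}}^{\pi^{(t)}}(h_k, a_k),
\]
so that the quantity to be bounded is exactly the $\|Q\|_\infty$-normalized sup-norm error of the fitted sieve regressor $\widehat{\theta}_k^{(t)}(\bar{a}_k)^\top \Upsilon_{L_k}$ against its target $Q_{k,\widetilde{M}}^{\pi^{(t)}}(\cdot, \bar{a}_k)$, regarded as a function of $\bar{s}_k$ on $\mathcal{S}_1 \times \cdots \times \mathcal{S}_k$ (dimension $\overline{d}_k$). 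Because there are only $N_k^{(A)} < \infty$ action histories, I would fix $\bar{a}_k$, prove the rate for that fixed target, and recover the sup over $(h_k, a_k)$ by a final maximum over the finitely many $\bar{a}_k$.

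Next I would verify that each target $Q_{k,\widetilde{M}}^{\pi^{(t)}}(\cdot, \bar{a}_k)$ is itself $p$-smooth in $\bar{s}_k$, so that the approximation theory applies. This follows by backward induction on the Bellman recursion under $\widetilde{M} = (\widehat{P}, \widetilde{r})$: the terminal $Q$-function equals $\widetilde{r}_K$, which is $p$-smooth by Assumption~\ref{assumption: reward is p-smooth}; the inductive step expresses $Q_k^{\pi}$ as the sum of the $p$-smooth reward $\widetilde{r}_k$ and the integral of $V_{k+1}^\pi = \sum_{a} \pi_{k+1}(a\mid\cdot)\, Q_{k+1}^\pi(\cdot, a)$ against the $p$-smooth transition density $\widehat{p}_k$ (Assumption~\ref{assumption: transition is p-smooth}), where the policy weights are $p$-smooth by Assumption~\ref{assumption: policy is p-smooth}. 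Since finite products of $p$-smooth functions, finite sums over the discrete action set, and integration against a $p$-smooth kernel all preserve $p$-smoothness on the compact rectangular domains of Assumption~\ref{assumption: state space is rectangular}, the target inherits $p$-smoothness with a smoothness norm controlled in terms of $\|Q_{k,\widetilde{M}}^{\pi^{(t)}}\|_\infty$, which is what the $\|Q\|_\infty$ normalization in the statement absorbs.

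I would then carry out the usual bias-variance decomposition of the sieve regression error into a deterministic approximation term and a stochastic term. For the bias, classical B-spline approximation theory for $p$-smooth functions on a rectangle gives $\inf_\theta \|\theta^\top \Upsilon_{L_k} - Q\|_\infty = O(L_k^{-p/\overline{d}_k})$. For the variance, I would show that the empirical Gram matrix $\frac{1}{m_k}\sum_i \Upsilon_{L_k}(\bar{s}_k^{(i)}) \Upsilon_{L_k}(\bar{s}_k^{(i)})^\top$ concentrates (via matrix concentration) around its population counterpart, which is well-conditioned thanks to the local support and stability of the B-spline basis under the uniform design; combining this with the projected noise, controlled through the moment conditions of Assumption~\ref{assumption: MC error}, yields a uniform stochastic rate of order $\sqrt{L_k \log m_k / m_k}$. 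Balancing $L_k^{-p/\overline{d}_k} \asymp \sqrt{L_k \log m_k / m_k}$ at the prescribed choice $L_k \asymp (m_k/\log m_k)^{\overline{d}_k/(2p+\overline{d}_k)}$ equalizes both terms at $(\log m_k/m_k)^{p/(2p+\overline{d}_k)}$, giving the claimed bound after dividing by $\|Q\|_\infty$ and maximizing over the finitely many $\bar{a}_k$.

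The main obstacle I anticipate is the stochastic sup-norm term: upgrading an $L^2$-type bound to a uniform rate requires simultaneously controlling the spectrum of the random Gram matrix and the supremum of the noise-projection process over the whole domain. It is precisely here that the compatibility condition $p\delta_k \ge \overline{d}_k$ enters—it lets one handle the heavier tails of the Monte Carlo noise $\epsilon_k^{(i)}$ in the uniform bound through a truncation argument combined with a union or chaining bound over the $L_k$-dimensional basis, in the style of Huang (2003) and \citet{chen2007large}. The remaining ingredients—the telescoping identity, the smoothness induction, and the B-spline approximation bound—are comparatively routine once the relevant preservation and approximation facts are invoked.
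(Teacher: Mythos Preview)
Your proposal is correct and follows essentially the same route as the paper. The paper's proof reduces the claim to Theorem~2.1 of \citet{Chen_2015} by identifying $h_0$ with the $\|Q\|_\infty$-normalized target $Q_{k,\widetilde{M}}^{\pi^{(t)}}(\cdot,\bar a_k)$ and $\widehat h$ with the normalized $f_k$-difference (exactly your telescoping identity), then verifies the assumptions of that theorem: rectangular uniform design (Assumption~\ref{assumption: state space is rectangular}), moment conditions on the Monte Carlo error (Assumption~\ref{assumption: MC error}), $p$-smoothness of the target via a separate lemma proved by the same backward Bellman induction you sketch, and B-spline basis regularity. The only difference is packaging: the paper cites the sieve sup-norm rate as a black box, whereas you outline its proof via the bias--variance split $L_k^{-p/\overline d_k}$ versus $\sqrt{L_k\log m_k/m_k}$ and correctly locate the role of $p\delta_k\ge\overline d_k$ in the uniform noise bound.
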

The proof of Lemma \ref{lemma: upperbound e} can be found in Section \ref{subsec:p-smooth}.
\begin{proof}[Proof of Theorem~\ref{thm: M^tilde + sieve reg}]
We follow the proof of Theorem~\ref{thm: M^tilde + general e}, as Algorithm~\ref{algo: sieve regression} is a special version of Algorithm~\ref{algo: general}. All previous results that do not rely on Assumption~\ref{assumption: sup norm of e} still hold. Similarly to \eqref{eq: proof thm M^tilde + general e, eq 4}, we obtain that:
\begin{equation}
\label{eq: proof thm M^tilde + sieve reg, eq 1}
\sum_{t=0}^{T-1} \left( V_{1, \widetilde{M}}^{\pi^\dagger}(\mu_1) - V_{1, \widetilde{M}}^{\pi^{(t+1)}}(\mu_1) \right) \le \sum_{k=1}^K \frac{\log |\mathcal{A}_k|}{\eta_k} + \sum_{k=1}^K \frac{\eta_k}{2} T (\widetilde{q}_k)^2 + \sum_{k=1}^K \sum_{t=0}^{T-1} \frac{2}{\eta_k} \sup_{(h_k,a_k)} |e^{(t)}_k(h_k,a_k)|.
\end{equation}
We also adapt the analysis in equations~\eqref{eq: proof thm M^tilde + general e, eq 5} and~\eqref{eq: proof thm M^tilde + general e, eq 6}, retaining all terms involving \(e_k^{(t)}\):
\begin{equation}
\label{eq: proof thm M^tilde + sieve reg, eq 2}
\begin{split}
V_{1, \widetilde{M}}^{\pi^\dagger}(\mu_1) - V_{1, \widetilde{M}}^{\pi^{(T)}}(\mu_1) 
&\le V_{1, \widetilde{M}}^{\pi^\dagger}(\mu_1) - V_{1, \widetilde{M}}^{\pi^{(t+1)}}(\mu_1) + \sum_{j=t+1}^{T-1} \sum_{k=1}^K \frac{1}{4 \eta_k} \left( \sup_{(h_k,a_k)} |e^{(j)}_k(h_k,a_k)| \right)^2.
\end{split}
\end{equation}
Summing \eqref{eq: proof thm M^tilde + sieve reg, eq 2} over \(t = 0, \ldots, T-1\) and combining with \eqref{eq: proof thm M^tilde + sieve reg, eq 1}, we obtain:
\begin{equation}
\label{eq: proof thm M^tilde + sieve reg, eq 3}
\begin{split}
T(V_{1, \widetilde{M}}^{\pi^\dagger}(\mu_1) - V_{1, \widetilde{M}}^{\pi^{(T)}}(\mu_1)) 
&\le \sum_{k=1}^K \left( \frac{\log |\mathcal{A}_k|}{\eta_k} + \frac{\eta_k}{2} T (\widetilde{q}_k)^2 \right) + \sum_{k=1}^K \sum_{t=0}^{T-1} \frac{2}{\eta_k} \sup_{(h_k,a_k)} |e^{(t)}_k(h_k,a_k)| \\
&\quad + \sum_{j=1}^{T-1} \sum_{t=0}^{j} \sum_{k=1}^K \frac{1}{4 \eta_k} \left( \sup_{(h_k,a_k)} |e^{(j)}_k(h_k,a_k)| \right)^2.
\end{split}
\end{equation}
By Lemma~\ref{lemma: upperbound e}, and the fact that \(\| Q_{k,\widetilde{M}}^{\pi^{(t)}} \|_\infty \le \widetilde{q}_k\), we have:
\begin{equation}
\label{eq: proof thm M^tilde + sieve reg, eq 5}
\sup_{(h_k,a_k)} |e^{(t)}_k(h_k,a_k)| \le \eta_k \widetilde{q}_k \mathcal{O}_p\left( \left( \frac{\log m_k}{m_k} \right)^ {p/ (2p+\overline{d}_k)} \right).
\end{equation}
Substituting~\eqref{eq: proof thm M^tilde + sieve reg, eq 5} into~\eqref{eq: proof thm M^tilde + sieve reg, eq 3}, and simplifying yields the desired bound:
\begin{equation*}
V_{1, \widetilde{M}}^{\pi^\dagger}(\mu_1) - V_{1, \widetilde{M}}^{\pi^{(T)}}(\mu_1) = \mathcal{O}\left(\frac{K}{\sqrt{T}}\right) + K\sqrt{T}\cdot \mathcal{O}_p\left(\left(\frac{\log m_k}{m_k}\right)^{2p/(2p+\overline{d}_k)}\right).
\end{equation*}
\end{proof}

\subsection{Proof of Theorem \ref{thm: suboptimality for linear} and Theorem \ref{thm:lower-bound-hp} in Section \ref{sec:linear model}}
\label{proof:linear}
To establish Theorem~\ref{thm: suboptimality for linear}, we impose two assumptions to govern the growth rate of the feature dimensions and bound the magnitude of the feature mappings.
\begin{SAssumption}
\label{assumption: sample complexity linear}
There exist constants $c_1$, $c_2$ such that the following two conditions hold for all $1\le k\le K$:
(i) $dim(\mathcal{S}_k) \le c_1$; (ii) 
$\dim(\phi_k)\le c_2 k$.
\end{SAssumption}
Assumption \ref{assumption: sample complexity linear} (i) trivially holds because $K$ is finite. Therefore, $dim(\mathcal{S}_k) = \mathcal{O}(1)$, which further implies $dim(\mathcal{H}_k \times \mathcal{A}_k) = k+\sum_{j=1}^k dim(\mathcal{S}_j) = \mathcal{O}(k)$. Recall that $\phi_k: \mathcal{H}_k \times \mathcal{A}_k \to 
\mathbb{R}^{\dim(\phi_k)} $, so Assumption \ref{assumption: sample complexity linear} (ii) is a mild condition that naturally requires the feature mapping $\phi_k$ to reserve the dimensional structure of $\mathcal{H}_k \times \mathcal{A}_k$.

\begin{SAssumption}[Uniformly Bounded Features]\label{assump:bounded-features}


The feature map 
\(
  \phi_k:\mathcal H_k\times\mathcal A_k\to \mathbb{R}^{\dim(\phi_k)}
\)
is uniformly bounded. There exists a constant \(R > 0\) such that
\[
  \|\phi_k(h,a)\|_2^2 \;\le\; R^2 \dim(\phi_k),
  \quad \text{for all }(h,a)\in\mathcal H_k\times\mathcal A_k.
\]

\cbend

\end{SAssumption}

This assumption is standard in high-dimensional regression \citep{Wainwright2019,jin2019provablyefficientreinforcementlearning,Abbasi-Yadkori2011}, which ensures that our uncertainty quantifiers of the linear transition model are well-controlled.

Before proceeding to the main proof, we state a key corollary that upper bounds the model shift error of our linear transition model under the above assumptions.

\begin{Scorollary}
\label{cor: partial coverage linear}
Suppose Assumption \ref{assumption: partial coverage linear} holds under the setting of Theorem \ref{thm: final regret real M}. If $\widehat{W}_k$ and $\Gamma_k$ follow the form in (\ref{eq: KNR, W hat}) and (\ref{eq: KNR, Gamma}) respectively, 
then for any policy $\pi$, the following inequality holds with probability at least $1-K\delta$:
$$\left( V_{M^*} ^{\pi^\dagger} -V_{\widetilde{M}} ^{\pi^\dagger} \right) + \left( V_{\widetilde{M}}^{\pi} -V_{M^*}^{\pi} \right) \lesssim   {\mathcal{O}}\left( \frac{K^3\sqrt{\log n}}{\sqrt{n}} + \frac{K^5}{n} \right).$$
\end{Scorollary}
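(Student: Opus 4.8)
The plan is to derive everything from Theorem~\ref{thm: final regret real M}. On the event $\mathcal{E}_1\cap\cdots\cap\mathcal{E}_K$ --- which holds with probability at least $1-K\delta$, and which for the linear model is guaranteed by Proposition~\ref{thm: uncertainty quantifier KNR}, so that $\Gamma_k$ as in \eqref{eq: KNR, Gamma} is a valid uncertainty quantifier --- Theorem~\ref{thm: final regret real M} gives $\bigl(V_{M^*}^{\pi^\dagger}-V_{\widetilde{M}}^{\pi^\dagger}\bigr)+\bigl(V_{\widetilde{M}}^{\pi}-V_{M^*}^{\pi}\bigr)\le V^{\pi^\dagger}_{P^*,\underline{r}}=\sum_{k=1}^K b_k\,\mathbb{E}_{(h_k,a_k)\sim d^{\pi^\dagger}_{P^*,k}}\bigl[\Gamma_k(h_k,a_k)\bigr]$, with $b_k=2\widetilde{c}_k+\sum_{j=k+1}^K\widetilde{c}_j\|\Gamma_j\|_\infty$. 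Since $\|\Gamma_j\|_\infty\le 2$ by the truncation in \eqref{eq: KNR, Gamma} and each $\widetilde{c}_j=\sum_{\ell\ge j}\|\overline{r}_\ell\|_\infty$ is controlled by the reward bound, one has $b_k=\mathcal{O}(K)$, so the whole task reduces to bounding the data-dependent quantity $\mathbb{E}_{d^{\pi^\dagger}_{P^*,k}}[\Gamma_k]$ at each stage $k$, where the expectation is over the fixed, data-independent distribution $d^{\pi^\dagger}_{P^*,k}$.

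Fixing $k$, I would first drop the truncation and apply Jensen's inequality to the concave map $x\mapsto\sqrt{x}$: writing $\Sigma^{\pi^\dagger}_k:=\mathbb{E}_{(h_k,a_k)\sim d^{\pi^\dagger}_{P^*,k}}[\phi_k\phi_k^\top]$ and $\Sigma^{\rho}_k:=\mathbb{E}_{(h_k,a_k)\sim\rho_k}[\phi_k\phi_k^\top]$,
\[
\mathbb{E}_{d^{\pi^\dagger}_{P^*,k}}[\Gamma_k]\ \le\ 2C_2\,\mathbb{E}_{d^{\pi^\dagger}_{P^*,k}}\Bigl[\sqrt{\phi_k^\top\Lambda_{k,n}^{-1}\phi_k}\Bigr]\ \le\ 2C_2\sqrt{\Tr\bigl(\Lambda_{k,n}^{-1}\Sigma^{\pi^\dagger}_k\bigr)}.
\]
The partial-coverage Assumption~\ref{assumption: partial coverage linear} is precisely the statement $\Sigma^{\pi^\dagger}_k\preceq C_k\,\Sigma^{\rho}_k$, so $\Tr(\Lambda_{k,n}^{-1}\Sigma^{\pi^\dagger}_k)\le C_k\,\Tr(\Lambda_{k,n}^{-1}\Sigma^{\rho}_k)$. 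Next I relate the random Gram matrix $\Lambda_{k,n}=\sum_{i=1}^n\phi_k(h_k^{(i)},a_k^{(i)})\phi_k(h_k^{(i)},a_k^{(i)})^\top+\lambda I$, built from i.i.d.\ draws from $\rho_k$, to $n\Sigma^{\rho}_k$: a matrix Chernoff bound --- using the uniform feature bound $\|\phi_k\|_2^2\le R^2 d_k^{(s)}$ from Assumption~\ref{assump:bounded-features} and the well-conditioning of $\Sigma^{\rho}_k$ supplied by the technical conditions --- gives, on an event of probability $\ge 1-\delta$ once $n\gtrsim d_k^{(s)}\log(d_k^{(s)}/\delta)$, that $\Lambda_{k,n}\succeq\tfrac{n}{c}\,\Sigma^{\rho}_k$ for some constant $c$, whence $\Tr(\Lambda_{k,n}^{-1}\Sigma^{\rho}_k)\le\tfrac{c}{n}\Tr(\Lambda_{k,n}^{-1}\Lambda_{k,n})=\tfrac{c\,d_k^{(s)}}{n}$; in the complementary small-$n$ regime I simply use $\Gamma_k\le 2$. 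Separately I need a deterministic bound on $C_2=B_{d_{k+1}^{(s)}}\sigma C_L\beta_{k,n}$: bounding $B_{d_{k+1}^{(s)}}$ by an absolute constant and applying the trace--determinant inequality $\det\Lambda_{k,n}\le(\lambda+nR^2)^{d_k^{(s)}}$ to the log-determinant inside $\beta_{k,n}$ yields $C_2=\mathcal{O}(\sqrt{d_k^{(s)}\log n})$.

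Combining these bounds on the intersection of the uncertainty-quantifier event and the $K$ concentration events (whose probability is at least $1-K\delta$, possibly after adjusting constants in $\delta$), one obtains $\mathbb{E}_{d^{\pi^\dagger}_{P^*,k}}[\Gamma_k]=\mathcal{O}\bigl(d_k^{(s)}\sqrt{C_k\log n/n}\bigr)$ for $n$ large and $\le 2$ otherwise. Substituting $\widetilde{c}_k,b_k=\mathcal{O}(K)$, $C_k=\mathcal{O}(1)$, and $d_k^{(s)}=\mathcal{O}(k)$ (Assumption~\ref{assumption: sample complexity linear}) into $\sum_{k=1}^K b_k\,\mathbb{E}_{d^{\pi^\dagger}_{P^*,k}}[\Gamma_k]$ produces the leading term $\mathcal{O}(K^3\sqrt{\log n}/\sqrt{n})$, while the small-$n$ regime together with the lower-order ($n^{-2}$) corrections in the concentration step is absorbed into the $\mathcal{O}(K^5/n)$ remainder; combined with Theorem~\ref{thm: final regret real M} this is the claimed bound. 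I expect the matrix-concentration step to be the main obstacle: $\Lambda_{k,n}$ appears in $\Gamma_k$ both directly, through $\Lambda_{k,n}^{-1}$, and through $\det\Lambda_{k,n}$ inside $\beta_{k,n}$, so it must be controlled simultaneously from below and via its determinant; the covariance in the bound, $\Sigma^{\pi^\dagger}_k$, is not the data-generating covariance $\Sigma^{\rho}_k$ and is comparable to it only through $C_k$, so one cannot invoke a full-rank condition on $\Sigma^{\rho}_k$ directly; and the feature dimension $d_k^{(s)}$ may grow with $k$, so the concentration must be quantitatively sharp in $d_k^{(s)}$. Producing a clean $\Lambda_{k,n}\succeq\tfrac{n}{c}\,\Sigma^{\rho}_k$ with the right dependence on $K$, and carefully tracking how the truncation at $2$ and the small-$n$ regime generate the $\mathcal{O}(K^5/n)$ term, is the delicate part.
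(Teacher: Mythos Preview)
Your overall architecture---invoke Theorem~\ref{thm: final regret real M}, drop the truncation, apply Jensen, convert $\Sigma_k^{\pi^\dagger}$ to $\Sigma_k^{\rho}$ via Assumption~\ref{assumption: partial coverage linear}, and control $\Tr(\Lambda_{k,n}^{-1}\Sigma_k^{\rho})$ by matrix concentration---is exactly what the paper does, and your bound $\mathbb{E}_{d^{\pi^\dagger}_{P^*,k}}[\Gamma_k]=\mathcal{O}\bigl(d_k^{(s)}\sqrt{C_k\log n/n}\bigr)$ matches the paper's.

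The gap is in your treatment of $b_k$. Using only the truncation $\|\Gamma_j\|_\infty\le 2$ you get
\[
b_k \;=\; 2\widetilde{c}_k + \sum_{j=k+1}^K \widetilde{c}_j\,\|\Gamma_j\|_\infty
\;\le\; 2\widetilde{c}_k + 2\sum_{j=k+1}^K \widetilde{c}_j
\;=\; \mathcal{O}(K^2),
\]
not $\mathcal{O}(K)$: each $\widetilde{c}_j=\mathcal{O}(K)$ and there are $\mathcal{O}(K)$ terms in the sum. Carrying $b_k=\mathcal{O}(K^2)$ through your computation gives a leading term $\mathcal{O}(K^4\sqrt{\log n}/\sqrt{n})$, one factor of $K$ worse than the corollary claims. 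The paper avoids this by proving a sharp, $n$-dependent sup bound: using Assumption~\ref{assump:bounded-features} and a matrix Chernoff lower bound on $\lambda_{\min}(\Lambda_{k,n})$ it shows $\|\Gamma_j\|_\infty\lesssim j\sqrt{\log n/n}$, which yields
\[
b_k \;=\; \mathcal{O}(K) \;+\; \mathcal{O}\!\bigl(K^3\sqrt{\log n/n}\bigr).
\]
Multiplying by $\mathbb{E}[\Gamma_k]\lesssim k\sqrt{\log n/n}$ and summing over $k$, the $\mathcal{O}(K)$ part produces the $K^3\sqrt{\log n}/\sqrt{n}$ leading term, and the $\mathcal{O}(K^3\sqrt{\log n/n})$ part produces the $K^5/n$ remainder. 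So the $K^5/n$ term is \emph{not} coming from a small-$n$ regime or from lower-order concentration corrections as you suggest; it is the cross term that appears precisely because $b_k$ carries an $n$-dependent piece. To repair your argument, replace the crude $\|\Gamma_j\|_\infty\le 2$ step by the sharp sup-norm bound on $\Gamma_j$.
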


\begin{proof}[Proof of Corollary~\ref{cor: partial coverage linear}]
By Theorem~\ref{thm: uncertainty quantifier KNR}, the estimators $\widehat{W}_k$ and the associated uncertainty quantifiers $\Gamma_k$ constructed via (\ref{eq: KNR, W hat}) and (\ref{eq: KNR, Gamma}) satisfy Assumption~\ref{assumption: uncertainty quantifier}. Consequently, applying Theorem~\ref{thm: final regret real M}, we obtain that with probability at least $1 - K\delta$,
\begin{equation}
\label{eq:cor_partial_coverage_regret}
\left( V_{M^*}^{\pi^\dagger} - V_{\widetilde{M}}^{\pi^\dagger} \right) + \left( V_{\widetilde{M}}^{\pi} - V_{M^*}^{\pi} \right)
\le \sum_{k=1}^K b_k \, \mathbb{E}_{(h_k,a_k) \sim d_{P^*,k}^{\pi^\dagger}} \Gamma_k(h_k,a_k),
\end{equation}
where $b_k = 2\widetilde{c}_k + \sum_{j=k+1}^K \widetilde{c}_j \|\Gamma_j\|_\infty$ and $\widetilde{c}_k = \sum_{j=k}^K \|\overline{r}_j\|_\infty$.

To bound the right-hand side of \eqref{eq:cor_partial_coverage_regret}, we begin by analyzing the behavior of the uncertainty quantifier $\Gamma_k(h_k, a_k)$ defined in~(\ref{eq: KNR, Gamma}). Recall that
\begin{equation*}
\Gamma_k(h_k,a_k) \le 2 B_{d_{k+1}^{(s)}} \sigma C_L \, \beta_{k,n} \sqrt{ \phi_k(h_k,a_k)^\top \Lambda_{k,n}^{-1} \phi_k(h_k,a_k)},
\end{equation*}
where $\sigma$ and $C_L$ are fixed constants, and $B_{d_{k+1}^{(s)}} = \pi^{d_{k+1}^{(s)}/2} / \Gamma(d_{k+1}^{(s)}/2 + 1)$ denotes the Lebesgue measure of the unit ball in $\mathbb{R}^{d_{k+1}^{(s)}}$. Noting that $B_{d_{k+1}^{(s)}} \le 1$, we simplify the expression as $\Gamma_k(h_k,a_k) \lesssim \beta_{k,n} \sqrt{ \phi_k(h_k,a_k)^\top \Lambda_{k,n}^{-1} \phi_k(h_k,a_k)}$. The scaling factor $\beta_{k,n}$ is defined in~(\ref{eq: KNR, eq 2}) as
\[
\beta_{k,n} = \sqrt{\lambda} \|W^*_k\|_2 + \sqrt{8\sigma^2 d_{k+1}^{(s)} \log(5) + 8\sigma^2 \log \left( \frac{\det (\Lambda_{k,n})^{1/2}}{\det (\lambda I)^{1/2} \, \delta} \right)}.
\]

Since $\Lambda_{k,n}$ is the regularized empirical covariance matrix based on $n$ i.i.d.\ samples, we have $\det(\Lambda_{k,n}) = \mathcal{O}(n^{d_k^{(s)}})$.

Therefore, under Assumption~\ref{assumption: sample complexity linear} (ii), it follows that
$\beta_{k,n} \lesssim \sqrt{d_k^{(s)} \log n} \lesssim \sqrt{\log n}$. Hence,
\begin{equation}
\label{eq:Gamma_bound}
\Gamma_k(h_k,a_k) \lesssim \sqrt{\log n} \cdot \sqrt{ \phi_k(h_k,a_k)^\top \Lambda_{k,n}^{-1} \phi_k(h_k,a_k)}.
\end{equation}

Taking expectation under $d_{P^*,k}^{\pi^\dagger}$ in \eqref{eq:Gamma_bound}, and applying Jensen's inequality to the concave square root function, yields
\begin{equation}
\label{eq:Gamma_expectation_bound}
\mathbb{E}_{(h_k,a_k) \sim d_{P^*,k}^{\pi^\dagger}} \Gamma_k(h_k,a_k)
\lesssim \sqrt{ \log n \cdot \mathbb{E}_{(h_k,a_k) \sim d_{P^*,k}^{\pi^\dagger}} \phi_k(h_k,a_k)^\top \Lambda_{k,n}^{-1} \phi_k(h_k,a_k) }.
\end{equation}

To bound the quadratic form inside the square root, observe that $\Lambda_{k,n}^{-1}$ is symmetric positive definite, and thus admits the eigendecomposition $\Lambda_{k,n}^{-1} = \sum_{i=1}^{d_k^{(s)}} \lambda_i u_i u_i^\top$. By Assumption~\ref{assumption: partial coverage linear}, for any eigenvector $u_i$,
\[
\mathbb{E}_{(h_k,a_k) \sim d_{P^*,k}^{\pi^\dagger}} (u_i^\top \phi_k(h_k,a_k))^2 \le C_k \cdot \mathbb{E}_{(h_k,a_k) \sim \rho_k} (u_i^\top \phi_k(h_k,a_k))^2,
\]
This yields 
\begin{align*}
\mathbb{E}_{(h_k,a_k)\sim d_{P^*,k}^{\pi^\dagger}} \phi_k(h_k,a_k)^\top \Lambda_{k,n}^{-1} \phi_k(h_k,a_k)
&= \sum_{i=1}^{d_k^{(s)}} \lambda_i \, \mathbb{E}_{(h_k,a_k)\sim d_{P^*,k}^{\pi^\dagger}} \left[ (u_i^\top \phi_k(h_k,a_k))^2 \right] \nonumber \\
&\le C_k \sum_{i=1}^{d_k^{(s)}} \lambda_i \, \mathbb{E}_{(h_k,a_k)\sim \rho_k} \left[ (u_i^\top \phi_k(h_k,a_k))^2 \right] \nonumber \\
&= C_k \cdot \mathrm{Tr}\left( \Psi_k \Lambda_{k,n}^{-1} \right),
\label{eq:trace_bound}
\end{align*}
where $\Psi_k = \mathbb{E}_{\rho_k} [ \phi_k(h_k,a_k)\phi_k(h_k,a_k)^\top ]$.

Since $\Psi_k$ is full rank and $\Lambda_{k,n}/n \to \Psi_k$ elementwise by the law of large numbers, the trace term satisfies that $\mathrm{Tr}(\Psi_k \Lambda_{k,n}^{-1}) = \mathcal{O}(d_k^{(s)} / n)$. Substituting this into \eqref{eq:Gamma_expectation_bound}, we conclude
 
\begin{equation}
\label{eq:Gamma_expectation_final}
\mathbb{E}_{(h_k,a_k)\sim d_{P^*,k}^{\pi^\dagger}} \Gamma_k(h_k,a_k)
\lesssim \sqrt{ \frac{C_k \log n}{n} }.
\end{equation}
 
We now turn to bounding $\|\Gamma_k\|_\infty$ for use in $b_k$. By the expression for $\Gamma_k$ in \eqref{eq: KNR, Gamma} and using similar simplification as above, we have:
\[
\|\Gamma_k\|_\infty \lesssim \beta_{k,n} \cdot \sup_{(h_k,a_k)} \sqrt{ \phi_k(h_k,a_k)^\top \Lambda_{k,n}^{-1} \phi_k(h_k,a_k)}.
\]

To bound the supremum, we recall from Assumption~\ref{assump:bounded-features} that the feature map $\phi_k$ is uniformly bounded. In particular, $\|\phi_k(h_k,a_k)\|_2^2 \le R^2k$ for all $(h_k, a_k)$.  Therefore, we have

\[\phi_k(h_k,a_k)^\top \Lambda_{k,n}^{-1} \phi_k(h_k,a_k) \le \frac{\|\phi_k(h_k,a_k)\|_2^2}{\lambda_{\min}(\Lambda_{k,n})}\lesssim \frac{R^2k}{\lambda_{\min}(\Lambda_{k,n})}\]

Now we apply the Matrix Chernoff II bound (Corollary 5.2 in \citep{Tropp_2011}), it follows that $\lambda_{\min}(\Lambda_{k,n}) = \Omega(n)$ with high probability $1-\delta$. Hence,
\[
\phi_k(h_k,a_k)^\top \Lambda_{k,n}^{-1} \phi_k(h_k,a_k) \lesssim \frac{k}{n},
\]
uniformly over $(h_k,a_k)$, and thus

\begin{equation}
\label{eq:Gammainf_bound}
\|\Gamma_k\|_\infty \lesssim  \sqrt{ \frac{k\log n}{n} }.
\end{equation}

Since $\widetilde{c}_k = \sum_{j=k}^K \|\overline{r}_j\|_\infty = \mathcal{O}(K)$, we obtain the bound by plugging \eqref{eq:Gammainf_bound}:
\[
b_k = 2\widetilde{c}_k + \sum_{j=k+1}^K \widetilde{c}_j \|\Gamma_j\|_\infty
= \mathcal{O}\left( K + \frac{K^{5/2} \sqrt{\log n}}{\sqrt{n}} \right).
\]
Substituting the bounds on $\mathbb{E}_{(h_k,a_k) \sim d_{P^*,k}^{\pi^\dagger}} \Gamma_k(h_k,a_k)$ \eqref{eq:Gamma_bound} and $b_k$ into \eqref{eq:cor_partial_coverage_regret} yields the desired guarantee:
\[
\left( V_{M^*}^{\pi^\dagger} - V_{\widetilde{M}}^{\pi^\dagger} \right)
+ \left( V_{\widetilde{M}}^{\pi} - V_{M^*}^{\pi} \right)
\le {\mathcal{O}}\left( \frac{K^2\sqrt{\log n}}{\sqrt{n}} + \frac{K^{7/2} \log n}{n} \right),
\]
which completes the proof.

\cbend
\normalcolor

\end{proof}

\begin{proof}[Proof of Theorem \ref{thm: suboptimality for linear}]
By Theorem \ref{thm: combined main thm} , with high probability $1-K\delta$, we have
\begin{equation*}
\begin{aligned}
\text{Subopt}(\pi^{(T)}; M^*)
&\le \sum_{k=1}^K b_k \mathbb{E}_{(h_k, a_k)\sim d^{\pi^\dagger}_{P^*}} \Gamma_k(h_k, a_k )
+ \mathcal{O}\!\left( \frac{K}{\sqrt{T}} \right)
+ K \cdot \mathcal{O}_P\!\left( T^{1/2} G_k + T^{3/2} G_k^2 \right),\\
&\stackrel{(i)}{\le} \widetilde{\mathcal{O}}\!\left( \frac{ K^2}{ \sqrt{n}}+ \frac{K^{7/2}}{n} \right)
+ \mathcal{O}\!\left( \frac{K}{\sqrt{T}} \right)
+ K \sqrt{T} \cdot \mathcal{O}_p\!\left( \left( \frac{\log m_k}{m_k} \right)^{\alpha_1} \right),
\cbend
\end{aligned}
\end{equation*}
where $\widetilde{O}$ hides all logarithmic terms and $(i)$ comes from Corollary \ref{cor: partial coverage linear}.
\end{proof}

\begin{proof}[Proof of Theorem \ref{thm:lower-bound-hp}]
Define the index set $\mathcal{U}
  := \prod_{k=1}^K \{-1, +1\}^{k}
  = \big\{ u = (u_1, \dots, u_K) : u_k \in \{-1, +1\}^{k} \big\},$
and let $ m := \sum_{k=1}^K k = \frac{K(K+1)}{2}.$ For each $u\in\mathcal U$ we construct a DTR model $M_u$ with linear
transition kernels as follows. 
The transition kernels follow the  linear structure and do not depend on $u$:
\[
  \mathbb{E}[S_{k+1} \mid H_k,A_k]
  \;=\;
  W_k \,\phi_k(H_k,A_k),
  \qquad k=1,\dots,K,
\]
for some weight matrices $W_k$ and feature mapping
$\phi_k : \mathcal H_k \times \mathcal A_k \to \mathbb R^{\dim(\phi_k)}$ satisfying
$\dim(\phi_k) \lesssim k$. In our hard instance, we
choose $\dim(\phi_k) = k$ and, at each stage $k$, set the action space to $\mathcal A_k = \{-1,+1\}^{k}$
and define $ \phi_k(H_k,a_k) := a_k \in \mathbb R^{k}, \;\; a_k\in\mathcal A_k.$

The behavior policy is chosen to be uniform over actions: $\mu_k(a_k \mid H_k) = 2^{-k}, \;\; a_k\in\mathcal A_k,$
for all stages $k$ and all histories $H_k$. The mean reward at stage $k$ is assumed to be linear in the same feature mapping
$\phi_k$:
\[
  r_{u,k}(H_k,A_k)
  \;=\;
  r_{0,k}(H_k,A_k)
  \;+\;
  \frac{\zeta}{\sqrt{m}}\,
  \big\langle \phi_k(H_k,A_k), u_k \big\rangle
  \;=\;
  r_{0,k}(H_k,A_k)
  \;+\;
  \frac{\zeta}{\sqrt{m}}\,
  \big\langle A_k, u_k \big\rangle,
\]
for a scalar $\zeta>0$ to be specified later. The observed reward is $R_{u,k}(H_k,A_k)
  \sim
  \mathcal{N}\big( r_{u,k}(H_k,A_k),\,1 \big),$
independently across stages and trajectories. We denote by $\mathcal M := \{M_u : u\in\mathcal U\}.$

For any deterministic policy $\pi$, we associate a sign pattern
$u^\pi = (u_1^\pi,\dots,u_K^\pi)$ with $u_k^\pi \in \{-1,+1\}^k$ as follows:
at stage $k$, $\pi$ chooses some action $A_k^\pi(H_k)\in\{-1,+1\}^k$, and we
set $u_k^\pi := A_k^\pi(H_k).$
We then define the Hamming distance as follows
\[
  D_H(u^\pi,u)
  :=
  \sum_{k=1}^K \sum_{j=1}^k \mathbf{1}\!\big\{ u_k^\pi[j] \neq u_k[j] \big\},
  \qquad u\in\mathcal U.
\]

For a fixed $u$ and policy $\pi$, let $V_{M_u}^\pi$ denote its undiscounted expected
cumulative value.  At stage $k$, given the history $H_k$, the optimal policy $\pi^*$
for $M_u$ chooses the action $A_k^{\pi^*}(H_k) = u_k,$
which maximizes $\langle A_k,u_k\rangle$ over $A_k\in\{-1,+1\}^k$. In contrast, policy
$\pi$ chooses $A_k^\pi(H_k)=u_k^\pi$. 
Since $r_{0,k}$ does not depend on $u$, we have the reward difference:
\[
 r_{u,k}(H_k, A_k^{\pi^*}(H_k))-r_{u,k}(H_k, A_k^\pi(H_k))
  = \frac{\zeta}{\sqrt{m}}\Big(
      \langle u_k,u_k\rangle - \langle u_k,u_k^\pi\rangle
    \Big).
\]
Using $u_k[j],u_k^\pi[j]\in\{-1,+1\}$, we compute
\[
  \langle u_k,u_k\rangle - \langle u_k,u_k^\pi\rangle
  = \sum_{j=1}^k \big( u_k[j]^2 - u_k[j]u_k^\pi[j] \big)
  = \sum_{j=1}^k \big(1 - u_k[j]u_k^\pi[j]\big)
  = 2 \sum_{j=1}^k \mathbf 1\{u_k[j]\neq u_k^\pi[j]\}.
\]
Hence, for every history $H_k$,
\[
  r_{u,k}(H_k, A_k^{\pi^*}(H_k))-r_{u,k}(H_k, A_k^\pi(H_k))
  \;=\;
  \frac{2\zeta}{\sqrt{m}}
  \sum_{j=1}^k \mathbf 1\{u_k^\pi[j]\neq u_k[j]\}.
\]
Taking expectation and summing over $k=1,\dots,K$, we obtain that
\[
  V_{M_u}^{\pi^*} - V_{M_u}^{\pi}
  =
  \sum_{k=1}^K
  \mathbb E_{M_u}\big[ r_{u,k}(H_k, A_k^\pi(H_k))-r_{u,k}(H_k, A_k^\pi(H_k)) \big]
  =
  \frac{2\zeta}{\sqrt{m}}
  \sum_{k=1}^K \sum_{j=1}^k \mathbf 1\{u_k^\pi[j]\neq u_k[j]\}.
\]
Recall the definition of $D_H(u^\pi,u)$, this can be rewritten as
\begin{equation}
  V_{M_u}^{\pi^*} - V_{M_u}^{\pi}
  \;\ge\;
  \gamma_{\mathrm{v}} \,\frac{\zeta}{\sqrt{m}} \,
  D_H(u^\pi,u),
  \label{eq:value-vs-hamming}
\end{equation}
where we may take $\gamma_{\mathrm{v}} := 2$.

Now we bound the KL divergence between neighboring models. Let $Q_u$
denote the distribution of $\mathcal D$ under $M_u$, and let $P_u^{(1)}$ be
the distribution of a single trajectory under $M_u$. If $u,u'\in\mathcal U$
satisfy $D_H(u,u') = 1$, then the corresponding models $M_u$ and $M_{u'}$
have identical transition kernels, but their reward parameters differ in
exactly one component, say at stage $k^\star$ and coordinate $\ell^\star$ of
$u_{k^\star}$.

At stage $k^\star$, 
the reward distributions under $M_u$ and $M_{u'}$ conditioned on the action $A_{k^\star}=a\in\{-1,+1\}^k$ are
\[
  R_{u,k^\star} \mid A_{k^\star}=a
  \sim \mathcal N\!\left(
    r_{0,k^\star}(H_{k^\star},a)
    + \frac{\zeta}{\sqrt{m}}\langle u_{k^\star},a\rangle,\;1
  \right),
\]
\[
  R_{u',k^\star} \mid A_{k^\star}=a
  \sim \mathcal N\!\left(
    r_{0,k^\star}(H_{k^\star},a)
    + \frac{\zeta}{\sqrt{m}}\langle u'_{k^\star},a\rangle,\;1
  \right),
\]
while at all other stages $k\neq k^\star$ the reward distributions coincide.
Since $u_{k^\star}$ and $u'_{k^\star}$ differ only in coordinate
$\ell^\star$, we have $u_{k^\star} - u'_{k^\star} = \pm 2 e_{\ell^\star}.$ Therefore, for any $a\in\{-1,+1\}^k$,
\[
  \langle u_{k^\star},a\rangle - \langle u'_{k^\star},a\rangle
  = (u_{k^\star}[\ell^\star] - u'_{k^\star}[\ell^\star])\,a[\ell^\star]
  \in \{\pm 2\}.
\]

Therefore, conditioned on $A_{k^\star}=a$,
\[
  \mathrm{KL}\!\Big(
    \text{Law}(R_{u,k^\star}\mid A_{k^\star}=a)
    \,\big\|\,
    \text{Law}(R_{u',k^\star}\mid A_{k^\star}=a)
  \Big)
  = \frac{1}{2}\cdot \frac{4\zeta^2}{m}
  = \frac{2\zeta^2}{m},
\]
and this quantity does not depend on $a$. Taking expectation over the random
action $A_{k^\star}$ 
yields
\[
  \mathrm{KL}\big(P_u^{(1)} \,\|\, P_{u'}^{(1)}\big)
  = \frac{2\zeta^2}{m}.
\]
Because the $n$ trajectories in $\mathcal D$ are i.i.d., the KL divergence
between $Q_u$ and $Q_{u'}$ is
\begin{equation}
  \mathrm{KL}(Q_u \,\|\, Q_{u'})
  = n\,\mathrm{KL}\big(P_u^{(1)} \,\|\, P_{u'}^{(1)}\big)
  = \frac{2n\zeta^2}{m}.
  \label{eq:neighbor-KL-rigorous}
\end{equation}
Now choose $\zeta^2 = \frac{\alpha_{\mathrm{KL}}m}{n}$
for some sufficiently small constant $\alpha_{\mathrm{KL}}>0$.
Then \eqref{eq:neighbor-KL-rigorous} gives
\[
  \mathrm{KL}(Q_u \,\|\, Q_{u'}) \;<\; 2\alpha_{\mathrm{KL}}
  =: C_{\mathrm{KL}},
\]
for all neighbors $u,u'$ with $D_H(u,u')=1$, where $C_{\mathrm{KL}}$ is a
fixed constant. This uniform bound on the KL divergences of
neighboring models is exactly the condition required in Assouad’s lemma
\citep{tsybakov2009introduction}.

By Assouad's lemma (Lemma~2.12 in \citet{tsybakov2009introduction}), there exists a constant $\alpha_{\mathrm{A}}>0$
(depending only on~$C_{\mathrm{KL}}$) such that, for the final policy $\widehat\pi$ of any learning algorithm,
\begin{equation}
  \sup_{u\in\mathcal{U}}
\mathbb{E}_u\big[D_H(u^{\widehat\pi},u)\big]
  \;\ge\;
  \alpha_{\mathrm{A}} \, m,
  \label{eq:hamming-lb}
\end{equation}
where $m := \sum_{k=1}^K k = \frac{K(K+1)}{2}.$

Since $0 \le D_H(u^{\widehat\pi},u)\le m$  for any fixed $u \in \mathcal{U}
  := \prod_{k=1}^K \{-1, +1\}^{k}$,
\begin{align*}
    \mathbb E_u\big[D_H(u^{\widehat\pi},u)\big]
 & =
  \mathbb E_u\!\left[D_H(u^{\widehat\pi},u)\,
  \mathbf 1\!\left\{D_H(u^{\widehat\pi},u) < \frac{\alpha_{\mathrm{A}}}{2}m\right\}\right] \nonumber\\
  &+
  \mathbb E_u\!\left[D_H(u^{\widehat\pi},u)\,
  \mathbf 1\!\left\{D_H(u^{\widehat\pi},u) \ge \frac{\alpha_{\mathrm{A}}}{2}m\right\}\right]\\
  &\le \frac{\alpha_{\mathrm{A}}}{2}m
  + m\,
  \mathbb P_u\!\left(
    D_H(u^{\widehat\pi},u) \ge \frac{\alpha_{\mathrm{A}}}{2}m
  \right).
\end{align*}

Taking the supremum over $u$ and using
$\sup_u \mathbb E_u[D_H(u^{\widehat\pi},u)] \ge \alpha_{\mathrm{A}} m$, we obtain
\[
  \alpha_{\mathrm{A}} m
  \;\le\;
  \frac{\alpha_{\mathrm{A}}}{2}m
  + m \sup_{u\in\mathcal U}
  \mathbb P_u\!\left(
    D_H(u^{\widehat\pi},u) \ge \frac{\alpha_{\mathrm{A}}}{2}m
  \right),
\]
so there exists some $u^\star\in\mathcal U$ such that
\begin{equation}
  \mathbb P_{u^\star}\!\left(
    D_H(u^{\widehat\pi},u^\star) \ge \frac{\alpha_{\mathrm{A}}}{2}m
  \right)
  \;\ge\; \frac{\alpha_{\mathrm{A}}}{2}.
  \label{eq:hp-hamming}
\end{equation}
Using \eqref{eq:value-vs-hamming} with $\pi=\widehat\pi$ and $u=u^\star$,
\[
  V_{M_{u^\star}}^{\pi^*} - V_{M_{u^\star}}^{\widehat\pi}
  \;\ge\;
  \gamma_{\mathrm{v}} \frac{\zeta}{\sqrt m}\,
  D_H(u^{\widehat\pi},u^\star),
\]
and on the event in \eqref{eq:hp-hamming} this yields
\[
  V_{M_{u^\star}}^{\pi^*} - V_{M_{u^\star}}^{\widehat\pi}
  \;\ge\;
  \gamma_{\mathrm{v}} \frac{\zeta}{\sqrt m} \cdot \frac{\alpha_{\mathrm{A}}}{2}m
  = \frac{\gamma_{\mathrm{v}} \alpha_{\mathrm{A}}}{2}\,\zeta \sqrt m.
\]
With $\zeta^2 = \alpha_{\mathrm{KL}} m/n$ and $m\asymp K^2$, there exists a
universal constant $C_{\text{lower}}
:= \alpha_{\mathrm{A}} \,\sqrt{\alpha_{\mathrm{KL}}}>0$ such that
\[
  \mathbb P_{M_{u^\star}}\!\left(
    V_{M_{u^\star}}^{\pi^*} - V_{M_{u^\star}}^{\widehat\pi}
    \;\ge\; C_{\text{lower}} \frac{K^2}{\sqrt n}
  \right)
  \;\ge\; \frac{\alpha_{\mathrm{A}}}{2}.
\]
If we pick $\alpha_{\mathrm{A}}=1/2$, this shows that for any algorithm
$\mathcal A$,
\[
  \sup_{M\in\mathcal M}
  \mathbb P_M\!\left(
    V_M^{\pi^*} - V_M^{\widehat\pi} \;\ge\; C_{\text{lower}} \frac{K^2}{\sqrt n}
  \right)
  \;\ge\; \frac{1}{4},
\]
and taking $\inf_{\mathcal A}$ over all algorithms yields the theorem.
\end{proof}
\cbend

\begin{proof}[Proof of Corollary \ref{Cor:Sample Complexity of Linear Model}]
By Theorem~\ref{thm: suboptimality for linear}, with probability at least \(1 - K\delta\), the suboptimality of the learned policy satisfies
\[
\text{Subopt}(\pi^{(T)}; M^*) 
\le \widetilde{\mathcal{O}}\left( \frac{K^3}{\sqrt{n}} + \frac{K^{5}}{n} \right)
+ \mathcal{O}\left( \frac{K}{\sqrt{T}} \right) 
+ K \sqrt{T} \cdot \mathcal{O}_p\left( \left( \frac{\log m_k}{m_k} \right)^{\alpha_1} \right),
\]
where the latter two terms are independent of the offline sample size \(n\), and vanish under appropriate choices of hyperparameters \(T\) and \(m_k \). Therefore, to achieve \(\text{Subopt}(\pi^{(T)}; M^*) \le \epsilon\), it suffices to set $n = \widetilde{\Theta}\left( \frac{K^6}{\epsilon^2} \right),$ where the $\widetilde{\Theta}$ hides a logarithmic dependence on $n$. Then the regret becomes
\[
\text{Subopt}(\pi^{(T)}; M^*) 
\le \widetilde{\mathcal{O}}\left( \frac{K^3}{\sqrt{n}} + \frac{K^{5}}{n} \right)
= \mathcal{O}\left( \epsilon + \frac{\epsilon^2}{K} \right)
= \mathcal{O}(\epsilon).
\]
\end{proof}

\subsection{Proof of Theorem \ref{thm:suboptimality-GP} in Section \ref{sec:GP-model}}
\label{proof sec: GP}

In this section, we build upon the discussion in Section \ref{subsec: GP}. We begin with two kernel assumptions and four auxiliary lemmas that deliver the variance bounds needed for the proof.

\begin{SAssumption}
\label{assumption: GP kernel}
The kernel function \( h(\cdot,\cdot) \) is continuous and bounded on \( \mathcal{X} \times \mathcal{X} \).
\end{SAssumption}

This assumption imposes standard regularity conditions on the kernel function \( h \), namely continuity and boundedness over the domain \( \mathcal{X} \times \mathcal{X} \). These conditions are commonly assumed in the Gaussian process and kernel methods literature to ensure the well-posedness of the associated integral operator and to facilitate theoretical analysis \citep{kanagawa2018gaussian, steinwart2009optimal, chang2021mitigating}.

\begin{SAssumption}
\label{assumption:eigenvalue decay}
The eigenvalues \( \{ \mu_j \}_{j=1}^{\infty} \) of the integral operator defined in \eqref{eq: lemma Mercer eq 1} satisfy the decay condition \( \mu_j = \mathcal{O}(j^{-\omega}) \) for some \( \omega > 5 \).
\end{SAssumption}

Assumption~\ref{assumption:eigenvalue decay} guarantees that the eigenvalue sequence of the integral operator defined in \eqref{eq: lemma Mercer eq 1} is summable, i.e., \( \sum_{j=1}^{\infty} \mu_j < \infty \). This summability condition implies a sufficiently fast decay rate, which ensures the regularization properties and numerical stability of kernel-based estimators. A wide range of commonly used kernels satisfy this condition under mild parameter choices. For instance, the Matérn kernel exhibits eigenvalue decay of the form \( \mu_j \sim j^{-(2\nu + 1)} \), which meets the assumption when the smoothness parameter \( \nu > 2 \). The Rational Quadratic kernel satisfies the condition if its shape parameter \( \beta_1 > 5/2 \), since its eigenvalues decay as \( \mu_j \sim j^{-2\beta_1} \). Similarly, the Sobolev kernel, characterized by \( \mu_j \sim j^{-2s} \), meets the requirement when the smoothness parameter \( s > 5/2 \). The Periodic kernel, with eigenvalue decay \( \mu_j \sim j^{-p} \), satisfies the assumption when the periodicity parameter \( p > 5 \). These examples illustrate that Assumption~\ref{assumption:eigenvalue decay} holds for a broad class of kernels widely used in Gaussian Process modeling.

\begin{lemma}
\label{lemma: Mercer}
Suppose Assumption~\ref{assumption: GP kernel} holds, where \( h: \mathcal{X} \times \mathcal{X} \to \mathbb{R} \) is a continuous, symmetric, and positive semi-definite kernel, and \( \rho \) is a Borel probability measure on \( \mathcal{X} \). Then, there exists a non-increasing sequence of non-negative eigenvalues \( \{ \mu_j \}_{j=1}^\infty \) and a corresponding sequence of orthonormal eigenfunctions \( \{ \psi_j \}_{j=1}^\infty \) forming a complete orthonormal basis for the Hilbert space \( L^2(\mathcal{X}, \rho) \), such that:
\begin{equation}
\label{eq: lemma Mercer eq 1}
\int_\mathcal{X} h(x, x') \psi_j(x') \, \rho(x') \, dx' = \mu_j \psi_j(x), \quad \text{for all } j \ge 1.
\end{equation}
Moreover, the kernel \( h \) admits the following spectral expansion:
\begin{equation}
\label{eq: lemma Mercer eq 2}
h(x, x') = \sum_{j=1}^\infty \mu_j \psi_j(x) \psi_j(x'),
\end{equation}
with convergence in \( L^2(\mathcal{X} \times \mathcal{X}, \rho \otimes \rho) \).
\end{lemma}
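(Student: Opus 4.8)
The plan is to realize $h$ as the kernel of an integral operator on $L^2(\mathcal{X},\rho)$ and then invoke the spectral theorem for compact self-adjoint operators. First I would define the integral operator $T_h : L^2(\mathcal{X},\rho) \to L^2(\mathcal{X},\rho)$ by $(T_h f)(x) = \int_{\mathcal{X}} h(x,x') f(x')\,\rho(x')\,dx'$. Since $h$ is continuous and bounded on $\mathcal{X}\times\mathcal{X}$ by Assumption~\ref{assumption: GP kernel} and $\rho$ is a probability measure, $h \in L^2(\mathcal{X}\times\mathcal{X},\rho\otimes\rho)$, so $T_h$ is a Hilbert--Schmidt operator and in particular compact, with $\|T_h\|_{\mathrm{HS}} = \|h\|_{L^2(\mathcal{X}\times\mathcal{X},\rho\otimes\rho)}$. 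Symmetry of $h$ makes $T_h$ self-adjoint, and positive semidefiniteness of $h$ gives $\langle T_h f, f\rangle_{L^2} \ge 0$ for every $f$, so $T_h$ is a positive operator. (Separability of $L^2(\mathcal{X},\rho)$, which is needed below, holds because $\mathcal{X}$ is a compact subset of a Euclidean space and $\rho$ is Borel, so one may work in the real Hilbert space $L^2(\mathcal{X},\rho;\mathbb{R})$ throughout.)

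Next I would apply the spectral theorem for compact self-adjoint operators: $T_h$ admits a countable family of eigenvalues, which we list in non-increasing order $\mu_1 \ge \mu_2 \ge \cdots \ge 0$ (repeated according to multiplicity, accumulating only at $0$), together with an orthonormal system of eigenfunctions $\{\psi_j\}$ whose closed span equals $\overline{\mathrm{ran}(T_h)} = \ker(T_h)^\perp$; positivity of $T_h$ forces all $\mu_j \ge 0$. Adjoining to $\{\psi_j : \mu_j > 0\}$ an orthonormal basis of $\ker(T_h)$ — whose elements are eigenfunctions with eigenvalue $0$ — yields a complete orthonormal basis $\{\psi_j\}_{j\ge 1}$ of $L^2(\mathcal{X},\rho)$, and each $\psi_j$ satisfies the integral equation~\eqref{eq: lemma Mercer eq 1} with eigenvalue $\mu_j$.

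Finally, for the spectral expansion~\eqref{eq: lemma Mercer eq 2}, I would use that $\{\psi_i \otimes \psi_j\}_{i,j \ge 1}$ is an orthonormal basis of $L^2(\mathcal{X}\times\mathcal{X},\rho\otimes\rho)$, being the tensor product of an orthonormal basis with itself. Expanding $h$ in this basis, the coefficient is $\langle h, \psi_i\otimes\psi_j\rangle_{L^2(\rho\otimes\rho)} = \langle T_h\psi_j, \psi_i\rangle_{L^2(\rho)} = \mu_j\,\delta_{ij}$, so $h = \sum_{j\ge 1} \mu_j\, \psi_j\otimes\psi_j$ with convergence in $L^2(\mathcal{X}\times\mathcal{X},\rho\otimes\rho)$, which is exactly~\eqref{eq: lemma Mercer eq 2}; Parseval's identity additionally gives $\sum_j \mu_j^2 = \|h\|_{L^2(\rho\otimes\rho)}^2 < \infty$. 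I do not expect a genuine obstacle here: the only point of care is the functional-analytic bookkeeping — verifying the Hilbert--Schmidt and compactness properties of $T_h$, the separability of $L^2(\mathcal{X},\rho)$, and being precise that the asserted convergence is only in $L^2(\mathcal{X}\times\mathcal{X},\rho\otimes\rho)$. The stronger uniform and absolute convergence of the classical Mercer theorem would additionally require compactness of $\mathcal{X}$ together with a Dini-type argument, but that refinement is not needed for the $L^2$ statement used in the sequel.
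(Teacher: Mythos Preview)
Your argument is correct: defining the integral operator $T_h$, verifying it is Hilbert--Schmidt (hence compact), self-adjoint, and positive, applying the spectral theorem, and then reading off the $L^2(\rho\otimes\rho)$ expansion via the tensor-product basis is exactly the standard route to the $L^2$ Mercer decomposition, and all the bookkeeping you flag (separability, completing with $\ker(T_h)$, convergence only in $L^2$) is handled properly.

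The paper, however, does not prove this lemma at all: it simply checks that Assumption~\ref{assumption: GP kernel} guarantees continuity, symmetry, positive semidefiniteness, and square-integrability of $h$ on the compact domain $\mathcal{X}$, and then invokes Mercer's theorem as stated in Section~12.3 of \citet{Wainwright2019}. So your proposal is a genuine, self-contained proof of the result the paper merely cites. The advantage of your route is that it makes the functional-analytic structure explicit and clarifies exactly which hypotheses are used where (in particular, that only $L^2$ convergence is needed downstream, not the uniform convergence of the classical Mercer theorem); the paper's approach is shorter but opaque about these details.
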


\begin{lemma}[Expected Posterior GP Variance]
\label{lemma: GP posterior expected variance}
Suppose Assumptions~\ref{assumption: GP kernel} and~\ref{assumption:eigenvalue decay} hold. Let \( \{\mu_j\}_{j=1}^{\infty} \) and \( \{\psi_j\}_{j=1}^{\infty} \) denote the eigenvalues and corresponding eigenfunctions of the kernel \( h \), as given in Lemma~\ref{lemma: Mercer}. Let \( \widehat{h}(x,x) \) denote the posterior variance of a Gaussian process at point \( x \in \mathcal{X} \), after observing \( n \) samples drawn according to the distribution \( \rho \). Then the expected posterior variance satisfies
\[
\mathbb{E}_{x \sim \rho}\left[\widehat{h}(x, x)\right] = \mathcal{O}\left(n^{\frac{1}{\omega} - 1}\right).
\]
\end{lemma}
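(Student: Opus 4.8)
The plan is to pass from kernels to features via Mercer's theorem (Lemma~\ref{lemma: Mercer}) and reduce the statement to controlling the expected ``effective dimension'' of the empirical covariance operator. Write $\Phi(x):=(\sqrt{\mu_j}\,\psi_j(x))_{j\ge1}\in\ell^2$, so that $h(x,x')=\langle\Phi(x),\Phi(x')\rangle$; by orthonormality of $\{\psi_j\}$ in $L^2(\mathcal X,\rho)$ the population covariance operator $\Sigma:=\mathbb E_{x\sim\rho}[\Phi(x)\Phi(x)^\top]$ is diagonal with entries $\{\mu_j\}$, and it is trace-class because $\Tr(\Sigma)=\mathbb E_x[h(x,x)]\le\kappa^2:=\sup_xh(x,x)<\infty$ by Assumption~\ref{assumption: GP kernel}. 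Using the standard push-through identity for GP regression, the posterior variance at $x$ can be written as $\widehat h(x,x)=\sigma^2\,\Phi(x)^\top(C_n+\sigma^2 I)^{-1}\Phi(x)$ with $C_n:=\sum_{i=1}^n\Phi(x_i)\Phi(x_i)^\top$. Taking expectation over both the $n$ i.i.d.\ training inputs $x_1,\dots,x_n\sim\rho$ and a fresh test point, written $x_{n+1}\sim\rho$, yields $\mathbb E[\widehat h(x,x)]=\sigma^2\,\mathbb E[\Phi(x_{n+1})^\top(C_n+\sigma^2 I)^{-1}\Phi(x_{n+1})]$.

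The main step is a leave-one-out symmetrization over the $n+1$ i.i.d.\ points $x_1,\dots,x_{n+1}\sim\rho$. For each $i\in\{1,\dots,n+1\}$ set $C^{(i)}:=\sum_{l\ne i}\Phi(x_l)\Phi(x_l)^\top$ and $C_{n+1}:=\sum_{l=1}^{n+1}\Phi(x_l)\Phi(x_l)^\top=C^{(i)}+\Phi(x_i)\Phi(x_i)^\top$; note $C^{(n+1)}=C_n$. A Sherman--Morrison computation gives $\Phi(x_i)^\top(C^{(i)}+\sigma^2 I)^{-1}\Phi(x_i)=u_i/(1-u_i)$ with $u_i:=\Phi(x_i)^\top(C_{n+1}+\sigma^2 I)^{-1}\Phi(x_i)$; since $C_{n+1}\succeq\Phi(x_i)\Phi(x_i)^\top$ we have $u_i\le\|\Phi(x_i)\|^2/(\|\Phi(x_i)\|^2+\sigma^2)\le\kappa^2/(\kappa^2+\sigma^2)<1$, hence $u_i/(1-u_i)\le\tfrac{\kappa^2+\sigma^2}{\sigma^2}u_i$. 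Averaging this over $i$, using $\sum_i\Phi(x_i)^\top(C_{n+1}+\sigma^2 I)^{-1}\Phi(x_i)=\Tr(C_{n+1}(C_{n+1}+\sigma^2 I)^{-1})$, and taking expectations (by exchangeability each $\Phi(x_i)^\top(C^{(i)}+\sigma^2 I)^{-1}\Phi(x_i)$ has the same mean as $\Phi(x_{n+1})^\top(C_n+\sigma^2 I)^{-1}\Phi(x_{n+1})$, since $C^{(n+1)}=C_n$) gives
\begin{equation*}
\mathbb E\big[\Phi(x_{n+1})^\top(C_n+\sigma^2 I)^{-1}\Phi(x_{n+1})\big]\le\frac{\kappa^2+\sigma^2}{(n+1)\sigma^2}\,\mathbb E\big[\Tr\big(C_{n+1}(C_{n+1}+\sigma^2 I)^{-1}\big)\big].
\end{equation*}

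Next I would bound the expected effective rank on the right. The map $g(t)=t/(t+\sigma^2)=1-\sigma^2/(t+\sigma^2)$ is operator concave, so by the operator Jensen inequality and monotonicity of the trace, $\mathbb E[\Tr g(C_{n+1})]\le\Tr g(\mathbb E[C_{n+1}])=\Tr g((n+1)\Sigma)=\sum_{j\ge1}\tfrac{(n+1)\mu_j}{(n+1)\mu_j+\sigma^2}$, using $\mathbb E[C_{n+1}]=(n+1)\Sigma$ and that $\Sigma$ is diagonal. Combining with the previous display, $\mathbb E_{x\sim\rho}[\widehat h(x,x)]\le(\kappa^2+\sigma^2)\sum_{j\ge1}\tfrac{\mu_j}{(n+1)\mu_j+\sigma^2}$. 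Finally, invoking the decay $\mu_j=\mathcal O(j^{-\omega})$ of Assumption~\ref{assumption:eigenvalue decay}, each summand is at most $\min\{1/(n+1),\,\mu_j/\sigma^2\}$; splitting the series at the index $j^\star\asymp n^{1/\omega}$ where $\mu_{j^\star}/\sigma^2\asymp1/n$, the head contributes $\lesssim j^\star/n\asymp n^{1/\omega-1}$ and the tail $\lesssim\sigma^{-2}\sum_{j>j^\star}j^{-\omega}\asymp(j^\star)^{1-\omega}\asymp n^{1/\omega-1}$ (only $\omega>1$ is needed here, which $\omega>5$ implies). This yields $\mathbb E_{x\sim\rho}[\widehat h(x,x)]=\mathcal O(n^{1/\omega-1})$.

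The point requiring care is the infinite-dimensional bookkeeping: one must verify that $(C_n+\sigma^2 I)^{-1}$ is a bounded operator, that $\Phi(x)\in\ell^2$ with $\|\Phi(x)\|^2=h(x,x)\le\kappa^2$, and that every trace appearing converges; this is ensured because $\Sigma$ and each $C_n$ are positive trace-class and $g((n+1)\Sigma)$ has finite trace bounded by $(n+1)\Tr(\Sigma)/\sigma^2$ (alternatively one may first truncate to the top $J$ eigenfunctions, control the tail by $\sigma^{-2}\sum_{j>J}\mu_j$, and let $J\to\infty$). If one prefers to avoid operator Jensen entirely, the same rate up to a logarithmic factor follows from monotonicity of the GP posterior variance in the number of observations together with the information-gain identity $\sum_{t=0}^{n-1}\log\big(1+\sigma^{-2}\widehat h_t(x_{t+1},x_{t+1})\big)=\log\det(I+\sigma^{-2}H_{n,n})$, where $\widehat h_t$ is the posterior variance based on $x_1,\dots,x_t$, and the bound $\log\det(I+\sigma^{-2}H_{n,n})=\widetilde{\mathcal O}(n^{1/\omega})$ for polynomially decaying eigenvalues.
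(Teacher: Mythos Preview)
Your argument is correct and reaches the same target sum $\sum_{j\ge1}\mu_j/((n{+}1)\mu_j+\sigma^2)$ as the paper, but the route is genuinely different. The paper writes $\widehat h(x,x)=\langle\phi(x),A\phi(x)\rangle$ with $A=(I+\sigma^{-2}\sum_i\phi(x_i)\phi(x_i)^*)^{-1}$, then invokes a matrix Bernstein inequality to show that with high probability $\tfrac1n\sum_i\phi(x_i)\phi(x_i)^*\succeq\tfrac12\Sigma$, on which event $A\preceq(I+\tfrac{n}{2\sigma^2}\Sigma)^{-1}$; the complement event is handled by the crude bound $A\preceq I$ and its exponentially small probability. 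You instead bypass concentration entirely: the leave-one-out symmetrization plus Sherman--Morrison turns the expected quadratic form into $(n{+}1)^{-1}\mathbb E[\Tr(C_{n+1}(C_{n+1}+\sigma^2 I)^{-1})]$ up to the constant $(\kappa^2+\sigma^2)/\sigma^2$, and then trace-Jensen for the concave scalar function $t\mapsto t/(t+\sigma^2)$ pushes this to the population operator $(n{+}1)\Sigma$. Your approach is more elementary and self-contained (no matrix concentration machinery, no good/bad event split), and it yields a clean bound on the full expectation over both training and test randomness; the paper's approach, while heavier, makes the high-probability dependence on the training sample explicit, which can be useful if one wants the bound conditionally on the data. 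One small remark: for the Jensen step you only need that $\Tr g(\cdot)$ is concave when $g$ is scalar concave, which is weaker than the operator concavity you cite (though $t/(t+\sigma^2)$ happens to be operator concave as well, via $1/t$ being operator convex).
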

\begin{lemma}
\label{lemma: GP posterior variance}
Suppose Assumptions  \ref{assumption: GP kernel} and \ref{assumption:eigenvalue decay} hold. Let $\{\mu_j\}_{j=1} ^{\infty}$, $\{\psi_j\}_{j=1} ^{\infty}$ be the eigenvalues and corresponding eigenfunctions from Lemma \ref{lemma: Mercer}. 
For any fixed $x\in \mathcal{X}$, with high probability $1-\delta$, it satisfies that
\begin{equation*}
\widehat{h}(x,x) \le M,
\end{equation*}
where $M=\sup_{x \in \mathcal{X}} h(x,x)$.

\end{lemma}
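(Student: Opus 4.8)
The plan is to exploit the elementary fact that conditioning on data never increases the posterior variance of a Gaussian process, so that $\widehat{h}(x,x)$ is uniformly dominated by the prior variance $h(x,x)$, which is in turn bounded by $M$ under the kernel boundedness in Assumption~\ref{assumption: GP kernel}. Concretely, I would start from the closed-form posterior covariance in \eqref{eq: GP, eq 1}, specialize it to the diagonal by setting $x' = x$, and write
$$
\widehat{h}(x,x) = h(x,x) - H_n(x)^\top \bigl(H_{n,n} + \sigma^2 I\bigr)^{-1} H_n(x).
$$
The whole argument then reduces to showing that the subtracted quadratic form is nonnegative.

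To that end, recall that $h$ is, by the GP model specification in Section~\ref{subsec: GP}, a symmetric positive semidefinite kernel, so the Gram matrix $H_{n,n} = \bigl(h(x_i,x_j)\bigr)_{i,j}$ is positive semidefinite. Hence $H_{n,n} + \sigma^2 I \succeq \sigma^2 I \succ 0$, its inverse is positive definite, and consequently $H_n(x)^\top (H_{n,n} + \sigma^2 I)^{-1} H_n(x) \ge 0$ for every $x \in \mathcal{X}$ and every realization of the design points $\{x_i\}_{i=1}^n$. This yields $\widehat{h}(x,x) \le h(x,x)$ deterministically. Taking the supremum over $x \in \mathcal{X}$ and invoking the boundedness of $h$ from Assumption~\ref{assumption: GP kernel}, we get $M := \sup_{x\in\mathcal{X}} h(x,x) < \infty$ and therefore $\widehat{h}(x,x) \le M$ for all $x$.

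I do not anticipate a genuine obstacle: the inequality holds with probability one, since it is purely a linear-algebra fact about $H_{n,n}$ and does not use the randomness of the data or of the Gaussian process, so it holds a fortiori with probability at least $1-\delta$ (the probabilistic phrasing in the statement is retained only for uniformity with the companion Lemmas~\ref{lemma: Mercer} and~\ref{lemma: GP posterior expected variance}). The two small points to be careful about are (i) to invoke positive \emph{semi}definiteness of the kernel rather than positive definiteness, so that the argument needs $H_{n,n}+\sigma^2 I$ invertible but not $H_{n,n}$ itself; and (ii) to note that, unlike Lemma~\ref{lemma: GP posterior expected variance}, this bound does not require the eigenvalue-decay condition in Assumption~\ref{assumption:eigenvalue decay} — only continuity and boundedness of the kernel are used.
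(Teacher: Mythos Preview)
Your argument is correct and in fact slightly more direct than the paper's. The paper establishes the same inequality by passing through the RKHS feature map: it writes $\widehat{h}(x,x) = \langle \phi(x), A\phi(x)\rangle_{\mathcal{H}}$ with $A = (I + \sigma^{-2}\sum_i \phi(x_i)\phi(x_i)^*)^{-1}$, then spectrally decomposes $A = \sum_i \lambda_i u_i\otimes u_i$, bounds $\sup_i \lambda_i \le 1$ from $A \preceq I$, and bounds $\sum_i |\langle u_i,\phi(x)\rangle|^2 = \|\phi(x)\|^2 = h(x,x) \le M$. Your route stays entirely in the finite-dimensional kernel-matrix formulation and uses only the positive semidefiniteness of $H_{n,n}$ to conclude that the subtracted quadratic form is nonnegative, which is more elementary and avoids introducing the feature map and operator spectral theory. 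The paper's route is natural given that the operator $A$ was already set up in the proof of Lemma~\ref{lemma: GP posterior expected variance}, so reusing it is convenient; your version has the advantage of being self-contained. You are also right that the bound is deterministic (probability one, not merely $1-\delta$) and that Assumption~\ref{assumption:eigenvalue decay} is not used; the paper's proof likewise never invokes it.
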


\begin{lemma}
\label{Lemma:beta_n bound GP}
Recall that \( \beta_{k,n} \) is defined as
\[
\beta_{k,n} = \sqrt{d_{k+1}^{(s)} \left(2 + 150 \log^3\left( \frac{d_{k+1}^{(s)} n}{\delta} \right) \log\left( \det\left( I_n + \frac{1}{\sigma^2} H_{n,n} \right) \right) \right)},
\]
where \( H_{n,n} \in \mathbb{R}^{n \times n} \) is the kernel matrix with entries \( h(x_i, x_j) \), and \( d_{k+1}^{(s)}\) is the dimension of the state space $\mathcal{S}_{k+1}$. Then with probability at least \( 1 - \delta \), the following upper bound holds:
\[\beta_n  = \mathcal{O}\left( (\log n)^2 \cdot n^{1/\omega} \right),\]
where \( \omega \) is the kernel eigenvalue decay exponent defined in Assumption~\ref{assumption:eigenvalue decay}.
\end{lemma}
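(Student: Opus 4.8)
The plan is to reduce the bound on $\beta_{k,n}$ to controlling the single quantity $\gamma_n := \log\det\!\left(I_n + \sigma^{-2}H_{n,n}\right)$, which is the empirical maximal information gain of the GP kernel over the $n$ design points. Since the state dimension $d_{k+1}^{(s)}$ and the confidence level $\delta$ are fixed, $\log^3\!\big(d_{k+1}^{(s)}n/\delta\big) = \mathcal{O}((\log n)^3)$, so the definition of $\beta_{k,n}$ gives $\beta_{k,n}^2 = \mathcal{O}\!\left((\log n)^3\,\gamma_n\right)$. It therefore suffices to prove $\gamma_n = \mathcal{O}\!\left(n^{1/\omega}(\log n)^{1-1/\omega}\right)$: taking square roots and absorbing logarithmic factors then yields $\beta_{k,n} = \mathcal{O}\!\left(n^{1/(2\omega)}(\log n)^{2-1/(2\omega)}\right)$, which is no larger than the claimed $\mathcal{O}\!\left((\log n)^2 n^{1/\omega}\right)$.

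To bound $\gamma_n$, diagonalize $H_{n,n}$ with eigenvalues $\widehat\lambda_1 \ge \cdots \ge \widehat\lambda_n \ge 0$, so that $\gamma_n = \sum_{i=1}^n \log\!\big(1 + \sigma^{-2}\widehat\lambda_i\big)$, and split the sum at an integer cutoff $m$ to be optimized. For the head $i \le m$, use $\widehat\lambda_i \le \mathrm{tr}(H_{n,n}) = \sum_{j=1}^n h(x_j,x_j) \le nM$, where $M = \sup_x h(x,x)$ is finite by Assumption~\ref{assumption: GP kernel}; hence each head term is $\mathcal{O}(\log n)$ and the head contributes $\mathcal{O}(m\log n)$. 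For the tail $i > m$, use $\log(1+x) \le x$ so that the tail contributes at most $\sigma^{-2}\sum_{i>m}\widehat\lambda_i$.

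The crux is to control the tail eigenvalue sum $\sum_{i>m}\widehat\lambda_i$ by the population spectrum. Applying Mercer's theorem (Lemma~\ref{lemma: Mercer}) and Assumption~\ref{assumption:eigenvalue decay}, the operator eigenvalues satisfy $\mu_j = \mathcal{O}(j^{-\omega})$ with $\omega > 5$, so $\sum_{j>m}\mu_j = \mathcal{O}(m^{1-\omega})$. A spectral comparison between the random Gram matrix $H_{n,n}$ and the integral operator — concretely, a concentration argument showing that the sum of the bottom $n-m$ eigenvalues of $H_{n,n}$ is, with probability at least $1-\delta$, of order $n\sum_{j>m}\mu_j$ (this is where the high-probability qualifier enters) — gives $\sum_{i>m}\widehat\lambda_i = \mathcal{O}\!\left(n\,m^{1-\omega}\right)$. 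Combining the head and tail estimates, $\gamma_n = \mathcal{O}\!\left(m\log n + n\,m^{1-\omega}\right)$; choosing $m \asymp (n/\log n)^{1/\omega}$ balances the two terms and yields $\gamma_n = \mathcal{O}\!\left(n^{1/\omega}(\log n)^{1-1/\omega}\right)$, which finishes the argument.

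The main obstacle is precisely the spectral comparison in the last step: passing rigorously from the random eigenvalues $\widehat\lambda_i$ of the Gram matrix to the deterministic operator eigenvalues $\mu_j$ while keeping an explicit $1-\delta$ bound. This can be done by a trace-type concentration inequality for the empirical spectral measure around $\rho$ (so that the tail mass of $H_{n,n}/n$ beyond rank $m$ is controlled by $\sum_{j>m}\mu_j$), or, more cleanly, by invoking an existing maximal-information-gain estimate for Mercer kernels with polynomial eigendecay and plugging it directly into $\beta_{k,n}$. The remaining pieces — the head/tail split, the bound $\mathrm{tr}(H_{n,n}) \le nM$, and the optimization over $m$ — are routine.
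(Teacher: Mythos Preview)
Your proposal is correct but takes a genuinely different route from the paper. The paper's proof is a one-line citation: it invokes Theorem~24 of \cite{chang2021mitigating} to get $\gamma_n := \log\det(I_n + \sigma^{-2}H_{n,n}) \lesssim (d^*(n))^2 \log n$ with high probability, where $d^*(n) = \min\{J : J/\sum_{i>J}\mu_i \ge n/\sigma^2\}$ is an effective dimension; then it shows $d^*(n) \lesssim n^{1/\omega}$ under Assumption~\ref{assumption:eigenvalue decay} and plugs in, obtaining $\gamma_n = \mathcal{O}(n^{2/\omega}\log n)$ and hence $\beta_{k,n} = \mathcal{O}((\log n)^2 n^{1/\omega})$.

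Your head/tail split of the Gram-matrix spectrum is more self-contained and in fact sharper: it yields $\gamma_n = \mathcal{O}(n^{1/\omega}(\log n)^{1-1/\omega})$, so $\beta_{k,n} = \mathcal{O}((\log n)^2 n^{1/(2\omega)})$, strictly smaller than the paper's stated bound. The step you flag as the main obstacle---controlling $\sum_{i>m}\widehat\lambda_i$ by $n\sum_{j>m}\mu_j$ with probability $1-\delta$---can be made fully rigorous in two lines: the deterministic variational inequality $\sum_{i>m}\widehat\lambda_i \le \sum_{\ell=1}^n \sum_{j>m}\mu_j\psi_j(x_\ell)^2$ (project onto the first $m$ Mercer directions and use the Ky~Fan characterization of partial traces), followed by a Bernstein inequality for the i.i.d.\ sum over $\ell$, since each summand lies in $[0,M]$ with $M=\sup_x h(x,x)$ (Assumption~\ref{assumption: GP kernel}) and has mean $\sum_{j>m}\mu_j$. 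The paper's approach is faster to write; yours is more transparent about the origin of the $n^{1/\omega}$ rate and avoids the extra factor $d^*(n)$ in the $\gamma_n$ bound.
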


The proofs of Lemma \ref{lemma: Mercer} - Lemma \ref{Lemma:beta_n bound GP} are provided in Section \ref{pf:lemma thm3}.

\begin{proof}[Proof of Theorem~\ref{thm:suboptimality-GP}]
By Theorem~\ref{thm: combined main thm}, with probability at least \( 1 - K\delta \), the suboptimality of the learned policy \( \pi^{(T)} \) is bounded by
\begin{equation*}
\text{Subopt}(\pi^{(T)}; M^*) \le \sum_{k=1}^K b_k \, \mathbb{E}_{(h_k, a_k) \sim d^{\pi^\dagger}_{P^*}} \Gamma_k(h_k, a_k) + \mathcal{O}\left( \frac{K}{\sqrt{T}} \right) + K \cdot \mathcal{O}_P\left( T^{1/2} G_k + T^{3/2} G_k^2 \right).
\end{equation*}

From Lemma~\ref{lemma: GP posterior variance}, it holds that \( \widehat{h}(x,x) \le M \) for any \( x \in \mathcal{X} \). Recall from equation~\eqref{eq: GP, eq 3} that $\Gamma_k(x) = \frac{\beta_{n,k}}{\sigma} \sqrt{\widehat{h}(x,x)}.$

By combining Lemmas ~\ref{lemma: GP posterior variance} and ~\ref{Lemma:beta_n bound GP}, it follows that with probability at least \( 1 - \delta \),
\begin{equation*}
\Gamma_k(x) \lesssim (\log n)^2 n^{\frac{1}{\omega}} \cdot \sqrt{\widehat{h}(x,x)} \lesssim \widetilde{\mathcal{O}}\left(n^{\frac{1}{\omega}}\right),
\end{equation*}
where \( \widetilde{\mathcal{O}} \) hides logarithmic factors in \( n \).

Since $\widetilde{c}_k = \sum_{j=k}^K \|\overline{r}_j\|_\infty = \mathcal{O}(K)$, substituting into the definition of \( b_k \), we obtain
\[
b_k = 2\widetilde{c}_k + \sum_{j = k+1}^K \widetilde{c}_j \|\Gamma_j\|_\infty \lesssim \widetilde{\mathcal{O}}\left(K + K^2 n^{\frac{1}{\omega}}\right).
\]

Next, by Lemma~\ref{lemma: GP posterior expected variance}, we have $\mathbb{E}_{x \sim \rho^k} \widehat{h}(x,x) \lesssim n^{\frac{1}{\omega} - 1},$
which implies
\[
\mathbb{E}_{(h_k, a_k) \sim d^{\pi^\dagger}_{P^*}} \Gamma_k(h_k, a_k) \lesssim \widetilde{\mathcal{O}}\left(n^{\frac{3}{2\omega} - \frac{1}{2}}\right).
\]

Combining the bounds above, the overall suboptimality is bounded as
\begin{align*}
\text{Subopt}(\pi^{(T)}; M^*) 
&\le \sum_{k=1}^K b_k \, \mathbb{E}_{(h_k, a_k) \sim d^{\pi^\dagger}_{P^*}} \Gamma_k(h_k, a_k) + \mathcal{O}\left( \frac{K}{\sqrt{T}} \right) + K \cdot \mathcal{O}_P\left( T^{1/2} G_k + T^{3/2} G_k^2 \right) \\
&\le \mathcal{O}\left(K^2 (\log n)^2 n^{\frac{3}{2\omega} - \frac{1}{2}} + K^3 (\log n)^4 n^{\frac{5}{2\omega} - \frac{1}{2}}\right) + \mathcal{O}\left( \frac{K}{\sqrt{T}} \right) \\
&\quad + K \sqrt{T} \cdot \mathcal{O}_P\left( \left( \frac{\log m_k}{m_k} \right)^{\alpha_1} \right),
\end{align*}
which concludes the proof.
\end{proof}

\begin{proof}[Proof of Corollary~\ref{cor:sample-complexity-GP}]
By Theorem~\ref{thm:suboptimality-GP}, with probability at least \(1 - K\delta\), the suboptimality of the learned policy satisfies
\[
\text{Subopt}(\pi^{(T)}; M^*) 
\le 
\widetilde{\mathcal{O}}\left( K^2  n^{\frac{3}{2\omega}-\frac{1}{2}}+K^3 n^{\frac{5}{2\omega}-\frac{1}{2}} \right) 
+ \mathcal{O} \left( \frac{K}{\sqrt{T}} \right)
+ K \sqrt{T} \cdot \mathcal{O}_p \left( \left( \frac{\log m_k}{m_k} \right)^{\alpha_1} \right),
\]
where the last two terms are independent of the offline sample size \(n\) and vanish under appropriate choices of hyperparameters \(T\) and \(m_k \). Therefore, to achieve \(\text{Subopt}(\pi^{(T)}; M^*) \le \epsilon\), it suffices to set $n =  \widetilde{\Theta} \left(\frac{K^3}{\epsilon}\right)^{\frac{2\omega}{\omega-5}},$
where the notation \(\widetilde{\Theta}\) hides logarithmic dependencies on \(n\). Then this regret becomes
\[
\text{Subopt}(\pi^{(T)}; M^*) 
= \mathcal{O}(\epsilon).
\]
\end{proof}

\section{Proofs of Auxiliary Lemmas}
\label{sec:lemmasproof}

\subsection{Supporting Lemmas and Their Proofs of Proposition \ref{thm: uncertainty quantifier KNR}}
\label{pf:lemma prop 1}
\begin{definition}[$\sigma$-Sub-Gaussian Condition]
\label{definition: sigma-sub-gaussian}
\begin{itemize}
    \item[(i)] A real-valued random variable \( X \) is said to be \(\sigma\)-sub-Gaussian for some \(\sigma > 0\) if it satisfies the inequality
    \[
    \mathbb{E}\left[\exp(\lambda X)\right] \le \exp\left(\frac{\lambda^2 \sigma^2}{2}\right), \quad \text{for all } \lambda \in \mathbb{R}.
    \]
    \item[(ii)] A random vector \( X \in \mathbb{R}^d \) is \(\sigma\)-sub-Gaussian if for every unit vector \( w \in \mathbb{R}^d \) (i.e., \( \|w\|_2 = 1 \)), the one-dimensional projection \( w^\top X \) is \(\sigma\)-sub-Gaussian in the sense of (i).
\end{itemize}
\end{definition}

The scalar case in Definition~\ref{definition: sigma-sub-gaussian} (i) follows the standard formulation as in \citet{Abbasi-Yadkori2011}, while the extension to random vectors in Definition~\ref{definition: sigma-sub-gaussian} (ii) adopts the projective sub-Gaussian criterion used in \citet{kakade2020information}.

\begin{lemma}
\label{lemma: self normalized}
Let $\{\mathcal{F}_i \}_{i=0}^{\infty}$ be a filtration and $\{\varepsilon_i \}_{i=0}^{\infty}$ be an $\mathbb{R}^d$-valued stochastic process such that $\varepsilon_i$ is $\mathcal{F}_i$-measurable, $\mathbb{E}[\varepsilon_i| \mathcal{F}_i] =0$, and $\varepsilon_i| \mathcal{F}_i$ is $\sigma$-sub-Gaussian.
Meanwhile, let $\{\phi_i \}_{i=0}^{\infty}$ be an $\mathbb{R}^m$-valued, $\mathcal{F}_i$-measurable stochastic process. Let \(\Lambda_0 \in \mathbb{R}^{m \times m}\) be a deterministic positive-definite real matrix, and define $\Lambda_t := \Lambda_0 + \sum_{i=1}^t \phi_i \phi_i^T$. Then, with probability at least $1-\delta$, the following bound holds for all $t\ge 1$:
\[
\left\| \left( \sum_{i=1}^t \varepsilon_i \phi_i^\top \right) \Lambda_t^{-1/2} \right\|_2^2 
\le 8\sigma^2 d \log(5) + 8\sigma^2 \log \left( \frac{\det(\Lambda_t)^{1/2}}{\det(\Lambda_0)^{1/2} \, \delta} \right),
\]
where $\|\cdot\|_2$ denotes the spectral norm of a $d\times m$ matrix.
\end{lemma}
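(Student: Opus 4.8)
The plan is to reduce this matrix-valued self-normalized inequality to the scalar self-normalized martingale bound of \citet{Abbasi-Yadkori2011} by a covering argument over the unit sphere $S^{d-1}\subset\mathbb{R}^d$. Write $S_t:=\sum_{i=1}^t\varepsilon_i\phi_i^\top\in\mathbb{R}^{d\times m}$, so that the quantity of interest is $\|S_t\Lambda_t^{-1/2}\|_2^2$. The first step is the elementary identity
\[
\|S_t\Lambda_t^{-1/2}\|_2=\|\Lambda_t^{-1/2}S_t^\top\|_2=\sup_{w\in S^{d-1}}\|\Lambda_t^{-1/2}S_t^\top w\|_2=\sup_{w\in S^{d-1}}\|S_t^\top w\|_{\Lambda_t^{-1}},
\]
using $\|\Lambda_t^{-1/2}S_t^\top w\|_2^2=(S_t^\top w)^\top\Lambda_t^{-1}(S_t^\top w)$.

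Next, fix a unit vector $w$. Then $w^\top\varepsilon_i$ is $\mathcal{F}_i$-measurable with $\mathbb{E}[w^\top\varepsilon_i\mid\mathcal{F}_i]=0$, and by Definition~\ref{definition: sigma-sub-gaussian}(ii) the conditional law of $w^\top\varepsilon_i$ is scalar $\sigma$-sub-Gaussian. Hence $S_t^\top w=\sum_{i=1}^t(w^\top\varepsilon_i)\phi_i$ is exactly of the form covered by the scalar self-normalized bound, which gives that for any $\delta'\in(0,1)$, with probability at least $1-\delta'$, simultaneously for all $t\ge1$,
\[
\|S_t^\top w\|_{\Lambda_t^{-1}}^2\le 2\sigma^2\log\!\left(\frac{\det(\Lambda_t)^{1/2}}{\det(\Lambda_0)^{1/2}\,\delta'}\right).
\]
Uniformity over $t$ here is free: it comes from the supermartingale / method-of-mixtures construction underlying that result together with Ville's inequality, so no stopping-time argument is needed.

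To pass from a fixed $w$ to the supremum over $S^{d-1}$, fix a $\tfrac12$-net $\mathcal{N}$ of $S^{d-1}$ with $|\mathcal{N}|\le(1+2/\tfrac12)^d=5^d$. A standard discretization argument gives, for any matrix $A$, $\|A\|_2\le\frac{1}{1-1/2}\sup_{w\in\mathcal{N}}\|A^\top w\|_2=2\sup_{w\in\mathcal{N}}\|A^\top w\|_2$; applying this with $A=S_t\Lambda_t^{-1/2}$ yields $\|S_t\Lambda_t^{-1/2}\|_2^2\le4\sup_{w\in\mathcal{N}}\|S_t^\top w\|_{\Lambda_t^{-1}}^2$. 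Union-bounding the scalar bound over the $\le5^d$ points of $\mathcal{N}$ with $\delta'=\delta/5^d$ gives, with probability at least $1-\delta$, for all $t\ge1$,
\[
\|S_t\Lambda_t^{-1/2}\|_2^2\le 8\sigma^2\log\!\left(\frac{\det(\Lambda_t)^{1/2}\,5^d}{\det(\Lambda_0)^{1/2}\,\delta}\right)=8\sigma^2d\log5+8\sigma^2\log\!\left(\frac{\det(\Lambda_t)^{1/2}}{\det(\Lambda_0)^{1/2}\,\delta}\right),
\]
which is the claimed inequality.

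The main obstacle is modest: it is the bookkeeping of constants so that the covering number $(1+2/\epsilon)^d$ and the discretization factor $1/(1-\epsilon)$ combine to reproduce exactly the $8\sigma^2d\log5$ term — the choice $\epsilon=\tfrac12$ is precisely what makes $5^d$ appear. One should also confirm that applying the scalar bound at each net point with the \emph{same} random objects $(\varepsilon_i,\phi_i)$ and then union-bounding is legitimate; it is, because each application only uses that $w^\top\varepsilon_i$ is a conditionally $\sigma$-sub-Gaussian martingale difference with respect to the common filtration $\{\mathcal{F}_i\}$, and the union is over the finite, deterministic set $\mathcal{N}$.
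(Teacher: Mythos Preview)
Your proposal is correct and follows exactly the standard route used in Lemma~C.4 of \cite{kakade2020information}, which is all the paper invokes for this lemma: reduce the spectral norm to a supremum over unit vectors, apply the scalar self-normalized bound of \citet{Abbasi-Yadkori2011} at each point of a $\tfrac12$-net of $S^{d-1}$, and union-bound over the $5^d$ net points to recover the constants $8\sigma^2 d\log 5$ and $8\sigma^2$. The bookkeeping you flag (choice of $\epsilon=\tfrac12$, discretization factor $1/(1-\epsilon)^2=4$, net size $5^d$) is exactly right.
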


\begin{proof}[Proof of Lemma \ref{lemma: self normalized}]
See Lemma C.4 of \cite{kakade2020information} for a detailed proof.
\end{proof}



\begin{proof}[Proof of Lemma~\ref{lemma: bounded rv is subgaussian}]
Let \( w \in \mathbb{R}^d \) be any unit vector (i.e., \( \|w\|_2 = 1 \)), and define the scalar random variable \( X = w^\top Y \). Then \( \mathbb{E}[X] = w^\top \mathbb{E}[Y] = 0 \), and by the Cauchy–Schwarz inequality,
\[
|X| = |w^\top Y| \le \|w\|_2 \|Y\|_2 \le \sigma.
\]
Thus, \( X \) is a zero-mean, bounded random variable supported on \( [-\sigma, \sigma] \). We now verify that \( X \) is \(\sigma\)-sub-Gaussian. By Definition~\ref{definition: sigma-sub-gaussian}(i), we must show that for all \( \lambda \in \mathbb{R} \), $\mathbb{E}[e^{\lambda X}] \le \exp\left(\frac{\lambda^2 \sigma^2}{2}\right).$ Since \( \lambda X \in [-|\lambda| \sigma, |\lambda| \sigma] \), we apply the convexity of the exponential function:
\[
e^{\lambda X} \le \frac{e^{|\lambda| \sigma} - e^{-|\lambda| \sigma}}{2 |\lambda| \sigma}(X + |\lambda| \sigma) + e^{-|\lambda| \sigma}.
\]
Taking expectations on both sides and using \( \mathbb{E}[X] = 0 \), we obtain:
\[
\mathbb{E}[e^{\lambda X}] \le \frac{1}{2} e^{|\lambda| \sigma} + \frac{1}{2} e^{-|\lambda| \sigma}.
\]
Letting \( t = |\lambda| \sigma \), we finish the proof by using the inequality:
\[
\frac{1}{2}(e^t + e^{-t}) \le \exp\left( \frac{t^2}{2} \right), \quad \forall t \ge 0.
\]
\end{proof}


\begin{proof}[Proof of Lemma \ref{lemma: W-W* in KNR}]
Since $s_i= W^* \phi(h_i,a_i)+ \varepsilon_i$, we then can plug it into the construction of $\widehat{W}$:
\begin{equation*}
\begin{split}
\widehat{W} &= \left( \sum_{i=1}^n s_i \phi_i^T \right) \left( \sum_{i=1}^n \phi_i \phi_i^T + \lambda I \right)^{-1}\\
&= \left( \sum_{i=1}^n W^* \phi(h_i,a_i) \phi_i^T + \sum_{i=1}^n \varepsilon_i \phi_i^T \right) \left( \sum_{i=1}^n \phi_i \phi_i^T + \lambda I \right)^{-1}\\
&= \left( W^* \left(\sum_{i=1}^n \phi(h_i,a_i) \phi_i^T +\lambda I \right) - \lambda W^*+ \sum_{i=1}^n \varepsilon_i \phi_i^T \right) \left( \sum_{i=1}^n \phi_i \phi_i^T + \lambda I \right)^{-1} \\
&= W^* - \lambda W^* \left( \sum_{i=1}^n \phi_i \phi_i^T + \lambda I \right)^{-1} + \sum_{i=1}^n \varepsilon_i \phi_i^T \left( \sum_{i=1}^n \phi_i \phi_i^T + \lambda I \right)^{-1}.
\end{split}
\end{equation*}
For simplicity, denote $\Lambda_n = \sum_{i=1}^n \phi_i \phi_i^T + \lambda I$, then
\begin{equation*}
\widehat{W}-W^* = - \lambda W^* \Lambda_n^{-1} + \sum_{i=1}^n \varepsilon_i \phi_i^T \Lambda_n^{-1}.
\end{equation*}
Multiply $\Lambda_n^{1/2}$ on the right, and take the spectral norm on both sides:
\begin{equation}
\label{eq: spec norm both sides}
\left\| (\widehat{W}-W^*) \Lambda_n^{1/2} \right\|_2 \le \left\| \lambda W^* \Lambda_n^{-1/2} \right\|_2 + \left\| \sum_{i=1}^n \varepsilon_i \phi_i^T \Lambda_n^{-1/2} \right\|_2.
\end{equation}
For the first term, note that $\Lambda_n = \sum_{i=1}^n \phi_i \phi_i^T + \lambda I \succeq \lambda I$, so the smallest eigenvalue of $\Lambda_n$ satisfies $\lambda_{min} (\Lambda_n) \ge \lambda >0$. We have $\lambda_{min} (\Lambda_n^{1/2}) \ge \sqrt{\lambda}$, $\lambda_{max} (\Lambda_n^{-1/2}) \le \frac{1}{ \sqrt{\lambda}}$. It suggests that $\|\Lambda_n^{-1/2}\|_2 \le \frac{1}{ \sqrt{\lambda}}$. By the sub-multiplicative property of matrix spectral norm,
\begin{equation}
\label{eq: matrix norm proof}
\left\| \lambda W^* \Lambda_n^{-1/2} \right\|_2 \le \lambda \|W^*\|_2 \|\Lambda_n^{-1/2}\|_2 \le \sqrt{\lambda} \|W^*\|_2.
\end{equation}
For the second term, apply Lemma \ref{lemma: self normalized}: with probability at least $1-\delta$,
\begin{equation}
\label{eq: sec term lemma 7}
\left\| \sum_{i=1}^n \varepsilon_i \phi_i^T \Lambda_n^{-1/2} \right\|_2^2 \le 8\sigma^2 d_s \log(5) + 8\sigma^2 \log \left( \frac{\det (\Lambda_n) ^{1/2} / \det (\lambda I) ^{1/2}}{\delta} \right).
\end{equation}
Merge the inequalities \eqref{eq: spec norm both sides}, \eqref{eq: matrix norm proof} and \eqref{eq: sec term lemma 7} above:
\begin{equation*}
\left\| (\widehat{W}-W^*) \Lambda_n^{1/2} \right\|_2 \le \sqrt{\lambda} \|W^*\|_2 + \sqrt{8\sigma^2 d_s \log(5) + 8\sigma^2 \log \left( \frac{\det (\Lambda_n) ^{1/2} / \det (\lambda I) ^{1/2}}{\delta} \right)}.
\end{equation*}
\end{proof}

\subsection{Supporting Lemmas' Proofs of Theorem \ref{thm: final regret real M}}
\label{pf:lemma them S1}

\begin{proof}[Proof of Lemma~\ref{lemma: simulation}]
We prove part (i) and part (ii) follows directly by taking expectation over $a_k \sim \pi_k(\cdot \mid h_k)$.

By definition of $Q_k^\pi$ under dynamics $P$,
\begin{equation}
\label{eq:proof-simulation-1}
Q_k^\pi(h_k, a_k) = \mathbb{E}_\pi\left[\sum_{t=k}^K R_t \mid H_k = h_k, A_k = a_k\right] = \sum_{t=k}^K \mathbb{E}_{(h_t,a_t)\sim d_{t,P}^\pi(\cdot,\cdot \mid H_k = h_k, A_k = a_k)} \left[r_t(h_t, a_t)\right].
\end{equation}

From the Bellman equation under model $\widehat{M}$:
\begin{equation}
\label{eq:proof-simulation-2}
\widehat{Q}_t^\pi(h_t, a_t) = r_t(h_t, a_t) + \mathbb{E}_{h_{t+1} \sim \widehat{P}_t(\cdot \mid h_t, a_t)} \left[ \widehat{V}_{t+1}^\pi(h_{t+1}) \right].
\end{equation}

Substituting the reward in~\eqref{eq:proof-simulation-1} using~\eqref{eq:proof-simulation-2}, we get:
\[
Q_k^\pi(h_k, a_k) = \sum_{t=k}^K \mathbb{E}_{(h_t,a_t)\sim d_{t,P}^\pi(\cdot,\cdot \mid H_k = h_k, A_k = a_k)} \left[
\widehat{Q}_t^\pi(h_t, a_t) - \mathbb{E}_{h'\sim \widehat{P}_t(\cdot \mid h_t, a_t)}\left[ \widehat{V}_{t+1}^\pi(h') \right]
\right].
\]

For $t = K$, the second term vanishes since $\widehat{V}_{K+1}^\pi \equiv 0$. Therefore:
\begin{equation}
\label{eq:proof-simulation-3}
\begin{split}
Q_k^\pi(h_k, a_k) =\ & \widehat{Q}_k^\pi(h_k, a_k) + \sum_{t=k+1}^K \mathbb{E}_{(h_t, a_t) \sim d_{t,P}^\pi(\cdot,\cdot \mid H_k = h_k, A_k = a_k)} \left[ \widehat{Q}_t^\pi(h_t, a_t) \right] \\
& - \sum_{t=k}^{K-1} \mathbb{E}_{(h_t,a_t) \sim d_{t,P}^\pi(\cdot,\cdot \mid H_k = h_k, A_k = a_k)} \left[ \mathbb{E}_{h' \sim \widehat{P}_t(\cdot \mid h_t, a_t)} \left[ \widehat{V}_{t+1}^\pi(h') \right] \right].
\end{split}
\end{equation}

Next, we express the sum in the first line as:
\begin{equation}
\label{eq:proof-simulation-4}
\sum_{t=k+1}^K \mathbb{E}_{(h_t,a_t)\sim d_{t,P}^\pi(\cdot,\cdot \mid H_k = h_k, A_k = a_k)} \left[ \widehat{Q}_t^\pi(h_t, a_t) \right]
= \sum_{t=k}^{K-1} \mathbb{E}_{(h_t, a_t),\ h'\sim P_t,\ a'\sim \pi_{t+1}(h')} \left[ \widehat{Q}_{t+1}^\pi(h', a') \right].
\end{equation}

Since \( a' \sim \pi_{t+1}(h') \), the inner expectation 
\( \mathbb{E}_{a'} \left[ \widehat{Q}_{t+1}^\pi(h', a') \right] \) 
evaluates to \( \widehat{V}_{t+1}^\pi(h') \) by definition of the value function.
Combining \eqref{eq:proof-simulation-3} and \eqref{eq:proof-simulation-4}, and regrouping, yields:
\[
\begin{aligned}
Q_k^\pi(h_k, a_k) - \widehat{Q}_k^\pi(h_k, a_k)
&= \sum_{t=k}^{K-1} \mathbb{E}_{(h_t, a_t) \sim d_{t, P}^\pi(\cdot,\cdot \mid h_k, a_k)} \Big[
\mathbb{E}_{h' \sim P_t(\cdot \mid h_t, a_t)} \widehat{V}_{t+1}^\pi(h') \\
&\quad - \mathbb{E}_{h' \sim \widehat{P}_t(\cdot \mid h_t, a_t)} \widehat{V}_{t+1}^\pi(h')
\Big],
\end{aligned}
\]

which completes the proof.
\end{proof}


\begin{proof}[Proof of Lemma~\ref{lemma: r - rhat}]
By the definition of $r_k$ and $\widehat{r}_k$:
\begin{align*}
\left| r_k(h_k, a_k) - \widehat{r}_k(h_k, a_k) \right| 
&= \left| \mathbb{E}_{s'\sim P^*_k(\cdot \mid h_k, a_k)} \left[ \overline{r}_k(h_k, a_k, s') \right] - \mathbb{E}_{s'\sim \widehat{P}_k(\cdot \mid h_k, a_k)} \left[ \overline{r}_k(h_k, a_k, s') \right] \right| \\
&\le \| \overline{r}_k \|_\infty \cdot \left\| P^*_k(\cdot \mid h_k, a_k) - \widehat{P}_k(\cdot \mid h_k, a_k) \right\|_1.
\end{align*}

By Assumption~\ref{assumption: uncertainty quantifier}, the $L_1$ distance between the true and estimated transition distributions is bounded by $\Gamma_k(h_k, a_k)$ with probability at least $1 - \delta$. We show our desired result.
\end{proof}

\subsection{Supporting Lemmas' Proofs of Theorem \ref{thm: M^tilde + general e}}
\label{pf:lemma them S2}

\begin{proof}[Proof of Lemma~\ref{lemma: perf diff}]
We first prove the second equality. By the definition of the advantage function:
\begin{align*}
\mathbb{E}_{a_t \sim \pi_t(\cdot \mid h_t)} A_t^{\widehat{\pi}}(h_t, a_t)
&= \mathbb{E}_{a_t \sim \pi_t(\cdot \mid h_t)} \left[ Q_t^{\widehat{\pi}}(h_t, a_t) \right] - V_t^{\widehat{\pi}}(h_t) \\
&= \mathbb{E}_{a_t \sim \pi_t(\cdot \mid h_t)} \left[ Q_t^{\widehat{\pi}}(h_t, a_t) \right] - \mathbb{E}_{a_t \sim \widehat{\pi}_t(\cdot \mid h_t)} \left[ Q_t^{\widehat{\pi}}(h_t, a_t) \right] \\
&\stackrel{(i)}{=} \left\langle Q_t^{\widehat{\pi}}(h_t, \cdot),\ \pi_t(\cdot \mid h_t) - \widehat{\pi}_t(\cdot \mid h_t) \right\rangle,
\end{align*}
where $(i)$ is because the action space is discrete.
We now prove the first equality. Let $\tau$ denote the trajectory $(H_k, A_k, \dots, H_K, A_K)$ generated under policy $\pi$ and dynamics $P$. Then:
\[
V_k^\pi(h_k) = \mathbb{E}_\pi \left[ \sum_{t=k}^K R_t \mid H_k = h_k \right]
= \mathbb{E}_{\tau \sim d_P^\pi(\cdot \mid H_k = h_k)} \left[ \sum_{t=k}^K r_t(H_t, A_t) \right].
\]

Adding and subtracting $V_t^{\widehat{\pi}}(H_t)$ inside the summation gives:
\begin{equation}
\label{eq:proof-pdiff-1}
\begin{aligned}
V_k^\pi(h_k)
&= \mathbb{E}_{\tau \sim d_P^\pi(\cdot \mid H_k = h_k)} \sum_{t=k}^K \left[ r_t(H_t, A_t) - V_t^{\widehat{\pi}}(H_t) + V_t^{\widehat{\pi}}(H_t) \right] \\
&= \mathbb{E}_{\tau \sim d_P^\pi(\cdot \mid H_k = h_k)} \sum_{t=k}^K \left[ r_t(H_t, A_t) - V_t^{\widehat{\pi}}(H_t) \right]
+ \mathbb{E}_{\tau \sim d_P^\pi(\cdot \mid H_k = h_k)} \sum_{t=k}^K V_t^{\widehat{\pi}}(H_t).
\end{aligned}
\end{equation}

Since $V_k^{\widehat{\pi}}(H_k) = V_k^{\widehat{\pi}}(h_k)$ deterministically, we rewrite the second sum as:
\begin{equation}
\label{eq:proof-pdiff-2}
\begin{aligned}
\mathbb{E}_{\tau \sim d_P^\pi(\cdot \mid H_k = h_k)} \sum_{t=k}^K V_t^{\widehat{\pi}}(H_t)
&= V_k^{\widehat{\pi}}(h_k) + \mathbb{E}_{\tau \sim d_P^\pi(\cdot \mid H_k = h_k)} \sum_{t=k+1}^K V_t^{\widehat{\pi}}(H_t) \\
&= V_k^{\widehat{\pi}}(h_k) + \mathbb{E}_{\tau \sim d_P^\pi(\cdot \mid H_k = h_k)} \sum_{t=k}^K V_{t+1}^{\widehat{\pi}}(H_{t+1}),
\end{aligned}
\end{equation}
where we define $V_{K+1}^{\widehat{\pi}} \equiv 0$.

Substituting \eqref{eq:proof-pdiff-2} into \eqref{eq:proof-pdiff-1}, we obtain:
\[
V_k^\pi(h_k) - V_k^{\widehat{\pi}}(h_k)
= \mathbb{E}_{\tau \sim d_P^\pi(\cdot \mid H_k = h_k)} \sum_{t=k}^K \left[ r_t(H_t, A_t) + V_{t+1}^{\widehat{\pi}}(H_{t+1}) - V_t^{\widehat{\pi}}(H_t) \right].
\]

Since $H_{t+1} \sim P_t(\cdot \mid H_t, A_t)$ under model $P$, we can write:
\begin{equation}
\label{eq:proof-pdiff-3}
\begin{aligned}
&\quad \mathbb{E}_{\tau \sim d_P^\pi(\cdot \mid H_k = h_k)} \left[ r_t(H_t, A_t) + V_{t+1}^{\widehat{\pi}}(H_{t+1}) \right]\\
&= \mathbb{E}_{(H_t, A_t) \sim d_t^\pi(\cdot,\cdot \mid H_k = h_k)} \left[
r_t(H_t, A_t) + \mathbb{E}_{H_{t+1} \sim P_t(\cdot \mid H_t, A_t)} V_{t+1}^{\widehat{\pi}}(H_{t+1})
\right] \\
&= \mathbb{E}_{(H_t, A_t) \sim d_t^\pi(\cdot,\cdot \mid H_k = h_k)} \left[ Q_t^{\widehat{\pi}}(H_t, A_t) \right].
\end{aligned}
\end{equation}

Putting it all together:
\[
\begin{aligned}
V_k^\pi(h_k) - V_k^{\widehat{\pi}}(h_k)
&= \sum_{t=k}^K \mathbb{E}_{(H_t, A_t) \sim d_t^\pi(\cdot,\cdot \mid H_k = h_k)} \left[ Q_t^{\widehat{\pi}}(H_t, A_t) - V_t^{\widehat{\pi}}(H_t) \right] \\
&= \sum_{t=k}^K \mathbb{E}_{(H_t, A_t) \sim d_t^\pi(\cdot,\cdot \mid H_k = h_k)} \left[ A_t^{\widehat{\pi}}(H_t, A_t) \right],
\end{aligned}
\]
which completes the proof.
\end{proof}

\begin{proof}[Proof of Lemma~\ref{lemma: 3 KL terms}]
By definition of KL divergence and the formulation of softmax policies, we expand:
\begin{align*}
&\quad \operatorname{KL}(p,\pi_k(h_k, \theta_k)) - \operatorname{KL}(p,\pi_k(h_k, \theta_k')) - \operatorname{KL}(\pi_k(h_k, \theta_k'), \pi_k(h_k, \theta_k)) \\
&= \mathbb{E}_{a \sim p} \left[ \log \frac{p(a)}{\pi_k(a \mid h_k, \theta_k)} \right]
- \mathbb{E}_{a \sim p} \left[ \log \frac{p(a)}{\pi_k(a \mid h_k, \theta_k')} \right]
- \mathbb{E}_{a \sim \pi_k(\cdot \mid h_k, \theta_k')} \left[ \log \frac{\pi_k(a \mid h_k, \theta_k')}{\pi_k(a \mid h_k, \theta_k)} \right] \\
&= \left\langle \log \frac{\pi_k(\cdot \mid h_k, \theta_k')}{\pi_k(\cdot \mid h_k, \theta_k)},\ p(\cdot) - \pi_k(\cdot \mid h_k, \theta_k') \right\rangle.
\end{align*}

Recall the parameterization of the softmax policy: $\log \pi_k(a \mid h_k, \theta) = f_k(\theta, h_k, a) - \log Z_k(h_k, \theta),$
where the normalization constant is $Z_k(h_k, \theta) := \sum_{a' \in \mathcal{A}_k} \exp(f_k(\theta, h_k, a')).$

Substituting into the inner product:
\begin{align*}
&\quad \left\langle \log \frac{\pi_k(\cdot \mid h_k, \theta_k')}{\pi_k(\cdot \mid h_k, \theta_k)},\ p(\cdot) - \pi_k(\cdot \mid h_k, \theta_k') \right\rangle \\
&= \left\langle f_k(\theta_k', h_k, \cdot) - \log Z_k(h_k, \theta_k') - f_k(\theta_k, h_k, \cdot) + \log Z_k(h_k, \theta_k),\ p(\cdot) - \pi_k(\cdot \mid h_k, \theta_k') \right\rangle.
\end{align*}

Since $\log Z_k(h_k, \theta)$ is constant in $a$, and both $p(\cdot)$ and $\pi_k(\cdot \mid h_k, \theta_k')$ are
probability distributions over $\mathcal{A}_k$, we have:
\[
\left\langle \log Z_k(h_k, \theta),\ p(\cdot) - \pi_k(\cdot \mid h_k, \theta_k') \right\rangle
= \log Z_k(h_k, \theta) \cdot \sum_{a \in \mathcal{A}_k} \left( p(a) - \pi_k(a \mid h_k, \theta_k') \right) = 0.
\]

Thus, we simplify the expression to: $\left\langle f_k(\theta_k', h_k, \cdot) - f_k(\theta_k, h_k, \cdot),\ p(\cdot) - \pi_k(\cdot \mid h_k, \theta_k') \right\rangle.$
\end{proof}


\begin{proof}[Proof of Lemma~\ref{lemma: <Q,pi_t+1 - pi_t>}]
We analyze the policy update from $\theta_k^{(t)}$ to $\theta_k^{(t+1)}$. Applying Lemma~\ref{lemma: 3 KL terms} with $\theta = \theta_k^{(t)}$, $\theta' = \theta_k^{(t+1)}$, for any $p \in \Delta(\mathcal{A}_k)$, we obtain:
\begin{equation}
\label{eq:lemma-kl}
\begin{aligned}
&\quad \operatorname{KL}(p,\pi_k(h_k, \theta_k^{(t)})) - \operatorname{KL}(p,\pi_k(h_k, \theta_k^{(t+1)})) - \operatorname{KL}(\pi_k(h_k, \theta_k^{(t+1)}), \pi_k(h_k, \theta_k^{(t)})) \\
&= \left\langle f_k(\theta_k^{(t+1)}, h_k,\cdot) - f_k(\theta_k^{(t)}, h_k,\cdot),\ p(\cdot) - \pi_k(\cdot|h_k, \theta_k^{(t+1)}) \right\rangle.
\end{aligned}
\end{equation}

Now take $p = \pi_k(\cdot|h_k, \theta_k^{(t)})$ in~\eqref{eq:lemma-kl}, which yields:
\begin{equation}
\label{eq:kl-diff}
- \operatorname{KL}(\pi_k^{(t)}, \pi_k^{(t+1)}) - \operatorname{KL}(\pi_k^{(t+1)}, \pi_k^{(t)})= \left\langle f_k(\theta_k^{(t+1)}, h_k,\cdot) - f_k(\theta_k^{(t)}, h_k,\cdot),\ \pi_k^{(t)} - \pi_k^{(t+1)} \right\rangle,
\end{equation}
where we abbreviate $\pi_k^{(t)} := \pi_k(\cdot|h_k, \theta_k^{(t)})$ and $\pi_k^{(t+1)} := \pi_k(\cdot|h_k, \theta_k^{(t+1)})$.

From the policy update rule~\eqref{eq: e_t}, we know that for each $a \in \mathcal{A}_k$,
\begin{equation}
\label{eq:f-diff}
f_k(\theta_k^{(t+1)}, h_k, a) - f_k(\theta_k^{(t)}, h_k, a) = e_k^{(t)}(h_k, a) + \eta_k^{(t)} Q^{\pi^{(t)}}_{k,\widetilde{M}}(h_k, a).
\end{equation}

Substituting~\eqref{eq:f-diff} into~\eqref{eq:kl-diff}, we get:
\begin{equation}
\label{eq: inner-split}
    \operatorname{KL}(\pi_k^{(t)}, \pi_k^{(t+1)}) + \operatorname{KL}(\pi_k^{(t+1)}, \pi_k^{(t)})= \left\langle e_k^{(t)}(h_k, \cdot),\ \pi_k^{(t+1)} - \pi_k^{(t)} \right\rangle + \eta_k^{(t)} \left\langle Q^{\pi^{(t)}}_{k,\widetilde{M}}(h_k,\cdot),\ \pi_k^{(t+1)} - \pi_k^{(t)} \right\rangle.
\end{equation}

We now apply Pinsker’s inequality, which states:
\begin{equation}
\label{eq:pinsker}
\operatorname{KL}(p,q) \ge \frac{1}{2} \|p - q\|_1^2,
\end{equation}
for any two distributions $p, q$. Using~\eqref{eq: inner-split} and~\eqref{eq:pinsker}, we obtain:
\begin{align*}
\eta_k^{(t)} \left\langle Q^{\pi^{(t)}}_{k,\widetilde{M}}(h_k,\cdot),\ \pi_k^{(t+1)} - \pi_k^{(t)} \right\rangle
&= \operatorname{KL}(\pi_k^{(t)}, \pi_k^{(t+1)}) + \operatorname{KL}(\pi_k^{(t+1)}, \pi_k^{(t)}) \\
&\quad - \left\langle e_k^{(t)}(h_k,\cdot),\ \pi_k^{(t+1)} - \pi_k^{(t)} \right\rangle \\
&\ge \| \pi_k^{(t+1)} - \pi_k^{(t)} \|_1^2 - \| e_k^{(t)}(h_k,\cdot) \|_\infty \cdot \| \pi_k^{(t+1)} - \pi_k^{(t)} \|_1.
\end{align*}

To bound the cross term, we apply the inequality \( ab \le \frac{1}{2}a^2 + \frac{1}{2}b^2 \) and get:
\[
\| \pi_k^{(t+1)} - \pi_k^{(t)} \|_1^2 - \| e_k^{(t)} \|_\infty \cdot \| \pi_k^{(t+1)} - \pi_k^{(t)} \|_1
\ge -\frac{1}{4} \| e_k^{(t)}(h_k,\cdot) \|_\infty^2.
\]

Therefore, for any $h_k \in \mathcal{H}_k$, we conclude:
\[
\left\langle Q^{\pi^{(t)}}_{k,\widetilde{M}}(h_k,\cdot),\ \pi_k^{(t+1)} - \pi_k^{(t)} \right\rangle
+ \frac{1}{4 \eta_k^{(t)}} \| e_k^{(t)}(h_k,\cdot) \|_\infty^2 \ge 0.
\]
\end{proof}


\begin{proof}[Proof of Lemma \ref{lemma: <Q,pi* - pi_t>}]
We begin by applying Lemma~\ref{lemma: 3 KL terms}, which states that for any $p \in \Delta(\mathcal{A}_k)$,
\begin{equation*}
\begin{split}
&\quad\operatorname{KL}(p,\ \pi_k(h_k,\theta_k^{(t)})) - \operatorname{KL}(p,\ \pi_k(h_k,\theta_k^{(t+1)})) - \operatorname{KL}(\pi_k(h_k,\theta_k^{(t+1)}),\ \pi_k(h_k,\theta_k^{(t)})) \\
&= \left\langle f_k(\theta_k^{(t+1)}, h_k,\cdot) - f_k(\theta_k^{(t)}, h_k,\cdot),\ p(\cdot) - \pi_k(\cdot|h_k,\theta_k^{(t+1)}) \right\rangle.
\end{split}
\end{equation*}

Letting $p = \pi_k^\dagger(\cdot|h_k)$ and using the definition of $e_k^{(t)}$ in \eqref{eq: e_t}, we have:
\[
f_k(\theta_k^{(t+1)}, h_k, a) - f_k(\theta_k^{(t)}, h_k, a) = e_k^{(t)}(h_k,a) + \eta_k^{(t)} Q^{\pi^{(t)}}_{k,\widetilde{M}}(h_k,a).
\]

Substituting into the identity above:
\begin{equation}
\label{eq:KL-Q-split}
\begin{split}
&\eta_k^{(t)} \left\langle Q^{\pi^{(t)}}_{k,\widetilde{M}}(h_k,\cdot),\ \pi_k^\dagger(\cdot|h_k) - \pi_k^{(t+1)}(\cdot|h_k) \right\rangle \\
= &\operatorname{KL}(\pi_k^\dagger(h_k),\ \pi_k^{(t)}(h_k)) - \operatorname{KL}(\pi_k^\dagger(h_k),\ \pi_k^{(t+1)}(h_k)) - \operatorname{KL}(\pi_k^{(t+1)}(h_k),\ \pi_k^{(t)}(h_k)) \\
&- \left\langle e_k^{(t)}(h_k,\cdot),\ \pi_k^\dagger(\cdot|h_k) - \pi_k^{(t+1)}(\cdot|h_k) \right\rangle.
\end{split}
\end{equation}

We now express the desired inner product as:
\begin{equation*}
\begin{split}
&\left\langle Q^{\pi^{(t)}}_{k,\widetilde{M}}(h_k,\cdot),\ \pi_k^\dagger(\cdot|h_k) - \pi_k^{(t)}(\cdot|h_k) \right\rangle \\
= &\left\langle Q^{\pi^{(t)}}_{k,\widetilde{M}}(h_k,\cdot),\ \pi_k^\dagger(\cdot|h_k) - \pi_k^{(t+1)}(\cdot|h_k) \right\rangle + \left\langle Q^{\pi^{(t)}}_{k,\widetilde{M}}(h_k,\cdot),\ \pi_k^{(t+1)}(\cdot|h_k) - \pi_k^{(t)}(\cdot|h_k) \right\rangle.
\end{split}
\end{equation*}

Substituting from \eqref{eq:KL-Q-split} and using Pinsker's inequality \eqref{eq:pinsker}:
\begin{equation*}
\begin{split}
&\quad\left\langle Q^{\pi^{(t)}}_{k,\widetilde{M}}(h_k,\cdot),\ \pi_k^\dagger(\cdot|h_k) - \pi_k^{(t)}(\cdot|h_k) \right\rangle \\
&\le \frac{1}{\eta_k^{(t)}} \left[ \operatorname{KL}(\pi_k^\dagger(h_k),\ \pi_k^{(t)}(h_k)) - \operatorname{KL}(\pi_k^\dagger(h_k),\ \pi_k^{(t+1)}(h_k)) \right] \\
&\quad - \frac{1}{2 \eta_k^{(t)}} \left\| \pi_k^{(t+1)}(h_k) - \pi_k^{(t)}(h_k) \right\|_1^2 + \left\| Q^{\pi^{(t)}}_{k,\widetilde{M}}(h_k,\cdot) \right\|_\infty \left\| \pi_k^{(t+1)}(h_k) - \pi_k^{(t)}(h_k) \right\|_1 \\
&\quad + \frac{1}{\eta_k^{(t)}} \left| \left\langle e_k^{(t)}(h_k,\cdot),\ \pi_k^\dagger(\cdot|h_k) - \pi_k^{(t+1)}(\cdot|h_k) \right\rangle \right|.
\end{split}
\end{equation*}

Applying Cauchy–Schwarz inequality and \( \| \pi \|_1 = 1 \), we upper bound the last term by \( \|e_k^{(t)}(h_k,\cdot)\|_\infty \cdot \| \pi_k^\dagger - \pi_k^{(t+1)} \|_1 \le 2 \|e_k^{(t)}(h_k,\cdot)\|_\infty \). Then completing the square on the second line leads to the final result:
$$
\begin{aligned}
   \left\langle Q^{\pi^{(t)}}_{k,\widetilde{M}}(h_k,\cdot),\ \pi_k^\dagger(\cdot|h_k) - \pi_k^{(t)}(\cdot|h_k) \right\rangle 
&\le \frac{1}{\eta_k^{(t)}} \left[ \operatorname{KL}(\pi_k^\dagger(h_k),\ \pi_k^{(t)}(h_k)) - \operatorname{KL}(\pi_k^\dagger(h_k),\ \pi_k^{(t+1)}(h_k)) \right] 
\\
&\quad+ \frac{\eta_k^{(t)}}{2} \| Q^{\pi^{(t)}}_{k,\widetilde{M}} \|_\infty^2 + \frac{2}{\eta_k^{(t)}} \| e_k^{(t)} \|_\infty. 
\end{aligned}
$$
\end{proof}

\subsection{Supporting Lemmas and Their Proofs  for Theorem \ref{thm: M^tilde + sieve reg}}

\label{subsec:p-smooth}

\begin{definition}[$p$-smoothness]
\label{definition: p-smooth}
Let \( \mathcal{X} \subset \mathbb{R}^d \) be a compact axis-aligned box, i.e., the Cartesian product of \( d \) closed intervals.

\begin{enumerate}[(i)]
    \item Let \( 0 < \beta \le 1 \). A function \( h: \mathcal{X} \to \mathbb{R} \) is said to satisfy a H\"older condition with exponent \( \beta \) if there exists a constant \( c > 0 \) such that
    \[
    |h(x) - h(y)| \le c \|x - y\|_2^{\beta}, \quad \text{for all } x, y \in \mathcal{X}.
    \]

    \item Let \( p > 0 \), and write \( m = \lfloor p \rfloor \) and \( \beta = p - m \). For any multi-index \( \kappa = (\kappa_1, \dots, \kappa_d) \in \mathbb{N}_0^d \), define the differential operator $ D^\kappa := \frac{\partial^{\|\kappa\|_1}}{\partial x_1^{\kappa_1} \cdots \partial x_d^{\kappa_d}}, \text{where } \|\kappa\|_1 = \sum_{j=1}^d \kappa_j.$ A function \( h: \mathcal{X} \to \mathbb{R} \) is said to be \( p \)-smooth if the following two conditions are satisfied:
    \begin{enumerate}[(a)]
        \item \( h \in C^m(\mathcal{X}) \), i.e., all partial derivatives up to order \( m \) exist and are continuous on \( \mathcal{X} \),
        \item for every multi-index \( \kappa \in \mathbb{N}_0^d \) with \( \|\kappa\|_1 = m \), the partial derivative \( D^\kappa h \) satisfies a H\"older condition with exponent \( \beta \) on \( \mathcal{X} \).
    \end{enumerate}
\end{enumerate}
\end{definition}

\begin{lemma}
\label{lemma: holder condition}
Let \( \mathcal{X}_1, \dots, \mathcal{X}_n \subset \mathbb{R} \) be closed intervals, and define \( \mathcal{X} = \mathcal{X}_1 \times \cdots \times \mathcal{X}_n \subset \mathbb{R}^n \). Let \( 0 < \beta \le 1 \). Then:

\begin{enumerate}[(i)]
    \item If functions \( f, g: \mathcal{X} \to \mathbb{R} \) both satisfy a H\"older condition with exponent \( \beta \), then the product \( fg \) also satisfies a H\"older condition with exponent \( \beta \).

    \item Suppose \( f \in C^1(\mathcal{X}) \), and for each \( i = 1, \dots, n \), the partial derivative \( \frac{\partial f}{\partial x_i} \) satisfies a H\"older condition with exponent \( \beta \). Then \( f \) itself satisfies a H\"older condition with exponent \( \beta \).

    \item Let \( m \in \mathbb{N} \), and suppose \( f \in C^m(\mathcal{X}) \). If for every multi-index \( \kappa \in \mathbb{N}_0^n \) with \( \|\kappa\|_1 = m \), the partial derivative \( D^\kappa f \) satisfies a H\"older condition with exponent \( \beta \), then the same holds for all multi-indices \( \kappa \) with \( \|\kappa\|_1 \le m \).
\end{enumerate}
\end{lemma}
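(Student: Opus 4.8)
The plan is to establish the three parts in the order stated, with part (iii) obtained by induction on $m$ from part (ii), and with part (i) used only implicitly (it is itself a short computation). Throughout, I would exploit that $\mathcal{X} = \mathcal{X}_1 \times \cdots \times \mathcal{X}_n$ is compact, so that every $\beta$-Hölder function on $\mathcal{X}$ is continuous and hence bounded, and that $\mathcal{X}$ is closed under axis-parallel (``staircase'') paths joining any two of its points.

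For part (i), I would write $\|f\|_\infty, \|g\|_\infty < \infty$ (finite by compactness) and let $c_f, c_g$ be the Hölder constants. The identity $f(x)g(x) - f(y)g(y) = f(x)\bigl(g(x)-g(y)\bigr) + g(y)\bigl(f(x)-f(y)\bigr)$ together with the triangle inequality gives $|f(x)g(x) - f(y)g(y)| \le \bigl(\|f\|_\infty c_g + \|g\|_\infty c_f\bigr)\|x-y\|_2^\beta$, which is the claimed bound.

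For part (ii), given $x,y \in \mathcal{X}$, I would introduce the intermediate points $z^{(0)} = x, z^{(1)}, \dots, z^{(n)} = y$, where $z^{(i)}$ agrees with $y$ in its first $i$ coordinates and with $x$ in the rest; each $z^{(i)} \in \mathcal{X}$ precisely because $\mathcal{X}$ is a product of intervals. Telescoping, $f(x) - f(y) = \sum_{i=1}^n \bigl(f(z^{(i-1)}) - f(z^{(i)})\bigr)$, and since $z^{(i-1)}$ and $z^{(i)}$ differ only in coordinate $i$, the one-dimensional mean value theorem yields $f(z^{(i-1)}) - f(z^{(i)}) = \partial_i f(\xi^{(i)})\,(x_i - y_i)$ for some $\xi^{(i)} \in \mathcal{X}$. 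Each $\partial_i f$ is $\beta$-Hölder, hence continuous and bounded, say by $M$, so $|f(x)-f(y)| \le M \sum_i |x_i - y_i| \le nM\|x-y\|_2$. Finally, writing $\|x-y\|_2 = \|x-y\|_2^{1-\beta}\|x-y\|_2^\beta \le \mathrm{diam}(\mathcal{X})^{1-\beta}\|x-y\|_2^\beta$ (valid since $1-\beta \ge 0$ and $\mathcal{X}$ is bounded) gives the $\beta$-Hölder estimate for $f$.

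For part (iii), I would induct on $m$. The base case $m=1$ is part (ii), together with the trivial fact that $D^\kappa f = f$ when $\|\kappa\|_1 = 0$. For the inductive step, assume the claim for $m-1$ and let $f \in C^m(\mathcal{X})$ with $D^\kappa f$ $\beta$-Hölder for all $\|\kappa\|_1 = m$. Fix $j$ and put $g := \partial_j f \in C^{m-1}(\mathcal{X})$; for every $\kappa'$ with $\|\kappa'\|_1 = m-1$ one has $D^{\kappa'} g = D^{\kappa' + e_j} f$, which is $\beta$-Hölder by hypothesis, so the inductive hypothesis applied to $g$ shows $D^{\kappa'+e_j} f$ is $\beta$-Hölder for all $\|\kappa'\|_1 \le m-1$. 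Ranging over $j$, this covers every multi-index $\kappa$ with $1 \le \|\kappa\|_1 \le m$; in particular the first-order partials $\partial_i f$ are $\beta$-Hölder, so part (ii) gives that $f$ itself is $\beta$-Hölder, handling $\|\kappa\|_1 = 0$. I expect the only mild obstacle to be the bookkeeping in part (ii)—keeping the staircase path inside $\mathcal{X}$ (where the product-of-intervals structure is essential) and converting the Lipschitz-type bound into a $\beta$-Hölder one via boundedness of the domain; parts (i) and (iii) are then routine.
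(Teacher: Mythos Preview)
Your proposal is correct and largely follows the paper's argument. Parts (i) and (iii) match the paper almost exactly. In part (ii) you take a slightly different route: you telescope along an axis-parallel staircase and apply the one-dimensional mean value theorem on each segment, using only boundedness of the partials to obtain a Lipschitz bound; the paper instead integrates the directional derivative along the straight-line segment from $y$ to $x$ and explicitly invokes the H\"older property of the partials to bound $|f_v'(y+tv)-f_v'(y)|\le ct^\beta$. Both arguments terminate the same way, converting a Lipschitz-type estimate into a $\beta$-H\"older one via $\|x-y\|_2\le\mathrm{diam}(\mathcal{X})^{1-\beta}\|x-y\|_2^\beta$. Your version is a bit more elementary (and in fact shows that any $C^1$ function on a compact box is $\beta$-H\"older without needing the H\"older assumption on the partials), while the paper's computation makes the H\"older hypothesis visible in the intermediate bound $Cs+\tfrac{c}{\beta+1}s^{\beta+1}$ even though that extra precision is not used downstream.
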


\begin{proof}[Proof of Lemma~\ref{lemma: holder condition}]
The notation follows Definition~\ref{definition: p-smooth}.

(i) Since both \( f \) and \( g \) satisfy a H\"older condition with exponent \( \beta \), there exist constants \( c_f, c_g > 0 \) such that
\begin{equation*}
|f(x) - f(y)| \le c_f \|x - y\|_2^\beta, \quad 
|g(x) - g(y)| \le c_g \|x - y\|_2^\beta, \quad \forall x, y \in \mathcal{X}.
\end{equation*}
As \( \mathcal{X} \) is compact and \( f \), \( g \) are continuous, we have
\[
|f(x)| \le C_f, \quad |g(x)| \le C_g, \quad \forall x \in \mathcal{X},
\]
for some constants \( C_f, C_g > 0 \). Then for any \( x, y \in \mathcal{X} \),
\[
\begin{aligned}
|f(x)g(x) - f(y)g(y)| 
&= |f(x)(g(x) - g(y)) + g(y)(f(x) - f(y))| \\
&\le |f(x)||g(x) - g(y)| + |g(y)||f(x) - f(y)| \\
&\le C_f c_g \|x - y\|_2^\beta + C_g c_f \|x - y\|_2^\beta \\
&= (C_f c_g + C_g c_f) \|x - y\|_2^\beta,
\end{aligned}
\]
which confirms that \( fg \) satisfies a H\"older condition with exponent \( \beta \).

(ii) Let \( f_i' := \frac{\partial f}{\partial x_i} \) for \( i = 1, \dots, n \), and define the directional derivative of \( f \) at point \( x \) in the direction \( v \in \mathbb{R}^n \) as
\[
f'_v(x) := \lim_{h \to 0^+} \frac{f(x + h v) - f(x)}{h} = \langle v, \nabla f(x) \rangle.
\]
By assumption, for each \( i \), there exists \( c_i > 0 \) such that
\[
|f_i'(x) - f_i'(y)| \le c_i \|x - y\|_2^\beta, \quad \forall x, y \in \mathcal{X}.
\]
Let \( v \in \mathbb{R}^n \) be a unit vector and fix \( y \in \mathcal{X} \), \( t > 0 \) such that \( y + t v \in \mathcal{X} \). Then,
\[
\begin{aligned}
|f'_v(y + t v) - f'_v(y)| 
&= \left| \sum_{i=1}^n v_i (f_i'(y + t v) - f_i'(y)) \right| \\
&\le \sum_{i=1}^n |v_i| \cdot |f_i'(y + t v) - f_i'(y)| \\
&\le \sum_{i=1}^n c_i t^\beta =: c t^\beta.
\end{aligned}
\]
Since \( \mathcal{X} \) is compact and \( f \in C^1(\mathcal{X}) \), we define \( C_i := \sup_{x \in \mathcal{X}} |f_i'(x)| < \infty \), and let \( C := \sum_{i=1}^n C_i \). Then for any \( x, y \in \mathcal{X} \), write \( x - y = s v \) for some \( s = \|x - y\|_2 \ge 0 \) and unit vector \( v \). Since \( f \in C^1 \),
\[
f(x) - f(y) = \int_0^s f'_v(y + t v) \, dt.
\]
Using the bound on \( f'_v \), we have:
\[
\begin{aligned}
|f(x) - f(y)| 
&\le \int_0^s |f'_v(y + t v)| \, dt \\
&\le \int_0^s (|f'_v(y)| + c t^\beta) \, dt \\
&\le C s + \frac{c}{\beta + 1} s^{\beta + 1}.
\end{aligned}
\]
Since \( \mathcal{X} \) is compact, there exists \( d > 0 \) such that \( \|x - y\|_2 \le d \). Then
\[
|f(x) - f(y)| \le \left(C d^{1 - \beta} + \frac{c}{\beta + 1} d \right) \|x - y\|_2^\beta,
\]
showing that \( f \) satisfies a H\"older condition with exponent \( \beta \).

(iii) Let \( m \in \mathbb{N} \), and suppose that \( f \in C^m(\mathcal{X}) \), and for all multi-indices \( \kappa \in \mathbb{N}_0^n \) with \( \|\kappa\|_1 = m \), the derivative \( D^\kappa f \) satisfies a H\"older condition with exponent \( \beta \). We now show by induction that the same holds for all \( \kappa \) with \( \|\kappa\|_1 < m \).

Let \( \kappa \in \mathbb{N}_0^n \) with \( \|\kappa\|_1 = m - 1 \). Then \( D^\kappa f \in C^1(\mathcal{X}) \), and for each \( i \), the partial derivative \( \frac{\partial}{\partial x_i} D^\kappa f = D^{\kappa + e_i} f \) is of order \( m \), and by assumption satisfies a H\"older condition with exponent \( \beta \). By part (ii), \( D^\kappa f \) also satisfies a H\"older condition with exponent \( \beta \).

Repeating this argument inductively for all multi-indices \( \kappa \) with \( \|\kappa\|_1 = m - 2, m - 3, \dots, 0 \), we conclude that \( D^\kappa f \) satisfies a H\"older condition with exponent \( \beta \) for all \( \|\kappa\|_1 \le m \).
\end{proof}






\begin{lemma}
\label{lemma: p-smooth properties}
Let \( \mathcal{X}_1, \dots, \mathcal{X}_n \subset \mathbb{R} \) be closed intervals, and define \( \mathcal{X} = \mathcal{X}_1 \times \cdots \times \mathcal{X}_n \subset \mathbb{R}^n \). The following properties hold for \( p \)-smooth functions on \( \mathcal{X} \):

\begin{enumerate}[(i)]
    \item If \( f \) and \( g \) are \( p \)-smooth on \( \mathcal{X} \), and \( k \in \mathbb{R} \), then \( kf \) and \( f + g \) are also \( p \)-smooth on \( \mathcal{X} \).
    
    \item If \( f \) and \( g \) are \( p \)-smooth on \( \mathcal{X} \), then their product \( fg \) is \( p \)-smooth on \( \mathcal{X} \).
    
    \item Let \( f \) be \( p \)-smooth on \( \mathcal{X} \). Then for any fixed \( t \in \mathcal{X}_n \), the function \( g: \mathcal{X}_1 \times \cdots \times \mathcal{X}_{n-1} \to \mathbb{R} \) defined by
    \(
    g(x_1, \dots, x_{n-1}) := f(x_1, \dots, x_{n-1}, t)
    \)
    is \( p \)-smooth on \( \mathcal{X}_1 \times \cdots \times \mathcal{X}_{n-1} \).
    
    \item Suppose \( f \) is \( p \)-smooth on \( \mathcal{X} \). Define the function \( h: \mathcal{X}_1 \times \cdots \times \mathcal{X}_{n-1} \to \mathbb{R} \) by
    \[
    h(x_1, \dots, x_{n-1}) := \int_{\mathcal{X}_n} f(x_1, \dots, x_n) \, dx_n.
    \]
    Then \( h \) is \( p \)-smooth on \( \mathcal{X}_1 \times \cdots \times \mathcal{X}_{n-1} \).
\end{enumerate}
\end{lemma}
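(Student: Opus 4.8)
The plan is to reduce all four claims to the corresponding facts about H\"older-continuous functions already established in Lemma~\ref{lemma: holder condition}. Throughout I would fix $m = \lfloor p\rfloor$ and $\beta = p-m$, so that ``$\phi$ is $p$-smooth on a box'' means ``$\phi \in C^m$ and every order-$m$ partial derivative $D^\kappa\phi$ is H\"older-$\beta$,'' which by Lemma~\ref{lemma: holder condition}(iii) automatically upgrades to: $D^\kappa\phi$ is H\"older-$\beta$ for all multi-indices $\kappa$ with $\|\kappa\|_1 \le m$. This upgraded form is the workhorse for parts (ii)--(iv).

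For part (i) I would simply invoke linearity of differentiation: $D^\kappa(kf) = k\,D^\kappa f$ and $D^\kappa(f+g) = D^\kappa f + D^\kappa g$, so membership in $C^m$ is preserved, and the H\"older-$\beta$ property of the order-$m$ derivatives survives scaling (the H\"older constant scales by $|k|$) and addition (the H\"older constant adds, by the triangle inequality). For part (ii), the key step is the general Leibniz rule, which writes each order-$m$ derivative $D^\kappa(fg)$ as a finite linear combination of products $D^\mu f\cdot D^{\kappa-\mu}g$ in which both factors have order $\le m$; by Lemma~\ref{lemma: holder condition}(iii) each factor is H\"older-$\beta$ and, being continuous on the compact box $\mathcal{X}$, is also bounded, so Lemma~\ref{lemma: holder condition}(i) makes each product H\"older-$\beta$ and part (i) makes the finite sum H\"older-$\beta$; hence $fg$ is $p$-smooth.

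For part (iii), I would note that partial derivatives of the restriction $g(\cdot) = f(\cdot,t)$ are exactly the restrictions of the corresponding partial derivatives $D^{(\kappa',0)}f$ of $f$, where $(\kappa',0)$ extends the multi-index $\kappa'$ by a zero in the last slot; thus $g\in C^m$, and a H\"older-$\beta$ bound for $D^{(\kappa',0)}f$ on $\mathcal{X}$ restricts to one for $D^{\kappa'}g$ because $\|(x',t)-(y',t)\|_2 = \|x'-y'\|_2$. For part (iv), the approach is to differentiate under the integral sign: since $f\in C^m(\mathcal{X})$ on the compact box $\mathcal{X}$, every partial derivative of order $\le m$ is continuous, hence bounded and uniformly continuous, which legitimizes $D^{\kappa'}h(x') = \int_{\mathcal{X}_n} D^{(\kappa',0)}f(x',x_n)\,dx_n$ for all $\|\kappa'\|_1 \le m$; then for $\|\kappa'\|_1 = m$ the H\"older-$\beta$ bound for the integrand, whose constant is uniform over $\mathcal{X}$, passes through the integral after multiplying by the length $|\mathcal{X}_n|$, so $D^{\kappa'}h$ is H\"older-$\beta$ and $h$ is $p$-smooth.

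I expect the only step requiring genuine care to be the justification of differentiation under the integral in part (iv), which I would handle by an explicit appeal to the Leibniz integral rule together with compactness of $\mathcal{X}$ to supply the uniform bounds on the $x'$-partials; the Leibniz-product bookkeeping in part (ii) is the next most delicate but is entirely standard once the order-$\le m$ derivatives are known to be H\"older-$\beta$ and bounded. Everything else amounts to the triangle inequality and relabeling of multi-indices.
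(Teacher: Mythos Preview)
Your proposal is correct and follows essentially the same approach as the paper's own proof: both arguments reduce each part to the H\"older facts in Lemma~\ref{lemma: holder condition}, using linearity for (i), the multi-index Leibniz rule together with Lemma~\ref{lemma: holder condition}(i),(iii) for (ii), restriction of derivatives for (iii), and differentiation under the integral with the $|\mathcal{X}_n|$ factor for (iv). Your anticipation that the Leibniz-integral justification in (iv) is the only step requiring care matches the paper, which handles it by the same compactness-plus-continuity argument you outline.
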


\begin{proof}[Proof of Lemma~\ref{lemma: p-smooth properties}]
Let \( m = \lfloor p \rfloor \) and \( \beta = p - m \).

(i) Since \( f, g \in C^m(\mathcal{X}) \), it follows immediately that \( kf \in C^m(\mathcal{X}) \) and \( f + g \in C^m(\mathcal{X}) \) for any \( k \in \mathbb{R} \). Moreover, for any multi-index \( \kappa \in \mathbb{N}_0^n \) with \( \|\kappa\|_1 = m \), we have:
\[
D^\kappa(kf) = k D^\kappa f, \quad D^\kappa(f + g) = D^\kappa f + D^\kappa g.
\]
Thus, if \( D^\kappa f \) and \( D^\kappa g \) satisfy a H\"older condition with exponent \( \beta \), then so do \( D^\kappa(kf) \) and \( D^\kappa(f + g) \).

(ii) Since \( f, g \in C^m(\mathcal{X}) \), their product \( fg \in C^m(\mathcal{X}) \) by standard calculus. Fix a multi-index \( \kappa \in \mathbb{N}_0^n \) with \( \|\kappa\|_1 = m \). The product rule for higher-order derivatives gives:
\begin{equation}
\label{eq:product_rule_multiindex}
D^\kappa(fg) = \sum_{\mu \le \kappa} \binom{\kappa}{\mu} \left(D^\mu f\right)\left(D^{\kappa - \mu} g\right),
\end{equation}
where the sum is over all multi-indices \( \mu \in \mathbb{N}_0^n \) such that \( \mu_i \le \kappa_i \) for all \( i \), and \( \binom{\kappa}{\mu} := \prod_{i=1}^n \binom{\kappa_i}{\mu_i} \) is the multi-index binomial coefficient.

By Lemma~\ref{lemma: holder condition}(iii), the \( p \)-smoothness of \( f \) and \( g \) implies that for every \( \mu \le \kappa \), both \( D^\mu f \) and \( D^{\kappa - \mu} g \) satisfy a H\"older condition with exponent \( \beta \). By Lemma~\ref{lemma: holder condition}(i), their product is also H\"older continuous with exponent \( \beta \). Since the sum in \eqref{eq:product_rule_multiindex} is finite, and all terms are H\"older with the same exponent, it follows that \( D^\kappa(fg) \) also satisfies a H\"older condition with exponent \( \beta \). Hence, \( fg \) is \( p \)-smooth.

(iii) Fix \( t \in \mathcal{X}_n \), and define
\(
g(x_1, \dots, x_{n-1}) := f(x_1, \dots, x_{n-1}, t).
\)
Since \( f \in C^m(\mathcal{X}) \), all partial derivatives of \( f \) up to order \( m \) exist and are continuous. The same holds for the restriction \( g \), viewed as a function on \( \mathcal{X}_1 \times \cdots \times \mathcal{X}_{n-1} \).

Now fix a multi-index \( \kappa \in \mathbb{N}_0^{n-1} \) with \( \|\kappa\|_1 = m \). Then
\[
D^\kappa g(x_1, \dots, x_{n-1}) = D^\kappa f(x_1, \dots, x_{n-1}, t).
\]
Since \( f \) is \( p \)-smooth, \( D^\kappa f \) satisfies a H\"older condition with exponent \( \beta \), and restricting the final coordinate to \( t \) preserves the inequality:
\[
\begin{aligned}
|D^\kappa g(x) - D^\kappa g(y)| 
&= |D^\kappa f(x_1, \dots, x_{n-1}, t) - D^\kappa f(y_1, \dots, y_{n-1}, t)| \\
&\le c \|x - y\|_2^\beta.
\end{aligned}
\]
Thus, \( g \) is \( p \)-smooth.

(iv) Define
\[
h(x_1, \dots, x_{n-1}) := \int_{\mathcal{X}_n} f(x_1, \dots, x_n) \, dx_n.
\]
Since \( f \in C^m(\mathcal{X}) \), standard results imply that integration preserves differentiability: \( h \in C^m(\mathcal{X}_1 \times \cdots \times \mathcal{X}_{n-1}) \), and for any multi-index \( \kappa \in \mathbb{N}_0^{n-1} \) with \( \|\kappa\|_1 = m \), we have
\[
D^\kappa h(x_1, \dots, x_{n-1}) = \int_{\mathcal{X}_n} D^\kappa f(x_1, \dots, x_n) \, dx_n.
\]
Let \( x = (x_1, \dots, x_{n-1}) \), \( y = (y_1, \dots, y_{n-1}) \). Then:
\[
\begin{aligned}
|D^\kappa h(x) - D^\kappa h(y)| 
&= \left| \int_{\mathcal{X}_n} \left[ D^\kappa f(x, x_n) - D^\kappa f(y, x_n) \right] dx_n \right| \\
&\le \int_{\mathcal{X}_n} \left| D^\kappa f(x, x_n) - D^\kappa f(y, x_n) \right| dx_n \\
&\le \int_{\mathcal{X}_n} c \|x - y\|_2^\beta dx_n \\
&= c |\mathcal{X}_n|\|x - y\|_2^\beta,
\end{aligned}
\]
where \( |\mathcal{X}_n|\) is the Lebesgue measure of \( \mathcal{X}_n \). It follows that \( D^\kappa h \) satisfies a H\"older condition with exponent \( \beta \), and thus \( h \) is \( p \)-smooth.
\end{proof}

\begin{lemma}
\label{lemma: Q function is p-smooth}
Suppose Assumptions \ref{assumption: reward is p-smooth}, \ref{assumption: transition is p-smooth}, \ref{assumption: policy is p-smooth} hold, then $Q_{k,\widetilde{M}}^{\pi^{(t)}} (h_k, a_k)$ is $p$-smooth in $(s_1,s_2,...,s_k)$ for any $0\le t\le T-1$, $1\le k\le K$, and any fixed $(a_1,...,a_k) \in \mathcal{A}_1 \times \cdots \times \mathcal{A}_k$.
\end{lemma}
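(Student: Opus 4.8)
The plan is to prove the claim by backward (downward) induction on the stage index $k$, using the Bellman recursion for the modified model together with the algebraic closure properties of $p$-smooth functions collected in Lemma~\ref{lemma: p-smooth properties}. Throughout, fix $0\le t\le T-1$ and fix an action history $(a_1,\dots,a_k)\in\mathcal A_1\times\cdots\times\mathcal A_k$; since $\pi^{(t)}\in\Pi$, Assumption~\ref{assumption: policy is p-smooth} applies to each $\pi^{(t)}_j$, and since $\widehat P_k\in\mathcal P_k$, Assumption~\ref{assumption: transition is p-smooth} supplies a density $\widehat p_k(\cdot\mid h_k,a_k)$. For the base case $k=K$, the Bellman identity gives $Q_{K,\widetilde M}^{\pi^{(t)}}(h_K,a_K)=\widetilde r_K(h_K,a_K)$ because there is no reward beyond stage $K$, and this is $p$-smooth in $(s_1,\dots,s_K)$ by Assumption~\ref{assumption: reward is p-smooth}.

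For the inductive step, suppose $Q_{k+1,\widetilde M}^{\pi^{(t)}}(h_{k+1},a_{k+1})$ is $p$-smooth in $(s_1,\dots,s_{k+1})$ for every fixed $(a_1,\dots,a_{k+1})$. I would then assemble the stage-$k$ quantities in four steps. First, by Assumption~\ref{assumption: policy is p-smooth} the softmax weight $\pi^{(t)}_{k+1}(a_{k+1}\mid h_{k+1})$ is $p$-smooth in $(s_1,\dots,s_{k+1})$; multiplying by $Q_{k+1,\widetilde M}^{\pi^{(t)}}(h_{k+1},a_{k+1})$ and invoking Lemma~\ref{lemma: p-smooth properties}(ii) keeps $p$-smoothness, and summing over the finitely many $a_{k+1}\in\mathcal A_{k+1}$ (Lemma~\ref{lemma: p-smooth properties}(i)) shows $V_{k+1,\widetilde M}^{\pi^{(t)}}(h_{k+1})$ is $p$-smooth in $(s_1,\dots,s_{k+1})$. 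Second, by Assumption~\ref{assumption: transition is p-smooth} the density $\widehat p_k(s_{k+1}\mid h_k,a_k)$ is $p$-smooth in $(s_1,\dots,s_{k+1})$, so Lemma~\ref{lemma: p-smooth properties}(ii) makes the product $\widehat p_k(s_{k+1}\mid h_k,a_k)\,V_{k+1,\widetilde M}^{\pi^{(t)}}(h_{k+1})$ $p$-smooth in $(s_1,\dots,s_{k+1})$. Third, integrating out $s_{k+1}$ over $\mathcal S_{k+1}$ produces a function of $(s_1,\dots,s_k)$; since $\mathcal S_{k+1}$ is a box by Assumption~\ref{assumption: state space is rectangular}, I would apply Lemma~\ref{lemma: p-smooth properties}(iv) once per coordinate of $s_{k+1}$, using at each stage that the integrand remains jointly $p$-smooth in the coordinates not yet integrated. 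Fourth, the Bellman identity
\[
Q_{k,\widetilde M}^{\pi^{(t)}}(h_k,a_k)=\widetilde r_k(h_k,a_k)+\int_{\mathcal S_{k+1}}\widehat p_k(s_{k+1}\mid h_k,a_k)\,V_{k+1,\widetilde M}^{\pi^{(t)}}(h_{k+1})\,\mathrm d s_{k+1},
\]
combined with Assumption~\ref{assumption: reward is p-smooth} for $\widetilde r_k$ and Lemma~\ref{lemma: p-smooth properties}(i), yields that $Q_{k,\widetilde M}^{\pi^{(t)}}(h_k,a_k)$ is $p$-smooth in $(s_1,\dots,s_k)$, completing the induction.

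The main technical care is in the third step: Lemma~\ref{lemma: p-smooth properties}(iv) is stated for integrating out a single trailing coordinate, so the argument must iterate it across all $d_{k+1}^{(s)}$ coordinates of $s_{k+1}$, each time re-certifying joint $p$-smoothness of the partially integrated function — this is where Assumption~\ref{assumption: state space is rectangular} is essential, guaranteeing the domains of integration are genuine products of compact intervals. A secondary bookkeeping point is to verify that fixing $(a_1,\dots,a_k)$ is preserved throughout: the sum in the first step ranges only over $a_{k+1}$ while $(a_1,\dots,a_k)$ stays fixed, and the transition density and modified reward are indexed by $(a_1,\dots,a_k)$, so none of the $p$-smoothness operations touch the discrete arguments. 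Beyond these, no genuine estimate is required — the proof is a combination of the Bellman recursion with the algebra of $p$-smooth functions.
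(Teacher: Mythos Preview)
Your proposal is correct and follows essentially the same backward-induction-on-$k$ argument as the paper, invoking the Bellman recursion and the closure properties of $p$-smooth functions in Lemma~\ref{lemma: p-smooth properties} in the same order. You are in fact slightly more careful than the paper in noting that Lemma~\ref{lemma: p-smooth properties}(iv) must be iterated coordinate-by-coordinate over $s_{k+1}\in\mathcal S_{k+1}$ (using Assumption~\ref{assumption: state space is rectangular}), a detail the paper glosses over.
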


\begin{proof}[Proof of Lemma \ref{lemma: Q function is p-smooth}]
Recall that the Algorithm \ref{algo: sieve regression} only update the policy within the policy class $\Pi$, so $\pi^{(t)} \in \Pi$ for any $0\le t\le T-1$. We only need to prove $Q_{k,\widetilde{M}}^{\pi} (h_k, a_k)$ is $p$-smooth in $(s_1,s_2,...,s_k)$ for any $\pi \in \Pi$, any $k$, any $(a_1,...,a_k) \in \mathcal{A}_1 \times \cdots \times \mathcal{A}_k$.

We prove by induction on $k$, from $k=K$ to $k=1$. For $k=K$, we trivially have
\begin{equation*}
Q_{K,\widetilde{M}} ^{\pi} (h_{K},a_{K}) = \widetilde{r}_K (h_K,a_K),
\end{equation*}
so $Q_{K,\widetilde{M}} ^{\pi} (h_{K},a_{K})$ is $p$-smooth in $(s_1,..., s_{K})$ under Assumption \ref{assumption: reward is p-smooth}.

Now we focus on the induction step from $k+1$ to $k$. Suppose $Q_{k+1,\widetilde{M}} ^{\pi} (h_{k+1},a_{k+1})$ is $p$-smooth in $(s_1,..., s_{k+1})$ for some $1\le k\le K-1$. By Bellman equation,
\begin{equation}
\label{eq: proof lemma Q function is p-smooth, eq 1}
\begin{split}
Q_{k,\widetilde{M}} ^{\pi} (h_k,a_k) =& \ \widetilde{r}_k (h_k,a_k) \\
&+ \int_{\mathcal{S}_k} \left( \sum_{a_{k+1} \in \mathcal{A}_{k+1}} \pi_{k+1}(a_{k+1}|h_{k+1}) Q_{k+1, \widetilde{M}} ^{\pi} (h_{k+1},a_{k+1}) \right) \widehat{p}_k (s_{k+1}| h_k, a_k) ds_{k+1}.
\end{split}
\end{equation}
By induction hypothesis, $Q_{k+1,\widetilde{M}} ^{\pi} (h_{k+1},a_{k+1})$ is $p$-smooth in $(s_1,..., s_{k+1})$. By Assumption \ref{assumption: policy is p-smooth}, $\pi_{k+1}(a_{k+1}|h_{k+1})$ is $p$-smooth in $(s_1,..., s_{k+1})$ for any $a_{k+1}\in \mathcal{A}_{k+1}$. Note that $|\mathcal{A}_{k+1}|$ is finite, then by Lemma \ref{lemma: p-smooth properties}(i)(ii), $\sum_{a_{k+1} \in \mathcal{A}_{k+1}} \pi_{k+1} Q_{k+1, \widetilde{M}} ^{\pi}$ is $p$-smooth in $(s_1,..., s_{k+1})$. Similarly, Assumption \ref{assumption: transition is p-smooth} and Lemma \ref{lemma: p-smooth properties}(ii) implies $\sum_{a_{k+1}} \pi_{k+1} Q_{k+1, \widetilde{M}} ^{\pi} \widehat{p}_k (s_{k+1}| h_k, a_k)$ is $p$-smooth in $(s_1,..., s_{k+1})$. By Lemma \ref{lemma: p-smooth properties}(iv), the integral $\int_{\mathcal{S}_k} \sum_{a_{k+1}} \pi_{k+1} Q_{k+1, \widetilde{M}} ^{\pi} \widehat{p}_k ds_{k+1}$ is $p$-smooth in $(s_1,..., s_{k})$. By Assumption \ref{assumption: reward is p-smooth}, $\widetilde{r}_k (h_k,a_k)$ is $p$-smooth in $(s_1,..., s_{k})$. Using Lemma \ref{lemma: p-smooth properties}(i) again, it follows from (\ref{eq: proof lemma Q function is p-smooth, eq 1}) that $Q_{k,\widetilde{M}} ^{\pi} (h_{k},a_{k})$ is $p$-smooth in $(s_1,..., s_{k})$, which finishes the induction step.

\end{proof}

\begin{proof}[Proof of Lemma \ref{lemma: upperbound e}]
Fix $\overline{a}_k^{(i)}$ and consider Theorem 2.1 from \cite{Chen_2015}. The correspondence between their notation and ours is as follows. The variable $X_i$ in their framework corresponds to $\overline{s}_k^{(i)}$ in ours. The function $h_0(x)$ corresponds to the scaled quantity:
\[
h_0(x) \leftrightarrow \frac{1}{\eta_k^{(t)} \| Q_{k,\widetilde{M}}^{\pi^{(t)}} \|_\infty } Q_{k,\widetilde{M}}^{\pi^{(t)}}(\overline{s}_k, \overline{a}_k).
\]
The variable $Y_i $ corresponds to the estimated $Q$-function scaled in the same manner:
\[
Y_i (x) \leftrightarrow \frac{1}{\eta_k^{(t)} \| Q_{k,\widetilde{M}}^{\pi^{(t)}} \|_\infty } \widehat{Q}_k^{(t)}(\overline{s}_k^{(i)}, \overline{a}_k).
\]
Finally, the estimator $\widehat{h}(x)$ in their setting corresponds to the scaled importance-weighting function difference between consecutive iterates:
\[
\widehat{h}(x) \leftrightarrow \frac{1}{\eta_k^{(t)} \| Q_{k,\widetilde{M}}^{\pi^{(t)}} \|_\infty } \left( f_k(\theta_k^{(t+1)}, \overline{s}_k, \overline{a}_k) - f_k(\theta_k^{(t)}, \overline{s}_k, \overline{a}_k) \right).
\]

Using equation~(\ref{eq: e_t}), we obtain the following decomposition:
\[
\frac{ e^{(t)}_k(h_k,a_k) }{ \eta_k^{(t)} \| Q_{k,\widetilde{M}}^{\pi^{(t)}} \|_\infty }= \frac{ f_k(\theta_k^{(t+1)}, h_k, a_k) - f_k(\theta_k^{(t)}, h_k, a_k) }{ \eta_k^{(t)} \| Q_{k,\widetilde{M}}^{\pi^{(t)}} \|_\infty } - \frac{ Q_{k,\widetilde{M}}^{\pi^{(t)}}(h_k,a_k) }{ \| Q_{k,\widetilde{M}}^{\pi^{(t)}} \|_\infty }.
\]

To apply Theorem 2.1 in \cite{Chen_2015}, we must verify that its assumptions hold in our setting. The i.i.d. condition on the sample $\{(X_i, Y_i)\}_{i=1}^n$ is directly implied by Assumption~\ref{assumption: MC error} (i). Assumptions 1 (i)-(iii) in \cite{Chen_2015} are satisfied because the samples $\{\overline{s}_k^{(i)}\}$ used in Algorithm~\ref{algo: sieve regression} are drawn independently from a distribution supported on a rectangular state space, as specified in Assumption \ref{assumption: state space is rectangular}. Assumption 1 (iv) in \cite{Chen_2015} is also satisfied due to the uniform sampling of $\{\overline{s}_k^{(i)}\}$ over the structured domain characterized by the same assumption.

Assumption 2(i) in \cite{Chen_2015} holds because of the i.i.d. nature of the sample inputs, and Assumptions 2(ii) and 2(iii) in \cite{Chen_2015} are guaranteed by the moment bounds and stability conditions provided in Assumption~\ref{assumption: MC error}(ii)–(iii). Assumption 6 in \cite{Chen_2015}, which requires smoothness of the target function class, is satisfied via Lemma~\ref{lemma: Q function is p-smooth}, which establishes the $p$-smoothness of the $Q$-function. Finally, Assumption 7 in \cite{Chen_2015}, concerning the regularity of the basis expansion, is directly enforced by our B-spline basis functions.

Thus, all conditions required by Theorem 2.1 in \cite{Chen_2015} are met, and the asserted probabilistic upper bound follows.
\end{proof}

\subsection{Supporting Lemmas' Proofs  for Theorem \ref{thm:suboptimality-GP}}
\label{pf:lemma thm3}

\begin{proof}[Proof of Lemma~\ref{lemma: Mercer}]
Assumption~\ref{assumption: GP kernel} ensures that the kernel \( h \) is continuous, symmetric, and positive semi-definite on the compact domain \( \mathcal{X} = \mathcal{H}_k \times \mathcal{A}_k \), and that it satisfies the integrability condition
\[
\int_{\mathcal{X} \times \mathcal{X}} h(x, x') \, \rho(x) \rho(x') \, dx \, dx' < \infty.
\]
These conditions match those required for Mercer's Theorem (see Section 12.3 of \citet{Wainwright2019}). Therefore, the existence of the spectral decomposition described in Equations~\eqref{eq: lemma Mercer eq 1} and~\eqref{eq: lemma Mercer eq 2} follows directly from an application of the theorem.
\end{proof}


\begin{proof}[Proof of Lemma~\ref{lemma: GP posterior expected variance}]
Let \( \mathcal{H} := L^2(\mathcal{X}, \rho) \) be the Hilbert space associated with the kernel \( h \), and let \( \{\mu_j\}_{j=1}^\infty \) and \( \{\psi_j\}_{j=1}^\infty \) denote the eigenvalues and eigenfunctions of the integral operator associated with \( h \), as in Lemma~\ref{lemma: Mercer}. By Mercer decomposition from Lemma ~\ref{lemma: Mercer}, the kernel admits the representation
\[
h(x, x') = \sum_{j=1}^{\infty} \mu_j \psi_j(x) \psi_j(x') = \langle \phi(x), \phi(x') \rangle_{\mathcal{H}},
\]
where the feature map is defined by \( \phi(x) := \sum_{j=1}^\infty \sqrt{\mu_j} \, \psi_j(x) \, \psi_j \in \mathcal{H} \).

Given \( n \) observations \( \{x_i\}_{i=1}^n \subset \mathcal{X} \), define the linear operator \( \Phi_n : \mathbb{R}^n \to \mathcal{H} \) by
\[
\Phi_n v := \sum_{i=1}^n v_i \phi(x_i),
\]
with adjoint \( \Phi_n^* : \mathcal{H} \to \mathbb{R}^n \) given by
\[
\Phi_n^* f = \left( \langle f, \phi(x_1) \rangle, \ldots, \langle f, \phi(x_n) \rangle \right)^\top.
\]
Then the Gram operator is given by \( H_{n,n} = \Phi_n^* \Phi_n \), and the kernel evaluation vector at any \( x \in \mathcal{X} \) is \( H_n(x) = \Phi_n^* \phi(x) \in \mathbb{R}^n \).

Using the standard formula for the posterior variance of a Gaussian process,
\[
\widehat{h}(x,x) = h(x,x) - H_n(x)^\top (H_{n,n} + \sigma^2 I)^{-1} H_n(x),
\]
and substituting $H_n(x)$ and $H_{n,n}$ into \eqref{eq: GP, eq 1}, we obtain
\begin{align}
\widehat{h}(x,x) &= \langle \phi(x), \phi(x) \rangle_{\mathcal{H}} - \langle \phi(x), \Phi_n ( \Phi_n^* \Phi_n + \sigma^2 I )^{-1} \Phi_n^* \phi(x) \rangle \nonumber \\
&= \langle \phi(x), A \phi(x) \rangle,
\end{align}
where the linear operator \( A \) is defined by $A := I - \Phi_n(\Phi_n^* \Phi_n + \sigma^2 I)^{-1} \Phi_n^*.$

We simplify \( A \) by observing that
\[
A \Phi_n = \sigma^2 \Phi_n (\Phi_n^* \Phi_n + \sigma^2 I)^{-1},
\quad \text{so} \quad
A \Phi_n \Phi_n^* = \sigma^2(I - A),
\]
which yields the identity
\begin{equation}
\label{eq: A definition}
A = (\sigma^2 I + \Phi_n \Phi_n^*)^{-1} \sigma^2 I
= \left(I + \tfrac{1}{\sigma^2} \sum_{i=1}^n \phi(x_i) \phi(x_i)^* \right)^{-1}.
\end{equation}

Taking expectation over \( x \sim \rho \), we may exchange the expectation with the quadratic form to obtain
\[
\mathbb{E}_{x \sim \rho}[\widehat{h}(x,x)] = \mathbb{E}[\langle \phi(x), A \phi(x) \rangle] = \operatorname{Tr}(A \, \mathbb{E}[\phi(x)\phi(x)^*]).
\]

To control this trace, we apply the matrix Bernstein inequality (Theorem 1.4 in \citep{Tropp_2011}): there exist positive constants \(c_{1},c_{2}>0\), such that
\[
\Pr\left\{ \left\| \tfrac{1}{n} \sum_{i=1}^n \phi(x_i)\phi(x_i)^* - \mathbb{E}[\phi(x)\phi(x)^*] \right\| \ge \tfrac{1}{2} \| \mathbb{E}[\phi(x)\phi(x)^*] \| \right\} \le c_1 e^{-c_2 n}.
\]

Define the event $\mathcal{G}
\;=\;
\Bigl\{\|\frac{1}{n}\sum_{i=1}^n \phi(x_i) \phi(x_i)^* - \mathbb{E}[\phi(x)\phi(x)^{*}]\|\;\le\;\tfrac{1}{2}\,\|\mathbb{E}[\phi(x)\phi(x)^{*}]\|\Bigr\}.$
On \(\mathcal{G}\), we have
\[
\tfrac{1}{n} \sum_{i=1}^n \phi(x_i)\phi(x_i)^* \succeq \tfrac{1}{2} \mathbb{E}[\phi(x)\phi(x)^*],
\]
which implies
\[
A = \left(I + \tfrac{n}{\sigma^2} \cdot \tfrac{1}{n} \sum_{i=1}^n \phi(x_i) \phi(x_i)^* \right)^{-1}
\preceq \left( I + \tfrac{n}{2\sigma^2} \mathbb{E}[\phi(x)\phi(x)^*] \right)^{-1} =: A_{\mathrm{pop}}.
\]
Hence,
\[
\operatorname{Tr}(A \, \mathbb{E}[\phi(x)\phi(x)^*]) \le \operatorname{Tr}(A_{\mathrm{pop}} \, \mathbb{E}[\phi(x)\phi(x)^*]).
\]

Let us now compute the right-hand side trace. Using the Mercer basis \( \{\psi_j\} \), we have
\[
\mathbb{E}[\phi(x)\phi(x)^*] = \sum_{j=1}^{\infty} \mu_j \psi_j \otimes \psi_j,
\quad
A_{\mathrm{pop}} = \sum_{j=1}^{\infty} \frac{2 \sigma^2}{2 \sigma^2 + n \mu_j} \psi_j \otimes \psi_j,
\]
which yields
\[
\operatorname{Tr}(A_{\mathrm{pop}} \, \mathbb{E}[\phi(x)\phi(x)^*]) = \sum_{j=1}^{\infty} \frac{2 \sigma^2 \mu_j}{2 \sigma^2 + n \mu_j}.
\]

To bound the sum, we partition the eigenvalues at the index \( j_0 \), where \( \mu_{j_0} \sim \sigma^2 / n \), i.e., \( j_0 \sim (n/\sigma^2)^{1/\omega} \) under the decay \( \mu_j = \mathcal{O}(j^{-\omega}) \) by Assumption \ref{assumption:eigenvalue decay}. Then
\[
\sum_{j=1}^{\infty} \frac{2 \sigma^2 \mu_j}{2 \sigma^2 + n \mu_j}
\lesssim \frac{\sigma^2}{n} j_0 + \sum_{j=j_0+1}^{\infty} \mu_j
\lesssim n^{-1} \cdot \left(\frac{n}{\sigma^2} \right)^{1/\omega} + \int_{j_0}^\infty j^{-\omega} \, dj
= \mathcal{O}(n^{1/\omega - 1}).
\]

On the complement event \( \mathcal{G}^c \), we trivially have \( A \preceq I \), so
\[
\mathbb{E}[\langle \phi(x), A \phi(x) \rangle \cdot \mathbf{1}_{\mathcal{G}^c}] \le \operatorname{Tr}(\mathbb{E}[\phi(x)\phi(x)^*]) \cdot \Pr(\mathcal{G}^c) = o(n^{1/\omega - 1}).
\]

Combining both cases concludes the proof:
\[
\mathbb{E}_{x \sim \rho}[\widehat{h}(x,x)] = \mathcal{O}(n^{1/\omega - 1}).
\]
\end{proof}



\begin{proof}[Proof of Lemma~\ref{lemma: GP posterior variance}]
We adopt the same notation as in the proof of Lemma~\ref{lemma: GP posterior expected variance}. Recall that the posterior variance at a point \( x \in \mathcal{X} \) can be written as $\widehat{h}(x,x) = \langle \phi(x), A \phi(x) \rangle_{\mathcal{H}},$
where \( A  \) is the positive semidefinite, self-adjoint operator.
By spectral decomposition, \( A = \sum_{i=1}^\infty \lambda_i u_i \otimes u_i \), where \( \{u_i\} \) is an orthonormal basis of \( \mathcal{H} \) and \( \lambda_i \ge 0\). Substituting this into the expression for \( \widehat{h}(x,x) \) yields
\[
\widehat{h}(x,x) = \sum_{i=1}^{\infty} \lambda_i \, |\langle u_i, \phi(x) \rangle|^2.
\]

Since \( \{u_i\} \) forms an orthonormal basis, we can expand \( \phi(x) = \sum_{i=1}^\infty \langle u_i, \phi(x) \rangle u_i \), which gives
\[\sum_{i=1}^\infty |\langle u_i, \phi(x) \rangle|^2=\| \phi(x) \|^2 .
\]
By Assumption~\ref{assumption: GP kernel}, the kernel is bounded on \( \mathcal{X} \times \mathcal{X} \), so we define \( M := \sup_{x \in \mathcal{X}} h(x,x)=\sup_{x \in \mathcal{X}} \| \phi(x) \|^2 \). This implies
\[
\sum_{i=1}^\infty |\langle u_i, \phi(x) \rangle|^2 \le M.
\]
and hence,
\begin{equation}
\label{eq: sup posterior bound}
\widehat{h}(x,x) \le  \sup_i \lambda_i \cdot M.
\end{equation}

It remains to bound the spectral norm of \( A \), i.e., \( \max_i \lambda_i \). Recall from the GP posterior derivation that
\[
A = \left(I + \tfrac{1}{\sigma^2} \sum_{i=1}^n \phi(x_i) \phi(x_i)^* \right)^{-1}.
\]
Let \( S := \frac{1}{n} \sum_{i=1}^n \phi(x_i) \phi(x_i)^* \) and note that
\[
\max_i \lambda_i = \left( 1 + \tfrac{n}{\sigma^2} \lambda_{\min}(S) \right)^{-1} \le 1.
\]

Combining with \eqref{eq: sup posterior bound}, we obtain the uniform upper bound:
\[
\widehat{h}(x,x) \le M,
\]
which holds uniformly for all \( x \in \mathcal{X} \).. 
\end{proof}


\begin{proof}[Proof of Lemma~\ref{Lemma:beta_n bound GP}]
By Theorem 24 in \cite{chang2021mitigating}, the following bound holds with probability at least \(1 - \delta\):
\begin{equation}
\label{eq: proof thm uncertainty quantifier GP part2 eq 1}
\log\left( \det \left( I_n + \tfrac{1}{\sigma^2} H_{n,n} \right) \right) \lesssim \left(d^*(n)\right)^2 \log n,
\end{equation}
where the effective dimension \( d^*(n) \) is defined as (see Definition 9 of \cite{chang2021mitigating}):
\[
d^*(n) := \min \left\{ J \in \mathbb{N} : \frac{J}{\sum_{i=J+1}^{\infty} \mu_i} \ge \frac{n}{\sigma^2} \right\}.
\]

Under Assumption~\ref{assumption:eigenvalue decay}, the eigenvalues satisfy \( \sum_{i=J+1}^{\infty} \mu_i \le \int_{J}^{\infty} i^{-\omega} \, di \lesssim J^{-(\omega - 1)} \). This implies that
\[
\frac{J}{\sum_{i=J+1}^{\infty} \mu_i} \gtrsim J^\omega,
\]
and thus the definition of \( d^*(n) \) yields $d^*(n) \lesssim n^{\frac{1}{\omega}}.$

Substituting into the expression for \( \beta_{k,n} \) in equation~\eqref{eq: GP, eq 4}, we obtain
\[
\beta_n \lesssim (\log n)^2 \cdot d^*(n) \lesssim (\log n)^2 \cdot n^{\frac{1}{\omega}}.
\]
This completes the proof.
\end{proof}

\section{Simulation Setup} \label{sec:simulation_setup}

The transition matrix is designed to ensure that each state undergoes a significant transformation after applying the transition dynamics, which are set as follows:

\begin{equation*}
W_1^{0} =
\begin{bmatrix}
0.4 & 0.2 & 0 \\
0.4 & 0 & 0.2
\end{bmatrix}, 
\quad 
W_1^{1} =
\begin{bmatrix}
0.6 & 0 & -0.2 \\
0.4 & 0.2 & 0
\end{bmatrix}.
\end{equation*}

\begin{equation*}
W_2^{0} =
\begin{bmatrix}
0.5 & 0.1 & -0.1 \\
0.5 & -0.1 & 0.1
\end{bmatrix}, 
\quad 
W_2^{1} =
\begin{bmatrix}
0.5 & -0.1 & 0.1 \\
0.5 & 0.1 & -0.1
\end{bmatrix}.
\end{equation*}

\begin{equation*}
W_3^{0} =
\begin{bmatrix}
0.6 & -0.12 & -0.08 \\
0.6 & -0.08 & -0.12
\end{bmatrix}, 
\quad 
W_3^{1} =
\begin{bmatrix}
0.4 & 0.08 & 0.12 \\
0.4 & 0.12 & 0.08
\end{bmatrix}.
\end{equation*}

The noises are defined as \(\varepsilon_k^i=0.8\left(z_k^i-0.5\right), \text { where } z_k^i \sim \operatorname{Beta}(2,2) \text { so that } \varepsilon_k^i \in[-0.4,0.4]\), \(i = 1, 2\).

\section{Further Details on Simulation Studies}
\subsection{Optimal Policy Approximation}
\label{sec: Dynamic Programming}

The optimal policy $\pi^*$ is approximated using Dynamic Programming. Specifically, for a randomly generated initial state $s_1$, the optimal action $a_1^*$ is determined by evaluating the $Q$-values for both possible actions (0 and 1) and selecting the one that maximizes the expected return.

The process of state transition and data construction follows a hierarchical structure, as illustrated below:
\[
s_1 \Rightarrow a_1^* \Rightarrow 
\begin{bmatrix}
s_2^{(1)} \\
\vdots \\
s_2^{(n)}
\end{bmatrix}
\Rightarrow
\begin{bmatrix}
h_2^{(1)} \\
\vdots \\
h_2^{(n)}
\end{bmatrix}
\Rightarrow
\begin{bmatrix}
(h_2^{(1)}, a_2^{*(1)}) \\
\vdots \\
\vdots \\
(h_2^{(n)}, a_2^{*(n)})
\end{bmatrix}
\Rightarrow
\begin{bmatrix}
(h_2^{(1)}, a_2^{*(1)}, s_3^{(1,1)})  \\
\vdots \\
(h_2^{(1)}, a_2^{*(1)}, s_3^{(1,m)})  \\
\vdots \\
\vdots \\
(h_2^{(n)}, a_2^{*(n)}, s_3^{(n,1)})  \\
\vdots \\
(h_2^{(n)}, a_2^{*(n)}, s_3^{(n,m)})  
\end{bmatrix}
=
\begin{bmatrix}
h_3^{(1,1)} \\
\vdots \\
h_3^{(1,m)} \\
\vdots \\
\vdots \\
h_3^{(n,1)} \\
\vdots \\
h_3^{(n,m)}
\end{bmatrix}
\]

At each stage, a state transitions to multiple successor states based on the selected action, leading to a tree-structured expansion of possible future states. The history representations $h_k$ are recursively updated by incorporating previous states, actions, and newly sampled next states.

The optimal value function is recursively defined as follows: 
\[
V_1^*(s_1) = Q_1^*(s_1, a_1^*) = \frac{1}{n} \sum_{i=1}^{n} V_2^*(h_2^{(i)})
\]

where the optimal action at the first step is determined by the policy:
\[
a_1^* = \pi_1^*(s_1), \quad h_2^{(i)} = [s_1, a_1^*, s_2^{(i)}]
\]

and the next states are sampled according to the transition probability:
\[
s_2^{(1)}, \dots, s_2^{(n)} \sim P_1(\cdot | s_1, a_1^*)
\]

Similarly, at the next stage:
\[
V_2^*(h_2^{(i)}) = Q_2^*(h_2^{(i)}, a_2^{*(i)}) = \frac{1}{n} \sum_{j=1}^{n} V_3^*(h_3^{(i,j)})
\]

where
\[
a_2^{*(i)} = \pi_2^*(h_2^{(i)}), \quad s_3^{(i,1)}, \dots, s_3^{(i,n)} \sim P_2(\cdot | h_2^{(i)}, a_2^{*(i)})
\]

\[
h_3^{(i,j)} = [h_2^{(i)}, a_2^{*(i)}, s_3^{(i,j)}]
\]

As a result, the optimal value function at the initial state can be expressed as:
\[
V_1^*\left(s_1\right)=\frac{1}{n^2} \sum_{i=1}^n \sum_{j=1}^n V_3^*\left(h_3^{(i, j)}\right)
\]

We assume that the reward is generated only at the final stage ($r_1 = r_2 = 0$) and is defined by the following nonlinear function:
\[
r=3.8\left[\left(\cos \left(-s_3^1 \pi\right)+2 \cos \left(s_3^2 \pi\right)+s_4^1+2 s_4^2\right)\left(1+a_3\right)-1.37\right]
\]

Under this formulation, the optimal value function satisfies $V_1^*=1$, indicating that the policy successfully maximizes the expected reward given the defined transition dynamics and reward structure.

\subsection{DTR Q-learning} \label{sec:standard_q_learning}

\SetAlgoLined
\LinesNumbered
\SetAlgoLongEnd

\begin{algorithm}[ht]
\caption{DTR Q-learning}
\label{algo: q-learning}
\KwIn{Offline dataset $\{(s_1^{(i)}, a_1^{(i)}, s_2^{(i)}, a_2^{(i)}, s_3^{(i)}, a_3^{(i)}, s_4^{(i)}) \}_{i=1}^N$}

Assume: $Q_k(h_k, a_k; \theta_k) := \langle \theta_k, \Phi(h_k,a_k) \rangle$, and $\overline{r}_3(h_3, a_3, s_4)$ is nonzero, $\overline{r}_1 = \overline{r}_2 = 0$ \\

Solve the regression to obtain the estimator $\widehat{\theta_3}$ for $k =3$ by: 
        \[
        \widehat{\theta}_3 = \arg \min_{\theta_3} \frac{1}{N} \sum_{i=1}^N \left( \overline{r}_3( h_3^{(i)}, a_3^{(i)}, s_4^{(i)}) - Q_3(h_3^{(i)}, a_3^{(i)}; \theta_3) \right)^2
        \]\\
       Define $\widehat{Q}_3(h_3, a_3) := \max_{a_3 \in \{0,1\}} Q_3(h_3, a_3; \widehat{\theta}_3)$
        
 Solve for regression step $\widehat{\theta}_k$ by minimizing:
        \[
        \widehat{\theta}_k = \arg \min_{\theta_k} \frac{1}{N} \sum_{i=1}^N \left( \widehat{Q}_{k+1}( h_{k+1}^{(i)}, a_{k+1}^{(i)}) - Q_k(h_k^{(i)}, a_k^{(i)}; \theta_k) \right)^2
        \]\\
  Define $\widehat{Q}_k(h_k, a_k) := \max_{a_k \in \{0,1\}} Q_k(h_k, a_k; \widehat{\theta}_k)$

For $k = 1, 2, 3$, the estimated optimal policy $\widehat{\pi}_k(\cdot | h_k)$ can be set as any probability distribution supported on: $\arg \min_{a_k} \widehat{Q}_k(h_k, a_k)$\\
\KwOut{Estimated Q-functions $\widehat{Q}_k$ and policies $\widehat{\pi}_k$ for $k=1,2,3$}
\end{algorithm}

\newpage

\subsection{Sensitivity Analysis} \label{sec:sensitivity_analysis}

To assess robustness to transition model misspecification, we conduct a sensitivity analysis. We apply POLAR using a GP transition model to our simulated data generated from a linear transition model, deliberately introducing model mismatch. Specifically, for each decision stage \(k \in \{1,2,3\}\), we fit a GP regression model that maps the state-action input \((s_k, a_k) \in \mathbb{R}^3\) to the subsequent state \(s_{k+1} \in \mathbb{R}^2\). We employ a radial basis function (RBF) kernel with automatic relevance determination and use either exact or sparse variational GP regression depending on the sample size. The GP yields a conditional mean function \(\hat{\mu}_k(x) = \mathbb{E}[s_{k+1} \mid x]\), which serves as the plug-in estimate of the next state. The associated epistemic uncertainty is quantified via the kernel-induced predictive covariance.

Figure~\ref{fig:sensitivity_ananlysis} presents the performance of POLAR under this misspecification across various sample sizes \(n\), probability parameters \(p\), and pessimism coefficients \(c\), alongside comparisons with DTR-\(Q\), DTRreg, DDQN, MILO, and MOPO. The results demonstrate that POLAR continues to outperform all competing methods. Furthermore, the relative ordering of policy values across different pessimism coefficients $c$ remains consistent, indicating stable algorithmic behavior. This confirms that POLAR remains effective even when the transition model is misspecified.

\begin{figure}
    \centering
    \includegraphics[width=0.9\linewidth]{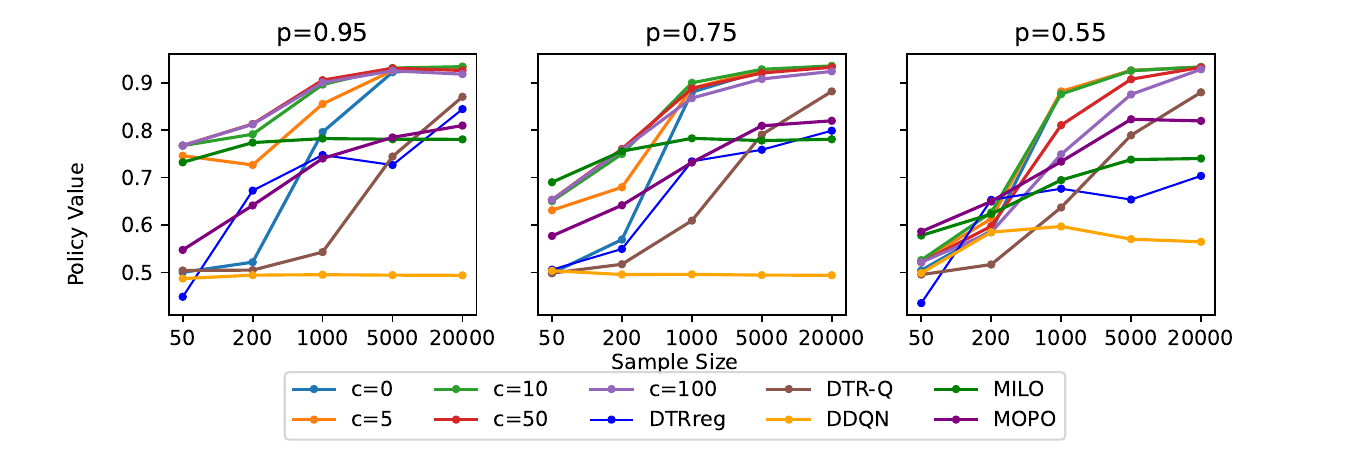}
    \caption{Policy values under transition model misspecification. Data is generated from a linear model, but a Gaussian Process (GP) model is used for estimation. Policy value averaged over 100 repeated simulations, for different $n,p,c$.}
    \label{fig:sensitivity_ananlysis}
\end{figure}
\cbend

\end{document}